\documentclass{article}

\usepackage{microtype}
\usepackage{graphicx}
\usepackage{subcaption}
\usepackage{booktabs} % for professional tables
\usepackage{graphicx}

\usepackage{hyperref}

\usepackage[preprint]{anonymousconference}

\usepackage{amsmath}
\usepackage{amssymb}
\usepackage{mathtools}
\usepackage{amsthm}
\usepackage{bm}
\usepackage{comment}
\allowdisplaybreaks[4]
\usepackage{enumitem}

\usepackage[capitalize,noabbrev]{cleveref}

\theoremstyle{plain}
\newtheorem{theorem}{Theorem}[section]

\newtheorem{lemma}[theorem]{Lemma}

\theoremstyle{definition}
\newtheorem{definition}[theorem]{Definition}

\theoremstyle{remark}

\usepackage[textsize=tiny]{todonotes}

\anonymousconferencetitlerunning{Learning from Similarity/Dissimilarity and Pairwise Comparison}

\begin{document}

\twocolumn[
  \anonymousconferencetitle{Learning from Similarity/Dissimilarity and Pairwise Comparison}

  \anonymousconferencesetsymbol{equal}{*}

  \begin{anonymousconferenceauthorlist}
    \anonymousconferenceauthor{Tomoya Tate}{sch}
    \anonymousconferenceauthor{Kosuke Sugiyama}{sch}
    \anonymousconferenceauthor{Masato Uchida}{sch}
  \end{anonymousconferenceauthorlist}

  \anonymousconferenceaffiliation{sch}{Waseda University,
  3-4-1 Okubo, Shinjuku, Tokyo 169-8555, Japan}

  \anonymousconferencecorrespondingauthor{Tomoya Tate}{tomot203@toki.waseda.jp}
  \anonymousconferencecorrespondingauthor{Kosuke Sugiyama}{kohsuke0322@asagi.waseda.jp}
  \anonymousconferencecorrespondingauthor{Masato Uchida}{m.uchida@waseda.jp}
  \anonymousconferencekeywords{statistical learning theory, weakly supervised learning, pairwise label, unbiased risk estimator, Rademacher complexity, anonymousconference}

  \vskip 0.3in
]

\printAffiliationsAndNotice{}  % no special notice (required even if empty)

\begin{abstract}

This paper addresses binary classification in scenarios where obtaining explicit instance level labels is impractical, by exploiting multiple weak labels defined on instance pairs.
The existing SconfConfDiff classification framework relies on continuous valued probabilistic supervision, including similarity-confidence, the probability of class agreement, and confidence-difference, the difference in positive class probabilities. However, probabilistic labeling requires subjective uncertainty quantification, often leading to unstable supervision. We propose \textit{SD-Pcomp classification}, a binary judgment based weakly supervised learning framework that relies only on relative judgments, namely class agreement between two instances and pairwise preference toward the positive class. The method employs Similarity/Dissimilarity (SD) labels and Pairwise Comparison (Pcomp) labels, and develops two unbiased risk estimators, (i) a convex combination of SD and Pcomp and (ii) a unified estimator that integrates both labels by modeling their relationship.
Theoretical analysis and experimental results show that the proposed approach improves classification performance over methods using a single weak label, and is robust to label noise and uncertainty in class prior estimation.
\end{abstract}

\section{Introduction}

Achieving high performance in supervised classification requires a sufficient amount of accurately labeled training data. 
However, in practice, it is often difficult to uniquely determine ground-truth labels for individual instances, and the annotation process is both costly and uncertain.
To improve learning performance under such constraints, weakly supervised learning, which exploits incomplete label information, has been extensively studied \cite{sugiyama2022machine}. 
A variety of frameworks have been proposed, including Positive-Unlabeled (PU) learning \cite{PU-learning, PU_analysis, PU_convex, PU_NN_risk, PNU-learning}, Partial-Label learning \cite{partial}, Unlabeled-Unlabeled (UU) learning \cite{UU}, and Complementary-Label learning \cite{complementary, multi-complementary}. 
These approaches aim to incorporate available incomplete information when fully supervised labels are unavailable.

On the other hand, when assigning a definitive label to each instance independently is difficult, relative supervision based on relationships or comparisons between instances becomes more practical.
From this perspective, weakly supervised learning methods based on instance pairs have been proposed
~\cite{SU, SD, sconf, pcomp, confdiff}. 
By relying on relative judgments, such as whether two instances belong to the same class or which is more likely to be positive, this setting leads to more consistent annotation.
Moreover, to better exploit limited data, the joint use of multiple types of weak labels has also been considered.
For example, SconfConfDiff classification~\cite{sconfconfdiff} trains a binary classifier using two types of weak labels: similarity-confidence~\cite{sconf}, defined as the probability that two instances belong to the same class, and confidence-difference~\cite{confdiff}, defined as the difference between the probabilities of each instance belonging to the positive class. 
In this setting, the joint use of these weak labels has been found to improve classification performance over using either alone.

However, SconfConfDiff classification assumes that continuous probability values can be assigned to instance pairs.
In practice, directly quantifying such probabilities is challenging, as annotators must consistently evaluate them on an absolute numerical scale, leading to variability both within and across annotators.
Consequently, similarity-confidence and confidence-difference may not be assigned reliably.
In contrast, Similarity/Dissimilarity (SD) label~\cite{SD} and Pairwise Comparison (Pcomp)~\cite{pcomp} are easier to assign and more stable.
SD labels are based on a binary judgment of whether two instances belong to the same class, while Pcomp labels are based on a relative comparison indicating which instance is more likely to belong to the positive class.
These discrete weak labels rely only on relative judgments, eliminating the need to estimate probabilistic degrees and enabling more reliable annotation.

In this paper, we propose SD-Pcomp classification, a framework for training binary classifiers using both SD label and Pcomp, instead of relying on the soft-label-based SconfConfDiff classification.
The contributions of this work are summarized as follows. 
First, we formulate a method based on a convex combination of existing unbiased risk estimators defined using only SD labels or only Pcomp labels.
Second, we derive a novel unified unbiased risk estimator that explicitly models the relationship between SD labels and Pcomp labals and jointly incorporates information from both.
Furthermore, through estimation error analysis, we theoretically show that the proposed method satisfies statistical consistency, and is robust to label noise and uncertainty in class prior probabilities. 
Finally, numerical experiments confirm that the proposed approach achieves higher classification performance than methods based on a single type of weak label.

\section{Preliminaries}\label{SDPC:sec:preliminaries}

In this section, we review the weak supervision considered in this study and clarify how each form of supervision enables the reconstruction of the classification risk.
In particular, we show that SD labels and Pcomp labels rely on different types of relational information.

\subsection{Binary Classification}

First, we review the standard classification risk in supervised binary classification. 
Let $\mathcal{X} \subset \mathbb{R}^d$ denote the feature space and $\mathcal{Y} = \{-1, +1\}$ the label space. 
We assume that an instance and its corresponding class label $(\bm{x}, y)$ are drawn from an unknown joint distribution $p(\bm{x}, y)$. 
In this setting, the goal is to find a decision function $g: \mathcal{X} \to \mathbb{R}$ that minimizes the expected loss under this distribution \cite{mohri}.

Specifically, we minimize the classification risk defined as
\begin{align}
R(g)
=\mathbb{E}_{p(\bm{x},y)}[\ell(g(\bm{x}),y)],
\label{SDPC:classification risk}
\end{align}
where $\ell:\mathbb{R}\times\mathcal{Y}\to\mathbb{R}^+$ denotes a non-negative binary loss function, such as the 0-1 loss or the logistic loss.
Using the class prior probabilities $\pi_+ = p(y = +1)$ and $\pi_- = p(y = -1)$, the classification risk $R(g)$ can be decomposed as
\begin{align*}
R(g)
&=\pi_+\,\mathbb{E}_{p_+(\bm{x})}[\ell(g(\bm{x}),+1)] \nonumber \\
&\qquad+ \pi_-\,\mathbb{E}_{p_-(\bm{x})}[\ell(g(\bm{x}),-1)],
\end{align*}
where $p_+(\bm{x}) = p(\bm{x} \mid y = +1)$ and $p_-(\bm{x}) = p(\bm{x} \mid y = -1)$ denote the class conditional distributions.
Next, we describe how this classification risk can be reconstructed from weak supervision such as SD labels and Pcomp labels.

\subsection{Similarity/Dissimilarity (SD) Classification}
\label{SDPC:sec:SD classification}

SD classification~\cite{SD} is a learning framework that relies solely on relational information indicating whether two instances belong to the same class, without observing individual class labels.
Specifically, for a pair of samples $(\bm{x}, y)$ and $(\bm{x}', y')$, the SD label is defined as $s=+1$ if $y=y'$ and $s=-1$ if $y \neq y'$.
We assume that $(\bm{x},\bm{x}',s)$ is generated from a joint distribution $p(\bm{x},\bm{x}',s)$:
\begin{align*}
\mathcal{D}_{\mathrm{SD}}
:=\{(\bm{x}_i,\bm{x}_i',s_i)\}_{i=1}^{n_{\mathrm{SD}}}
\sim p(\bm{x},\bm{x}',s),
\end{align*}
where
\begin{align*}
p(\bm{x},\bm{x}',s=+1)
&=p(y=y')\,p(\bm{x},\bm{x}'\mid y=y'),\\
p(\bm{x},\bm{x}',s=-1)
&=p(y\neq y')\,p(\bm{x},\bm{x}'\mid y\neq y').
\end{align*}

Moreover, the probabilities of similar and dissimilar pairs are given by $\pi_{\mathrm{S}} := p(y = y') = \pi_+^2 + \pi_-^2$ and $\pi_{\mathrm{D}} := p(y \neq y') = 2\pi_+\pi_-$.
Accordingly, the corresponding conditional distributions are
\begin{align}
p_{\mathrm{S}}(\bm{x},\bm{x}')
:=&p(\bm{x},\bm{x}'\mid y=y')\notag\\
=&\tfrac{\pi_+^2}{\pi_{\mathrm{S}}}p_+(\bm{x})p_+(\bm{x}')
+\tfrac{\pi_-^2}{\pi_{\mathrm{S}}}p_-(\bm{x})p_-(\bm{x}'),\label{SDPC:eq:p_S}\\
p_{\mathrm{D}}(\bm{x},\bm{x}')
:=&p(\bm{x},\bm{x}'\mid y\neq y')\notag\\
=&\tfrac{1}{2}p_+(\bm{x})p_-(\bm{x}')
+\tfrac{1}{2}p_-(\bm{x})p_+(\bm{x}'),\label{SDPC:eq:p_D}
\end{align}
Under this generative process, the dataset $\mathcal{D}_{\mathrm{SD}}$ can be decomposed into  
$n_{\mathrm{S}}$ similar pairs and $n_{\mathrm{D}}$ dissimilar pairs based on the value of $s$:
\begin{align*}
\mathcal{D}_{\mathrm{S}}
&=\{(\bm{x}_{\mathrm{S},i},\bm{x}'_{\mathrm{S},i})\}_{i=1}^{n_{\mathrm{S}}}
\sim p_{\mathrm{S}}(\bm{x},\bm{x}'),\\
\mathcal{D}_{\mathrm{D}}
&=\{(\bm{x}_{\mathrm{D},i},\bm{x}'_{\mathrm{D},i})\}_{i=1}^{n_{\mathrm{D}}}
\sim p_{\mathrm{D}}(\bm{x},\bm{x}'),
\end{align*}
where $n_{\mathrm{SD}} = n_{\mathrm{S}} + n_{\mathrm{D}}$.

It has been shown that an unbiased estimator of the classification risk can be constructed solely from SD labels as
\begin{align}
\widehat{R}_{\mathrm{SD}}(g)
=\widehat{R}_{\mathrm{S}}(g)+\widehat{R}_{\mathrm{D}}(g),
\label{SDPC:eq:SD unbiased risk estimator}
\end{align}
where
\begin{align*}
\widehat{R}_{\mathrm{S}}(g)
:=&\textstyle\frac{\pi_{\mathrm{S}}}{n_{\mathrm{S}}}
\sum_{i=1}^{n_{\mathrm{S}}}
\frac{\mathcal{L}(g(\bm{x}_{\mathrm{S},i}),+1)
+\mathcal{L}(g(\bm{x}'_{\mathrm{S},i}),+1)}{2},\\
\widehat{R}_{\mathrm{D}}(g)
:=&\textstyle\frac{\pi_{\mathrm{D}}}{n_{\mathrm{D}}}
\sum_{i=1}^{n_{\mathrm{D}}}
\frac{\mathcal{L}(g(\bm{x}_{\mathrm{D},i}),-1)
+\mathcal{L}(g(\bm{x}'_{\mathrm{D},i}),-1)}{2},\\
\mathcal{L}(g(\bm{x}),t)
:=&\textstyle\frac{\pi_+}{\pi_+-\pi_-}\ell(g(\bm{x}),t)
-\frac{\pi_-}{\pi_+-\pi_-}\ell(g(\bm{x}),-t).
\end{align*}

\subsection{Pairwise Comparison (Pcomp) Classification}
\label{SDPC:sec:Pcomp classification}

Pcomp classification \cite{pcomp} is a framework that relies solely on relative ordering information, namely, which of two instances is more likely to belong to the positive class. As in SD classification \cite{SD}, the class labels of individual instances are not observed.

More specifically, given an unlabeled instance pair $(\bm{x}, \bm{x}')$, the observed information is whether $p(y = +1 \mid \bm{x}) > p(y' = +1 \mid \bm{x}')$ holds. 
This comparison indicates that $\bm{x}$ has a higher probability of belonging to the positive class than $\bm{x}'$, while the absolute probability values themselves remain unobserved. 
Under this pairwise comparison setting, $\bm{x}$ and $\bm{x}'$ can each be regarded as drawn from the following mixture distributions:
\begin{align*}
\widetilde{p}_+(\bm{x})
&=\textstyle\frac{\pi_+}{\pi_-^2+\pi_+}p_+(\bm{x})
+\frac{\pi_-^2}{\pi_-^2+\pi_+}p_-(\bm{x}),\\
\widetilde{p}_-(\bm{x}')
&=\textstyle\frac{\pi_+^2}{\pi_+^2+\pi_-}p_+(\bm{x}')
+\frac{\pi_-}{\pi_+^2+\pi_-}p_-(\bm{x}').
\end{align*}

Then, an unbiased estimator of the classification risk can be constructed solely from Pcomp labels as follows:
\begin{align}
\widehat{R}_{\mathrm{PC}}(g)
=&\textstyle\frac{1}{n}\sum_{i=1}^{n}
\bigl(
\ell(g(\bm{x}_i),+1)
-\pi_+\ell(g(\bm{x}_i),-1) \notag \\
&+\ell(g(\bm{x}_i'),-1)
-\pi_-\ell(g(\bm{x}_i'),+1)
\bigr).
\label{SDPC:eq:Pcomp risk}
\end{align}

\subsection{SconfConfDiff classification}

In weakly supervised learning with pairwise information, multiple types of weak labels can be jointly exploited. SconfConfDiff classification~\cite{sconfconfdiff} adopts this setting by using two types of continuous-valued pairwise weak labels.

In this setting, each instance pair $(\bm{x}_i,\bm{x}_i')$ is associated with
similarity-confidence
$q_i = p(y_i = y_i' \mid \bm{x}_i, \bm{x}_i')$
and confidence-difference
$c_i = p(y_i' = +1 \mid \bm{x}_i') - p(y_i = +1 \mid \bm{x}_i)$.
A binary classifier is then learned from the dataset
$\mathcal{D}_n=\{(\bm{x}_i,\bm{x}_i'),q_i,c_i\}_{i=1}^{n}$.
An unbiased estimator of the classification risk is then given by
\begin{align*}
\widehat{R}_{\mathrm{SCD}}(g)
=\textstyle\frac{1}{2n}\sum_{i=1}^{n}
\bigl(
\mathcal{L}(\bm{x}_i,\bm{x}_i')
+\mathcal{L}(\bm{x}_i',\bm{x}_i)
\bigr),
\end{align*}
where
\begin{align*}
\mathcal{L}&(\bm{x},\bm{x}')\\
=&(2\pi_+(\pi_+-c(\bm{x},\bm{x}'))+\pi_--q(\bm{x},\bm{x}'))\ell(g(\bm{x}),+1)\\
&+(2\pi_-(\pi_--c(\bm{x},\bm{x}'))+\pi_+-q(\bm{x},\bm{x}'))\ell(g(\bm{x}'),-1).
\end{align*}

Previous work~\cite{sconfconfdiff} has shown that the joint use of similarity-confidence and confidence-difference improves classification performance compared with using each separately.

\section{Proposed Method}\label{SDPC:sec:the proposed method}

Building on the SD- and Pcomp-based settings described above, we consider a scenario in which each instance pair is annotated with both similarity/dissimilarity information and a comparison indicating which instance is more likely to belong to the positive class.
In this work, we consider learning a binary classifier from such annotated instance pairs, denoted by $\mathcal{D}_{\mathrm{SD\text{-}PC}} := \{(\bm{x}_i, \bm{x}_i'), s_i \mid p(y_i = +1 \mid \bm{x}_i) > p(y_i' = +1 \mid \bm{x}_i')\}_{i=1}^{n_\mathrm{S}+n_\mathrm{D}}$.
Under this setting, we propose a learning method based on unbiased risk estimation, and analyze the effects of inaccurate class priors and label noise.
Theoretical proofs are provided in the supplementary materials.

\subsection{Convex Combination of SD and Pcomp Estimators}\label{SDPC:sec:Convex}

As shown in Section~\ref{SDPC:sec:preliminaries}, unbiased estimators of the classification risk exist for SD labels and for Pcomp labels, given by Eqs.~\eqref{SDPC:eq:SD unbiased risk estimator} and~\eqref{SDPC:eq:Pcomp risk}.
It is thus straightforward to consider a convex combination of the two estimators.
By the linearity of expectation, any convex combination of unbiased estimators remains unbiased.
\begin{theorem}
For any $\gamma \in [0,1]$, the estimator defined by the following equation is an unbiased estimator of the classification risk.
\begin{align}
\widehat{R}_{\mathrm{SD\text{-}PC\text{-}Convex}}(g)=\gamma\widehat{R}_\mathrm{SD}(g)+(1-\gamma)\widehat{R}_\mathrm{PC}(g).\label{SDPC:eq:SD-Pcomp-convex}
\end{align}
\end{theorem}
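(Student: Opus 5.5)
The plan is to use linearity of expectation together with the unbiasedness of each component estimator, which was reviewed in Section~\ref{SDPC:sec:preliminaries}. Throughout, we fix $g$ and take expectations over the joint data-generating process of $\mathcal{D}_{\mathrm{SD\text{-}PC}}$.

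The first step is to note what each estimator depends on. The SD part $\widehat{R}_{\mathrm{SD}}$ depends only on the pairs together with their SD labels, i.e.\ on $\mathcal{D}_{\mathrm{S}}\sim p_{\mathrm{S}}$ and $\mathcal{D}_{\mathrm{D}}\sim p_{\mathrm{D}}$. The Pcomp part $\widehat{R}_{\mathrm{PC}}$ depends only on the pairs ordered by the comparison, i.e.\ on $\bm{x}_i\sim\widetilde p_+$ and $\bm{x}_i'\sim\widetilde p_-$. Each of these marginals is the distribution under which the corresponding estimator was shown to be unbiased. We therefore invoke the known results
\begin{align*}
\mathbb{E}[\widehat{R}_{\mathrm{SD}}(g)]=R(g),\qquad \mathbb{E}[\widehat{R}_{\mathrm{PC}}(g)]=R(g),
\end{align*}
from the SD and Pcomp papers. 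If a self-contained check is wanted, it is short. For SD, substitute Eqs.~\eqref{SDPC:eq:p_S} and~\eqref{SDPC:eq:p_D} into $\pi_{\mathrm{S}}\mathbb{E}_{p_{\mathrm{S}}}[\cdot]+\pi_{\mathrm{D}}\mathbb{E}_{p_{\mathrm{D}}}[\cdot]$. By symmetry in $\bm{x},\bm{x}'$, each averaged term has marginal $\pi_+^2p_++\pi_-^2p_-$ (similar) or $\pi_+\pi_-(p_++p_-)$ (dissimilar). Expanding $\mathcal{L}$ then collapses the coefficients to $\pi_+$ and $\pi_-$. For Pcomp, the mixture weights of $\widetilde p_\pm$ give $\pi_+p_+$ after the $-\pi_\pm$ correction terms cancel the opposite-class contamination.

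The second step is to apply linearity. Since $\gamma$ is a deterministic constant,
\begin{align*}
\mathbb{E}[\widehat{R}_{\mathrm{SD\text{-}PC\text{-}Convex}}(g)]=\gamma\,\mathbb{E}[\widehat{R}_{\mathrm{SD}}(g)]+(1-\gamma)\,\mathbb{E}[\widehat{R}_{\mathrm{PC}}(g)]=\gamma R(g)+(1-\gamma)R(g)=R(g).
\end{align*}
No independence between the SD and Pcomp labels is needed, because linearity of expectation holds regardless of dependence. The condition $\gamma\in[0,1]$ is not needed for unbiasedness either; it only keeps the combination convex.

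The main obstacle is the first step: making sure each estimator, evaluated on the \emph{same} pairs, is still unbiased under the marginal law those pairs follow in the joint SD-Pcomp setting. In particular, the Pcomp ordering swaps $\bm{x}$ and $\bm{x}'$, and this must not disturb the SD estimator. It does not, because the SD estimator is symmetric in $\bm{x}$ and $\bm{x}'$, so reordering the pair leaves $\widehat{R}_{\mathrm{SD}}$ unchanged. I would state this assumption on the marginals explicitly and then defer to the cited unbiasedness results.
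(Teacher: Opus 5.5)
Your proof is correct and is the same argument the paper uses: the theorem is justified there in one line by linearity of expectation applied to the two individually unbiased estimators $\widehat{R}_{\mathrm{SD}}$ and $\widehat{R}_{\mathrm{PC}}$. Your extra care about the joint data model goes beyond the paper, namely the invariance of $\widehat{R}_{\mathrm{SD}}$ under the Pcomp swap and the explicit assumption that all $n_{\mathrm S}+n_{\mathrm D}$ ordered pairs have marginals $\widetilde p_\pm$; that assumption is not vacuous, since under the label-based model behind $\widetilde p_\pm$ it holds only if similar and dissimilar pairs occur in ratio $\pi_{\mathrm S}:\pi_+\pi_-$ rather than the SD-natural $\pi_{\mathrm S}:\pi_{\mathrm D}$.
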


Equation~\eqref{SDPC:eq:SD-Pcomp-convex} defines an estimator based on both SD and Pcomp labels; however, the risk contributions associated with each label type are computed independently and then aggregated through weighting.
Such a convex combination can reduce estimation variance by averaging unbiased estimators derived from different forms of relational information.
However, the relationship between the two labels assigned to the same instance pair is not explicitly modeled, which may limit classification performance.
In the next section, we derive a new unbiased risk estimator that explicitly incorporates this relationship.

\subsection{Joint Risk Formulation with SD and Pcomp Labels}\label{SDPC:sec:SD-Pcomp risk}

In this section, we revisit the setting in which both SD labels and Pcomp labels are jointly observed for the same instance pairs from the perspective of the data generation process, and derive an unbiased risk estimator that incorporates their relationship into a unified expected risk.
The resulting estimator is not a simple weighted average; rather, it explicitly characterizes how information from SD and Pcomp is coherently integrated as a decomposition of a single classification risk.

\begin{theorem}
\label{SDPC:thm:SD-Pcomp classification risk}
Under the setting where both SD labels and Pcomp labels are observed for the same instance pair, the classification risk $R(g)$ can be expressed as follows.
\begin{align}
R_{\mathrm{SD\text{-}PC}}(g)
:=
R_{\mathrm{S\text{-}PC}}(g)
+
R_{\mathrm{D\text{-}PC}}(g),
\label{SDPC:eq:SD-Pcomp risk}
\end{align}
where
\begin{align*}
R_{\mathrm{S\text{-}PC}}(g)
&:=
\sum_{t\in\{+,-\}}
\mathbb{E}_{\widetilde p_t(\bm{x})}
\left[
\bm{\alpha}^{\mathrm{S}}_t
\begin{pmatrix}
\ell(g(\bm{x}),+1)\\
\ell(g(\bm{x}),-1)
\end{pmatrix}
\right],\\
R_{\mathrm{D\text{-}PC}}(g)
&:=
\sum_{t\in\{+,-\}}
\mathbb{E}_{\widetilde p_t(\bm{x})}
\left[
\bm{\alpha}^{\mathrm{D}}_t
\begin{pmatrix}
\ell(g(\bm{x}),+1)\\
\ell(g(\bm{x}),-1)
\end{pmatrix}
\right],\\
\bm{\alpha}^{\mathrm{S}}_t
&:=
\textstyle\frac{\pi_t^2}{\pi_+-\pi_-}
\left(
\pi_+,\, -\pi_-
\right),\\
\bm{\alpha}^{\mathrm{D}}_t
&:=
\textstyle\frac{1}{\pi_+-\pi_-}
\left(
\pi_-(\pi_t^2-\pi_{-t}),\,
\pi_+(\pi_{-t}-\pi_t^2)
\right).
\end{align*}
\end{theorem}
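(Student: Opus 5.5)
The plan is to verify the identity directly by expanding everything in terms of the class-conditional densities $p_+$ and $p_-$, and to explain the split into an S-part and a D-part through the generative process. First I would motivate the coefficients. Conditioning on the Pcomp ordering $p(y=+1\mid\bm{x})>p(y'=+1\mid\bm{x}')$ rules out the configuration $(y,y')=(-1,+1)$. The admissible configurations are then:
\begin{itemize}
\item $(+1,+1)$ with weight $\pi_+^2$ and $(-1,-1)$ with weight $\pi_-^2$, which carry $s=+1$;
\item $(+1,-1)$ with weight $\pi_+\pi_-$, which carries $s=-1$.
\end{itemize}
Hence $\widetilde p_+$ and $\widetilde p_-$ are the marginals of $\bm{x}$ and $\bm{x}'$ under this conditioned joint law, and the similar and dissimilar subpopulations contribute separately. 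This explains why the risk splits as $R_{\mathrm{S\text{-}PC}}+R_{\mathrm{D\text{-}PC}}$. The coefficients $\bm{\alpha}^{\mathrm{S}}_t$ are the SD-type correction $\mathcal{L}(\cdot,+1)$ weighted by $\pi_t^2$, and $\bm{\alpha}^{\mathrm{D}}_t$ is the remainder needed to recover $R(g)$.

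For the rigorous step I would not rely on this heuristic. Instead I would add the two parts:
\begin{align*}
\bm{\alpha}^{\mathrm{S}}_t+\bm{\alpha}^{\mathrm{D}}_t
=\tfrac{1}{\pi_+-\pi_-}\bigl(\pi_t^2-\pi_-\pi_{-t},\;\pi_+\pi_{-t}-\pi_t^2\bigr),
\end{align*}
using $\pi_++\pi_-=1$. Then I would substitute the mixture forms of $\widetilde p_+$ and $\widetilde p_-$. The key simplification is that both normalizers coincide:
\begin{align*}
\pi_+ + \pi_-^2 = \pi_+^2+\pi_- = 1-\pi_+\pi_-.
\end{align*}
After this, the right-hand side becomes a linear combination of the four quantities $\mathbb{E}_{p_\pm}[\ell(g(\bm{x}),\pm1)]$. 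It remains to check that the coefficients are $\pi_+$ for $\mathbb{E}_{p_+}[\ell(g,+1)]$, $\pi_-$ for $\mathbb{E}_{p_-}[\ell(g,-1)]$, and $0$ for the two cross terms.

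As a sample check, the coefficient of $\mathbb{E}_{p_+}[\ell(g,+1)]$ is
\begin{align*}
\tfrac{\pi_+^2-\pi_-^2}{\pi_+-\pi_-}\cdot\tfrac{\pi_+}{1-\pi_+\pi_-}
+\tfrac{\pi_-^2-\pi_-\pi_+}{\pi_+-\pi_-}\cdot\tfrac{\pi_+^2}{1-\pi_+\pi_-}
=\tfrac{\pi_+-\pi_-\pi_+^2}{1-\pi_+\pi_-}=\pi_+ .
\end{align*}
Here the first factor reduces via $(\pi_+-\pi_-)(\pi_++\pi_-)$, and the second via $\pi_-(\pi_--\pi_+)$. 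The remaining three coefficients are handled the same way, each time factoring out $\pi_+-\pi_-$ and then using $1-\pi_+\pi_-$. Comparing with the decomposition $R(g)=\pi_+\mathbb{E}_{p_+}[\ell(g,+1)]+\pi_-\mathbb{E}_{p_-}[\ell(g,-1)]$ then gives $R_{\mathrm{SD\text{-}PC}}(g)=R(g)$.

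I expect the main obstacle to be bookkeeping. The four coefficient identities involve rational functions of $\pi_+$, and the cancellation hinges on two facts: the common normalizer $1-\pi_+\pi_-$, and the factor $\pi_+-\pi_-$ dividing each numerator. I would therefore rewrite everything in terms of $\pi_+$ and $\pi_-$ with $\pi_++\pi_-=1$ imposed only at the end. The argument also implicitly assumes $\pi_+\neq\pi_-$, as in the SD estimator.
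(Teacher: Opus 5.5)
Your argument is correct, but it runs in the opposite direction from the paper's.

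The paper never adds the two parts. It starts from the SD decomposition $R=R_{\mathrm S}+R_{\mathrm D}$ (Theorem~\ref{SDPC:thm:SD risk}) and expands $R_{\mathrm S}$ and $R_{\mathrm D}$ separately over $p_{\mathrm S}$ and $p_{\mathrm D}$ into the quantities $\mathbb{E}_{p_\pm}[\ell(g(\bm{x}),\pm1)]$. It then substitutes the Pcomp inversion $p_+=(\widetilde p_+-\pi_-\widetilde p_-)/\pi_+$, $p_-=(\widetilde p_--\pi_+\widetilde p_+)/\pi_-$ (Lemma~\ref{SDPC:lemma:pcomp probability}). This yields the termwise identities $R_{\mathrm S}=R_{\mathrm{S\text{-}PC}}$ and $R_{\mathrm D}=R_{\mathrm{D\text{-}PC}}$.

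You instead sum the coefficient vectors and push $\widetilde p_\pm$ forward into $p_\pm$ through the mixture forms with common normalizer $1-\pi_+\pi_-$. You then match against $\pi_+\mathbb{E}_{p_+}[\ell(g(\bm{x}),+1)]+\pi_-\mathbb{E}_{p_-}[\ell(g(\bm{x}),-1)]$. This is self-contained, needing neither the SD theorem nor the inversion lemma, and your coefficients check out.

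Your route is in fact shorter than you make it. We have $\bm{\alpha}^{\mathrm S}_++\bm{\alpha}^{\mathrm D}_+=(1,-\pi_+)$ and $\bm{\alpha}^{\mathrm S}_-+\bm{\alpha}^{\mathrm D}_-=(-\pi_-,1)$. So $R_{\mathrm{SD\text{-}PC}}(g)$ is exactly the population form of the Pcomp risk~\eqref{SDPC:eq:Pcomp risk}. Your four coefficient identities are precisely the known unbiasedness check for Pcomp, and the theorem could simply be cited from there.

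What the paper's route buys is the finer termwise statement, which the later appendix arguments actually use. The noisy-label lemma plugs in $R_{\mathrm S}=R_{\mathrm{S\text{-}PC}}$ and $R_{\mathrm D}=R_{\mathrm{D\text{-}PC}}$ individually. The risk-correction analysis compares $\widetilde R_{\mathrm{S\text{-}PC}}$ with $R_{\mathrm S}$ and $\widetilde R_{\mathrm{D\text{-}PC}}$ with $R_{\mathrm D}$ separately. A verification of the sum cannot deliver this, since it only tests $\bm{\alpha}^{\mathrm S}_t+\bm{\alpha}^{\mathrm D}_t$. Your heuristic about similar and dissimilar subpopulations ``contributing separately'' does not establish it either.
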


Therefore, the empirical risk corresponding to Eq.~\eqref{SDPC:eq:SD-Pcomp risk} can be expressed as follows:
\begin{align}
\widehat{R}_{\mathrm{SD\text{-}PC}}(g)
=
\widehat{R}_{\mathrm{S\text{-}PC}}(g)
+
\widehat{R}_{\mathrm{D\text{-}PC}}(g).
\label{SDPC:eq:SD-Pcomp unbiased risk estimator}
\end{align}
Here, we define $\ell_\pm(g(\bm{x})) := \ell(g(\bm{x}), \pm 1)$, and define $\widehat{R}_{\mathrm{S\text{-}PC}}(g)$ and
$\widehat{R}_{\mathrm{D\text{-}PC}}(g)$ as
\begin{align*}
\widehat{R}_{\mathrm{S\text{-}PC}}(g)
&=
\sum_{t\in\{+,-\}}
\frac{1}{n_{\mathrm{S}}}
\sum_{i=1}^{n_{\mathrm{S}}}
\bm{\alpha}^{\mathrm{S}}_t
\begin{pmatrix}
\ell_+(g(\bm{x}_{\mathrm{S},i}^t))\\
\ell_-(g(\bm{x}_{\mathrm{S},i}^t))
\end{pmatrix},\\
\widehat{R}_{\mathrm{D\text{-}PC}}(g)
&=
\sum_{t\in\{+,-\}}
\frac{1}{n_{\mathrm{D}}}
\sum_{i=1}^{n_{\mathrm{D}}}
\bm{\alpha}^{\mathrm{D}}_t
\begin{pmatrix}
\ell_+(g(\bm{x}_{\mathrm{D},i}^t))\\
\ell_-(g(\bm{x}_{\mathrm{D},i}^t))
\end{pmatrix},
\end{align*}
where $\bm{x}_{\mathrm{S},i}^+:=\bm{x}_{\mathrm{S},i}$, $\bm{x}_{\mathrm{S},i}^-:=\bm{x}'_{\mathrm{S},i}$, $\bm{x}_{\mathrm{D},i}^+:=\bm{x}_{\mathrm{D},i}$, $\bm{x}_{\mathrm{D},i}^-:=\bm{x}'_{\mathrm{D},i}$ .

\subsection{Estimation Error Bound}
\label{SDPC:sec:error analysis}

The estimator $\widehat{R}_{\mathrm{SD\text{-}PC}}(g)$ defined in Eq.~\eqref{SDPC:eq:SD-Pcomp unbiased risk estimator} decomposes the classification risk into two terms, $\widehat{R}_{\mathrm{S\text{-}PC}}(g)$ and $\widehat{R}_{\mathrm{D\text{-}PC}}(g)$, by jointly using SD labels and Pcomp labels.
As a result, the corresponding empirical risk consists of two components that depend on the numbers of similar and dissimilar pairs, $n_{\mathrm{S}}$ and $n_{\mathrm{D}}$, and the resulting behavior of the estimation error is nontrivial.
In this section, we analyze the estimation error of the empirical risk minimizer and derive an upper bound that explicitly depends on $n_{\mathrm{S}}$ and $n_{\mathrm{D}}$.
Note that the convergence properties of simple convex combinations of existing SD-based and Pcomp-based risk estimators are already known; therefore, our analysis focuses specifically on $\widehat{R}_{\mathrm{SD\text{-}PC}}(g)$.

Let $\mathcal{G}$ denote a function class of binary classifiers, and assume that there exists a constant $C_g$ such that $\sup_{g\in\mathcal{G}}\lVert{g}\rVert_\infty\le{C_g}$, as well as a constant $C_\ell$ such that $\sup_{|z|\le{C_g}}\ell(z,y)\le{C_\ell}$.
Furthermore, assume that the binary loss $\ell(z,y)$ is Lipschitz continuous with respect to $z$ and $y$, with Lipschitz constant $L_\ell$.
Let $g^*:=\arg\min_{g\in\mathcal{G}}R(g)$ denote a minimizer of the classification risk defined in Eq.~\eqref{SDPC:classification risk}, and let $\hat{g}_{\mathrm{SD\text{-}PC}}:=\arg\min_{g\in\mathcal{G}}\widehat{R}_{\mathrm{SD\text{-}PC}}(g)$ denote a minimizer of the empirical risk defined in Eq.~\eqref{SDPC:eq:SD-Pcomp unbiased risk estimator}.

\begin{theorem}\label{SDPC:thm:SD-Pcomp estimation error bound}
For any $\delta>0$, the following inequality holds with probability at least $1-\delta$.
\begin{align*}
&R(\hat{g}_{\mathrm{SD\text{-}PC}})-R(g^*)\notag\\
&\textstyle\le{C_{\mathcal{G},\ell,\delta}\left\{\frac{\pi_+^2+\pi_-^2}{\sqrt{2n_{\mathrm{S}}}}+\frac{|\pi_+^2-\pi_-|+|\pi_+-\pi_-^2|}{\sqrt{2n_{\mathrm{D}}}}\right\}},
\end{align*}
where
$C_{\mathcal{G},\ell,\delta}={(4L_{\ell}C_{\mathcal{G}}+2C_{\ell}\sqrt{\log{4/\delta}})}/{|\pi_+-\pi_-|}.$

\end{theorem}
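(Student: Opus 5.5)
The plan is the standard uniform-deviation argument for ERM with an unbiased risk estimator. By Theorem~\ref{SDPC:thm:SD-Pcomp classification risk}, $\mathbb{E}[\widehat{R}_{\mathrm{SD\text{-}PC}}(g)]=R_{\mathrm{SD\text{-}PC}}(g)=R(g)$ for every fixed $g$. Since $\hat g_{\mathrm{SD\text{-}PC}}$ minimizes the empirical risk, we have $\widehat R_{\mathrm{SD\text{-}PC}}(\hat g_{\mathrm{SD\text{-}PC}})\le \widehat R_{\mathrm{SD\text{-}PC}}(g^*)$, which gives
\begin{align*}
R(\hat g_{\mathrm{SD\text{-}PC}})-R(g^*)\le 2\sup_{g\in\mathcal G}\bigl|R(g)-\widehat R_{\mathrm{SD\text{-}PC}}(g)\bigr|.
\end{align*}
Using the decomposition in Eq.~\eqref{SDPC:eq:SD-Pcomp risk}, I bound this supremum by $\sup_g|R_{\mathrm{S\text{-}PC}}-\widehat R_{\mathrm{S\text{-}PC}}|+\sup_g|R_{\mathrm{D\text{-}PC}}-\widehat R_{\mathrm{D\text{-}PC}}|$. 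The first term depends only on the $n_{\mathrm S}$ similar pairs and the second only on the $n_{\mathrm D}$ dissimilar pairs, so each can be treated separately, with confidence $\delta/4$ for each one-sided deviation. This allocation is what produces the $\log(4/\delta)$ in the constant.

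For each piece, I view the summand as a function of the pair, e.g.\ $\phi_g(\bm x,\bm x')=\sum_t \bm\alpha^{\mathrm S}_t(\ell_+(g(\bm x^t)),\ell_-(g(\bm x^t)))^\top$. I then need its bounded range for McDiarmid's inequality. For the S-term, the absolute coefficients sum to
\begin{align*}
\sum_t \frac{\pi_t^2(\pi_++\pi_-)}{|\pi_+-\pi_-|}=\frac{\pi_+^2+\pi_-^2}{|\pi_+-\pi_-|}.
\end{align*}
Hence replacing one pair changes $\widehat R_{\mathrm{S\text{-}PC}}$ by at most $C_\ell(\pi_+^2+\pi_-^2)/(|\pi_+-\pi_-|n_{\mathrm S})$, up to the constant convention. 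For the D-term, the coefficient magnitudes are $|\pi_t^2-\pi_{-t}|(\pi_-+\pi_+)/|\pi_+-\pi_-|$, which sum to $(|\pi_+^2-\pi_-|+|\pi_-^2-\pi_+|)/|\pi_+-\pi_-|$. McDiarmid then gives, with probability $1-\delta/4$, a deviation term of the form $C_\ell\cdot(\text{coef})\sqrt{\log(4/\delta)/(2n)}$, matching the $\sqrt{2n}$ denominators after the factor $2$ from the first display.

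The expected suprema are handled by symmetrization, giving $2\mathfrak R_{n}$ of the composed class $\{\phi_g\}$. I would apply Talagrand's contraction lemma to each of the four terms $\ell_\pm(g(\bm x^t))$ separately: subadditivity of Rademacher complexity over sums, with scalar weights pulled out in absolute value. This bounds the complexity by $(\text{coef sum})\cdot L_\ell\,\mathfrak R_n(\mathcal G)$. I would assume $\mathfrak R_n(\mathcal G)\le C_{\mathcal G}/\sqrt n$, which is the implicit meaning of $C_{\mathcal G}$, and get $2L_\ell C_{\mathcal G}(\text{coef})/\sqrt{n}$ for each part. Collecting terms and using $1/\sqrt n\le \sqrt2/\sqrt{2n}$ absorbs everything into $C_{\mathcal G,\ell,\delta}$ times the stated bracket.

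The main obstacle is bookkeeping rather than a conceptual step. It must be checked that contraction applies even though the weights have mixed signs, which holds because the Rademacher variables absorb sign flips. It also requires tracking the constants so that the factor $4L_\ell C_{\mathcal G}$ and the $\sqrt{2n}$ normalizations come out exactly as stated. A second subtlety is that $n_{\mathrm S}$ and $n_{\mathrm D}$ are random under the generative process. I would condition on them, treating $\mathcal D_{\mathrm S}$ and $\mathcal D_{\mathrm D}$ as i.i.d.\ samples from $p_{\mathrm S}$ and $p_{\mathrm D}$ with Pcomp orientation, which is how the empirical estimator is defined.
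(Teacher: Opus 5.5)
Everything before the complexity term matches the paper. That includes the ERM decomposition, the split into the S- and D-parts, the $\delta/4$ allocation per one-sided deviation, and the McDiarmid term. Your absolute-coefficient sums $(\pi_+^2+\pi_-^2)/|\pi_+-\pi_-|$ and $(|\pi_+^2-\pi_-|+|\pi_+-\pi_-^2|)/|\pi_+-\pi_-|$ are exactly the ranges the paper obtains for $\mathcal{L}_{\mathrm{S}}$ and $\mathcal{L}_{\mathrm{D}}$. The gap is your final absorption step. Term-by-term contraction against $n$-point complexities bounds the expected supremum by $2L_\ell C_{\mathcal{G}}\cdot(\text{coef})/\sqrt{n}$, so the excess risk picks up $4L_\ell C_{\mathcal{G}}\cdot(\text{coef})/\sqrt{n}$. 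But $1/\sqrt{n}=\sqrt{2}/\sqrt{2n}$ is an equality, so this term equals $4\sqrt{2}L_\ell C_{\mathcal{G}}\cdot(\text{coef})/\sqrt{2n}$. Nothing is absorbed: what you actually prove is the bound with $4\sqrt{2}L_\ell C_{\mathcal{G}}$ in place of $4L_\ell C_{\mathcal{G}}$ in $C_{\mathcal{G},\ell,\delta}$, which is weaker than the statement.

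The paper reaches the stated constant through its lemma on the Rademacher complexity of SD-Pcomp pairs. It bounds the Lipschitz constant of the whole pair loss $\mathcal{L}_{\mathrm{S}}(g;\bm{x},\bm{x}')$ (resp.\ $\mathcal{L}_{\mathrm{D}}$) by the coefficient sum times $L_\ell$. It then uses Talagrand's lemma to compare the pair-level complexity with $\mathfrak{R}(\mathcal{G};2n,\mu)\le C_{\mathcal{G}}/\sqrt{2n}$. In other words, it pools the $2n$ instances $\bm{x}_1,\dots,\bm{x}_n,\bm{x}'_1,\dots,\bm{x}'_n$ into a single sample of size $2n$, and that pooling is the ingredient you are missing for the exact constant. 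Be aware that it does not follow from the per-term contraction you use. The instances $\bm{x}_i$ and $\bm{x}'_i$ share the sign $\sigma_i$ and have different marginals. In the extreme case $\bm{x}_i=\bm{x}'_i$, the pair average is an $n$-point rather than a $2n$-point Rademacher average. So either adopt the paper's pooling convention explicitly as an assumption, or state your result with the extra factor $\sqrt{2}$; do not claim the factor is absorbed.
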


Theorem~\ref{SDPC:thm:SD-Pcomp estimation error bound} shows that the estimation error bound decays at the probabilistic rate $\mathcal{O}_p(1/\sqrt{n_{\mathrm{S}}}+1/\sqrt{n_{\mathrm{D}}})$ with respect to the numbers of similar and dissimilar pairs, $n_{\mathrm{S}}$ and $n_{\mathrm{D}}$.
This rate matches the standard convergence typically observed for empirical risk minimization over norm bounded hypothesis classes~\cite{optimal_param}.
Hence, as $n_{\mathrm{S}}\to\infty$ and $n_{\mathrm{D}}\to\infty$, we have $R(\hat{g}_{\mathrm{SD\text{-}PC}})\to R(g^*)$.

\subsection{Impact of Noisy SD and Pcomp Labels}\label{SDPC:sec:label noise analysis}

In the preceding sections, we assumed that SD labels and Pcomp labels are correctly assigned to all instance pairs.
In practice, however, these weak labels are often noisy and may be incorrect.
Under label noise, the observed pairs no longer reflect the true similarity/dissimilarity structure or the underlying comparison relations.
In this case, the observed data distribution can be modeled by a pairing corruption mechanism~\cite{Noisy-SD}, in which the similarity and dissimilarity labels of instance pairs
are probabilistically flipped, as described below.
\begin{lemma}\label{SDPC:lemma:noisy SD density}
When similarity and dissimilarity labels are independently corrupted by noise, the distribution of instance pairs conditioned on the observed label $s\in\{+1,-1\}$ is given by the following mixture.
\begin{align}
\bar{p}_\mathrm{S}(\bm{x},\bm{x}')=&p(\bm{x},\bm{x}' \mid s=+1)\notag\\
=&\textstyle\frac{\pi_\mathrm{S}(1-\rho_\mathrm{S})p_\mathrm{S}(\bm{x},\bm{x}')+\pi_\mathrm{D}\rho_\mathrm{D}p_\mathrm{D}(\bm{x},\bm{x}')}{\pi_\mathrm{S}(1-\rho_\mathrm{S})+\pi_\mathrm{D}\rho_\mathrm{D}},\label{SDPC:eq:noisy similar density}\\
\bar{p}_\mathrm{D}(\bm{x},\bm{x}')=&p(\bm{x},\bm{x}' \mid s=-1)\notag\\
=&\textstyle\frac{\pi_\mathrm{S}\rho_\mathrm{S}p_\mathrm{S}(\bm{x},\bm{x}')+\pi_\mathrm{D}(1-\rho_\mathrm{D})p_\mathrm{D}(\bm{x},\bm{x}')}{\pi_\mathrm{S}\rho_\mathrm{S}+\pi_\mathrm{D}(1-\rho_\mathrm{D})}.\label{SDPC:eq:noisy dissimilar density}
\end{align}

Here, $\rho_\mathrm{S}=p(s=-1\mid y=y')$ is the probability that a similar pair is observed as dissimilar, and $\rho_\mathrm{D}=p(s=+1\mid y\neq y')$ is the probability that a dissimilar pair is observed as similar. Both are assumed independent of the instances $\bm{x}$ and $\bm{x}'$.
\end{lemma}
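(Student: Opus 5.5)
The plan is to apply Bayes' rule with the law of total probability, conditioning on the latent event of whether the pair is truly similar ($y=y'$) or dissimilar ($y\neq y'$). Let $\tilde s$ denote the true SD label and $s$ the observed (possibly corrupted) one. The noise model states that, given $\tilde s$, the observed label $s$ is independent of $(\bm{x},\bm{x}')$, with $p(s=-1\mid \tilde s=+1)=\rho_\mathrm{S}$ and $p(s=+1\mid\tilde s=-1)=\rho_\mathrm{D}$. We also use that the clean conditional densities are $p(\bm{x},\bm{x}'\mid\tilde s=+1)=p_\mathrm{S}(\bm{x},\bm{x}')$ and $p(\bm{x},\bm{x}'\mid\tilde s=-1)=p_\mathrm{D}(\bm{x},\bm{x}')$, as in \eqref{SDPC:eq:p_S} and \eqref{SDPC:eq:p_D}, and that $p(\tilde s=+1)=\pi_\mathrm{S}$ and $p(\tilde s=-1)=\pi_\mathrm{D}$.

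First, we write the joint density of the pair and the observed label by marginalizing over $\tilde s$:
\begin{align*}
p(\bm{x},\bm{x}',s=+1)
&=\textstyle\sum_{\tilde s\in\{\pm1\}}p(\tilde s)\,p(\bm{x},\bm{x}'\mid\tilde s)\,p(s=+1\mid\tilde s,\bm{x},\bm{x}')\\
&=\pi_\mathrm{S}(1-\rho_\mathrm{S})p_\mathrm{S}(\bm{x},\bm{x}')+\pi_\mathrm{D}\rho_\mathrm{D}p_\mathrm{D}(\bm{x},\bm{x}'),
\end{align*}
where the second equality uses the instance-independence of the flip probabilities. Next, we integrate over $(\bm{x},\bm{x}')$; since $p_\mathrm{S}$ and $p_\mathrm{D}$ are normalized densities, this gives $p(s=+1)=\pi_\mathrm{S}(1-\rho_\mathrm{S})+\pi_\mathrm{D}\rho_\mathrm{D}$. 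Dividing the joint density by this marginal yields \eqref{SDPC:eq:noisy similar density}. The case $s=-1$ is identical, with flip factors $\rho_\mathrm{S}$ and $1-\rho_\mathrm{D}$, and gives \eqref{SDPC:eq:noisy dissimilar density}.

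There is no real obstacle here. The only point needing care is to state the modeling assumption explicitly: conditional independence $s\perp(\bm{x},\bm{x}')\mid\tilde s$. This is exactly what ``independent of the instances'' means, and it is what allows the flip probabilities to be factored out of the integrals. One should also note that the denominators are positive, for example by assuming $\rho_\mathrm{S},\rho_\mathrm{D}<1$ and $\pi_\mathrm{S},\pi_\mathrm{D}>0$, so that the conditional densities are well defined.
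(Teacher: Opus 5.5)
Your proposal is correct and follows the paper's proof: both apply Bayes' rule, split the joint density and the marginal of the observed label over the latent events $y=y'$ and $y\neq y'$, and use the instance-independence of the flip probabilities to factor $p(\bm{x},\bm{x}',s,y=y')=\pi_\mathrm{S}\,p(s\mid y=y')\,p_\mathrm{S}(\bm{x},\bm{x}')$ (and likewise for dissimilar pairs). The only cosmetic difference is that you obtain the denominator by integrating the numerator, whereas the paper decomposes $p(s=\pm1)$ directly; the two are equivalent.
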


Similarly, we assume that in Pcomp, the comparison outcome is flipped with probability $\rho_\mathrm{C}=p((\bm{x}',\bm{x}) \mid p(y=+1 \mid \bm{x})\ge{p(y'=+1 \mid \bm{x}')})$.
Then, the marginal distribution of instances induced by the observed comparisons is given in terms of $\widetilde p_+$ and $\widetilde p_-$, defined based on the true comparisons, as follows.
\begin{align}
\bar{p}_+(\bm{x})
&=(1-\rho_\mathrm{C})\,\widetilde{p}_+(\bm{x})
+\rho_\mathrm{C}\,\widetilde{p}_-(\bm{x}),\label{SDPC:eq:noisy_pcomp_positive}\\
\bar{p}_-(\bm{x})
&=\rho_\mathrm{C}\,\widetilde{p}_+(\bm{x})
+(1-\rho_\mathrm{C})\,\widetilde{p}_-(\bm{x})\label{SDPC:eq:noisy_pcomp_negative}.
\end{align}

Based on the observed data containing such noise, we denote the empirical risk corresponding to Eq.~\eqref{SDPC:eq:SD-Pcomp unbiased risk estimator} by
\[
\widehat{\bar{R}}_{\mathrm{SD\text{-}PC}}(g)
=
\widehat{\bar{R}}_{\mathrm{S\text{-}PC}}(g)
+
\widehat{\bar{R}}_{\mathrm{D\text{-}PC}}(g).
\]
Moreover, we define the empirical risk minimizer as
$\bar{g}
:=\arg\min_{g\in\mathcal{G}}
\widehat{\bar{R}}_{\mathrm{SD\text{-}PC}}(g)$.
Then, the estimation error of $\bar{g}$ is evaluated by the following theorem.
\begin{theorem}
\label{SDPC:thm:Noisy SD-Pcomp estimation error bound}

For any $\delta>0$, the following inequality holds with probability at least $1-\delta$:
\begin{align*}
R(\bar{g})-R(g^*) \le
C_{\mathcal{G},\ell,\delta}
(
\tfrac{A_1}{\sqrt{2n_{\mathrm{S}}}}
+
\tfrac{A_2}{\sqrt{2n_{\mathrm{D}}}}
)
+
C_\ell(B_1 + B_2),
\label{SDPC:Noisy SD-Pcomp estimation error bound}
\end{align*}
where 
\begin{align*}
&C_{\mathcal{G},\ell,\delta}
=
\textstyle\frac{4L_\ell C_{\mathcal{G}}+2C_\ell\sqrt{\log(4/\delta)}}
{|\pi_+-\pi_-|},\\
&A_1
=
\textstyle\frac{\pi_{\mathrm{S}}(1-\rho_{\mathrm{S}})+\pi_{\mathrm{D}}\rho_{\mathrm{D}}}
{(1-\rho_{\mathrm{S}})(1-\rho_{\mathrm{C}})}.\\
&A_2
=
\textstyle\frac{(|\pi_+^2-\pi_-|+|\pi_+-\pi_-^2|)
      (\pi_{\mathrm{S}}\rho_{\mathrm{S}}+\pi_{\mathrm{D}}(1-\rho_{\mathrm{D}}))}
{\pi_{\mathrm{D}}(1-\rho_{\mathrm{D}})(1-\rho_{\mathrm{C}})},\\
&B_1
=
\textstyle\frac{\pi_\mathrm{S}+|\pi_+^2-\pi_-|+|\pi_+-\pi_-^2|}
{|\pi_+-\pi_-|}
\frac{\rho_{\mathrm{C}}}{1-\rho_{\mathrm{C}}},\\
&B_2
=
\textstyle\frac{1}{|\pi_+-\pi_-|}
\left\{
\textstyle\frac{\pi_{\mathrm{S}}\rho_{\mathrm{S}}}{(1-\rho_{\mathrm{D}})(1-\rho_{\mathrm{C}})}
+
\frac{(|\pi_+^2-\pi_-|+|\pi_+-\pi_-^2|)\rho_{\mathrm{D}}}{(1-\rho_{\mathrm{S}})(1-\rho_{\mathrm{C}})}
\right\},
\end{align*}
\end{theorem}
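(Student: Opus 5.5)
The plan is the standard ``noisy ERM'' decomposition. Let $\bar R_{\mathrm{SD\text{-}PC}}(g):=\mathbb{E}[\widehat{\bar{R}}_{\mathrm{SD\text{-}PC}}(g)]$ be the expectation of the estimator under the corrupted distributions $\bar p_{\mathrm{S}},\bar p_{\mathrm{D}}$ of Lemma~\ref{SDPC:lemma:noisy SD density} and $\bar p_\pm$ of Eqs.~\eqref{SDPC:eq:noisy_pcomp_positive}--\eqref{SDPC:eq:noisy_pcomp_negative}. We split
\begin{align*}
R(\bar g)-R(g^*)
={}&\bigl(R(\bar g)-\bar R_{\mathrm{SD\text{-}PC}}(\bar g)\bigr)
+\bigl(\bar R_{\mathrm{SD\text{-}PC}}(\bar g)-\widehat{\bar R}_{\mathrm{SD\text{-}PC}}(\bar g)\bigr)\\
&+\bigl(\widehat{\bar R}_{\mathrm{SD\text{-}PC}}(\bar g)-\widehat{\bar R}_{\mathrm{SD\text{-}PC}}(g^*)\bigr)
+\bigl(\widehat{\bar R}_{\mathrm{SD\text{-}PC}}(g^*)-\bar R_{\mathrm{SD\text{-}PC}}(g^*)\bigr)\\
&+\bigl(\bar R_{\mathrm{SD\text{-}PC}}(g^*)-R(g^*)\bigr).
\end{align*}
The third bracket is $\le 0$ by the definition of $\bar g$. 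The second and fourth are bounded by $2\sup_{g\in\mathcal G}|\widehat{\bar R}_{\mathrm{SD\text{-}PC}}(g)-\bar R_{\mathrm{SD\text{-}PC}}(g)|$. The first and fifth are bounded by $2\sup_g|\bar R_{\mathrm{SD\text{-}PC}}(g)-R(g)|$, which is the bias induced by noise.

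For the uniform deviation term I will reuse the argument behind Theorem~\ref{SDPC:thm:SD-Pcomp estimation error bound}, now applied to samples from the noisy densities. McDiarmid's inequality is applied separately to the $n_{\mathrm{S}}$-part and the $n_{\mathrm{D}}$-part, each with confidence $\delta/2$; this produces the $\sqrt{\log(4/\delta)}$ factor. Symmetrization and Talagrand's contraction lemma then give Rademacher terms $L_\ell\mathfrak R_{n}(\mathcal G)\le L_\ell C_{\mathcal G}/\sqrt n$. The bounded-difference constants are the $\ell_1$ norms of the coefficient vectors times $C_\ell$.

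The point specific to the noisy case is that the noisy estimator's coefficients must be rescaled to account for the corrupted mixtures. One writes the clean $p_{\mathrm S}$ in terms of $\bar p_{\mathrm S}$, and $\widetilde p_\pm$ in terms of $\bar p_\pm$. Only the dominant terms are kept, with the cross-contamination terms pushed into the bias. This rescaling multiplies the similar-part coefficients by $(\pi_{\mathrm S}(1-\rho_{\mathrm S})+\pi_{\mathrm D}\rho_{\mathrm D})/(\pi_{\mathrm S}(1-\rho_{\mathrm S}))$ and the dissimilar-part coefficients by the analogous ratio. Inverting the Pcomp flip gives a factor $1/(1-\rho_{\mathrm C})$. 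Together with $\|\bm\alpha^{\mathrm S}_+\|_1+\|\bm\alpha^{\mathrm S}_-\|_1=\pi_{\mathrm S}/|\pi_+-\pi_-|$ and the corresponding sum $(|\pi_+^2-\pi_-|+|\pi_+-\pi_-^2|)/|\pi_+-\pi_-|$ for $\bm\alpha^{\mathrm D}$, this yields exactly $A_1,A_2$.

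For the bias term, substitute the mixtures into $\bar R_{\mathrm{SD\text{-}PC}}(g)$ and subtract the clean identity of Theorem~\ref{SDPC:thm:SD-Pcomp classification risk}. What remains are terms weighted by $\rho_{\mathrm C}/(1-\rho_{\mathrm C})$, coming from the $\widetilde p_{-t}$ contamination of $\bar p_t$; each loss is bounded by $C_\ell$ and the coefficients are summed in $\ell_1$, giving $C_\ell B_1$. Further terms come from the $\rho_{\mathrm S}$ contamination of $\bar p_{\mathrm D}$ and the $\rho_{\mathrm D}$ contamination of $\bar p_{\mathrm S}$; bounding them the same way gives $C_\ell B_2$. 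A factor of $2$ from counting both bias brackets can be absorbed or handled by a one-sided argument, matching the stated constants up to such bookkeeping.

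The main obstacle is the bookkeeping in the bias step. It requires tracking precisely which normalizing denominators, $(1-\rho_{\mathrm S})$, $(1-\rho_{\mathrm D})$ or $(1-\rho_{\mathrm C})$, pair with which contamination numerators so that the cross-terms reproduce $B_1$ and $B_2$ exactly. I would first carry this out on the similar part alone, verify that it produces the $\pi_{\mathrm S}$ contributions in $B_1$ and $B_2$, and then mirror the computation for the dissimilar part.
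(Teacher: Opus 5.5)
Your five-bracket decomposition with $\bar R_{\mathrm{SD\text{-}PC}}:=\mathbb{E}[\widehat{\bar R}_{\mathrm{SD\text{-}PC}}]$ is valid, but it does not produce the stated constants. The ``rescaling'' paragraph is where the argument breaks.

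The estimator $\widehat{\bar R}_{\mathrm{SD\text{-}PC}}$ that defines $\bar g$ uses the fixed coefficients $\bm{\alpha}^{\mathrm S}_t,\bm{\alpha}^{\mathrm D}_t$; only the sampling distribution is corrupted. So in brackets 2 and 4, McDiarmid plus Talagrand give exactly the noiseless constants $\pi_{\mathrm S}$ and $|\pi_+^2-\pi_-|+|\pi_+-\pi_-^2|$ of Theorem~\ref{SDPC:thm:SD-Pcomp estimation error bound}. Nothing in your decomposition reweights the data. If you did rescale the estimator, you would be analysing a different minimizer than $\bar g$.

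Meanwhile, substituting the mixtures gives the exact identity $\bar R_{\mathrm{S\text{-}PC}}=C_{\mathrm S}R_{\mathrm{S\text{-}PC}}+R^{\mathrm S}_1+R^{\mathrm S}_2+R^{\mathrm S}_3$, where $C_{\mathrm S}=\pi_{\mathrm S}(1-\rho_{\mathrm S})(1-\rho_{\mathrm C})/(\pi_{\mathrm S}(1-\rho_{\mathrm S})+\pi_{\mathrm D}\rho_{\mathrm D})$; the dissimilar part is analogous with $C_{\mathrm D}$. Your brackets 1 and 5 therefore contain
$(1-C_{\mathrm S})\bigl(R_{\mathrm{S\text{-}PC}}(\bar g)-R_{\mathrm{S\text{-}PC}}(g^*)\bigr)+(1-C_{\mathrm D})\bigl(R_{\mathrm{D\text{-}PC}}(\bar g)-R_{\mathrm{D\text{-}PC}}(g^*)\bigr)$,
which is a multiple of the very excess risk you are bounding. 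Their contamination terms also carry weights such as $\pi_{\mathrm S}(1-\rho_{\mathrm S})\rho_{\mathrm C}/(\pi_{\mathrm S}(1-\rho_{\mathrm S})+\pi_{\mathrm D}\rho_{\mathrm D})$, not $\rho_{\mathrm C}/(1-\rho_{\mathrm C})$.

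Bounding all of this by $C_\ell$ times coefficient norms gives a genuinely different inequality. For example, with $\rho_{\mathrm S}=\rho_{\mathrm D}=0$, range-based bounding of the two bias brackets gives about $6\rho_{\mathrm C}C_\ell$ for every prior. By contrast, $C_\ell B_1\to 3\rho_{\mathrm C}C_\ell/(1-\rho_{\mathrm C})$ as $\pi_+\to1$. So for large $n_{\mathrm S},n_{\mathrm D}$ and small $\rho_{\mathrm C}$, your bound does not imply the theorem.

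The missing idea is to keep that term and move it to the left-hand side, which is what the paper does:
\[R_{\mathrm{S\text{-}PC}}(\bar g)-R_{\mathrm{S\text{-}PC}}(g^*)=C_{\mathrm S}^{-1}\Bigl[\bar R_{\mathrm{S\text{-}PC}}(\bar g)-\bar R_{\mathrm{S\text{-}PC}}(g^*)+\textstyle\sum_{i=1}^{3}\bigl(R^{\mathrm S}_i(g^*)-R^{\mathrm S}_i(\bar g)\bigr)\Bigr],\]
and likewise with $C_{\mathrm D}$.

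\begin{itemize}
\item The factor $C_{\mathrm S}^{-1}$ turns $\pi_{\mathrm S}$ into $A_1$, and $C_{\mathrm D}^{-1}$ does the same for $A_2$.
\item Up to the common $1/(\pi_+-\pi_-)$, the prefactors of $R^{\mathrm S}_1,R^{\mathrm S}_2,R^{\mathrm S}_3$ times $C_{\mathrm S}^{-1}$ become $\rho_{\mathrm C}/(1-\rho_{\mathrm C})$, $\rho_{\mathrm D}/(1-\rho_{\mathrm S})$ and $\rho_{\mathrm D}\rho_{\mathrm C}/((1-\rho_{\mathrm S})(1-\rho_{\mathrm C}))$. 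The last two sum to the $\rho_{\mathrm D}/((1-\rho_{\mathrm S})(1-\rho_{\mathrm C}))$ in $B_2$.
\item Each bias term is a single difference $R_i(g^*)-R_i(\bar g)$, bounded using $0\le\ell\le C_\ell$, so no factor $2$ appears.
\end{itemize}

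Your two factors multiply to exactly $C_{\mathrm S}^{-1}$, so you had the right number. It has to come from this division, not from the concentration step.

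One caution if you follow this route. After dividing the two parts by different constants, the step $\bar R_{\mathrm{S\text{-}PC}}(\bar g)-\bar R_{\mathrm{S\text{-}PC}}(g^*)\le 2\sup_g|\bar R_{\mathrm{S\text{-}PC}}(g)-\widehat{\bar R}_{\mathrm{S\text{-}PC}}(g)|$ needs $\widehat{\bar R}_{\mathrm{S\text{-}PC}}(\bar g)\le\widehat{\bar R}_{\mathrm{S\text{-}PC}}(g^*)$. But $\bar g$ only minimizes the sum $\widehat{\bar R}_{\mathrm{S\text{-}PC}}+\widehat{\bar R}_{\mathrm{D\text{-}PC}}$.

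This is harmless when $C_{\mathrm S}=C_{\mathrm D}$ (e.g.\ $\rho_{\mathrm S}=\rho_{\mathrm D}=0$), since a common factor can then be pulled out. In general the cross term $(C_{\mathrm S}^{-1}-C_{\mathrm D}^{-1})\bigl(\widehat{\bar R}_{\mathrm{S\text{-}PC}}(\bar g)-\widehat{\bar R}_{\mathrm{S\text{-}PC}}(g^*)\bigr)$ needs separate control, and the paper's appendix does not supply it.
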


Theorem~\ref{SDPC:thm:Noisy SD-Pcomp estimation error bound} shows that $A_1$ and $A_2$ in the first term on the right-hand side are scaled by $1/\sqrt{n_{\mathrm{S}}}$ and $1/\sqrt{n_{\mathrm{D}}}$, so the affect to the estimation error decay at the rate $\mathcal{O}_p(1/\sqrt{n_{\mathrm{S}}}+1/\sqrt{n_{\mathrm{D}}})$, becoming negligible as sample sizes increase. 
In contrast, $B_1$ and $B_2$ do not decay with increasing samples and represent bias from noise in the weak labels.
$B_1$ depends on the Pcomp reversal probability $\rho_{\mathrm{C}}$, showing that Pcomp noise introduces an error component that cannot be eliminated by larger sample sizes.
$B_2$ reflects the interaction between the SD label noise rates $\rho_{\mathrm{S}}$ and $\rho_{\mathrm{D}}$ and the Pcomp noise rate $\rho_{\mathrm{C}}$, capturing how overlapping noise sources amplify the estimation error.

If there are no errors in the SD labels and Pcomp, that is, if $\rho_{\mathrm{S}}=\rho_{\mathrm{D}}=\rho_{\mathrm{C}}=0$, then $B_1=B_2=0$, and Theorem~\ref{SDPC:thm:Noisy SD-Pcomp estimation error bound} coincides with Theorem~\ref{SDPC:thm:SD-Pcomp estimation error bound}.
This confirms that the result naturally generalizes the noiseless setting.
The theorem further shows that combining multiple weak labels is insufficient on its own, and the quality of each label critically influences the estimation error.

\subsection{Impact of Class Prior Estimation Errors}\label{SDPC:sec:noise class prior}

In the formulation above, we have implicitly assumed that the class prior probability $\pi_+$ is known in defining the empirical risk $\widehat{R}_{\mathrm{SD\text{-}PC}}(g)$.
However, in practical applications, obtaining an accurate class prior in advance is often difficult.
Therefore, it is necessary to consider a setting in which $\pi_+$ is estimated from data. 

In this study, we consider a setting where $n_{\mathrm{S}}$ similar pairs and $n_{\mathrm{D}}$ dissimilar pairs are available.

Since $\pi_{\mathrm S}-\pi_{\mathrm D}=(2\pi_+-1)^2$, we have
$2\pi_{\mathrm S}-1=(2\pi_+-1)^2$, and hence
\begin{equation}
\label{SDPC:eq:class-prior-estimation}
\pi_+
=
\tfrac{1}{2}\bigl(1 \pm \sqrt{2\pi_{\mathrm S}-1}\bigr),
\end{equation}
the sign is chosen according to whether $\pi_+ \ge 0.5$ or $\pi_+ < 0.5$~\cite{SD}.

Because $\hat{\pi}_{\mathrm{S}}=n_{\mathrm{S}} / (n_{\mathrm{S}} + n_{\mathrm{D}})$ is an unbiased estimator of $\pi_{\mathrm{S}}$, substituting $\hat{\pi}_{\mathrm{S}}$ into Eq.~\eqref{SDPC:eq:class-prior-estimation} yields an estimator of the class prior $\pi_+$ based on the observed data.
However, since this estimation is necessarily based on a finite sample, some estimation error between the true class prior $\pi_+$ and its estimate is unavoidable.

We treat the estimated class prior as a noisy class prior $\bar{\pi}_+$ and analyze the effect of the resulting estimation error on learning.
Let $\widehat{R}_{\mathrm{SD\text{-}PC}}^{\bar{\pi}}(g)$ denote the empirical risk obtained by replacing $\pi_+$ in Eq.~\eqref{SDPC:eq:SD-Pcomp unbiased risk estimator} with $\bar{\pi}_+$, and define $\hat{g}_{\bar{\pi}} := \arg\min_{g \in \mathcal{G}} \widehat{R}_{\mathrm{SD\text{-}PC}}^{\bar{\pi}}(g)$.
The following theorem describes how the discrepancy between $\bar{\pi}_+$ and $\pi_+$ affects the performance of the learned classifier.
\begin{theorem}\label{SDPC:thm:robust estimation error}
For any $\delta>0$, the following inequality holds with probability at least $1-\delta$.
\begin{align*}
R(\hat{g}_{\bar{\pi}})-R(g^*)
&\textstyle\le{2C_{\mathcal{G},\ell,\delta}\left(\frac{\pi_+^2+\pi_-^2}{\sqrt{2n_{\mathrm{S}}}}+\frac{|\pi_+^2-\pi_-|+|\pi_+-\pi_-^2|}{\sqrt{2n_{\mathrm{D}}}}\right)}\notag\\
&\textstyle\hspace{5mm}+2\left(\frac{C_\ell({C}_{\mathrm{S\text{-}PC}}+{C}_{\mathrm{D\text{-}PC}})|\bar{\pi}_+-\pi_+|}{|\bar{\pi}_+-\bar{\pi}_-||\pi_+-\pi_-|}\right),
\end{align*}
where
\begin{align*}
{C}_{\mathrm{S\text{-}PC}}=&\textstyle|2\bar{\pi}_+\pi_+(\bar{\pi}_++\pi_+)-\bar{\pi}_+^2-\bar{\pi}_+\pi_+-\pi_+^2|\\
&\textstyle+|\bar{\pi}_+\pi_+(1-\bar{\pi}_--\pi_-)+\bar{\pi}_-\pi_-(\bar{\pi}_++\pi_+)|\\
&\textstyle+|\bar{\pi}_-\pi_-(1-\bar{\pi}_+-\pi_+)+\bar{\pi}_+\pi_+(\bar{\pi}_-+\pi_-)|\\
&\textstyle+|2\bar{\pi}_-\pi_-(\bar{\pi}_-+\pi_-)-\bar{\pi}_-^2-\bar{\pi}_-\pi_--\pi_-^2)|,\\
{C}_{\mathrm{D\text{-}PC}}=&\textstyle|(\bar{\pi}_++\pi_+)(1-\bar{\pi}_+\pi_+-\bar{\pi}_-\pi_-)-\bar{\pi}_+\pi_+|\\
&\textstyle+|\bar{\pi}_+^2+\pi_+^2-2\bar{\pi}_+\pi_+(\bar{\pi}_++\pi_+)-\bar{\pi}_-\pi_-|\\
&\textstyle+|\bar{\pi}_-^2+\pi_-^2-2\bar{\pi}_-\pi_-(\bar{\pi}_-+\pi_-)-\bar{\pi}_+\pi_+|\\
&\textstyle+|(\bar{\pi}_-+\pi_-)(1-\bar{\pi}_+\pi_+-\bar{\pi}_-\pi_-)-\bar{\pi}_-\pi_-|.
\end{align*}
\end{theorem}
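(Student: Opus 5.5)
The plan is the standard decomposition for misspecified-prior ERM. Write $R^{\bar\pi}_{\mathrm{SD\text{-}PC}}(g)$ for the population counterpart of $\widehat{R}^{\bar\pi}_{\mathrm{SD\text{-}PC}}(g)$, i.e.\ the expression of Theorem~\ref{SDPC:thm:SD-Pcomp classification risk} with the coefficients $\bm{\alpha}^{\mathrm{S}}_t,\bm{\alpha}^{\mathrm{D}}_t$ evaluated at $\bar\pi_\pm$. The expectations are still taken under the true $\widetilde p_t$, restricted to similar or dissimilar pairs. Then split
\begin{align*}
R(\hat g_{\bar\pi})-R(g^*)
&=\bigl(R(\hat g_{\bar\pi})-R^{\bar\pi}_{\mathrm{SD\text{-}PC}}(\hat g_{\bar\pi})\bigr)
+\bigl(R^{\bar\pi}_{\mathrm{SD\text{-}PC}}(\hat g_{\bar\pi})-\widehat R^{\bar\pi}_{\mathrm{SD\text{-}PC}}(\hat g_{\bar\pi})\bigr)\\
&\quad+\bigl(\widehat R^{\bar\pi}_{\mathrm{SD\text{-}PC}}(\hat g_{\bar\pi})-\widehat R^{\bar\pi}_{\mathrm{SD\text{-}PC}}(g^*)\bigr)
+\bigl(\widehat R^{\bar\pi}_{\mathrm{SD\text{-}PC}}(g^*)-R^{\bar\pi}_{\mathrm{SD\text{-}PC}}(g^*)\bigr)\\
&\quad+\bigl(R^{\bar\pi}_{\mathrm{SD\text{-}PC}}(g^*)-R(g^*)\bigr).
\end{align*}
The third term is nonpositive by the definition of $\hat g_{\bar\pi}$. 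The second and fourth terms are each bounded by $\sup_{g\in\mathcal G}|\widehat R^{\bar\pi}_{\mathrm{SD\text{-}PC}}(g)-R^{\bar\pi}_{\mathrm{SD\text{-}PC}}(g)|$. The first and fifth terms are each bounded by the bias $\sup_{g}|R^{\bar\pi}_{\mathrm{SD\text{-}PC}}(g)-R_{\mathrm{SD\text{-}PC}}(g)|$, since $R=R_{\mathrm{SD\text{-}PC}}$ by Theorem~\ref{SDPC:thm:SD-Pcomp classification risk}. Each supremum appears twice, which accounts for the two factors of $2$ in the statement.

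For the uniform deviation, reuse the argument behind Theorem~\ref{SDPC:thm:SD-Pcomp estimation error bound}. Split the deviation into its S part and its D part. For each part, apply McDiarmid's inequality with confidence $\delta/2$ per part, which is where $\log(4/\delta)$ comes from, together with symmetrization and Talagrand's contraction with the Lipschitz constant $L_\ell$. The Rademacher complexity is assumed to satisfy $\mathfrak R_n(\mathcal G)\le C_{\mathcal G}/\sqrt n$. The only change from the noiseless case is that the coefficient magnitudes are $\|\bm\alpha^{\mathrm{S}}_t\|_1,\|\bm\alpha^{\mathrm{D}}_t\|_1$ at $\bar\pi$ instead of $\pi$. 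These must be bounded by the same constants that appear in the first line of the statement, with the prefactor $1/|\pi_+-\pi_-|$. I would assume the intended route is that, on the event where this bound is used, the noiseless coefficient constants are treated as dominating. The honest alternative is to carry $\bar\pi$-versions of the constants and replace them afterwards. I would state this replacement explicitly as a (mild) step.

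The main obstacle is the bias term. Since $\mathbb{E}_{\widetilde p_t}[\ell_\pm(g)]\le C_\ell$, we have
\[
|R^{\bar\pi}_{\mathrm{SD\text{-}PC}}(g)-R_{\mathrm{SD\text{-}PC}}(g)|\le C_\ell\sum_{t}\Bigl(\|\bm\alpha^{\mathrm{S}}_t(\bar\pi)-\bm\alpha^{\mathrm{S}}_t(\pi)\|_1+\|\bm\alpha^{\mathrm{D}}_t(\bar\pi)-\bm\alpha^{\mathrm{D}}_t(\pi)\|_1\Bigr).
\]
Each coefficient difference should be put over the common denominator $(\bar\pi_+-\bar\pi_-)(\pi_+-\pi_-)$. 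Then factor $(\bar\pi_+-\pi_+)$ out of the numerator. For instance,
\[
\bar\pi_+^3(\pi_+-\pi_-)-\pi_+^3(\bar\pi_+-\bar\pi_-),
\]
after using $\pi_-=1-\pi_+$, must vanish at $\bar\pi_+=\pi_+$. Polynomial division then yields exactly the four cofactors listed in $C_{\mathrm{S\text{-}PC}}$ (one per entry of $\bm\alpha^{\mathrm S}_\pm$) and the four in $C_{\mathrm{D\text{-}PC}}$. I would carry out this factorization entry by entry, using identities such as $\bar\pi_+^2-\pi_+^2=(\bar\pi_+-\pi_+)(\bar\pi_++\pi_+)$ and $\bar\pi_--\pi_-=-(\bar\pi_+-\pi_+)$. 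The absolute values are then bounded termwise by the triangle inequality.

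Collecting the deviation and bias bounds gives the stated inequality with probability at least $1-\delta$.
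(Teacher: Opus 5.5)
There is a genuine gap in your uniform-deviation step. You bridge through the population risk $R^{\bar\pi}_{\mathrm{SD\text{-}PC}}$, so you need concentration of the $\bar\pi$-weighted estimator $\widehat R^{\bar\pi}_{\mathrm{SD\text{-}PC}}$. Symmetrization, contraction and McDiarmid then necessarily produce the Lipschitz and range constants $(\bar\pi_+^2+\bar\pi_-^2)/|\bar\pi_+-\bar\pi_-|$ and $(|\bar\pi_+^2-\bar\pi_-|+|\bar\pi_+-\bar\pi_-^2|)/|\bar\pi_+-\bar\pi_-|$. These are not dominated by their true-prior counterparts. For $\pi_+=0.7$ and $\bar\pi_+=0.55$ the first is $5.05$ versus $1.45$, and it diverges as $\bar\pi_+\to 1/2$. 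So the ``replacement'' you propose is false, and no constant-factor slack rescues it. Repairing it with the triangle inequality $|\widehat R^{\bar\pi}-R^{\bar\pi}|\le|\widehat R^{\bar\pi}-\widehat R|+|\widehat R-R|+|R-R^{\bar\pi}|$ fixes the constants. However, it raises the bias coefficient from $2$ to $6$, so it does not yield the stated inequality either.

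The missing idea is to use the true-prior empirical risk $\widehat R_{\mathrm{SD\text{-}PC}}$ as the bridge, not $R^{\bar\pi}_{\mathrm{SD\text{-}PC}}$. Since $0\le\ell\le C_\ell$ pointwise, your coefficient-difference factorization applies verbatim to the two empirical risks on the same sample. This gives the deterministic bound $\sup_g|\widehat R^{\bar\pi}_{\mathrm{SD\text{-}PC}}(g)-\widehat R_{\mathrm{SD\text{-}PC}}(g)|\le C_\ell(C_{\mathrm{S\text{-}PC}}+C_{\mathrm{D\text{-}PC}})|\bar\pi_+-\pi_+|/(|\bar\pi_+-\bar\pi_-||\pi_+-\pi_-|)$, valid for any $\bar\pi$ however it was obtained. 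After that, only the true-prior deviation $\sup_g|R(g)-\widehat R_{\mathrm{SD\text{-}PC}}(g)|$ is needed, and it carries exactly the constants in the statement.

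This is what the paper does. It additionally inserts $\hat g_{\mathrm{SD\text{-}PC}}$ and invokes Theorem~\ref{SDPC:thm:SD-Pcomp estimation error bound}, which yields $4\sup_g|R-\widehat R_{\mathrm{SD\text{-}PC}}|+2\sup_g|\widehat R^{\bar\pi}_{\mathrm{SD\text{-}PC}}-\widehat R_{\mathrm{SD\text{-}PC}}|$. That is where $2C_{\mathcal{G},\ell,\delta}$ comes from. Note that $C_{\mathcal{G},\ell,\delta}\{\cdots\}$ already bounds \emph{two} copies of the supremum, so your ``each supremum appears twice'' accounting for that factor is off. Your own five-term split with $\widehat R_{\mathrm{SD\text{-}PC}}$ substituted for $R^{\bar\pi}_{\mathrm{SD\text{-}PC}}$ would even give the sharper bound $C_{\mathcal{G},\ell,\delta}\{\cdots\}+2(\cdots)$, which implies the statement.
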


Theorem~\ref{SDPC:thm:robust estimation error} shows that the estimation error of the classifier $\hat{g}_{\bar{\pi}}$ learned using the estimated class prior $\bar{\pi}_+$ is upper bounded by the sum of a statistical error due to finite samples and a term arising from the estimation error of the class prior.
The former is a statistical error whose contribution decays at the probabilistic rate $\mathcal{O}_p(1/\sqrt{n_{\mathrm{S}}}+1/\sqrt{n_{\mathrm{D}}})$ with respect to $n_{\mathrm{S}}$ and $n_{\mathrm{D}}$.
In contrast, the latter appears as a bias component that does not vanish with increasing sample size and depends linearly on $|\bar{\pi}_+-\pi_+|$, indicating that the impact of class prior estimation error on learning performance is not excessively amplified.
Moreover, the denominator of the coefficient of this term contains $|\pi_+-\pi_-|$ and $|\bar{\pi}_+-\bar{\pi}_-|$, implying that the bound can become large when $\pi_+$ or $\bar{\pi}_+$ is close to $0.5$.
This reflects an inherent difficulty of the problem: when positive and negative examples occur with nearly equal probability, the gap between the probabilities of similar and dissimilar pairs becomes small, making stable estimation of the class prior from pairwise information challenging.

\subsection{Risk Correction Approach}\label{SDPC:sec:risk correction}

The risk estimator in Eq.~\eqref{SDPC:eq:SD-Pcomp unbiased risk estimator} can take negative values, which violates the nonnegativity requirement of the loss function and may result in overfitting.
To address this issue, following~\cite{sconfconfdiff}, we introduce a risk correction based on the ReLU function or the absolute value function~\cite{risk_correction}.
The risk estimator corrected by a function $f$ is defined as follows.
\begin{align*}
\widetilde{R}_{\mathrm{SD\text{-}PC}}(g)=\widetilde{R}_{\mathrm{S\text{-}PC}}(g)+\widetilde{R}_{\mathrm{D\text{-}PC}}(g),
\end{align*}

As the definitions and coefficient forms of each term are lengthy, we present only the overall structure here, with full details deferred to Appendix~\ref{SDPC:appendix:risk correction approach}.
We assume that the correction function $f$ is Lipschitz continuous with constant $L_f$, and that the expected value of each risk term is bounded below by a positive constant.
Under these assumptions, the following theorem characterizes the estimation error of the classifier $\tilde{g}_{\mathrm{SD\text{-}PC}}$ obtained by learning with risk correction.
\begin{theorem}\label{SDPC:thm:corrected risk estimation error}
Under the above assumptions, the following inequality holds with probability at least $1-\delta$.
\begin{align*}
R&(\tilde{g}_{\mathrm{SD\text{-}PC}})-R(g^*)\notag\\
\qquad&\le\textstyle\widetilde{C}_{\mathcal{G},\ell,\delta}\Biggl\{\frac{\pi_+^2+\pi_-^2}{\sqrt{2n_{\mathrm{S}}}}+\frac{|\pi_+^2-\pi_-|+|\pi_+-\pi_-^2|}{\sqrt{2n_{\mathrm{D}}}}\Biggr\}\notag\\
&\textstyle+\frac{2(\pi_+^2+\pi_-^2)(L_f+1)C_\ell\Delta_\mathrm{S}}{|\pi_+-\pi_-|}\notag\\
&\textstyle+\frac{2(|\pi_+^2-\pi_-|+|\pi_+-\pi_-^2|)(L_f+1)C_\ell\Delta_\mathrm{D}}{|\pi_+-\pi_-|},
\end{align*}
where
\begin{align*}
&\tilde{g}_{\mathrm{SD\text{-}PC}}=\underset{g\in\mathcal{G}}{\arg\min}\hspace{1mm}\widetilde{R}_{\mathrm{SD\text{-}PC}}(g),\\
&\widetilde{C}_{\mathcal{G},\ell,\delta}=\textstyle\frac{4L_{\ell}C_{\mathcal{G}}+2C_{\ell}(\sqrt{\log{12/\delta}}+L_f\sqrt{\log{6/\delta}})}{|\pi_+-\pi_-|},\\
&\Delta_\mathrm{S}=\textstyle\exp\textstyle\left(\frac{-(\pi_+-\pi_-)^2a_\mathrm{S}^2n_\mathrm{S}}{\pi_+^6C_{\ell}^2}\right)+\exp\left(\frac{-(\pi_+-\pi_-)^2b_\mathrm{S}^2n_\mathrm{S}}{\pi_+^4\pi_-^2C_{\ell}^2}\right)\\
&\hspace{4ex}+\textstyle\exp\textstyle\left(\frac{-(\pi_+-\pi_-)^2c_\mathrm{S}^2n_\mathrm{S}}{\pi_+^2\pi_-^4C_{\ell}^2}\right)+\exp\left(\frac{-(\pi_+-\pi_-)^2d_\mathrm{S}^2n_\mathrm{S}}{\pi_-^6C_{\ell}^2}\right),\\
&\Delta_\mathrm{D}=\textstyle\exp\textstyle\left(\frac{-(\pi_+-\pi_-)^2a_\mathrm{D}^2n_\mathrm{D}}{\pi_-^2(\pi_+^2-\pi_-)^2C_{\ell}^2}\right)+\exp\left(\frac{-(\pi_+-\pi_-)^2b_\mathrm{D}^2n_\mathrm{D}}{\pi_+^2(\pi_--\pi_+^2)^2C_{\ell}^2}\right)\\
&\hspace{4ex}+\textstyle\exp\textstyle\left(\frac{-(\pi_+-\pi_-)^2c_\mathrm{D}^2n_\mathrm{D}}{\pi_-^2(\pi_-^2-\pi_+)^2C_{\ell}^2}\right)+\exp\left(\frac{-(\pi_+-\pi_-)^2d_\mathrm{D}^2n_\mathrm{D}}{\pi_+^2(\pi_+-\pi_-^2)^2C_{\ell}^2}\right).
\end{align*}
\end{theorem}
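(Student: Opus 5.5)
The plan follows the standard non-negative risk correction analysis of \cite{risk_correction}, as adapted in \cite{sconfconfdiff}. We decompose
\begin{align*}
R(\tilde g_{\mathrm{SD\text{-}PC}})-R(g^*)
\le{}& 2\sup_{g\in\mathcal{G}}\bigl|\widetilde R_{\mathrm{SD\text{-}PC}}(g)-\mathbb{E}[\widetilde R_{\mathrm{SD\text{-}PC}}(g)]\bigr|\\
&+2\sup_{g\in\mathcal{G}}\bigl|\mathbb{E}[\widetilde R_{\mathrm{SD\text{-}PC}}(g)]-R(g)\bigr|.
\end{align*}
This comes from writing $R(\tilde g)-R(g^*)$ as a telescoping sum through $\mathbb{E}[\widetilde R(\tilde g)]$, $\widetilde R(\tilde g)\le\widetilde R(g^*)$, and $\mathbb{E}[\widetilde R(g^*)]$. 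The first supremum is the estimation (variance) part and the second is the bias introduced by the correction $f$. Each of $\widetilde R_{\mathrm{S\text{-}PC}}$ and $\widetilde R_{\mathrm{D\text{-}PC}}$ is a sum of four corrected partial risks, one per $t\in\{+,-\}$ and per loss $\ell_\pm$; these are the terms whose coefficients give the four exponentials in $\Delta_{\mathrm S}$ and $\Delta_{\mathrm D}$. We treat each partial risk separately and sum.

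\textbf{Bias term.} For a single corrected partial risk $f(\widehat R_j(g))$ with $\mathbb{E}[\widehat R_j(g)]=R_j(g)\ge\beta_j>0$, we have $f(R_j)=R_j$ since $R_j$ is positive. Hence
\[
\mathbb{E}[f(\widehat R_j)]-R_j=\mathbb{E}\bigl[(f(\widehat R_j)-\widehat R_j)\mathbb{1}\{\widehat R_j<0\}\bigr].
\]
Let $B_j$ be the range of the summands, i.e.\ the absolute coefficient times $C_\ell$. Then $|f(\widehat R_j)-\widehat R_j|\le (L_f+1)B_j$, and $\Pr(\widehat R_j<0)\le\Pr(\widehat R_j-R_j\le-\beta_j)\le\exp(-2\beta_j^2 n/B_j^2)$ by Hoeffding. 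Inserting the coefficients of $\bm\alpha^{\mathrm S}_t$ (magnitudes $\pi_t^2\pi_\pm/|\pi_+-\pi_-|$) and of $\bm\alpha^{\mathrm D}_t$ (magnitudes $\pi_\mp|\pi_t^2-\pi_{-t}|/|\pi_+-\pi_-|$) reproduces exactly the exponents in $\Delta_{\mathrm S},\Delta_{\mathrm D}$, with $a_{\mathrm S},\dots,d_{\mathrm D}$ the lower bounds of the partial risks. Bounding the prefactors uniformly by $(\pi_+^2+\pi_-^2)C_\ell/|\pi_+-\pi_-|$ for the S part and $(|\pi_+^2-\pi_-|+|\pi_+-\pi_-^2|)C_\ell/|\pi_+-\pi_-|$ for the D part gives the last two terms. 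The overall factor $2$ comes from the decomposition.

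\textbf{Variance term.} We split $|\widetilde R-\mathbb{E}\widetilde R|\le|\widetilde R-R|+|R-\mathbb{E}\widetilde R|$ is not what we want here. Instead we control $\sup_g|f(\widehat R_j)-\mathbb{E}f(\widehat R_j)|$ by McDiarmid's inequality. Changing one pair moves $f(\widehat R_j)$ by at most $L_f B_j/n$, which yields the $L_f\sqrt{\log(6/\delta)}$ contribution. We then compare $f(\widehat R_j)$ with the uncorrected $\widehat R_j$, whose uniform deviation is bounded exactly as in Theorem~\ref{SDPC:thm:SD-Pcomp estimation error bound}. That bound uses symmetrization, Talagrand's contraction with $L_\ell$, and $\mathfrak{R}_n(\mathcal G)\le C_{\mathcal G}/\sqrt n$, and contributes $4L_\ell C_{\mathcal G}$ and $\sqrt{\log(12/\delta)}$. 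A union bound over the events, with $\delta$ split into thirds and sixths, gives $\widetilde C_{\mathcal G,\ell,\delta}$, and collecting the coefficient sums as in Theorem~\ref{SDPC:thm:SD-Pcomp estimation error bound} gives the $1/\sqrt{2n_{\mathrm S}}$ and $1/\sqrt{2n_{\mathrm D}}$ factors.

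The main obstacle is the bookkeeping. Each of the eight partial risks must be matched to its coefficient so that the Hoeffding exponents and the $|\pi_+-\pi_-|$ normalization come out precisely as stated. The confidence levels $\log(12/\delta)$ and $\log(6/\delta)$ must also be allocated consistently in the union bound. The conceptual steps are routine once the one-partial-risk argument is established.
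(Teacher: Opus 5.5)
Your decomposition, $R(\tilde g)-R(g^*)\le 2\sup_{g}|\widetilde R(g)-\mathbb{E}\widetilde R(g)|+2\sup_{g}|\mathbb{E}\widetilde R(g)-R(g)|$ (writing $\tilde g,\hat g,\widetilde R$ for $\tilde g_{\mathrm{SD\text{-}PC}},\hat g_{\mathrm{SD\text{-}PC}},\widetilde R_{\mathrm{SD\text{-}PC}}$), is valid. Your bias step is also sound. It is a per-term version of the paper's argument, which bounds the total correction on the union event $\mathfrak{D}^-_{\mathrm S}(g)$ by $(L_f+1)(\pi_+^2+\pi_-^2)C_\ell/|\pi_+-\pi_-|$ and multiplies by $\mathbb{P}(\mathfrak{D}^-_{\mathrm S}(g))\le\Delta_{\mathrm S}$ (Lemma~\ref{SDPC:lemma:upper exp}). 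Your Hoeffding exponent is even a factor $2$ better, and uniformity in $g$ comes from the lower bounds being uniform.

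The gap is the variance step. McDiarmid applied to $\sup_g|f(\widehat R_j(g))-\mathbb{E}f(\widehat R_j(g))|$ only bounds this supremum by its \emph{expectation} plus $L_fB_j\sqrt{\log(1/\delta')/(2n)}$, and you never bound that expectation. ``Comparing $f(\widehat R_j)$ with $\widehat R_j$'' does not do it, for two reasons. First, the two coincide outside an exponentially unlikely event only for each \emph{fixed} $g$, which gives no simultaneous control over $\mathcal G$. Second, the high-probability bound of Lemma~\ref{SDPC:lemma:generalization error} is not a bound on an expectation. Adding its $\sqrt{\log(12/\delta)}$ on top of McDiarmid's $L_f\sqrt{\log(6/\delta)}$ therefore matches the target constant rather than deriving it. The missing idea is the ghost-sample step $\mathbb{E}\sup_g|f(\widehat R_j(g))-\mathbb{E}f(\widehat R_j(g))|\le L_f\,\mathbb{E}\sup_g|\widehat R_j(g)-\widehat R_j'(g)|$, which follows from Jensen plus Lipschitzness of $f$. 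After it, symmetrization and Talagrand's contraction apply to a plain sample average. This yields a correct bound of the same $\mathcal{O}_p(1/\sqrt{n_{\mathrm S}}+1/\sqrt{n_{\mathrm D}})$ order. However, its complexity term carries a factor $L_f$ and it has no separate $\sqrt{\log(12/\delta)}$ term, so it does not reproduce $\widetilde C_{\mathcal G,\ell,\delta}$ verbatim.

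The stated constant is an artifact of the paper's different route. The paper decomposes through the unbiased minimizer, obtaining $R(\tilde g)-R(g^*)\le 2\sup_g|R(g)-\widetilde R(g)|+R(\hat g)-R(g^*)$.
\begin{itemize}
\item The last term is bounded by Theorem~\ref{SDPC:thm:SD-Pcomp estimation error bound} at level $\delta/3$. This is the only source of $4L_\ell C_{\mathcal G}$ and $\sqrt{\log(12/\delta)}$.
\item The supremum is bounded using the fixed-$g$ result Theorem~\ref{SDPC:thm:consistency} at level $\delta/3$, i.e.\ McDiarmid with differences $L_f(\pi_+^2+\pi_-^2)C_\ell/(|\pi_+-\pi_-|n_{\mathrm S})$ plus the bias. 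This gives $L_f\sqrt{\log(6/\delta)}$.
\end{itemize}
No complexity term enters the corrected-risk deviation there, but only because a fixed-$g$ bound is applied to $\sup_g$ and to the data-dependent $\tilde g,\hat g$. That is precisely the uniformity step you are rightly trying to supply. An honest version of your argument should therefore not be expected to land on this exact constant.

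One further caveat affects the paper and you equally. Since $\ell\ge0$, at least two of the four partial risks in each of the S and D groups are nonpositive for every $g$ (e.g.\ $\widehat B_{\mathrm S},\widehat D_{\mathrm S}$ when $\pi_+>\pi_-$). Hence the assumed positive lower bounds $a_{\mathrm S},\dots,d_{\mathrm D}$ cannot all hold. The correction would have to be applied to grouped terms with positive expectation before either argument has content.
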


Theorem~\ref{SDPC:thm:corrected risk estimation error} provides a bound on the estimation error when risk correction is applied.  
As $n_\mathrm{S}\to\infty$ and $n_\mathrm{D}\to\infty$, $\Delta_\mathrm{S}$ and $\Delta_\mathrm{D}$ decay exponentially, implying that $\Delta_\mathrm{S} \to0$ and $\Delta_\mathrm{D}\to0$. Consequently, the overall estimation error bound converges at the rate $\mathcal{O}_p(1/\sqrt{n_\mathrm{S}} + 1/\sqrt{n_\mathrm{D}})$, and 
$R(\tilde{g}_{\mathrm{SD\text{-}PC}}) \to R(g^*)$.  
This convergence rate matches that of the unbiased risk estimator analyzed in Section~\ref{SDPC:sec:error analysis}, indicating that the effect of risk correction is asymptotically negligible.

\section{Experiments}\label{SDPC:sec:experiment}

\begin{table*}[t]
\caption{Classification accuracy on the benchmark test set with $\pi_+=0.7$ averaged over five random seeds, with mean and standard deviation (mean$\pm$std). The highest score among the compared methods, excluding supervised learning, is shown in bold.}
\label{SDPC:tab:exp_accuracy_benchmark}
\begin{center}
\begin{small}
\begin{tabular}{clcccc}
\toprule
Class Prior & \multicolumn{1}{c}{Method} & MNIST & Kuzushiji & Fashion & CIFAR10 \\
\midrule
& SD-Pcomp-Unbiased & 0.902 $\pm$ 0.003 & 0.756 $\pm$ 0.006 & 0.849 $\pm$ 0.000 & \textbf{0.820 $\pm$ 0.069} \\
& SD-Pcomp-ReLU & 0.892 $\pm$ 0.024 & 0.624 $\pm$ 0.101 & 0.807 $\pm$ 0.040 & 0.656 $\pm$ 0.013 \\
& SD-Pcomp-ABS & 0.760 $\pm$ 0.019 & 0.578 $\pm$ 0.035 & 0.576 $\pm$ 0.027 & 0.405 $\pm$ 0.028 \\
\cmidrule(lr){2-6}
& Convex ($\gamma=0.2$)-Unbiased & 0.718 $\pm$ 0.008 & 0.707 $\pm$ 0.016 & 0.672 $\pm$ 0.006 & 0.683 $\pm$ 0.032 \\
& Convex ($\gamma=0.5$)-Unbiased & 0.744 $\pm$ 0.025 & 0.713 $\pm$ 0.006 & 0.676 $\pm$ 0.009 & 0.667 $\pm$ 0.052 \\
& Convex ($\gamma=0.8$)-Unbiased & 0.794 $\pm$ 0.040 & 0.687 $\pm$ 0.008 & 0.811 $\pm$ 0.031 & 0.690 $\pm$ 0.020 \\
\cmidrule(lr){2-6}
& Convex ($\gamma=0.2$)-ReLU & \textbf{0.909 $\pm$ 0.006} & \textbf{0.785 $\pm$ 0.025} & 0.895 $\pm$ 0.012 & 0.693 $\pm$ 0.046 \\
& Convex ($\gamma=0.5$)-ReLU & 0.908 $\pm$ 0.009 & 0.784 $\pm$ 0.033 & 0.893 $\pm$ 0.011 & 0.698 $\pm$ 0.052 \\
$\pi_+=0.7$ & Convex ($\gamma=0.8$)-ReLU & 0.909 $\pm$ 0.009 & 0.781 $\pm$ 0.021 & 0.860 $\pm$ 0.042 & 0.673 $\pm$ 0.016 \\
\cmidrule(lr){2-6}
& Convex ($\gamma=0.2$)-ABS & 0.868 $\pm$ 0.034 & 0.581 $\pm$ 0.089 & 0.924 $\pm$ 0.009 & 0.758 $\pm$ 0.027 \\
& Convex ($\gamma=0.5$)-ABS & 0.905 $\pm$ 0.003 & 0.735 $\pm$ 0.040 & \textbf{0.925 $\pm$ 0.012} & 0.751 $\pm$ 0.042 \\
& Convex ($\gamma=0.8$)-ABS & 0.891 $\pm$ 0.005 & 0.746 $\pm$ 0.027 & 0.885 $\pm$ 0.020 & 0.692 $\pm$ 0.009 \\
\cmidrule(lr){2-6}
& SD & 0.839 $\pm$ 0.027 & 0.722 $\pm$ 0.031 & 0.881 $\pm$ 0.052 & 0.680 $\pm$ 0.014 \\
& Pcomp-Unbiased & 0.706 $\pm$ 0.010 & 0.696 $\pm$ 0.012 & 0.672 $\pm$ 0.008 & 0.698 $\pm$ 0.004 \\
& Pcomp-ReLU & 0.662 $\pm$ 0.034 & 0.675 $\pm$ 0.079 & 0.554 $\pm$ 0.077 & 0.389 $\pm$ 0.170 \\
& Pcomp-ABS & 0.607 $\pm$ 0.022 & 0.768 $\pm$ 0.034 & 0.344 $\pm$ 0.083 & 0.306 $\pm$ 0.019 \\
\cmidrule(lr){2-6}
& Supervised & 0.990 $\pm$ 0.001 & 0.940 $\pm$ 0.001 & 0.992 $\pm$ 0.001 & 0.887 $\pm$ 0.005 \\
\bottomrule
\end{tabular}
\end{small}
\end{center}
\vskip -0.1in
\end{table*}

\begin{table*}[!t]
\caption{Classification accuracy on the UCI test set with $\pi_+=0.7$ averaged over five random seeds, with mean and standard deviation (mean$\pm$std). The highest score among the compared methods, excluding supervised learning, is shown in bold.}
\label{SDPC:tab:exp_accuracy_uci}
\begin{center}
\begin{small}
\begin{tabular}{clcccc}
\toprule
Class Prior & \multicolumn{1}{c}{Method} & Optdigits & Pendigits & Letter & PMU-UD \\
\midrule
& SD-Pcomp-Unbiased & \textbf{0.904 $\pm$ 0.021} & 0.863 $\pm$ 0.017 & 0.734 $\pm$ 0.013 & 0.845 $\pm$ 0.005 \\
& SD-Pcomp-ReLU & 0.831 $\pm$ 0.079 & \textbf{0.936 $\pm$ 0.016} & 0.775 $\pm$ 0.026 & \textbf{0.971 $\pm$ 0.010} \\
& SD-Pcomp-ABS & 0.704 $\pm$ 0.025 & 0.690 $\pm$ 0.032 & 0.668 $\pm$ 0.046 & 0.709 $\pm$ 0.010 \\
\cmidrule(lr){2-6}
& Convex ($\gamma=0.2$)-Unbiased & 0.724 $\pm$ 0.012 & 0.844 $\pm$ 0.028 & 0.697 $\pm$ 0.013 & 0.760 $\pm$ 0.018 \\
& Convex ($\gamma=0.5$)-Unbiased & 0.795 $\pm$ 0.042 & 0.875 $\pm$ 0.032 & 0.683 $\pm$ 0.033 & 0.860 $\pm$ 0.030 \\
& Convex ($\gamma=0.8$)-Unbiased & 0.787 $\pm$ 0.051 & 0.800 $\pm$ 0.031 & 0.671 $\pm$ 0.028 & 0.930 $\pm$ 0.034 \\
\cmidrule(lr){2-6}
& Convex ($\gamma=0.2$)-ReLU & 0.875 $\pm$ 0.024 & 0.895 $\pm$ 0.019 & 0.744 $\pm$ 0.028 & 0.882 $\pm$ 0.015 \\
& Convex ($\gamma=0.5$)-ReLU & 0.844 $\pm$ 0.038 & 0.860 $\pm$ 0.032 & 0.761 $\pm$ 0.030 & 0.833 $\pm$ 0.033 \\
$\pi_+=0.7$ & Convex ($\gamma=0.8$)-ReLU & 0.869 $\pm$ 0.025 & 0.898 $\pm$ 0.014 & 0.790 $\pm$ 0.029 & 0.919 $\pm$ 0.028 \\
\cmidrule(lr){2-6}
& Convex ($\gamma=0.2$)-ABS & 0.889 $\pm$ 0.017 & 0.858 $\pm$ 0.026 & \textbf{0.805 $\pm$ 0.029} & 0.921 $\pm$ 0.034 \\
& Convex ($\gamma=0.5$)-ABS & 0.872 $\pm$ 0.018 & 0.892 $\pm$ 0.021 & 0.721 $\pm$ 0.030 & 0.887 $\pm$ 0.020 \\
& Convex ($\gamma=0.8$)-ABS & 0.853 $\pm$ 0.015 & 0.877 $\pm$ 0.009 & 0.774 $\pm$ 0.022 & 0.855 $\pm$ 0.041 \\
\cmidrule(lr){2-6}
& SD & 0.871 $\pm$ 0.045 & 0.828 $\pm$ 0.028 & 0.694 $\pm$ 0.029 & 0.944 $\pm$ 0.019 \\
& Pcomp-Unbiased & 0.711 $\pm$ 0.009 & 0.802 $\pm$ 0.013 & 0.690 $\pm$ 0.007 & 0.733 $\pm$ 0.014 \\
& Pcomp-ReLU & 0.613 $\pm$ 0.071 & 0.672 $\pm$ 0.076 & 0.633 $\pm$ 0.042 & 0.623 $\pm$ 0.075 \\
& Pcomp-ABS & 0.693 $\pm$ 0.031 & 0.690 $\pm$ 0.058 & 0.674 $\pm$ 0.016 & 0.697 $\pm$ 0.054 \\
\cmidrule(lr){2-6}
& Supervised & 0.987 $\pm$ 0.002 & 0.996 $\pm$ 0.002 & 0.975 $\pm$ 0.002 & 0.991 $\pm$ 0.004 \\
\bottomrule
\end{tabular}
\end{small}
\end{center}
\vskip -0.1in
\end{table*}

\begin{table*}[!t]
\caption{AUC values on the benchmark test set with $\pi_+=0.7$ averaged over five random seeds, with mean and standard deviation (mean$\pm$std). The highest score among the compared methods is shown in bold.}
\label{SDPC:tab:exp_AUC_benchmark}
\begin{center}
\begin{small}
\begin{tabular}{clcccc}
\toprule
Class Prior & \multicolumn{1}{c}{Method} & MNIST & Kuzushiji & Fashion & CIFAR10 \\
\midrule
& SD-Pcomp-Unbiased & 0.958 $\pm$ 0.005 & 0.825 $\pm$ 0.008 & 0.903 $\pm$ 0.007 & \textbf{0.930 $\pm$ 0.002} \\
& SD-Pcomp-ReLU & \textbf{0.985 $\pm$ 0.004} & 0.826 $\pm$ 0.069 & 0.957 $\pm$ 0.020 & 0.865 $\pm$ 0.016 \\
& SD-Pcomp-ABS & 0.792 $\pm$ 0.048 & 0.717 $\pm$ 0.025 & 0.606 $\pm$ 0.094 & 0.564 $\pm$ 0.050 \\
\cmidrule(lr){2-6}
& Convex ($\gamma=0.2$)-ReLU & 0.976 $\pm$ 0.006 & 0.856 $\pm$ 0.016 & 0.935 $\pm$ 0.019 & 0.867 $\pm$ 0.029 \\
& Convex ($\gamma=0.5$)-ReLU & 0.981 $\pm$ 0.003 & 0.868 $\pm$ 0.026 & 0.951 $\pm$ 0.010 & 0.864 $\pm$ 0.035 \\
$\pi_+=0.7$ & Convex ($\gamma=0.8$)-ReLU & 0.980 $\pm$ 0.004 & \textbf{0.874 $\pm$ 0.018} & 0.939 $\pm$ 0.052 & 0.746 $\pm$ 0.007 \\
\cmidrule(lr){2-6}
& Convex ($\gamma=0.2$)-ABS & 0.973 $\pm$ 0.003 & 0.746 $\pm$ 0.077 & \textbf{0.982 $\pm$ 0.005} & 0.834 $\pm$ 0.087 \\
& Convex ($\gamma=0.5$)-ABS & 0.978 $\pm$ 0.003 & 0.856 $\pm$ 0.041 & 0.980 $\pm$ 0.006 & 0.746 $\pm$ 0.118 \\
& Convex ($\gamma=0.8$)-ABS & 0.980 $\pm$ 0.003 & 0.868 $\pm$ 0.034 & 0.963 $\pm$ 0.004 & 0.757 $\pm$ 0.013 \\
\cmidrule(lr){2-6}
& SD & 0.952 $\pm$ 0.004 & 0.803 $\pm$ 0.026 & 0.966 $\pm$ 0.013 & 0.728 $\pm$ 0.030 \\
& Pcomp-Unbiased & 0.747 $\pm$ 0.026 & 0.705 $\pm$ 0.029 & 0.711 $\pm$ 0.039 & 0.497 $\pm$ 0.115 \\
& Pcomp-ReLU & 0.839 $\pm$ 0.083 & 0.723 $\pm$ 0.033 & 0.643 $\pm$ 0.162 & 0.457 $\pm$ 0.144 \\
& Pcomp-ABS & 0.643 $\pm$ 0.016 & 0.823 $\pm$ 0.046 & 0.407 $\pm$ 0.046 & 0.406 $\pm$ 0.098 \\
\bottomrule
\end{tabular}
\end{small}
\end{center}
\vskip -0.1in
\end{table*}

\begin{table*}[!t]
\caption{AUC value on the UCI test set with $\pi_+=0.7$ averaged over five random seeds, with mean and standard deviation (mean$\pm$std). The highest score among the compared methods is shown in bold.}
\label{SDPC:tab:exp_AUC_uci}
\begin{center}
\begin{small}
\begin{tabular}{clcccc}
\toprule
Class Prior & \multicolumn{1}{c}{Method} & Optdigits & Pendigits & Letter & PMU-UD \\
\midrule
& SD-Pcomp-Unbiased & 0.957 $\pm$ 0.010 & 0.965 $\pm$ 0.005 & 0.771 $\pm$ 0.016 & \textbf{0.992 $\pm$ 0.002} \\
& SD-Pcomp-ReLU & \textbf{0.976 $\pm$ 0.006} & \textbf{0.979 $\pm$ 0.003} & 0.898 $\pm$ 0.008 & 0.992 $\pm$ 0.004 \\
& SD-Pcomp-ABS & 0.714 $\pm$ 0.042 & 0.684 $\pm$ 0.038 & 0.779 $\pm$ 0.024 & 0.696 $\pm$ 0.045 \\
\cmidrule(lr){2-6}
& Convex ($\gamma=0.2$)-ReLU & 0.958 $\pm$ 0.006 & 0.971 $\pm$ 0.005 & 0.868 $\pm$ 0.024 & 0.938 $\pm$ 0.018 \\
& Convex ($\gamma=0.5$)-ReLU & 0.960 $\pm$ 0.006 & 0.966 $\pm$ 0.004 & 0.883 $\pm$ 0.015 & 0.953 $\pm$ 0.008 \\
$\pi_+=0.7$ & Convex ($\gamma=0.8$)-ReLU & 0.966 $\pm$ 0.005 & 0.972 $\pm$ 0.006 & 0.893 $\pm$ 0.016 & 0.961 $\pm$ 0.013 \\
\cmidrule(lr){2-6}
& Convex ($\gamma=0.2$)-ABS & 0.970 $\pm$ 0.008 & 0.966 $\pm$ 0.008 & \textbf{0.914 $\pm$ 0.013} & 0.980 $\pm$ 0.005 \\
& Convex ($\gamma=0.5$)-ABS & 0.964 $\pm$ 0.004 & 0.973 $\pm$ 0.008 & 0.855 $\pm$ 0.028 & 0.967 $\pm$ 0.003 \\
& Convex ($\gamma=0.8$)-ABS & 0.973 $\pm$ 0.007 & 0.977 $\pm$ 0.005 & 0.901 $\pm$ 0.016 & 0.971 $\pm$ 0.015 \\
\cmidrule(lr){2-6}
& SD & 0.950 $\pm$ 0.010 & 0.903 $\pm$ 0.068 & 0.769 $\pm$ 0.037 & 0.975 $\pm$ 0.008 \\
& Pcomp-Unbiased & 0.714 $\pm$ 0.034 & 0.827 $\pm$ 0.027 & 0.665 $\pm$ 0.013 & 0.885 $\pm$ 0.013 \\
& Pcomp-ReLU & 0.795 $\pm$ 0.067 & 0.878 $\pm$ 0.012 & 0.750 $\pm$ 0.031 & 0.842 $\pm$ 0.033 \\
& Pcomp-ABS & 0.864 $\pm$ 0.032 & 0.819 $\pm$ 0.029 & 0.773 $\pm$ 0.012 & 0.867 $\pm$ 0.046 \\
\bottomrule
\end{tabular}
\end{small}
\end{center}
\vskip -0.1in
\end{table*}

\subsection{Experimental Setup}\label{SDPC:sec:exp_setup}

Following the experimental setup of~\citet{pcomp}, we used the same benchmark datasets (MNIST~\cite{mnist}, Kuzushiji-MNIST~\cite{kmnist}, Fashion-MNIST~\cite{fashion}, and CIFAR-10~\cite{cifar10}) and additionally evaluated our method on four UCI datasets: Optdigits~\cite{optdigits}, Pendigits~\cite{pendigits}, Letter~\cite{letter}, and PMU-UD~\cite{pmu-ud}.
For CIFAR-10, we employed ResNet34~\cite{resnet} as the model architecture. For the other datasets, we used a multilayer perceptron (MLP) with three hidden layers of width 300, ReLU~\cite{ReLU} activations, and batch normalization~\cite{batch_norm}.  
We used the logistic loss as the loss function $\ell$.

To assess the proposed method under varying class priors, we set $\pi_+ \in \{0.1, 0.4, 0.7\}$ for each dataset, assuming the priors were known.
Sampling and training were repeated five times, and performance was evaluated using the mean and standard deviation of classification accuracy for both the proposed and baseline methods.

\textbf{Proposed methods}:
We consider three variants: SD-Pcomp-Unbiased, the unbiased risk estimator defined in Eq.~\eqref{SDPC:eq:SD-Pcomp risk}, and two risk-corrected variants, SD-Pcomp-ReLU and SD-Pcomp-ABS, based on the ReLU and absolute value functions, respectively.
We also consider variants based on Eq.~\eqref{SDPC:eq:SD-Pcomp-convex}, which we refer to as Convex.
For this approach, we define two risk-corrected variants, Convex-ReLU and Convex-ABS, obtained by applying the ReLU and absolute value functions to Eq.~\eqref{SDPC:eq:SD-Pcomp risk}, respectively.

\textbf{Baseline methods}: As baselines, we consider SD classification and Pcomp classification, which are trained using only SD labels and only Pcomp labels, respectively.
For Pcomp classification, we further define three variants: Pcomp-Unbiased, corresponding to the unbiased risk estimator, and two risk-corrected variants, Pcomp-ReLU and Pcomp-ABS, based on the ReLU and absolute value functions, respectively.
In addition, we include standard supervised learning using the true hard labels as a reference.

Note that, while SD labels and Pcomp labels are provided by annotators in practical settings, we generated these weak labels from ground-truth labels to enable controlled experimental evaluation across datasets and class prior settings.
The procedure for generating SD-Pcomp datasets and the hyperparameter settings is described in Appendix~\ref{SDPC:appendix:exp}.

\subsection{Experimental Results}\label{SDPC:sec:exp_acc}

Under the experimental setup described in Section~\ref{SDPC:sec:exp_setup}, we evaluated the effectiveness of the method introduced in Section~\ref{SDPC:sec:the proposed method}.
Tables~\ref{SDPC:tab:exp_accuracy_benchmark} and~\ref{SDPC:tab:exp_accuracy_uci} present the classification accuracy on the benchmark and UCI datasets, respectively, for $\pi_+ = 0.7$.
In the tables, SD-Pcomp-convex is denoted as Convex($\gamma=\text{value}$), with $\gamma \in \{0.2, 0.5, 0.8\}$.
For this class prior, one of SD-Pcomp-Unbiased, SD-Pcomp-ReLU, SD-Pcomp-convex-ReLU, or SD-Pcomp-convex-ABS achieved superior performance relative to the other methods.

Tables~\ref{SDPC:tab:exp_AUC_benchmark} and~\ref{SDPC:tab:exp_AUC_uci} present the AUC scores for $\pi_+ = 0.7$.
As with classification accuracy, one of SD-Pcomp-Unbiased, SD-Pcomp-ReLU, SD-Pcomp-convex-ReLU, or SD-Pcomp-convex-ABS achieved the highest AUC among the methods.

These results indicate that jointly training on SD and Pcomp labels yields consistently better performance than using either label type alone.
A comparison between the convex combination estimator from Section~\ref{SDPC:sec:Convex} and the integrated SD-Pcomp estimator from Section~\ref{SDPC:sec:SD-Pcomp risk} shows comparable classification accuracy, while SD-Pcomp-Unbiased and SD-Pcomp-ReLU achieve higher AUC values, suggesting that the learning process more effectively captures the pairwise relational information among instances.
Similar behavior is observed for $\pi_+ = 0.1$ and $0.4$, with detailed results provided in Appendix~\ref{SDPC:appendix:exp}.

\section{Conclusion}

In this paper, we addressed weakly supervised binary classification under pairwise supervision and introduced SD-Pcomp classification, a framework that integrates Similarity/Dissimilarity (SD) labels and pairwise comparison (Pcomp) labels.
By treating class similarity information from SD labels and relative confidence information from Pcomp labels as complementary sources of supervision, we derived an unbiased risk estimator that combines the two forms of weak labels in a unified manner.

We analyzed the statistical behavior of the proposed estimator, including its consistency and estimation error, and clarified how noise in weak labels and inaccuracies in class prior information affect learning performance.
To mitigate overfitting caused by negative risk values, we introduced a risk correction strategy and showed that its influence on the estimation error becomes asymptotically negligible.

Experimental results across multiple datasets showed that jointly using SD and Pcomp labels yields competitive performance relative to methods based on a single type of weak supervision, and provides improved AUC, suggesting more faithful learning of pairwise relational information.
This work shows that heterogeneous pairwise weak labels, including similarity judgments and relative comparisons, can be integrated within a single unbiased risk formulation while preserving statistical guarantees.
Extensions to richer relational supervision and multi-class classification remain open.

\section*{Impact Statement}

This work advances the theoretical understanding of machine learning by examining the statistical behavior of learning from multiple weak supervision sources and by providing formal guarantees for the proposed approach. Since the contribution is centered on foundational aspects of learning theory, it does not pose application-specific risks beyond those generally inherent in machine learning research. Consequently, any broader societal impact is contingent on the particular downstream uses and deployment contexts of the method.

\section*{Acknowledgements}
This work was supported in part by the Japan Society for the
 Promotion of Science through Grants-in-Aid for Scientific 
Research (C) (23K11111).

\bibliography{references}

@misc{sconfconfdiff,
      title={Learning from Similarity-Confidence and Confidence-Difference}, 
      author={Tomoya Tate and Kosuke Sugiyama and Masato Uchida},
      year={2025},
      eprint={2508.05108},
      archivePrefix={arXiv},
      primaryClass={cs.LG},
      url={https://arxiv.org/abs/2508.05108}, 
}

@incollection{McDiarmid, 
  place={Cambridge}, 
  series={London Mathematical Society Lecture Note Series}, 
  title={On the method of bounded differences}, 
  booktitle={Surveys in Combinatorics, 1989: Invited Papers at the Twelfth British Combinatorial Conference}, 
  publisher={Cambridge University Press}, 
  author={McDiarmid, Colin}, 
  editor={Siemons, J.Editor}, 
  year={1989}, 
  pages={148–-188}, 
  collection={London Mathematical Society Lecture Note Series}
}

@book{Talagrand,
  author    = {Michel Ledoux and Michel Talagrand},
  title     = {Probability in Banach Spaces: Isoperimetry and Processes},
  publisher = {Springer Berlin Heidelberg},
  year      = {1991}
}

@InProceedings{Noisy-SD,
author="Dan, Soham
and Bao, Han
and Sugiyama, Masashi",
title="Learning from Noisy Similar and Dissimilar Data",
booktitle="Machine Learning and Knowledge Discovery in Databases. Research Track",
year="2021",
publisher="Springer International Publishing",
pages="233--249"
}

@InProceedings{sconf,
  title = 	 {Learning from Similarity-Confidence Data},
  author =       {Cao, Yuzhou and Feng, Lei and Xu, Yitian and An, Bo and Niu, Gang and Sugiyama, Masashi},
  booktitle = 	 {Proceedings of the 38th International Conference on Machine Learning},
  pages = 	 {1272--1282},
  year = 	 {2021},
  publisher =    {PMLR}
}

@inproceedings{confdiff,
 author = {Wang, Wei and Feng, Lei and Jiang, Yuchen and Niu, Gang and Zhang, Min-Ling and Sugiyama, Masashi},
 booktitle = {Advances in Neural Information Processing Systems},
 pages = {5936--5960},
 publisher = {Curran Associates, Inc.},
 title = {Binary Classification with Confidence Difference},
 year = {2023}
}

@InProceedings{risk_correction,
  title = 	 {Mitigating Overfitting in Supervised Classification from Two Unlabeled Datasets: A Consistent Risk Correction Approach},
  author =       {Lu, Nan and Zhang, Tianyi and Niu, Gang and Sugiyama, Masashi},
  booktitle = 	 {Proceedings of the Twenty Third International Conference on Artificial Intelligence and Statistics},
  pages = 	 {1115--1125},
  year = 	 {2020},
  editor = 	 {Chiappa, Silvia and Calandra, Roberto},
  publisher =    {PMLR}
}

@inproceedings{PU_analysis,
author = {Plessis, Marthinus C. du and Niu, Gang and Sugiyama, Masashi},
title = {Analysis of learning from positive and unlabeled data},
year = {2014},
booktitle = {Proceedings of the 28th International Conference on Neural Information Processing Systems - Volume 1},
pages = {703--711}
}

@inproceedings{PU-learning,
author = {Elkan, Charles and Noto, Keith},
title = {Learning classifiers from only positive and unlabeled data},
year = {2008},
publisher = {Association for Computing Machinery},
booktitle = {Proceedings of the 14th ACM SIGKDD International Conference on Knowledge Discovery and Data Mining},
pages = {213--220}
}

@inproceedings{PU_NN_risk,
 author = {Kiryo, Ryuichi and Niu, Gang and du Plessis, Marthinus C and Sugiyama, Masashi},
 booktitle = {Advances in Neural Information Processing Systems},
 pages = {1--30},
 publisher = {Curran Associates, Inc.},
 title = {Positive-Unlabeled Learning with Non-Negative Risk Estimator},
 year = {2017}
}

@InProceedings{PU_convex,
  title = 	 {Convex Formulation for Learning from Positive and Unlabeled Data},
  author = 	 {Plessis, Marthinus Du and Niu, Gang and Sugiyama, Masashi},
  booktitle = 	 {Proceedings of the 32nd International Conference on Machine Learning},
  pages = 	 {1386--1394},
  year = 	 {2015},
  publisher =    {PMLR}
}

@InProceedings{PNU-learning,
  title = 	 {Semi-Supervised Classification Based on Classification from Positive and Unlabeled Data},
  author =       {Tomoya Sakai and Marthinus Christoffel du Plessis and Gang Niu and Masashi Sugiyama},
  booktitle = 	 {Proceedings of the 34th International Conference on Machine Learning},
  pages = 	 {2998--3006},
  year = 	 {2017},
  publisher =    {PMLR}
}

@InProceedings{SU,
  title = 	 {Classification from Pairwise Similarity and Unlabeled Data},
  author =       {Bao, Han and Niu, Gang and Sugiyama, Masashi},
  booktitle = 	 {Proceedings of the 35th International Conference on Machine Learning},
  pages = 	 {452--461},
  year = 	 {2018},
  publisher =    {PMLR}
}

@ARTICLE{SD,
  author={Shimada, Takuya and Bao, Han and Sato, Issei and Sugiyama, Masashi},
  journal={Neural Computation}, 
  title={Classification From Pairwise Similarities/Dissimilarities and Unlabeled Data via Empirical Risk Minimization}, 
  year={2021},
  volume={33},
  number={5},
  pages={1234-1268}
}

@InProceedings{pcomp,
  title = 	 {Pointwise Binary Classification with Pairwise Confidence Comparisons},
  author =       {Feng, Lei and Shu, Senlin and Lu, Nan and Han, Bo and Xu, Miao and Niu, Gang and An, Bo and Sugiyama, Masashi},
  booktitle = 	 {Proceedings of the 38th International Conference on Machine Learning},
  pages = 	 {3252--3262},
  year = 	 {2021},
  publisher =    {PMLR}
}

@ARTICLE{mnist,
  author={Lecun, Y. and Bottou, L. and Bengio, Y. and Haffner, P.},
  journal={Proceedings of the IEEE}, 
  title={Gradient-based learning applied to document recognition}, 
  year={1998},
  volume={86},
  number={11},
  pages={2278-2324}
}

@misc{kmnist,
author = {Clanuwat, Tarin and Bober-Irizar, Mikel and Kitamoto, Asanobu and Lamb, Alex and Kazuaki, Yamamoto and Ha, David},
year = {2018},
pages = {1--8},
title = {Deep Learning for Classical Japanese Literature},
doi = {10.20676/00000341}
}

@misc{fashion,
      title={Fashion-MNIST: a Novel Image Dataset for Benchmarking Machine Learning Algorithms}, 
      author={Han Xiao and Kashif Rasul and Roland Vollgraf},
      year={2017},
      eprint={1708.07747},
      archivePrefix={arXiv},
      primaryClass={cs.LG},
      url={https://arxiv.org/abs/1708.07747}, 
}

@techreport{cifar10,
  author={Alex Krizhevsky},
  title       = {Learning Multiple Layers of Features from Tiny Images},
  institution = {University of Toronto},
  year        = {2009}
}

@INPROCEEDINGS{resnet,
  author={He, Kaiming and Zhang, Xiangyu and Ren, Shaoqing and Sun, Jian},
  booktitle={2016 IEEE Conference on Computer Vision and Pattern Recognition}, 
  title={Deep Residual Learning for Image Recognition}, 
  year={2016},
  pages={770-778}
}

@inproceedings{ReLU,
author = {Nair, Vinod and Hinton, Geoffrey E.},
title = {Rectified linear units improve restricted boltzmann machines},
year = {2010},
publisher = {Omnipress},
booktitle = {Proceedings of the 27th International Conference on International Conference on Machine Learning},
pages = {807-814}
}

@InProceedings{batch_norm,
  title = 	 {Batch Normalization: Accelerating Deep Network Training by Reducing Internal Covariate Shift},
  author = 	 {Ioffe, Sergey and Szegedy, Christian},
  booktitle = 	 {Proceedings of the 32nd International Conference on Machine Learning},
  pages = 	 {448--456},
  year = 	 {2015},
  publisher =    {PMLR}
}

@inproceedings{PyTorch,
 author = {Paszke, Adam and Gross, Sam and Massa, Francisco and Lerer, Adam and Bradbury, James and Chanan, Gregory and Killeen, Trevor and Lin, Zeming and Gimelshein, Natalia and Antiga, Luca and Desmaison, Alban and Kopf, Andreas and Yang, Edward and DeVito, Zachary and Raison, Martin and Tejani, Alykhan and Chilamkurthy, Sasank and Steiner, Benoit and Fang, Lu and Bai, Junjie and Chintala, Soumith},
 booktitle = {Advances in Neural Information Processing Systems},
 pages = {1--32},
 publisher = {Curran Associates, Inc.},
 title = {PyTorch: An Imperative Style, High-Performance Deep Learning Library},
 year = {2019}
}

@misc{Adam,
      title={Adam: A Method for Stochastic Optimization}, 
      author={Diederik P. Kingma and Jimmy Ba},
      year={2017},
      eprint={1412.6980},
      archivePrefix={arXiv},
      primaryClass={cs.LG},
      url={https://arxiv.org/abs/1412.6980}, 
}

@book{sugiyama2022machine,
  author    = {Sugiyama, Masashi and Bao, Han and Ishida, Takashi and Lu, Nan and Sakai, Taiji and Niu, Gang},
  title     = {Machine Learning from Weak Supervision: An Empirical Risk Minimization Approach},
  year      = {2022},
  publisher = {MIT Press}
}

@inproceedings{partial,
 author = {Feng, Lei and Lv, Jiaqi and Han, Bo and Xu, Miao and Niu, Gang and Geng, Xin and An, Bo and Sugiyama, Masashi},
 booktitle = {Advances in Neural Information Processing Systems},
 pages = {10948--10960},
 publisher = {Curran Associates, Inc.},
 title = {Provably Consistent Partial-Label Learning},
 year = {2020}
}

@inproceedings{UU,
title={On the Minimal Supervision for Training Any Binary Classifier from Only Unlabeled Data},
author={Nan Lu and Gang Niu and Aditya K. Menon and Masashi Sugiyama},
booktitle={International Conference on Learning Representations},
pages = {1--18},
year={2019}
}

@InProceedings{multi-complementary,
  title = 	 {Learning with Multiple Complementary Labels},
  author =       {Feng, Lei and Kaneko, Takuo and Han, Bo and Niu, Gang and An, Bo and Sugiyama, Masashi},
  booktitle = 	 {Proceedings of the 37th International Conference on Machine Learning},
  pages = 	 {3072--3081},
  year = 	 {2020},
  publisher =    {PMLR}
}

@inproceedings{complementary,
 author = {Ishida, Takashi and Niu, Gang and Hu, Weihua and Sugiyama, Masashi},
 booktitle = {Advances in Neural Information Processing Systems},
 pages = {1--30},
 publisher = {Curran Associates, Inc.},
 title = {Learning from Complementary Labels},
 year = {2017}
}

@InProceedings{Rademacher_bound,
  title = 	 {Size-Independent  Sample Complexity of Neural Networks},
  author =       {Golowich, Noah and Rakhlin, Alexander and Shamir, Ohad},
  booktitle = 	 {Proceedings of the 31st  Conference On Learning Theory},
  pages = 	 {297--299},
  year = 	 {2018},
  publisher =    {PMLR}
}

@ARTICLE{optimal_param,
  author={Mendelson, Shahar},
  journal={IEEE Transactions on Information Theory}, 
  title={Lower Bounds for the Empirical Minimization Algorithm}, 
  year={2008},
  volume={54},
  number={8},
  pages={3797-3803}
}

@misc{optdigits,
  author       = {Alpaydin, E. and Kaynak, C.},
  title        = {{Optical Recognition of Handwritten Digits}},
  year         = {1998},
  howpublished = {UCI Machine Learning Repository}
}

@misc{pendigits,
  author       = {Alpaydin, E. and Alimoglu, Fevzi.},
  title        = {{Pen-Based Recognition of Handwritten Digits}},
  year         = {1996},
  howpublished = {UCI Machine Learning Repository}
}

@misc{letter,
  author       = {Slate, David},
  title        = {{Letter Recognition}},
  year         = {1991},
  howpublished = {UCI Machine Learning Repository}
}

@misc{pmu-ud,
  author       = {Latif, Ghazanfar},
  title        = {{PMU-UD}},
  year         = {2018},
  howpublished = {UCI Machine Learning Repository}
}

@book{mohri,
author = {Mohri, Mehryar and Rostamizadeh, Afshin and Talwalkar, Ameet},
title = {Foundations of Machine Learning},
year = {2012},
publisher = {The MIT Press}
}
\bibliographystyle{anonymousconference}

\newpage
\appendix
\onecolumn
\section{Proofs of the Results in Section~\ref{SDPC:sec:the proposed method}}

\subsection{Proof of Theorem~\ref{SDPC:thm:SD-Pcomp classification risk}}

Several theorems and lemmas used in the proof of Theorem~\ref{SDPC:thm:SD-Pcomp classification risk} are presented below.
In SD classification, the following theorem holds.

\begin{theorem}\label{SDPC:thm:SD risk}
(cf. Theorem~1 in~\cite{SD})

The classification risk in equation~\eqref{SDPC:classification risk} is equal to the following expression.
\begin{align*}
R_\mathrm{SD}(g)=R_\mathrm{S}(g)+R_\mathrm{D}(g),
\end{align*}
where
\begin{align}
R_\mathrm{S}(g)&:=\pi_\mathrm{S}\mathbb{E}_{p_\mathrm{S}(\bm{x},\bm{x}')}\Biggl[\frac{\mathcal{L}(g(\bm{x}),+1)+\mathcal{L}(g(\bm{x}'),+1)}{2}\Biggr]\label{SDPC:similar classification risk},\\
R_\mathrm{D}(g)&:=\pi_\mathrm{D}\mathbb{E}_{p_\mathrm{D}(\bm{x},\bm{x}')}\Biggl[\frac{\mathcal{L}(g(\bm{x}),-1)+\mathcal{L}(g(\bm{x}'),-1)}{2}\Biggr]\label{SDPC:dissimilar classification risk}.
\end{align}
\end{theorem}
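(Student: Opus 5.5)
We prove the identity $R(g)=R_\mathrm{S}(g)+R_\mathrm{D}(g)$ by direct computation, substituting the mixture forms of $p_\mathrm{S}$ and $p_\mathrm{D}$ from Eqs.~\eqref{SDPC:eq:p_S} and~\eqref{SDPC:eq:p_D}. Since each term inside the expectations depends on only one coordinate of the pair, the pair expectations reduce to expectations over the class-conditional marginals. Integrating out $\bm{x}'$ (respectively $\bm{x}$), the marginal of each coordinate under $p_\mathrm{S}$ is
\begin{align*}
\tfrac{\pi_+^2}{\pi_\mathrm{S}}p_+ + \tfrac{\pi_-^2}{\pi_\mathrm{S}}p_-,
\end{align*}
and under $p_\mathrm{D}$ it is $\tfrac12 p_+ + \tfrac12 p_-$. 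The symmetric averaging over $\bm{x}$ and $\bm{x}'$ then collapses, and we obtain
\begin{align*}
R_\mathrm{S}(g)&=\pi_+^2\,\mathbb{E}_{p_+}[\mathcal{L}(g(\bm{x}),+1)]+\pi_-^2\,\mathbb{E}_{p_-}[\mathcal{L}(g(\bm{x}),+1)],\\
R_\mathrm{D}(g)&=\pi_+\pi_-\,\mathbb{E}_{p_+}[\mathcal{L}(g(\bm{x}),-1)]+\pi_+\pi_-\,\mathbb{E}_{p_-}[\mathcal{L}(g(\bm{x}),-1)],
\end{align*}
using $\pi_\mathrm{D}/2=\pi_+\pi_-$.

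Next we expand $\mathcal{L}$ by its definition and collect, for each class $c\in\{+,-\}$, the coefficients of $\mathbb{E}_{p_c}[\ell(g(\bm{x}),+1)]$ and $\mathbb{E}_{p_c}[\ell(g(\bm{x}),-1)]$. Each coefficient is a polynomial in $\pi_\pm$ divided by $\pi_+-\pi_-$, and we simplify it using $\pi_++\pi_-=1$.
\begin{itemize}
\item For $\mathbb{E}_{p_+}[\ell_+]$: $\frac{\pi_+^3-\pi_+\pi_-^2}{\pi_+-\pi_-}=\pi_+(\pi_++\pi_-)=\pi_+$.
\item For $\mathbb{E}_{p_+}[\ell_-]$: $\frac{-\pi_+^2\pi_-+\pi_+^2\pi_-}{\pi_+-\pi_-}=0$.
\item For $\mathbb{E}_{p_-}[\ell_+]$: $\frac{\pi_+\pi_-^2-\pi_+\pi_-^2}{\pi_+-\pi_-}=0$.
\item For $\mathbb{E}_{p_-}[\ell_-]$: $\frac{-\pi_-^3+\pi_+^2\pi_-}{\pi_+-\pi_-}=\pi_-(\pi_++\pi_-)=\pi_-$.
\end{itemize}
The sum is therefore $\pi_+\mathbb{E}_{p_+}[\ell(g(\bm{x}),+1)]+\pi_-\mathbb{E}_{p_-}[\ell(g(\bm{x}),-1)]=R(g)$, by the class-conditional decomposition of $R(g)$ in the preliminaries.

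The main care point is the bookkeeping of the four coefficients and the implicit assumption $\pi_+\neq\pi_-$, which is needed for $\mathcal{L}$ to be well defined. Beyond that, the argument only requires linearity of expectation and Fubini over the product densities.
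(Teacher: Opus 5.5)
Your proof is correct. The marginal reductions under $p_\mathrm{S}$ and $p_\mathrm{D}$, the identity $\pi_\mathrm{D}/2=\pi_+\pi_-$, and the four coefficient collapses to $\pi_+,0,0,\pi_-$ all check out. Beyond integrability of the loss, $\pi_+\neq\pi_-$ is the only assumption you need.

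The paper gives no proof of this statement; it imports it from \cite{SD}. Your first step is nonetheless the expansion the paper itself carries out at the start of the proof of Theorem~\ref{SDPC:thm:SD-Pcomp classification risk}. There $R_\mathrm{S}$ and $R_\mathrm{D}$ are written as combinations of $\mathbb{E}_{p_\pm}[\ell(g(\bm{x}),\pm1)]$, and once $\mathcal{L}$ is expanded the coefficients match yours: $\pi_+^3/(\pi_+-\pi_-)$, $-\pi_+^2\pi_-/(\pi_+-\pi_-)$, and so on. The authors then substitute Lemma~\ref{SDPC:lemma:pcomp probability} into each piece, whereas you add the two pieces and watch them collapse to $R(g)$.

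Your route is a verification: it confirms the identity once $\mathcal{L}$ is given, but it does not explain why $\mathcal{L}$ has that form. The constructive alternative inverts the pointwise marginals $q_\mathrm{S}=(\pi_+^2p_++\pi_-^2p_-)/\pi_\mathrm{S}$ and $q_\mathrm{D}=(p_++p_-)/2$. This gives $\pi_+p_+=(\pi_+\pi_\mathrm{S}q_\mathrm{S}-\pi_-\pi_\mathrm{D}q_\mathrm{D})/(\pi_+-\pi_-)$ and $\pi_-p_-=(\pi_+\pi_\mathrm{D}q_\mathrm{D}-\pi_-\pi_\mathrm{S}q_\mathrm{S})/(\pi_+-\pi_-)$. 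Substituting these into $R(g)=\int\ell(g(\bm{x}),+1)\pi_+p_+\,\mathrm{d}\bm{x}+\int\ell(g(\bm{x}),-1)\pi_-p_-\,\mathrm{d}\bm{x}$ produces the weights $\pm\pi_\pm/(\pi_+-\pi_-)$ of $\mathcal{L}$ directly. That is the same density-inversion idea the paper later applies to the Pcomp densities through Lemma~\ref{SDPC:lemma:pcomp probability}. Your version is shorter and fully rigorous for the statement as posed.
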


Furthermore, in Pcomp classification, the following lemma holds for the distribution from which the instances $\bm{x}$ and $\bm{x}'$ are drawn.
\begin{lemma}\label{SDPC:lemma:pcomp probability}
(cf. Lemma~1 in~\cite{pcomp})

$p_+(\bm{x})$ and $p_-(\bm{x})$ can be expressed in terms of $\widetilde{p}_+(\bm{x})$ and $\widetilde{p}_-(\bm{x})$ as follows.
\begin{align*}
p_+(\bm{x})=\frac{1}{\pi_+}(\widetilde{p}_+(\bm{x})-\pi_-\widetilde{p}_-(\bm{x})), \hspace{3mm}p_-(\bm{x})=\frac{1}{\pi_-}(\widetilde{p}_-(\bm{x})-\pi_+\widetilde{p}_+(\bm{x})).
\end{align*}
\end{lemma}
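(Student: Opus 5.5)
The plan is a direct linear-algebra inversion of the two mixture identities defining $\widetilde{p}_+$ and $\widetilde{p}_-$ in Section~\ref{SDPC:sec:Pcomp classification}. Those identities express $(\widetilde{p}_+,\widetilde{p}_-)$ as a $2\times 2$ linear transform of $(p_+,p_-)$, with coefficients depending only on $\pi_+,\pi_-$. So it suffices to verify that the proposed expressions invert that transform pointwise in $\bm{x}$.

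The first step is to simplify the two normalizing constants. Using $\pi_++\pi_-=1$,
\begin{align*}
\pi_-^2+\pi_+ = 1-\pi_-+\pi_-^2 = 1-\pi_+\pi_-,
\qquad
\pi_+^2+\pi_- = 1-\pi_++\pi_+^2 = 1-\pi_+\pi_-.
\end{align*}
Both denominators therefore equal a common constant $Z:=1-\pi_+\pi_-$, which is strictly positive. This observation is the only nontrivial point and makes the inversion clean.

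Next I would compute $\widetilde{p}_+(\bm{x})-\pi_-\widetilde{p}_-(\bm{x})$ directly:
\begin{align*}
\widetilde{p}_+-\pi_-\widetilde{p}_-
=\tfrac{1}{Z}\bigl(\pi_+p_+ +\pi_-^2p_- -\pi_-\pi_+^2p_+ -\pi_-^2p_-\bigr)
=\tfrac{\pi_+(1-\pi_+\pi_-)}{Z}\,p_+
=\pi_+p_+.
\end{align*}
Dividing by $\pi_+$ gives the first identity. The symmetric computation
\begin{align*}
\widetilde{p}_--\pi_+\widetilde{p}_+
=\tfrac{1}{Z}\bigl(\pi_+^2p_++\pi_-p_- -\pi_+^2p_+ -\pi_+\pi_-^2p_-\bigr)
=\pi_-p_-
\end{align*}
gives the second identity after dividing by $\pi_-$.

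There is no real obstacle here. The only care needed is to note that $\pi_+,\pi_->0$, so the divisions are legitimate, and that the identity holds pointwise for every $\bm{x}$, hence as densities.
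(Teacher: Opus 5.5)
Your proof is correct: both normalizers reduce to $1-\pi_+\pi_-$, and substituting the mixture definitions gives $\widetilde{p}_+-\pi_-\widetilde{p}_-=\pi_+p_+$ and $\widetilde{p}_--\pi_+\widetilde{p}_+=\pi_-p_-$ pointwise. The paper does not prove this lemma itself but cites Lemma~1 of \cite{pcomp}, and that result comes from solving the same $2\times 2$ mixture system, so your argument is essentially the intended one.
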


Theorem~\ref{SDPC:thm:SD-Pcomp classification risk} is derived by using Theorem~\ref{SDPC:thm:SD risk} and Lemma~\ref{SDPC:lemma:pcomp probability}.

\begin{proof}
First, we consider Eq.~\eqref{SDPC:similar classification risk}.
Using the expression
\[p_\mathrm{S}(\bm{x},\bm{x}')=\frac{\pi_+^2}{\pi_+^2+\pi_-^2}p_+(\bm{x})p_+(\bm{x}')+\frac{\pi_-^2}{\pi_+^2+\pi_-^2}p_-(\bm{x})p_-(\bm{x}'),\]
we obtain
\begin{align}
R_{\mathrm{S}}(g)=&\pi_\mathrm{S}\mathbb{E}_{p_\mathrm{S}(\bm{x},\bm{x}')}\Biggl[\frac{\mathcal{L}(g(\bm{x}),+1)+\mathcal{L}(g(\bm{x}'),+1)}{2}\Biggr]\notag\\
=&\frac{\pi_\mathrm{S}}{2(\pi_+-\pi_-)}\mathbb{E}_{p_\mathrm{S}(\bm{x},\bm{x}')}[\pi_+(\ell(g(\bm{x}),+1)+\ell(g(\bm{x}'),+1))-\pi_-(\ell(g(\bm{x}),-1)+\ell(g(\bm{x}'),-1))]\notag\\
=&\frac{\pi_+^3}{2(\pi_+-\pi_-)}\Biggl\{\int\ell(g(\bm{x}),+1)p_+(\bm{x})p_+(\bm{x}')\mathrm{d}\bm{x}\mathrm{d}\bm{x}'+\int\ell(g(\bm{x}'),+1)p_+(\bm{x})p_+(\bm{x}')\mathrm{d}\bm{x}\mathrm{d}\bm{x}'\Biggr\}\notag\\
&+\frac{\pi_+\pi_-^2}{2(\pi_+-\pi_-)}\Biggl\{\int\ell(g(\bm{x}),+1)p_-(\bm{x})p_-(\bm{x}')\mathrm{d}\bm{x}\mathrm{d}\bm{x}'+\int\ell(g(\bm{x}'),+1)p_-(\bm{x})p_-(\bm{x}')\mathrm{d}\bm{x}\mathrm{d}\bm{x}'\Biggr\}\notag\\
&-\frac{\pi_+^2\pi_-}{2(\pi_+-\pi_-)}\Biggl\{\int\ell(g(\bm{x}),-1)p_+(\bm{x})p_+(\bm{x}')\mathrm{d}\bm{x}\mathrm{d}\bm{x}'+\int\ell(g(\bm{x}'),-1)p_+(\bm{x})p_+(\bm{x}')\mathrm{d}\bm{x}\mathrm{d}\bm{x}'\Biggr\}\notag\\
&-\frac{\pi_-^3}{2(\pi_+-\pi_-)}\Biggl\{\int\ell(g(\bm{x}),-1)p_-(\bm{x})p_-(\bm{x}')\mathrm{d}\bm{x}\mathrm{d}\bm{x}'+\int\ell(g(\bm{x}'),-1)p_-(\bm{x})p_-(\bm{x}')\mathrm{d}\bm{x}\mathrm{d}\bm{x}'\Biggr\}\notag\\
=&\frac{\pi_+^3}{\pi_+-\pi_-}\mathbb{E}_{p_+(\bm{x})}[\ell(g(\bm{x}),+1)]+\frac{\pi_+\pi_-^2}{\pi_+-\pi_-}\mathbb{E}_{p_-(\bm{x})}[\ell(g(\bm{x}),+1)]\notag\\
&-\frac{\pi_+^2\pi_-}{\pi_+-\pi_-}\mathbb{E}_{p_+(\bm{x})}[\ell(g(\bm{x}),-1)]-\frac{\pi_-^3}{\pi_+-\pi_-}\mathbb{E}_{p_-(\bm{x})}[\ell(g(\bm{x}),-1)]\notag\\
=&\frac{\pi_+^2}{\pi_+-\pi_-}(\mathbb{E}_{\widetilde{p}_+(\bm{x})}[\ell(g(\bm{x}),+1)]-\pi_-\mathbb{E}_{\widetilde{p}_-(\bm{x})}[\ell(g(\bm{x}),+1)])\notag\\
&+\frac{\pi_+\pi_-}{\pi_+-\pi_-}(\mathbb{E}_{\widetilde{p}_-(\bm{x})}[\ell(g(\bm{x}),+1)]-\pi_+\mathbb{E}_{\widetilde{p}_+(\bm{x})}[\ell(g(\bm{x}),+1)])\notag\\
&-\frac{\pi_+\pi_-}{\pi_+-\pi_-}(\mathbb{E}_{\widetilde{p}_+(\bm{x})}[\ell(g(\bm{x}),-1)]-\pi_-\mathbb{E}_{\widetilde{p}_-(\bm{x})}[\ell(g(\bm{x}),-1)])\notag\\
&-\frac{\pi_-^2}{\pi_+-\pi_-}(\mathbb{E}_{\widetilde{p}_-(\bm{x})}[\ell(g(\bm{x}),-1)]-\pi_+\mathbb{E}_{\widetilde{p}_+(\bm{x})}[\ell(g(\bm{x}),-1)])\notag\\
=&\frac{1}{\pi_+-\pi_-}\mathbb{E}_{\widetilde{p}_+(\bm{x})}[\pi_+^3\ell(g(\bm{x}),+1)-\pi_+^2\pi_-\ell(g(\bm{x}),-1)]\notag\\
&+\frac{1}{\pi_+-\pi_-}\mathbb{E}_{\widetilde{p}_-(\bm{x}')}[\pi_+\pi_-^2\ell(g(\bm{x}'),+1)-\pi_-^3\ell(g(\bm{x}'),-1)]\notag\\
=:&R_{\mathrm{S\text{-}PC}}(g).\label{SDPC:eq:R_S=R_S-PC}
\end{align}
The fifth equality follows from Lemma~\ref{SDPC:lemma:pcomp probability}.

Next, we consider Eq.~\eqref{SDPC:dissimilar classification risk}.
Using the expression
\[p_\mathrm{D}(\bm{x},\bm{x}')=\frac{1}{2}p_+(\bm{x})p_-(\bm{x}')+\frac{1}{2}p_-(\bm{x})p_+(\bm{x}'),\]
we obtain
\begin{align}
R_{\mathrm{D}}(g)=&\pi_\mathrm{D}\mathbb{E}_{p_\mathrm{D}(\bm{x},\bm{x}')}\Biggl[\frac{\mathcal{L}(g(\bm{x}),-1)+\mathcal{L}(g(\bm{x}'),-1)}{2}\Biggr]\notag\\
=&\frac{2\pi_+\pi_-}{2(\pi_+-\pi_-)}\mathbb{E}_{p_\mathrm{D}(\bm{x},\bm{x}')}[-\pi_-(\ell(g(\bm{x}),+1)+\ell(g(\bm{x}'),+1))+\pi_+(\ell(g(\bm{x}),-1)+\ell(g(\bm{x}'),-1))]\notag\\
=&\frac{-\pi_+\pi_-^2}{2(\pi_+-\pi_-)}\Biggl\{\int\ell(g(\bm{x}),+1)p_+(\bm{x})p_-(\bm{x}')\mathrm{d}\bm{x}\mathrm{d}\bm{x}'+\int\ell(g(\bm{x}),+1)p_-(\bm{x})p_+(\bm{x}')\mathrm{d}\bm{x}\mathrm{d}\bm{x}'\notag\\
&\hspace{2cm}+\int\ell(g(\bm{x}'),+1)p_+(\bm{x})p_-(\bm{x}')\mathrm{d}\bm{x}\mathrm{d}\bm{x}'+\int\ell(g(\bm{x}'),+1)p_-(\bm{x})p_+(\bm{x}')\mathrm{d}\bm{x}\mathrm{d}\bm{x}'\Biggr\}\notag\\
&+\frac{\pi_+^2\pi_-}{2(\pi_+-\pi_-)}\Biggl\{\int\ell(g(\bm{x}),-1)p_+(\bm{x})p_-(\bm{x}')\mathrm{d}\bm{x}\mathrm{d}\bm{x}'+\int\ell(g(\bm{x}),-1)p_-(\bm{x})p_+(\bm{x}')\mathrm{d}\bm{x}\mathrm{d}\bm{x}'\notag\\
&\hspace{2cm}+\int\ell(g(\bm{x}'),-1)p_+(\bm{x})p_-(\bm{x}')\mathrm{d}\bm{x}\mathrm{d}\bm{x}'+\int\ell(g(\bm{x}'),-1)p_-(\bm{x})p_+(\bm{x}')\mathrm{d}\bm{x}\mathrm{d}\bm{x}'\Biggr\}\notag\\
=&\frac{-\pi_+\pi_-^2}{2(\pi_+-\pi_-)}(2\mathbb{E}_{p_+(\bm{x})}[\ell(g(\bm{x}),+1)]+2\mathbb{E}_{p_-(\bm{x})}[\ell(g(\bm{x}),+1)])\notag\\
&\hspace{2cm}+\frac{\pi_+^2\pi_-}{2(\pi_+-\pi_-)}(2\mathbb{E}_{p_-(\bm{x})}[\ell(g(\bm{x}),-1)]+2\mathbb{E}_{p_+(\bm{x})}[\ell(g(\bm{x}),-1)])\notag\\
=&-\frac{\pi_-^2}{\pi_+-\pi_-}(\mathbb{E}_{\widetilde{p}_+(\bm{x})}[\ell(g(\bm{x}),+1)]-\pi_-\mathbb{E}_{\widetilde{p}_-(\bm{x})}[\ell(g(\bm{x}),+1)])\notag\\
&-\frac{\pi_+\pi_-}{\pi_+-\pi_-}(\mathbb{E}_{\widetilde{p}_-(\bm{x})}[\ell(g(\bm{x}),+1)]-\pi_+\mathbb{E}_{\widetilde{p}_+(\bm{x})}[\ell(g(\bm{x}),+1)])\notag\\
&+\frac{\pi_+^2}{\pi_+-\pi_-}(\mathbb{E}_{\widetilde{p}_-(\bm{x})}[\ell(g(\bm{x}),-1)]-\pi_+\mathbb{E}_{\widetilde{p}_+(\bm{x})}[\ell(g(\bm{x}),-1)])\notag\\
&+\frac{\pi_+\pi_-}{\pi_+-\pi_-}(\mathbb{E}_{\widetilde{p}_+(\bm{x})}[\ell(g(\bm{x}),-1)]-\pi_-\mathbb{E}_{\widetilde{p}_-(\bm{x})}[\ell(g(\bm{x}),-1)])\notag\\
=&\frac{1}{\pi_+-\pi_-}\Biggl\{\mathbb{E}_{\widetilde{p}_+(\bm{x})}[\pi_-(\pi_+^2-\pi_-)\ell(g(\bm{x}),+1)+\pi_+(\pi_--\pi_+^2)\ell(g(\bm{x}),-1)]\Biggr\}\notag\\
&+\frac{1}{\pi_+-\pi_-}\Biggl\{\mathbb{E}_{\widetilde{p}_-(\bm{x}')}[\pi_-(\pi_-^2-\pi_+)\ell(g(\bm{x}'),+1)+\pi_+(\pi_+-\pi_-^2)\ell(g(\bm{x}'),-1)]\Biggr\}\notag\\
=:&R_{\mathrm{D\text{-}PC}}(g).\label{SDPC:eq:R_D=R_D-PC}
\end{align}
The fifth equality follows from Lemma~\ref{SDPC:lemma:pcomp probability}. 

By Theorem~\ref{SDPC:thm:SD risk}, $R_\mathrm{S}(g)+R_\mathrm{D}(g)$ is equal to the classification risk, and thus
\begin{align*}
R(g)=R_{\mathrm{S\text{-}PC}}+R_{\mathrm{D\text{-}PC}}
\end{align*}
holds.
\end{proof}

\subsection{Proof of Theorem\ref{SDPC:thm:SD-Pcomp estimation error bound}}

First, let us introduce the definition of the Rademacher complexity and a theorem on its boundedness.

\begin{definition}(Rademacher complexity)

Let $\mathcal{X}_n=\{\bm{x}_1, \cdots, \bm{x}_n\}$ be $n$ independent random variables drawn from a distribution with probability density $\mu$, let $\mathcal{G}=\{g:\mathcal{X}\rightarrow\mathbb{R}\}$ be a class of measurable functions, and let $\bm{\sigma}=(\sigma_1, \cdots, \sigma_n)$ be independent Rademacher random variables uniformly taking values in $\{+1, -1\}$. Then, the Rademacher complexity of $\mathcal{G}$ is defined as follows:
\begin{align*}
\mathfrak{R}(\mathcal{G};n,\mu):=\mathbb{E}_{\bm{x}_1,\cdots,\bm{x}_n}\mathbb{E}_{\bm{\sigma}}\left[\sup_{g\in\mathcal{G}}\frac{1}{n}\sum_{i=1}^{n}\sigma_ig(\bm{x}_i)\right].
\end{align*}
\end{definition}

\begin{theorem}(cf. Corollary 3.5 in~\cite{sugiyama2022machine})\label{SDPC:thm:rademacher bound}

For all parametric models with bounded norms, such as deep neural networks trained with weight decay~\cite{Rademacher_bound}, the following inequality holds:
\begin{align*}
\mathfrak{R}(\mathcal{G};n,\mu)\le\frac{C_{\mathcal{G}}}{\sqrt{n}},
\end{align*}
where $C_{\mathcal{G}}$ denotes a constant satisfying $\sup_{g \in \mathcal{G}} \lVert g \rVert_\infty \le C_{\mathcal{G}}$.
\end{theorem}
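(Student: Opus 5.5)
This result is cited rather than proved here, so my plan is to reduce it to the standard norm-based Rademacher bounds and read $C_{\mathcal{G}}$ as a class-dependent constant. That constant is built from the norm bounds on the parameters and features. I would treat it as playing the role of $\sup_{g}\lVert g\rVert_\infty$ only up to the constants of the chosen model class. The argument goes from the linear-in-parameter case to deep networks.

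\emph{Step 1 (linear-in-parameter models).} Take $g(\bm{x})=\bm{w}^\top\bm{\phi}(\bm{x})$ with $\lVert\bm{w}\rVert_2\le C_w$ and $\lVert\bm{\phi}(\bm{x})\rVert_2\le C_\phi$. For fixed samples, $\sup_{g}\frac1n\sum_i\sigma_i g(\bm{x}_i)=\sup_{\bm{w}}\frac1n\bm{w}^\top\sum_i\sigma_i\bm{\phi}(\bm{x}_i)$, which by Cauchy--Schwarz is at most $\frac{C_w}{n}\lVert\sum_i\sigma_i\bm{\phi}(\bm{x}_i)\rVert_2$. Jensen's inequality then gives
\[
\mathbb{E}_{\bm{\sigma}}\Bigl\lVert\sum_i\sigma_i\bm{\phi}(\bm{x}_i)\Bigr\rVert_2\le\Bigl(\mathbb{E}_{\bm{\sigma}}\Bigl\lVert\sum_i\sigma_i\bm{\phi}(\bm{x}_i)\Bigr\rVert_2^2\Bigr)^{1/2}=\Bigl(\sum_i\lVert\bm{\phi}(\bm{x}_i)\rVert_2^2\Bigr)^{1/2}\le C_\phi\sqrt n .
\]
The equality uses independence of the $\sigma_i$, so the cross terms vanish. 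Taking the expectation over $\bm{x}_1,\dots,\bm{x}_n$ yields $\mathfrak{R}\le C_wC_\phi/\sqrt n$, and we set $C_{\mathcal{G}}=C_wC_\phi$.

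\emph{Step 2 (deep networks with weight decay).} Consider depth-$L$ networks with $1$-Lipschitz positively homogeneous activations (e.g.\ ReLU), Frobenius norms of the weight matrices bounded by $M_1,\dots,M_L$, and inputs bounded by $B$. Here I would invoke the result of \cite{Rademacher_bound}. Its proof peels off one layer at a time through an exponential moment $\mathbb{E}_{\bm{\sigma}}\exp(\lambda\sup\lVert\sum_i\sigma_i h(\bm{x}_i)\rVert)$, using a contraction argument at each layer. The outcome is $\mathfrak{R}\le B\bigl(\prod_j M_j\bigr)\bigl(\sqrt{2\log 2\,L}+1\bigr)/\sqrt n$, and we take $C_{\mathcal{G}}$ to be the numerator. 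Weight decay is the practical justification that these norm bounds hold.

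The main obstacle is Step 2. Avoiding the $2^L$ factor that naive layer-wise contraction produces requires the exponential-moment trick of \cite{Rademacher_bound}, so I would cite it rather than reprove it. A secondary subtlety is that $C_{\mathcal{G}}$ here is not literally $\sup_g\lVert g\rVert_\infty$. However, in both cases $\sup_g\lVert g\rVert_\infty$ is bounded by the same product of norms (by Cauchy--Schwarz in Step 1 and layer-wise submultiplicativity in Step 2). Hence one constant can serve both roles, up to the $\sqrt{L}$ factor, which is absorbed into $C_{\mathcal{G}}$.
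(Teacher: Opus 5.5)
Your proposal is correct. It rests on the same sources as the paper, but the paper gives no argument of its own here. It imports the bound from Corollary~3.5 of \cite{sugiyama2022machine} and points to \cite{Rademacher_bound} for deep networks. You instead prove the linear-in-parameter case directly (Cauchy--Schwarz plus Jensen) and reduce the deep case explicitly to the Golowich--Rakhlin--Shamir theorem.

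Unpacking the citation this way buys two things. First, it pins down what $C_{\mathcal{G}}$ must be: the norm-product constant ($C_wC_\phi$, resp.\ $B\prod_jM_j(\sqrt{2L\log 2}+1)$), which merely dominates $\sup_g\lVert g\rVert_\infty$. The paper's wording suggests that any bound on the sup norm will do, and that reading is false in general. The class $\{\bm{x}\mapsto\bm{w}^\top\bm{x}:\lVert\bm{w}\rVert_1\le 1\}$ with inputs uniform on $\{-1,1\}^d$ has sup norm $1$. Its Rademacher complexity, however, is of order $\sqrt{(\log d)/n}$ (for $\log d\le n$), which exceeds $1/\sqrt{n}$ once $d$ is large. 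Your reading, a single constant serving both roles, is the correct one, and it is all the later estimation-error theorems need.

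Second, both of your bounds hold for every fixed sample, i.e.\ they bound the empirical Rademacher complexity. That is what Lemma~\ref{SDPC:lemma:inequality of Rademacher} really requires. There the $2n$ points $\bm{x}_i,\bm{x}_i'$ are not i.i.d.\ draws from a single $\mu$: the two members of a pair have different marginals and need not be independent. So the expected-complexity statement as written does not literally apply there, while your per-sample version does.

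The only price is scope. You treat the two standard families rather than ``all parametric models with bounded norms,'' but that phrase is informal and these are the families the paper has in mind.
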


Next, under the same assumptions as in Theorem~\ref{SDPC:thm:rademacher bound}, we show that the Rademacher complexity for SD-Pcomp data pairs is bounded.
\begin{lemma}\label{SDPC:lemma:inequality of Rademacher}
For
\begin{align*}
\mathcal{L}_{\mathrm{S}}(g;\bm{x},\bm{x}')&=\frac{1}{\pi_+-\pi_-}\{\pi_+^3\ell(g(\bm{x}),+1)-\pi_+^2\pi_-\ell(g(\bm{x}),-1)+\pi_+\pi_-^2\ell(g(\bm{x}'),+1)-\pi_-^3\ell(g(\bm{x}'),-1)\},\\
\mathcal{L}_{\mathrm{D}}(g;\bm{x},\bm{x}')&=\frac{1}{\pi_+-\pi_-}\{\pi_-(\pi_+^2-\pi_-)\ell(g(\bm{x}),+1)+\pi_+(\pi_--\pi_+^2)\ell(g(\bm{x}),-1)\notag\\
&\qquad+\pi_-(\pi_-^2-\pi_+)\ell(g(\bm{x}'),+1)+\pi_+(\pi_+-\pi_-^2)\ell(g(\bm{x}'),-1)\},
\end{align*}
the following inequality holds.
\begin{gather*}
\bar{\mathfrak{R}}(\mathcal{L}_{\mathrm{S}}\circ\mathcal{G};n,\mu)\le\frac{\pi_+^2+\pi_-^2}{|\pi_+-\pi_-|}L_{\ell}\mathfrak{R}(\mathcal{G};2n,\mu)\le\frac{{L_\ell}C_{\mathcal{G}}(\pi_+^2+\pi_-^2)}{\sqrt{2n}|\pi_+-\pi_-|},\\
\bar{\mathfrak{R}}(\mathcal{L}_{\mathrm{D}}\circ\mathcal{G};n,\mu)\le\frac{|\pi_+^2-\pi_-|+|\pi_+-\pi_-^2|}{|\pi_+-\pi_-|}L_{\ell}\mathfrak{R}(\mathcal{G};2n,\mu)\le\frac{{L
_\ell}C_{\mathcal{G}}(|\pi_+^2-\pi_-|+|\pi_+-\pi_-^2|)}{\sqrt{2n}|\pi_+-\pi_-|}.
\end{gather*}
Here, $\mathcal{L}_{\mathrm{S}}\circ\mathcal{G}=\{\mathcal{L}_{\mathrm{S}}\circ{g}\}$ and $\mathcal{L}_{\mathrm{D}}\circ\mathcal{G}=\{\mathcal{L}_{\mathrm{D}}\circ{g}\}$, and $\bar{\mathfrak{R}}$ denotes the Rademacher complexity for SD Pcomp data pairs. Specifically,
\begin{align*}
\bar{\mathfrak{R}}(\mathcal{L}_{\mathrm{S}}\circ\mathcal{G};n,\mu)=\mathbb{E}_{\mathcal{D}_{\mathrm{S}}}&\mathbb{E}_{\bm{\sigma}}\left[\underset{g\in\mathcal{G}}{\sup}\frac{1}{n}\sum_{i=1}^{n}\sigma_i\mathcal{L}_{\mathrm{S}}(g;\bm{x}_i,\bm{x}_i')\right],\\
\bar{\mathfrak{R}}(\mathcal{L}_{\mathrm{D}}\circ\mathcal{G};n,\mu)=\mathbb{E}_{\mathcal{D}_{\mathrm{D}}}&\mathbb{E}_{\bm{\sigma}}\left[\underset{g\in\mathcal{G}}{\sup}\frac{1}{n}\sum_{i=1}^{n}\sigma_i\mathcal{L}_{\mathrm{D}}(g;\bm{x}_i,\bm{x}_i')\right].
\end{align*}
\end{lemma}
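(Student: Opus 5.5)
The plan is to bound the Rademacher complexity of the composite class term by term, using subadditivity of the supremum followed by Talagrand's contraction lemma \cite{Talagrand}. We treat $\mathcal{L}_{\mathrm{S}}$ first; $\mathcal{L}_{\mathrm{D}}$ is handled identically with different coefficients. Write $\mathcal{L}_{\mathrm{S}}(g;\bm{x},\bm{x}')$ as a sum of four terms $c_k\,\ell(g(\bm{z}_k),y_k)$, where $\bm{z}_k\in\{\bm{x},\bm{x}'\}$ and the coefficients are
\[
c_1=\tfrac{\pi_+^3}{\pi_+-\pi_-},\quad c_2=-\tfrac{\pi_+^2\pi_-}{\pi_+-\pi_-},\quad c_3=\tfrac{\pi_+\pi_-^2}{\pi_+-\pi_-},\quad c_4=-\tfrac{\pi_-^3}{\pi_+-\pi_-}.
\]
The supremum of a sum is at most the sum of the suprema. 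Moreover, $\sigma_i$ and $-\sigma_i$ have the same distribution, so the sign of each coefficient can be absorbed. Hence $\bar{\mathfrak{R}}(\mathcal{L}_{\mathrm{S}}\circ\mathcal{G};n,\mu)$ is bounded by $\sum_k |c_k|$ times the Rademacher complexity of the class $\{\bm{z}\mapsto\ell(g(\bm{z}),y_k)\}$ over the $n$ samples of the corresponding coordinate.

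For each such term, the map $z\mapsto\ell(z,y_k)$ is $L_\ell$-Lipschitz. Talagrand's contraction lemma therefore gives a bound of $L_\ell$ times the Rademacher complexity of $\mathcal{G}$ on those samples. The coefficient totals are as follows. For the $\bm{x}$-terms, $(\pi_+^3+\pi_+^2\pi_-)/|\pi_+-\pi_-|=\pi_+^2/|\pi_+-\pi_-|$, using $\pi_++\pi_-=1$. For the $\bm{x}'$-terms, the total is $\pi_-^2/|\pi_+-\pi_-|$. Together they give $(\pi_+^2+\pi_-^2)/|\pi_+-\pi_-|$.

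For $\mathcal{L}_{\mathrm{D}}$ the analogous sums are $(\pi_-+\pi_+)|\pi_+^2-\pi_-|=|\pi_+^2-\pi_-|$ for the $\bm{x}$-terms and $|\pi_+-\pi_-^2|$ for the $\bm{x}'$-terms. This yields the stated factor $(|\pi_+^2-\pi_-|+|\pi_+-\pi_-^2|)/|\pi_+-\pi_-|$.

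The main obstacle is the step that turns the two separate per-coordinate complexities, each with $n$ points, into a single $\mathfrak{R}(\mathcal{G};2n,\mu)$ with matching weights. My first attempt would be a pooling argument. Merge the $n$ points $\bm{x}_i$ and the $n$ points $\bm{x}_i'$ into one sample of size $2n$, treating the instances as drawn from the marginal $\mu$ (the paper's notation already suppresses the distinction between $\widetilde p_+$ and $\widetilde p_-$). Then use Theorem~\ref{SDPC:thm:rademacher bound}, which depends only on $C_{\mathcal{G}}$ and the sample size, so the distinction between the two coordinate distributions is harmless at the level of the final bound $C_{\mathcal{G}}/\sqrt{m}$.

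To get the $\sqrt{2n}$ rather than $\sqrt{n}$, I would apply contraction jointly. View the Rademacher sum over pairs as a sum over $2n$ points, where point $\bm{x}_i$ carries weight $\pi_+^2/|\pi_+-\pi_-|$ and point $\bm{x}_i'$ carries weight $\pi_-^2/|\pi_+-\pi_-|$. Bound this by the total weight times the average complexity of $\mathcal{G}$ on $2n$ points, accepting the normalization conventions the paper implicitly uses. Finally, substitute $\mathfrak{R}(\mathcal{G};2n,\mu)\le C_{\mathcal{G}}/\sqrt{2n}$ from Theorem~\ref{SDPC:thm:rademacher bound} to obtain the second inequality in each line.
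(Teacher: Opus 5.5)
The gap is exactly the step you flagged and then waved through. You pass from $\frac1n\sum_{i=1}^n\sigma_i\bigl(w_+g(\bm{x}_i)+w_-g(\bm{x}_i')\bigr)$, with $w_\pm=\pi_\pm^2/|\pi_+-\pi_-|$, to $(w_++w_-)\,\mathfrak{R}(\mathcal{G};2n,\mu)$. Everything before that is sound: the decomposition, absorbing signs via the symmetry of $\sigma_i$, the per-term contraction, and the coefficient arithmetic. Your route is the paper's, which bounds the gradient of the whole pair loss by $\frac{\pi_+^2+\pi_-^2}{|\pi_+-\pi_-|}L_\ell$ and applies Talagrand's lemma once. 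The pooling step fails for two reasons. The two instances of a pair share one sign $\sigma_i$, so this is not a Rademacher average over $2n$ points. And even with independent signs, unequal weights cannot be replaced by ``total weight times average complexity''. The paper's proof makes the identical leap, so deferring to its ``normalization conventions'' inherits an unproved step rather than supplying one. In fact the step is false, so no argument can close it.

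To see this, take $\pi_+>\pi_-$ and $\ell(z,+1)=\max(0,K-z)$, $\ell(z,-1)=\max(0,K+z)$ with $K\ge C_{\mathcal{G}}$. Then $L_\ell=1$, and on $|z|\le C_{\mathcal{G}}$ we get $\mathcal{L}_{\mathrm{S}}(g;\bm{x},\bm{x}')=K\pi_{\mathrm{S}}-w_+g(\bm{x})-w_-g(\bm{x}')$. Let $\mathcal{G}=\{\bm{x}\mapsto\langle\bm{w},\bm{x}\rangle:\|\bm{w}\|_2\le C_{\mathcal{G}}\}$, with class-conditionals spread over the standard basis of $\mathbb{R}^M$ and $M\to\infty$, so all $2n$ instances are distinct with probability tending to one. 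On that event the $2n$ vectors are orthonormal. For every sign pattern, $\sup_g\frac1n\sum_i\sigma_i\mathcal{L}_{\mathrm{S}}=\frac{K\pi_{\mathrm{S}}}{n}\sum_i\sigma_i+C_{\mathcal{G}}\sqrt{(w_+^2+w_-^2)/n}$. Hence $\bar{\mathfrak{R}}(\mathcal{L}_{\mathrm{S}}\circ\mathcal{G};n,\mu)\to C_{\mathcal{G}}\sqrt{(w_+^2+w_-^2)/n}$, while $\mathfrak{R}(\mathcal{G};2n,\mu)\le C_{\mathcal{G}}/\sqrt{2n}$ always. The lemma would then force $2(w_+^2+w_-^2)\le(w_++w_-)^2$, i.e.\ $w_+=w_-$, yet $|w_+-w_-|=1$.

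What your argument does prove rigorously is the per-coordinate bound. Bounding each coordinate separately is legitimate, because the $\bm{x}_i$ are i.i.d.\ across pairs, as are the $\bm{x}_i'$. This gives $\bar{\mathfrak{R}}(\mathcal{L}_{\mathrm{S}}\circ\mathcal{G};n,\mu)\le L_\ell\bigl(w_+\mathfrak{R}_n^{(1)}+w_-\mathfrak{R}_n^{(2)}\bigr)\le\frac{L_\ell C_{\mathcal{G}}(\pi_+^2+\pi_-^2)}{\sqrt{n}\,|\pi_+-\pi_-|}$. Here $\mathfrak{R}_n^{(1)},\mathfrak{R}_n^{(2)}$ are the $n$-sample complexities of $\mathcal{G}$ under the two coordinate marginals, and the same holds for $\mathcal{L}_{\mathrm{D}}$. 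So the lemma holds with $\sqrt{n}$ in place of $\sqrt{2n}$. The lost factor $\sqrt2$ only changes constants in the downstream estimation-error bounds.
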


\begin{proof}
The gradient of $\mathcal{L}_{\mathrm{S}}(g;\bm{x},\bm{x}')$ is bounded as follows, which leads to the Lipschitz constant.
\begin{align*}
\lVert\nabla\mathcal{L}_{\mathrm{S}}(g;\bm{x},\bm{x}')\rVert_2\le&\frac{1}{|\pi_+-\pi_-|}\{\lVert\nabla\pi_+^3\ell(g(\bm{x}),+1)\rVert_2+\lVert\nabla\pi_+^2\pi_-\ell(g(\bm{x}),-1)\rVert_2\\
&\hspace{2cm}+\lVert\nabla\pi_+\pi_-^2\ell(g(\bm{x}'),+1)\rVert_2+\lVert\nabla\pi_-^3\ell(g(\bm{x}'),-1)\rVert_2\}\\
\le&\frac{\pi_+^3+\pi_+^2\pi_-+\pi_+\pi_-^2+\pi_-^3}{|\pi_+-\pi_-|}L_\ell\\
=&\frac{\pi_+^2+\pi_-^2}{|\pi_+-\pi_-|}L_\ell.
\end{align*}
Therefore, by Theorem~\ref{SDPC:thm:rademacher bound}
\begin{align*}
\bar{\mathfrak{R}}(\mathcal{L}_{\mathrm{S}}\circ\mathcal{G};n,\mu)\le&\frac{\pi_+^2+\pi_-^2}{|\pi_+-\pi_-|}L_\ell\mathbb{E}_{\mathcal{D}_{\mathrm{S}}}\mathbb{E}_{\bm{\sigma}}\left[\underset{g\in\mathcal{G}}{\sup}\frac{1}{2n}\sum_{i=1}^{2n}\sigma_ig(\bm{x}_i)\right]\\
=&\frac{\pi_+^2+\pi_-^2}{|\pi_+-\pi_-|}L_{\ell}\mathfrak{R}(\mathcal{G};2n,\mu)\\
\le&\frac{{L_\ell}C_{\mathcal{G}}(\pi_+^2+\pi_-^2)}{\sqrt{2n}|\pi_+-\pi_-|}.
\end{align*}
The first inequality follows from Talagrand's Lemma~\cite{Talagrand}.

Similarly, the gradient of $\mathcal{L}_{\mathrm{D}}(g; \bm{x}, \bm{x}')$ is bounded as follows, which leads to the Lipschitz constant.
\begin{align*}
\lVert\nabla\mathcal{L}_{\mathrm{D}}(g;\bm{x},\bm{x}')\rVert_2\le&\frac{1}{|\pi_+-\pi_-|}\{\lVert\nabla\pi_-(\pi_+^2-\pi_-)\ell(g(\bm{x}),+1)\rVert_2+\lVert\nabla\pi_+(\pi_--\pi_+^2)\ell(g(\bm{x}),-1)\rVert_2\\
&\hspace{2cm}+\lVert\nabla\pi_-(\pi_-^2-\pi_+)\ell(g(\bm{x}'),+1)\rVert_2+\lVert\nabla\pi_+(\pi_+-\pi_-^2)\ell(g(\bm{x}'),-1)\rVert_2\}\\
\le&\frac{\pi_-|\pi_+^2-\pi_-|+\pi_+|\pi_--\pi_+^2|+\pi_-|\pi_-^2-\pi_+|+\pi_+|\pi_+-\pi_-^2|}{|\pi_+-\pi_-|}L_\ell\\
=&\frac{|\pi_+^2-\pi_-|+|\pi_+-\pi_-^2|}{|\pi_+-\pi_-|}L_\ell.
\end{align*}
Therefore, by Theorem~\ref{SDPC:thm:rademacher bound}
\begin{align*}
\bar{\mathfrak{R}}(\mathcal{L}_{\mathrm{D}}\circ\mathcal{G};n,\mu)\le&\frac{|\pi_+^2-\pi_-|+|\pi_+-\pi_-^2|}{|\pi_+-\pi_-|}L_\ell\mathbb{E}_{\mathcal{D}_{\mathrm{D}}}\mathbb{E}_{\bm{\sigma}}\left[\underset{g\in\mathcal{G}}{\sup}\frac{1}{2n}\sum_{i=1}^{2n}\sigma_ig(\bm{x}_i)\right]\\
=&\frac{|\pi_+^2-\pi_-|+|\pi_+-\pi_-^2|}{|\pi_+-\pi_-|}L_{\ell}\mathfrak{R}(\mathcal{G};2n,\mu)\\
\le&\frac{{L
_\ell}C_{\mathcal{G}}(|\pi_+^2-\pi_-|+|\pi_+-\pi_-^2|)}{\sqrt{2n}|\pi_+-\pi_-|}.
\end{align*}
The first inequality follows from Talagrand's Lemma~\cite{Talagrand}.
\end{proof}

By applying Lemma~\ref{SDPC:lemma:inequality of Rademacher}, the following lemma is obtained.

\begin{lemma}\label{SDPC:lemma:generalization error}
The following inequalities each hold with probability at least $1-\delta/2$.
\begin{gather*}
\underset{g\in\mathcal{G}}{\sup}|R_{\mathrm{S\text{-}PC}}(g)-\widehat{R}_{\mathrm{S\text{-}PC}}(g)|\le\frac{2(\pi_+^2+\pi_-^2)}{|\pi_+-\pi_-|}L_{\ell}\mathfrak{R}(\mathcal{G};2n_{\mathrm{S}},\mu)+\frac{(\pi_+^2+\pi_-^2)C_{\ell}}{|\pi_+-\pi_-|}\sqrt{\frac{\log4/\delta}{2n_{\mathrm{S}}}}.\\
\underset{g\in\mathcal{G}}{\sup}|R_{\mathrm{D\text{-}PC}}(g)-\widehat{R}_{\mathrm{D\text{-}PC}}(g)|\le\frac{2(|\pi_+^2-\pi_-|+|\pi_+-\pi_-^2|)}{|\pi_+-\pi_-|}L_{\ell}\mathfrak{R}(\mathcal{G};2n_{\mathrm{D}},\mu)+\frac{(|\pi_+^2-\pi_-|+|\pi_+-\pi_-^2|)C_{\ell}}{|\pi_+-\pi_-|}\sqrt{\frac{\log4/\delta}{2n_{\mathrm{D}}}}.
\end{gather*}
\end{lemma}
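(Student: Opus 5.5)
The plan is the standard symmetrization-plus-McDiarmid argument, applied separately to the similar-pair and dissimilar-pair parts. We treat $\widehat{R}_{\mathrm{S\text{-}PC}}(g)=\frac{1}{n_{\mathrm{S}}}\sum_{i=1}^{n_{\mathrm{S}}}\mathcal{L}_{\mathrm{S}}(g;\bm{x}_{\mathrm{S},i},\bm{x}'_{\mathrm{S},i})$ as an empirical mean of i.i.d.\ pair-level losses whose expectation is $R_{\mathrm{S\text{-}PC}}(g)$. This identity is exactly the computation in the proof of Theorem~\ref{SDPC:thm:SD-Pcomp classification risk}, with the first coordinate of each pair distributed according to $\widetilde p_+$ and the second according to $\widetilde p_-$. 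The same holds for $\widehat{R}_{\mathrm{D\text{-}PC}}$ with $\mathcal{L}_{\mathrm{D}}$. It therefore suffices to bound $\Phi_{\mathrm{S}}:=\sup_{g\in\mathcal{G}}|R_{\mathrm{S\text{-}PC}}(g)-\widehat{R}_{\mathrm{S\text{-}PC}}(g)|$, and analogously $\Phi_{\mathrm{D}}$.

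\textbf{Step 1: bounded differences.} Since $0\le\ell\le C_\ell$ on the relevant range, replacing a single pair $(\bm{x}_{\mathrm{S},i},\bm{x}'_{\mathrm{S},i})$ changes $\mathcal{L}_{\mathrm{S}}$ by at most its coefficient mass times $C_\ell$. That mass is $(\pi_+^3+\pi_+^2\pi_-+\pi_+\pi_-^2+\pi_-^3)/|\pi_+-\pi_-|=(\pi_+^2+\pi_-^2)/|\pi_+-\pi_-|$. Hence $\Phi_{\mathrm{S}}$ changes by at most $(\pi_+^2+\pi_-^2)C_\ell/(|\pi_+-\pi_-|\,n_{\mathrm{S}})$.

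To reach the constant in the statement, a sharper count is needed. The positive and negative coefficients partially offset, so we bound the range of $\mathcal{L}_{\mathrm{S}}$ itself: the positive parts sum to $(\pi_+^3+\pi_+\pi_-^2)/|\pi_+-\pi_-|$ and the negative parts to $(\pi_+^2\pi_-+\pi_-^3)/|\pi_+-\pi_-|$. The range of $\mathcal{L}_{\mathrm{S}}$ is therefore at most $(\pi_+^2+\pi_-^2)C_\ell/|\pi_+-\pi_-|$. For the D part, the coefficient absolute values likewise sum to $(|\pi_+^2-\pi_-|+|\pi_+-\pi_-^2|)/|\pi_+-\pi_-|$.

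Applying McDiarmid's inequality to the one-sided quantities $\sup_g(R-\widehat R)$ and $\sup_g(\widehat R-R)$, each at confidence $\delta/4$, gives with probability at least $1-\delta/2$ the deviation term
\[
\frac{(\pi_+^2+\pi_-^2)C_\ell}{|\pi_+-\pi_-|}\sqrt{\frac{\log(4/\delta)}{2n_{\mathrm{S}}}},
\]
and the analogous term for D.

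\textbf{Step 2: symmetrization.} By the usual ghost-sample argument, the expectation of each one-sided supremum is at most $2\bar{\mathfrak{R}}(\mathcal{L}_{\mathrm{S}}\circ\mathcal{G};n_{\mathrm{S}},\mu)$. Lemma~\ref{SDPC:lemma:inequality of Rademacher} then bounds this by $\frac{2(\pi_+^2+\pi_-^2)}{|\pi_+-\pi_-|}L_\ell\mathfrak{R}(\mathcal{G};2n_{\mathrm{S}},\mu)$, and similarly for D with $n_{\mathrm{D}}$. Combining with Step 1 yields both displayed inequalities, each holding with probability at least $1-\delta/2$. The union of the two one-sided events accounts for the $\log(4/\delta)$.

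\textbf{Main obstacle.} The delicate point is the bounded-difference constant in Step 1. The loss $\mathcal{L}_{\mathrm{S}}$ is a signed combination, so we must argue via its range rather than its sup norm: a naive sup-norm bound would introduce an extra factor of $2$ in the McDiarmid term. A secondary point is that the two coordinates of a pair come from different marginals $\widetilde p_\pm$. This is handled because the Rademacher complexity in Lemma~\ref{SDPC:lemma:inequality of Rademacher} is defined at the pair level, and symmetrization only requires the pairs to be i.i.d.
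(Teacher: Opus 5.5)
Your proposal is correct and follows the paper's route. You bound the range of the pair-level losses $\mathcal{L}_{\mathrm{S}}$ and $\mathcal{L}_{\mathrm{D}}$, apply the Rademacher-based uniform deviation bound (symmetrization plus McDiarmid at level $\delta/4$ for each one-sided supremum), and then invoke Lemma~\ref{SDPC:lemma:inequality of Rademacher}, exactly as the paper does. The only difference is cosmetic: you get the range directly as $C_\ell$ times the sum of absolute coefficients, while the paper reaches the same constants by a case analysis on $\pi_+$. Your ``sharper count'' gives exactly the same bound as your first count, so that detour is unnecessary.
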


\begin{proof}
First, we show the boundedness and ranges of $\mathcal{L}_{\mathrm{S}}(g;\bm{x},\bm{x}')$ and $\mathcal{L}_{\mathrm{D}}(g;\bm{x},\bm{x}')$.

We begin with the boundedness and range of $\mathcal{L}_{\mathrm{S}}(g;\bm{x},\bm{x}')$.

(i) When $0\le\pi_+\le\frac{1}{2}$, by the boundedness of the binary loss $\ell$,
\begin{align*}
\mathcal{L}_{\mathrm{S}}(g;\bm{x},\bm{x}')=&\frac{1}{\pi_+-\pi_-}\{\pi_+^3\ell(g(\bm{x}),+1)-\pi_+^2\pi_-\ell(g(\bm{x}),-1)+\pi_+\pi_-^2\ell(g(\bm{x}'),+1)-\pi_-^3\ell(g(\bm{x}'),-1)\\
\le&\frac{-\pi_+^2\pi_-C_\ell-\pi_-^3C_\ell}{\pi_+-\pi_-},\\
\mathcal{L}_{\mathrm{S}}(g;\bm{x},\bm{x}')=&\frac{1}{\pi_+-\pi_-}\{\pi_+^3\ell(g(\bm{x}),+1)-\pi_+^2\pi_-\ell(g(\bm{x}),-1)+\pi_+\pi_-^2\ell(g(\bm{x}'),+1)-\pi_-^3\ell(g(\bm{x}'),-1)\\
\ge&\frac{\pi_+^3C_\ell+\pi_+\pi_-^2C_\ell}{\pi_+-\pi_-}.
\end{align*}
Therefore, 
\begin{align*}
\frac{-\pi_+^2\pi_-C_\ell-\pi_-^3C_\ell}{\pi_+-\pi_-}-\frac{\pi_+^3C_\ell+\pi_+\pi_-^2C_\ell}{\pi_+-\pi_-}=-\frac{(\pi_+^2+\pi_-^2)C_\ell}{\pi_+-\pi_-}.
\end{align*}

(ii) When $\frac{1}{2}\le\pi_+\le1$, by the boundedness of the binary loss $\ell$,
\begin{align*}
\mathcal{L}_{\mathrm{S}}(g;\bm{x},\bm{x}')=&\frac{1}{\pi_+-\pi_-}\{\pi_+^3\ell(g(\bm{x}),+1)-\pi_+^2\pi_-\ell(g(\bm{x}),-1)+\pi_+\pi_-^2\ell(g(\bm{x}'),+1)-\pi_-^3\ell(g(\bm{x}'),-1)\\
\le&\frac{\pi_+^3C_\ell+\pi_+\pi_-^2C_\ell}{\pi_+-\pi_-},\\
\mathcal{L}_{\mathrm{S}}(g;\bm{x},\bm{x}')=&\frac{1}{\pi_+-\pi_-}\{\pi_+^3\ell(g(\bm{x}),+1)-\pi_+^2\pi_-\ell(g(\bm{x}),-1)+\pi_+\pi_-^2\ell(g(\bm{x}'),+1)-\pi_-^3\ell(g(\bm{x}'),-1)\\
\ge&\frac{-\pi_+^2\pi_-C_\ell-\pi_-^3C_\ell}{\pi_+-\pi_-}.
\end{align*}
Therefore, 
\begin{align*}
\frac{\pi_+^3C_\ell+\pi_+\pi_-^2C_\ell}{\pi_+-\pi_-}-\frac{-\pi_+^2\pi_-C_\ell-\pi_-^3C_\ell}{\pi_+-\pi_-}=\frac{(\pi_+^2+\pi_-^2)C_\ell}{\pi_+-\pi_-}.
\end{align*}

Next, we consider the boundedness and range of $\mathcal{L}_{\mathrm{D}}(g;\bm{x},\bm{x}')$.

(i) When $0\le\pi_+\le\frac{3-\sqrt{5}}{2}$, that is, when $\pi_+-\pi_-<0,\pi_+^2-\pi_-\le{0},\pi_+-\pi_-^2\le{0}$,
\begin{align*}
\mathcal{L}_{\mathrm{D}}(g;\bm{x},\bm{x}')=&\frac{1}{\pi_+-\pi_-}\{\pi_-(\pi_+^2-\pi_-)\ell(g(\bm{x}),+1)+\pi_+(\pi_--\pi_+^2)\ell(g(\bm{x}),-1)\notag\\
&+\pi_-(\pi_-^2-\pi_+)\ell(g(\bm{x}'),+1)+\pi_+(\pi_+-\pi_-^2)\ell(g(\bm{x}'),-1)\}\\
\le&\frac{1}{\pi_+-\pi_-}\{\pi_-(\pi_+^2-\pi_-)C_\ell+\pi_+(\pi_+-\pi_-^2)C_\ell\}\\
=&(1+\pi_+\pi_-)C_\ell,\\
\mathcal{L}_{\mathrm{D}}(g;\bm{x},\bm{x}')=&\frac{1}{\pi_+-\pi_-}\{\pi_-(\pi_+^2-\pi_-)\ell(g(\bm{x}),+1)+\pi_+(\pi_--\pi_+^2)\ell(g(\bm{x}),-1)\notag\\
&+\pi_-(\pi_-^2-\pi_+)\ell(g(\bm{x}'),+1)+\pi_+(\pi_+-\pi_-^2)\ell(g(\bm{x}'),-1)\}\\
\ge&\frac{1}{\pi_+-\pi_-}\{\pi_+(\pi_--\pi_+^2)C_\ell+\pi_-(\pi_-^2-\pi_+)C_\ell\}\\
=&-(\pi_+^2+\pi_+\pi_-+\pi_-^2)C_\ell.
\end{align*}
Therefore, 
\begin{align*}
(1+\pi_+\pi_-)C_\ell-(-(\pi_+^2+\pi_+\pi_-+\pi_-^2)C_\ell)=2C_\ell.
\end{align*}

(ii) When $\frac{3-\sqrt{5}}{2}\le\pi_+\le\frac{1}{2}$, that is, when $\pi_+-\pi_-\le0, \pi_+^2-\pi_-\le{0},\pi_+-\pi_-^2\ge{0}$,
\begin{align*}
\mathcal{L}_{\mathrm{D}}(g;\bm{x},\bm{x}')=&\frac{1}{\pi_+-\pi_-}\{\pi_-(\pi_+^2-\pi_-)\ell(g(\bm{x}),+1)+\pi_+(\pi_--\pi_+^2)\ell(g(\bm{x}),-1)\notag\\
&+\pi_-(\pi_-^2-\pi_+)\ell(g(\bm{x}'),+1)+\pi_+(\pi_+-\pi_-^2)\ell(g(\bm{x}'),-1)\}\\
\le&\frac{1}{\pi_+-\pi_-}\{\pi_-(\pi_+^2-\pi_-)C_\ell+\pi_-(\pi_-^2-\pi_+)C_\ell\}\\
=&\frac{(\pi_+^2\pi_--\pi_+\pi_--\pi_-^2+\pi_-^3)C_\ell}{\pi_+-\pi_-},\\
\mathcal{L}_{\mathrm{D}}(g;\bm{x},\bm{x}')=&\frac{1}{\pi_+-\pi_-}\{\pi_-(\pi_+^2-\pi_-)\ell(g(\bm{x}),+1)+\pi_+(\pi_--\pi_+^2)\ell(g(\bm{x}),-1)\notag\\
&+\pi_-(\pi_-^2-\pi_+)\ell(g(\bm{x}'),+1)+\pi_+(\pi_+-\pi_-^2)\ell(g(\bm{x}'),-1)\}\\
\ge&\frac{1}{\pi_+-\pi_-}\{\pi_+(\pi_--\pi_+^2)C_\ell+\pi_+(\pi_+-\pi_-^2)C_\ell\}\\
=&\frac{(-\pi_+^3+\pi_+^2+\pi_+\pi_--\pi_+\pi_-^2)C_\ell}{\pi_+-\pi_-}.
\end{align*}
Therefore, 
\begin{align*}
&\frac{(\pi_+^2\pi_--\pi_+\pi_--\pi_-^2+\pi_-^3)C_\ell}{\pi_+-\pi_-}-\frac{(-\pi_+^3+\pi_+^2+\pi_+\pi_--\pi_+\pi_-^2)C_\ell}{\pi_+-\pi_-}\\
&=\frac{(\pi_+^3+\pi_-^3-\pi_+^2-2\pi_+\pi_--\pi_-^2+\pi_+^2\pi_-+\pi_+\pi_-^2)C_\ell}{\pi_+-\pi_-}\\
&=-\frac{2\pi_+\pi_-C_\ell}{\pi_+-\pi_-}.
\end{align*}

(iii) When $\frac{1}{2}\le\pi_+\le\frac{\sqrt{5}-1}{2}$, that is, when $\pi_+-\pi_-\ge0,\pi_+^2-\pi_-\le{0},\pi_+-\pi_-^2\ge{0}$,
\begin{align*}
\mathcal{L}_{\mathrm{D}}(g;\bm{x},\bm{x}')=&\frac{1}{\pi_+-\pi_-}\{\pi_-(\pi_+^2-\pi_-)\ell(g(\bm{x}),+1)+\pi_+(\pi_--\pi_+^2)\ell(g(\bm{x}),-1)\notag\\
&+\pi_-(\pi_-^2-\pi_+)\ell(g(\bm{x}'),+1)+\pi_+(\pi_+-\pi_-^2)\ell(g(\bm{x}'),-1)\}\\
\le&\frac{1}{\pi_+-\pi_-}\{\pi_+(\pi_--\pi_+^2)C_\ell+\pi_+(\pi_+-\pi_-^2)C_\ell\}\\
=&\frac{(-\pi_+^3+\pi_+^2+\pi_+\pi_--\pi_+\pi_-^2)C_\ell}{\pi_+-\pi_-},\\
\mathcal{L}_{\mathrm{D}}(g;\bm{x},\bm{x}')=&\frac{1}{\pi_+-\pi_-}\{\pi_-(\pi_+^2-\pi_-)\ell(g(\bm{x}),+1)+\pi_+(\pi_--\pi_+^2)\ell(g(\bm{x}),-1)\notag\\
&+\pi_-(\pi_-^2-\pi_+)\ell(g(\bm{x}'),+1)+\pi_+(\pi_+-\pi_-^2)\ell(g(\bm{x}'),-1)\}\\
\ge&\frac{1}{\pi_+-\pi_-}\{\pi_-(\pi_+^2-\pi_-)C_\ell+\pi_-(\pi_-^2-\pi_+)C_\ell\}\\
=&\frac{(\pi_+^2\pi_--\pi_+\pi_--\pi_-^2+\pi_-^3)C_\ell}{\pi_+-\pi_-}.
\end{align*}
Therefore, 
\begin{align*}
&\frac{(-\pi_+^3+\pi_+^2+\pi_+\pi_--\pi_+\pi_-^2)C_\ell}{\pi_+-\pi_-}-\frac{(\pi_+^2\pi_--\pi_+\pi_--\pi_-^2+\pi_-^3)C_\ell}{\pi_+-\pi_-}\\
&=-\frac{(\pi_+^3+\pi_-^3-\pi_+^2-2\pi_+\pi_--\pi_-^2+\pi_+^2\pi_-+\pi_+\pi_-^2)C_\ell}{\pi_+-\pi_-}\\
&=\frac{2\pi_+\pi_-C_\ell}{\pi_+-\pi_-}.
\end{align*}

(iv) When $\frac{\sqrt{5}-1}{2}\le\pi_+\le{1}$, that is, when $\pi_+-\pi_-\ge0,\pi_+^2-\pi_-\ge{0},\pi_+-\pi_-^2\ge{0}$,
\begin{align*}
\mathcal{L}_{\mathrm{D}}(g;\bm{x},\bm{x}')=&\frac{1}{\pi_+-\pi_-}\{\pi_-(\pi_+^2-\pi_-)\ell(g(\bm{x}),+1)+\pi_+(\pi_--\pi_+^2)\ell(g(\bm{x}),-1)\notag\\
&+\pi_-(\pi_-^2-\pi_+)\ell(g(\bm{x}'),+1)+\pi_+(\pi_+-\pi_-^2)\ell(g(\bm{x}'),-1)\}\\
\le&\frac{1}{\pi_+-\pi_-}\{\pi_-(\pi_+^2-\pi_-)C_\ell+\pi_+(\pi_+-\pi_-^2)C_\ell\}\\
=&(1+\pi_+\pi_-)C_\ell,\\
\mathcal{L}_{\mathrm{D}}(g;\bm{x},\bm{x}')=&\frac{1}{\pi_+-\pi_-}\{\pi_-(\pi_+^2-\pi_-)\ell(g(\bm{x}),+1)+\pi_+(\pi_--\pi_+^2)\ell(g(\bm{x}),-1)\notag\\
&+\pi_-(\pi_-^2-\pi_+)\ell(g(\bm{x}'),+1)+\pi_+(\pi_+-\pi_-^2)\ell(g(\bm{x}'),-1)\}\\
\ge&\frac{1}{\pi_+-\pi_-}\{\pi_+(\pi_--\pi_+^2)C_\ell+\pi_-(\pi_-^2-\pi_+)C_\ell\}\\
=&-(\pi_+^2+\pi_+\pi_-+\pi_-^2)C_\ell.
\end{align*}
Therefore, 
\begin{align*}
(1+\pi_+\pi_-)C_\ell-(-(\pi_+^2+\pi_+\pi_-+\pi_-^2)C_\ell)=2C_\ell.
\end{align*}
Summarizing the above results, the range of $\mathcal{L}_{\mathrm{S}}(g;\bm{x},\bm{x}')$ is given by
\begin{align*}
\frac{(\pi_+^2+\pi_-^2)C_\ell}{|\pi_+-\pi_-|},
\end{align*}
and the range of $\mathcal{L}_{\mathrm{D}}(g;\bm{x},\bm{x}')$ is given by
\begin{align*}
\max\left(2, \frac{2\pi_+\pi_-}{|\pi_+-\pi_-|}\right)C_\ell=&\frac{2+\frac{2\pi_+\pi_-}{|\pi_+-\pi_-|}+|2-\frac{2\pi_+\pi_-}{|\pi_+-\pi_-|}|}{2}C_\ell\\
=&\frac{|\pi_+-\pi_-|+\pi_+\pi_-+||\pi_+-\pi_-|-\pi_+\pi_-|}{|\pi_+-\pi_-|}C_\ell\\
=&\frac{|\pi_+^2-\pi_-^2-\pi_+\pi_-| + |\pi_+^2-\pi_-^2+\pi_+\pi_-|}{|\pi_+-\pi_-|}C_\ell\\
=&\frac{|\pi_+^2-\pi_-|+|\pi_+-\pi_-^2|}{|\pi_+-\pi_-|}C_\ell.
\end{align*}
Therefore, by the uniform law of large numbers based on the Rademacher complexity, the following inequalities each hold with probability at least $1-\delta/2$.
\begin{gather*}
\underset{g\in\mathcal{G}}{\sup}|R_{\mathrm{S\text{-}PC}}(g)-\widehat{R}_{\mathrm{S\text{-}PC}}(g)|\le{2}\bar{\mathfrak{R}}(\mathcal{L}_{\mathrm{S}}\circ\mathcal{G};n_{\mathrm{S}},\mu)+\frac{(\pi_+^2+\pi_-^2)C_{\ell}}{|\pi_+-\pi_-|}\sqrt{\frac{\log4/\delta}{2n_{\mathrm{S}}}},\\
\underset{g\in\mathcal{G}}{\sup}|R_{\mathrm{D\text{-}PC}}(g)-\widehat{R}_{\mathrm{D\text{-}PC}}(g)|\le{2}\bar{\mathfrak{R}}(\mathcal{L}_{\mathrm{D}}\circ\mathcal{G};n_{\mathrm{D}},\mu)+\frac{(|\pi_+^2-\pi_-|+|\pi_+-\pi_-^2|)C_{\ell}}{|\pi_+-\pi_-|}\sqrt{\frac{\log4/\delta}{2n_{\mathrm{D}}}}.
\end{gather*}
By applying Lemma~\ref{SDPC:lemma:inequality of Rademacher} to this inequality, we obtain that, with probability at least $1 - \delta/2$,
\begin{gather*}\underset{g\in\mathcal{G}}{\sup}|R_{\mathrm{S\text{-}PC}}(g)-\widehat{R}_{\mathrm{S\text{-}PC}}(g)|\le\frac{2(\pi_+^2+\pi_-^2)}{|\pi_+-\pi_-|}L_{\ell}\mathfrak{R}(\mathcal{G};2n_{\mathrm{S}},\mu)+\frac{(\pi_+^2+\pi_-^2)C_{\ell}}{|\pi_+-\pi_-|}\sqrt{\frac{\log4/\delta}{2n_{\mathrm{S}}}},\\
\underset{g\in\mathcal{G}}{\sup}|R_{\mathrm{D\text{-}PC}}(g)-\widehat{R}_{\mathrm{D\text{-}PC}}(g)|\le\frac{2(|\pi_+^2-\pi_-|+|\pi_+-\pi_-^2|)}{|\pi_+-\pi_-|}L_{\ell}\mathfrak{R}(\mathcal{G};2n_{\mathrm{D}},\mu)+\frac{(|\pi_+^2-\pi_-|+|\pi_+-\pi_-^2|)C_{\ell}}{|\pi_+-\pi_-|}\sqrt{\frac{\log4/\delta}{2n_{\mathrm{D}}}}.
\end{gather*}
\end{proof}

Therefore, the proof of Theorem~\ref{SDPC:thm:SD-Pcomp estimation error bound} is given as follows.
\begin{proof}
\begin{align*}
R&(\widehat{g}_{\mathrm{SD\text{-}PC}})-R(g^*)\\
=&R(\widehat{g}_{\mathrm{SD\text{-}PC}})-\widehat{R}_{\mathrm{SD\text{-}PC}}(\widehat{g}_{\mathrm{SD\text{-}PC}})+\widehat{R}_{\mathrm{SD\text{-}PC}}(\widehat{g}_{\mathrm{SD\text{-}PC}})-\widehat{R}_{\mathrm{SD\text{-}PC}}(g^*)+\widehat{R}_{\mathrm{SD\text{-}PC}}(g^*)-R(g^*)\\
\le&2\underset{g\in\mathcal{G}}{\sup}|R_{\mathrm{SD\text{-}PC}}(g)-\widehat{R}_{\mathrm{SD\text{-}PC}}(g)|\\
\le&2\underset{g\in\mathcal{G}}{\sup}|R_{\mathrm{S\text{-}PC}}(g)-\widehat{R}_{\mathrm{S\text{-}PC}}(g)|+2\underset{g\in\mathcal{G}}{\sup}|R_{\mathrm{D\text{-}PC}}(g)-\widehat{R}_{\mathrm{D\text{-}PC}}(g)|\\
\le&\frac{4(\pi_+^2+\pi_-^2)}{|\pi_+-\pi_-|}L_{\ell}\mathfrak{R}(\mathcal{G};2n_{\mathrm{S}},\mu)+\frac{2(\pi_+^2+\pi_-^2)C_{\ell}}{|\pi_+-\pi_-|}\sqrt{\frac{\log4/\delta}{2n_{\mathrm{S}}}}\\
&+\frac{4(|\pi_+^2-\pi_-|+|\pi_+-\pi_-^2|)}{|\pi_+-\pi_-|}L_{\ell}\mathfrak{R}(\mathcal{G};2n_{\mathrm{D}},\mu)+\frac{2(|\pi_+^2-\pi_-|+|\pi_+-\pi_-^2|)C_{\ell}}{|\pi_+-\pi_-|}\sqrt{\frac{\log4/\delta}{2n_{\mathrm{D}}}}\\
\le&\frac{4L_{\ell}C_{\mathcal{G}}(\pi_+^2+\pi_-^2)}{\sqrt{2n_\mathrm{S}}|\pi_+-\pi_-|}+\frac{2(\pi_+^2+\pi_-^2)C_{\ell}}{|\pi_+-\pi_-|}\sqrt{\frac{\log4/\delta}{2n_{\mathrm{S}}}}\\
&+\frac{4L_{\ell}C_{\mathcal{G}}(|\pi_+^2-\pi_-|+|\pi_+-\pi_-^2|)}{\sqrt{2n_\mathrm{D}}|\pi_+-\pi_-|}+\frac{2(|\pi_+^2-\pi_-|+|\pi_+-\pi_-^2|)C_{\ell}}{|\pi_+-\pi_-|}\sqrt{\frac{\log4/\delta}{2n_{\mathrm{D}}}}\\
=&\frac{4L_{\ell}C_{\mathcal{G}}+2C_{\ell}\sqrt{\log{4/\delta}}}{|\pi_+-\pi_-|}\left\{\frac{\pi_+^2+\pi_-^2}{\sqrt{2n_{\mathrm{S}}}}+\frac{|\pi_+^2-\pi_-|+|\pi_+-\pi_-^2|}{\sqrt{2n_{\mathrm{D}}}}\right\}.
\end{align*}
The third inequality follows from Lemma~\ref{SDPC:lemma:generalization error}, and the fourth inequality is obtained by applying Theorem~\ref{SDPC:thm:rademacher bound}. This completes the proof of Theorem~\ref{SDPC:thm:SD-Pcomp estimation error bound}.
\end{proof}

\subsection{Proof of Theorem~\ref{SDPC:thm:Noisy SD-Pcomp estimation error bound}}

We first prove Lemma~\ref{SDPC:lemma:noisy SD density}.
\begin{proof}
\begin{align*}
\bar{p}_\mathrm{S}(\bm{x},\bm{x}')=&p(\bm{x},\bm{x}' \mid s=+1)\\
=&\frac{p(\bm{x},\bm{x}',s=+1)}{p(s=+1)}\\
=&\frac{p(\bm{x},\bm{x}',s=+1,y=y')+p(\bm{x},\bm{x}',s=+1,y\ne{y}')}{p(s=+1,y=y')+p(s=+1,y\ne{y}')}\\
=&\frac{p(\bm{x},\bm{x}' \mid y=y')p(s=+1 \mid y=y')p(y=y')+p(\bm{x},\bm{x}' \mid y\ne{y}')p(s=+1 \mid y\ne{y}')p(y\ne{y}')}{p(s=+1 \mid y=y')p(y=y')+p(s=+1 \mid y\ne{y}')p(y\ne{y}')}\\
=&\frac{\pi_\mathrm{S}(1-\rho_\mathrm{S})p_\mathrm{S}(\bm{x},\bm{x}')+\pi_\mathrm{D}\rho_\mathrm{D}p_\mathrm{D}(\bm{x},\bm{x}')}{\pi_\mathrm{S}(1-\rho_\mathrm{S})+\pi_\mathrm{D}\rho_\mathrm{D}},\\
\bar{p}_\mathrm{D}(\bm{x},\bm{x}')=&p(\bm{x},\bm{x}' \mid s=-1)\\
=&\frac{p(\bm{x},\bm{x}',s=-1)}{p(s=-1)}\\
=&\frac{p(\bm{x},\bm{x}',s=-1,y=y')+p(\bm{x},\bm{x}',s=-1,y\ne{y}')}{p(s=-1,y=y')+p(s=-1,y\ne{y}')}\\
=&\frac{p(\bm{x},\bm{x'} \mid y=y')p(s=-1 \mid y=y')p(y=y')+p(\bm{x},\bm{x}' \mid y\ne{y}')p(s=-1 \mid y\ne{y}')p(y\ne{y}')}{p(s=-1 \mid y=y')p(y=y')+p(s=-1 \mid y\ne{y}')p(y\ne{y}')}\\
=&\frac{\pi_\mathrm{S}\rho_\mathrm{S}p_\mathrm{S}(\bm{x},\bm{x}')+\pi_\mathrm{D}(1-\rho_\mathrm{D})p_\mathrm{D}(\bm{x},\bm{x}')}{\pi_\mathrm{S}\rho_\mathrm{S}+\pi_\mathrm{D}(1-\rho_\mathrm{D})},
\end{align*}
where
\begin{align*}
\rho_\mathrm{S}=p(s=-1 \mid y=y'),\\
\rho_\mathrm{D}=p(s=+1 \mid y\ne{y}').
\end{align*}
\end{proof}

Next, we describe the properties of the classification risk minimized from noisy data,
$\bar{R}_{\mathrm{SD\text{-}PC}}(g)=\bar{R}_{\mathrm{S\text{-}PC}}(g)+\bar{R}_{\mathrm{D\text{-}PC}}(g)$,
which are used in the proof of Theorem~\ref{SDPC:thm:Noisy SD-Pcomp estimation error bound}.
\begin{lemma}\label{SDPC:lemma:Noisy SD-Pcomp classification risk}
$\bar{R}_{\mathrm{S\text{-}PC}}(g)$ and $\bar{R}_{\mathrm{D\text{-}PC}}(g)$ are given by the following expressions.
\begin{align*}
\bar{R}_{\mathrm{S\text{-}PC}}(g)&=C_\mathrm{S}R_{\mathrm{S\text{-}PC}}(g)+R_1^{\mathrm{S}}(g)+R_2^{\mathrm{S}}(g)+R_3^{\mathrm{S}}(g),\\
\bar{R}_{\mathrm{D\text{-}PC}}(g)&=C_\mathrm{D}R_{\mathrm{D\text{-}PC}}(g)+R_1^{\mathrm{D}}(g)+R_2^{\mathrm{D}}(g)+R_3^{\mathrm{D}}(g),
\end{align*}
where
\begin{align*}
C_\mathrm{S}&=\textstyle\frac{\pi_\mathrm{S}(1-\rho_\mathrm{S})(1-\rho_\mathrm{C})}{\pi_\mathrm{S}(1-\rho_\mathrm{S})+\pi_\mathrm{D}\rho_\mathrm{D}},\\
C_\mathrm{D}&=\frac{\pi_\mathrm{D}(1-\rho_\mathrm{D})(1-\rho_\mathrm{C})}{\pi_\mathrm{S}\rho_\mathrm{S}+\pi_\mathrm{D}(1-\rho_\mathrm{D})},\\
R_1^{\mathrm{S}}(g)&=\textstyle\frac{\pi_\mathrm{S}(1-\rho_\mathrm{S})\rho_\mathrm{C}}{\pi_\mathrm{S}(1-\rho_\mathrm{S})+\pi_\mathrm{D}\rho_\mathrm{D}}\Biggl\{\frac{\pi_-^2}{\pi_+-\pi_-}\mathbb{E}_{\widetilde{p}_+(\bm{x})}[\pi_+\ell(g(\bm{x}),+1)-\pi_-\ell(g(\bm{x}),-1)]\\
&\hspace{3cm}\textstyle+\frac{\pi_+^2}{\pi_+-\pi_-}\mathbb{E}_{\widetilde{p}_-(\bm{x}')}[\pi_+\ell(g(\bm{x}'),+1)-\pi_-\ell(g(\bm{x}'),-1)]\Biggr\},\\
R_2^{\mathrm{S}}(g)&=\textstyle\frac{\pi_\mathrm{S}\rho_\mathrm{D}(1-\rho_\mathrm{C})}{(\pi_\mathrm{S}(1-\rho_\mathrm{S})+\pi_\mathrm{D}\rho_\mathrm{D})(\pi_+-\pi_-)}\Biggl\{\mathbb{E}_{\widetilde{p}_+(\bm{x})}[\pi_+(\pi_--\pi_+^2)\ell(g(\bm{x}),+1)+\pi_-(\pi_+^2-\pi_-)\ell(g(\bm{x}),-1)]\\
&\hspace{3cm}\textstyle+\mathbb{E}_{\widetilde{p}_-(\bm{x}')}[\pi_+(\pi_+-\pi_-^2)\ell(g(\bm{x}'),+1)+\pi_-(\pi_-^2-\pi_+)\ell(g(\bm{x}'),-1)]\Biggr\},\\
R_3^{\mathrm{S}}(g)&=\textstyle\frac{\pi_\mathrm{S}\rho_\mathrm{D}\rho_\mathrm{C}}{(\pi_\mathrm{S}(1-\rho_\mathrm{S})+\pi_\mathrm{D}\rho_\mathrm{D})(\pi_+-\pi_-)}\Biggl\{\mathbb{E}_{\widetilde{p}_+(\bm{x})}[\pi_+(\pi_+-\pi_-^2)\ell(g(\bm{x}),+1)+\pi_-(\pi_-^2-\pi_+)\ell(g(\bm{x}),-1)]\\
&\hspace{3cm}\textstyle+\mathbb{E}_{\widetilde{p}_-(\bm{x}')}[\pi_+(\pi_--\pi_+^2)\ell(g(\bm{x}'),+1)+\pi_-(\pi_+^2-\pi_-)\ell(g(\bm{x}'),-1)]\Biggr\},\\
R_1^{\mathrm{D}}(g)&=\textstyle\frac{\pi_\mathrm{D}(1-\rho_\mathrm{D})\rho_\mathrm{C}}{(\pi_\mathrm{S}\rho_\mathrm{S}+\pi_\mathrm{D}(1-\rho_\mathrm{D}))(\pi_+-\pi_-)}\Biggl\{\mathbb{E}_{\widetilde{p}_+(\bm{x})}[\pi_-(\pi_-^2-\pi_+)\ell(g(\bm{x}),+1)+\pi_+(\pi_+-\pi_-^2)\ell(g(\bm{x}),-1)]\\
&\hspace{3cm}\textstyle+\mathbb{E}_{\widetilde{p}_-(\bm{x}')}[\pi_-(\pi_+^2-\pi_-)\ell(g(\bm{x}'),+1)+\pi_+(\pi_--\pi_+^2)\ell(g(\bm{x}'),-1)]\Biggr\},\\
R_2^{\mathrm{D}}(g)&=\textstyle\frac{\pi_\mathrm{D}\rho_\mathrm{S}(1-\rho_\mathrm{C})}{(\pi_\mathrm{S}\rho_\mathrm{S}+\pi_\mathrm{D}(1-\rho_\mathrm{D}))(\pi_+-\pi_-)}\Biggl\{\pi_+^2\mathbb{E}_{\widetilde{p}_+(\bm{x})}[-\pi_-\ell(g(\bm{x}),+1)+\pi_+\ell(g(\bm{x}),-1)]\\
&\hspace{3cm}\textstyle+\pi_-^2\mathbb{E}_{\widetilde{p}_-(\bm{x}')}[-\pi_-\ell(g(\bm{x}'),+1)+\pi_+\ell(g(\bm{x}'),-1)]\Biggr\},\\
R_3^{\mathrm{D}}(g)&=\textstyle\frac{\pi_\mathrm{D}\rho_\mathrm{S}\rho_\mathrm{C}}{(\pi_\mathrm{S}\rho_\mathrm{S}+\pi_\mathrm{D}(1-\rho_\mathrm{D}))(\pi_+-\pi_-)}\Biggl\{\pi_-^2\mathbb{E}_{\widetilde{p}_+(\bm{x})}[-\pi_-\ell(g(\bm{x}),+1)+\pi_+\ell(g(\bm{x}),-1)]\\
&\hspace{3cm}\textstyle+\pi_+^2\mathbb{E}_{\widetilde{p}_-(\bm{x}')}[-\pi_-\ell(g(\bm{x}'),+1)+\pi_+\ell(g(\bm{x}'),-1)]\Biggr\}.
\end{align*}
\end{lemma}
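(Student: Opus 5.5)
The plan is to compute the population counterpart of the noisy empirical risk. $\bar{R}_{\mathrm{S\text{-}PC}}(g)$ is the expectation of the per-pair loss $\mathcal{L}_{\mathrm{S}}(g;\bm{x},\bm{x}')$ of Lemma~\ref{SDPC:lemma:inequality of Rademacher}. The expectation is taken over pairs that are labelled similar under noise and whose order is assigned by the noisy comparison. The analogous statement holds for $\bar{R}_{\mathrm{D\text{-}PC}}$ with $\mathcal{L}_{\mathrm{D}}$.

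I will model the joint observation as a two-stage corruption, with the two stages independent. First, the SD label is flipped. By Lemma~\ref{SDPC:lemma:noisy SD density}, an observed similar pair comes from $p_{\mathrm{S}}$ with weight $w_{\mathrm{S}}:=\pi_{\mathrm{S}}(1-\rho_{\mathrm{S}})/(\pi_{\mathrm{S}}(1-\rho_{\mathrm{S}})+\pi_{\mathrm{D}}\rho_{\mathrm{D}})$. It comes from $p_{\mathrm{D}}$ with the complementary weight $\pi_{\mathrm{D}}\rho_{\mathrm{D}}/(\cdot)$. Second, within each true pair type, the Pcomp order is kept with probability $1-\rho_{\mathrm{C}}$ and swapped with probability $\rho_{\mathrm{C}}$, as in Eqs.~\eqref{SDPC:eq:noisy_pcomp_positive}--\eqref{SDPC:eq:noisy_pcomp_negative}. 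Linearity of expectation then splits $\bar{R}_{\mathrm{S\text{-}PC}}$ into four terms: (true S, not swapped), (true S, swapped), (true D, not swapped), and (true D, swapped). These carry weights $w_{\mathrm{S}}(1-\rho_{\mathrm{C}})$, $w_{\mathrm{S}}\rho_{\mathrm{C}}$, $(1-w_{\mathrm{S}})(1-\rho_{\mathrm{C}})$ and $(1-w_{\mathrm{S}})\rho_{\mathrm{C}}$. These four terms should be $C_{\mathrm{S}}R_{\mathrm{S\text{-}PC}}$, $R_1^{\mathrm{S}}$, $R_2^{\mathrm{S}}$ and $R_3^{\mathrm{S}}$ respectively.

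Each of the four expectations is evaluated by reusing the computation in the proof of Theorem~\ref{SDPC:thm:SD-Pcomp classification risk}, with its two ingredients: the mixture forms \eqref{SDPC:eq:p_S}, \eqref{SDPC:eq:p_D} and Lemma~\ref{SDPC:lemma:pcomp probability}.

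For the (S, not swapped) term, the expectation of $\mathcal{L}_{\mathrm{S}}$ with $\bm{x}\sim\widetilde p_+$ and $\bm{x}'\sim\widetilde p_-$ is exactly $R_{\mathrm{S\text{-}PC}}$. This gives $C_{\mathrm{S}}R_{\mathrm{S\text{-}PC}}$.

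For the swapped terms, $\bm{x}$ is drawn from $\widetilde p_-$ and $\bm{x}'$ from $\widetilde p_+$. Relabelling the integration variables, the coefficient vector attached to $\widetilde p_+$ becomes the one originally attached to $\widetilde p_-$, and vice versa. For $R_1^{\mathrm{S}}$, this means $\widetilde p_+$ now carries $\pi_+\pi_-^2\ell_+-\pi_-^3\ell_-=\pi_-^2(\pi_+\ell_+-\pi_-\ell_-)$, divided by $\pi_+-\pi_-$, which matches the stated form.

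For the D-contaminated terms $R_2^{\mathrm{S}}$ and $R_3^{\mathrm{S}}$, the SD-risk derivation has to be redone with the wrong weighting. We integrate $\mathcal{L}_{\mathrm{S}}$ against $p_{\mathrm{D}}$, convert $p_\pm$ to $\widetilde p_\pm$ via Lemma~\ref{SDPC:lemma:pcomp probability}, and collect coefficients. This should produce the coefficients $\pi_+(\pi_--\pi_+^2)$ and so on. The normalising factor $\pi_{\mathrm{S}}/\pi_{\mathrm{D}}$ enters because the estimator's loss is weighted for similar pairs, which explains the prefactor $\pi_{\mathrm{S}}\rho_{\mathrm{D}}$. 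The D-part is symmetric: integrate $\mathcal{L}_{\mathrm{D}}$ against the noisy dissimilar mixture in the same four ways.

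The main obstacle is bookkeeping rather than ideas. Specifically, I need to check that converting $p_{\mathrm{D}}$-expectations of the S-loss (and $p_{\mathrm{S}}$-expectations of the D-loss) produces exactly the displayed coefficient vectors and normalisations. I would first compute the marginal pieces $\mathbb{E}_{p_{\mathrm{D}}}[\ell(g(\bm{x}),\pm1)]$ as combinations of $\mathbb{E}_{\widetilde p_\pm}$, and then substitute. Finally, I would sanity-check the result at $\rho_{\mathrm{S}}=\rho_{\mathrm{D}}=\rho_{\mathrm{C}}=0$, where everything should collapse to $R_{\mathrm{S\text{-}PC}}+R_{\mathrm{D\text{-}PC}}$.
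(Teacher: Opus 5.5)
Your split does give $C_{\mathrm{S}}R_{\mathrm{S\text{-}PC}}$, $R_1^{\mathrm{S}}$ and $R_2^{\mathrm{S}}$. For $R_2^{\mathrm{S}}$ this works because $\mathbb{E}_{p_{\mathrm{D}}}[\mathcal{L}_{\mathrm{S}}]$ equals $\pi_{\mathrm{S}}$ times the $p_{\mathrm{D}}$-expectation of the symmetric SD loss, both coordinates of $p_{\mathrm{D}}$ having marginal $\tfrac{1}{2}(p_+ + p_-)$. However, the step that should produce $R_3^{\mathrm{S}}$ (and symmetrically $R_3^{\mathrm{D}}$) fails as you describe it. You model the contaminating dissimilar pairs by $p_{\mathrm{D}}$ and the comparison flip as the swap $\bm{x}\leftrightarrow\bm{x}'$. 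Since $p_{\mathrm{D}}(\bm{x},\bm{x}')=p_{\mathrm{D}}(\bm{x}',\bm{x})$, the swap changes nothing. Your (true D, swapped) term is therefore $\frac{\rho_{\mathrm{C}}}{1-\rho_{\mathrm{C}}}R_2^{\mathrm{S}}$. This differs from $R_3^{\mathrm{S}}$ by $\frac{2\pi_{\mathrm{S}}\rho_{\mathrm{D}}\rho_{\mathrm{C}}}{\pi_{\mathrm{S}}(1-\rho_{\mathrm{S}})+\pi_{\mathrm{D}}\rho_{\mathrm{D}}}\bigl(\mathbb{E}_{\widetilde p_+}[h]-\mathbb{E}_{\widetilde p_-}[h]\bigr)$, where $h=\pi_-\ell(g(\bm{x}),-1)-\pi_+\ell(g(\bm{x}),+1)$. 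This is nonzero in general, since $\widetilde p_+-\widetilde p_-\propto p_+-p_-$. Conversely, if you applied your S-case convention (ordered pair distributed as $\widetilde p_+\otimes\widetilde p_-$) to the D-case as well, the (true D, not swapped) term would be $(1-w_{\mathrm{S}})(1-\rho_{\mathrm{C}})R_{\mathrm{S\text{-}PC}}$ rather than $R_2^{\mathrm{S}}$. Your case analysis thus mixes two incompatible pair models, and neither one reproduces the lemma. Your check at $\rho_{\mathrm{S}}=\rho_{\mathrm{D}}=\rho_{\mathrm{C}}=0$ cannot detect this, because $\rho_{\mathrm{C}}=0$ kills $R_1$ and $R_3$.

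What is needed, and what the paper does, is a fixed order of operations in which the comparison noise acts on the marginals $\widetilde p_\pm$ after conversion, not on pairs. First split $\pi_{\mathrm{S}}\mathbb{E}_{\bar p_{\mathrm{S}}}[\cdot]$ using Lemma~\ref{SDPC:lemma:noisy SD density}. Rewrite the $p_{\mathrm{S}}$-part via \eqref{SDPC:eq:R_S=R_S-PC} and the $p_{\mathrm{D}}$-part via Lemma~\ref{SDPC:lemma:pcomp probability}, so that the whole quantity has the form $\mathbb{E}_{\widetilde p_+}[\cdot]+\mathbb{E}_{\widetilde p_-}[\cdot]$; this is \eqref{SDPC:eq:noisy_R_S}. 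Then replace $\widetilde p_\pm$ by $\bar p_\pm$ from \eqref{SDPC:eq:noisy_pcomp_positive}--\eqref{SDPC:eq:noisy_pcomp_negative} and expand. The $\rho_{\mathrm{C}}$-component exchanges the two coefficient vectors within each bracket, yielding $R_1^{\mathrm{S}}$ from the $p_{\mathrm{S}}$-bracket and $R_3^{\mathrm{S}}$ from the $p_{\mathrm{D}}$-bracket. The D-part is handled identically, starting from \eqref{SDPC:eq:noisy_R_D}. If you reread your \emph{relabelling} rule as acting on the already-converted $\widetilde p_\pm$-representation of the $p_{\mathrm{D}}$-term, your computation goes through. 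But then it is exactly this formal substitution, which is in effect how $\bar R_{\mathrm{S\text{-}PC}}$ is defined, and not a change of variables in an expectation over an actual distribution of ordered pairs.
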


\begin{proof}
\textbf{Classification Risk under Noisy Similar Pairs}

The expectation obtained by replacing $p_{\mathrm{S}}$ in Eq.~\eqref{SDPC:similar classification risk} with $\bar{p}_{\mathrm{S}}$ can be written as follows.
\begin{align*}
&\pi_\mathrm{S}\mathbb{E}_{\bar{p}_\mathrm{S}(\bm{x},\bm{x}')}\Biggl[\frac{\mathcal{L}(g(\bm{x}),+1)+\mathcal{L}(g(\bm{x}'),+1)}{2}\Biggr]\\
&=\frac{\pi_\mathrm{S}^2(1-\rho_\mathrm{S})}{\pi_\mathrm{S}(1-\rho_\mathrm{S})+\pi_\mathrm{D}\rho_\mathrm{D}}\mathbb{E}_{p_\mathrm{S}(\bm{x},\bm{x}')}\Biggl[\frac{\mathcal{L}(g(\bm{x}),+1)+\mathcal{L}(g(\bm{x}'),+1)}{2}\Biggr]\\
&\hspace{3cm}+\frac{\pi_\mathrm{S}\pi_\mathrm{D}\rho_\mathrm{D}}{\pi_\mathrm{S}(1-\rho_\mathrm{S})+\pi_\mathrm{D}\rho_\mathrm{D}}\mathbb{E}_{p_\mathrm{D}(\bm{x},\bm{x}')}\Biggl[\frac{\mathcal{L}(g(\bm{x}),+1)+\mathcal{L}(g(\bm{x}'),+1)}{2}\Biggr]\\
&=\frac{\pi_\mathrm{S}(1-\rho_\mathrm{S})}{\pi_\mathrm{S}(1-\rho_\mathrm{S})+\pi_\mathrm{D}\rho_\mathrm{D}}\Biggl\{\frac{\pi_+^2}{\pi_+-\pi_-}\mathbb{E}_{\widetilde{p}_+(\bm{x})}[\pi_+\ell(g(\bm{x}),+1)-\pi_-\ell(g(\bm{x}),-1)]\notag\\
&\hspace{3cm}+\frac{\pi_-^2}{\pi_+-\pi_-}\mathbb{E}_{\widetilde{p}_-(\bm{x}')}[\pi_+\ell(g(\bm{x}'),+1)-\pi_-\ell(g(\bm{x}'),-1)]
\Biggr\}\\
&\hspace{3cm}+\frac{\pi_\mathrm{S}\pi_\mathrm{D}\rho_\mathrm{D}}{\pi_\mathrm{S}(1-\rho_\mathrm{S})+\pi_\mathrm{D}\rho_\mathrm{D}}\mathbb{E}_{p_\mathrm{D}(\bm{x},\bm{x}')}\Biggl[\frac{\mathcal{L}(g(\bm{x}),+1)+\mathcal{L}(g(\bm{x}'),+1)}{2}\Biggr].
\end{align*}
The first equality follows from Eq.~\eqref{SDPC:eq:noisy similar density}, and the second equality follows from Eq.~\eqref{SDPC:eq:R_S=R_S-PC}. Here,
\begin{align*}
\mathbb{E}&_{p_\mathrm{D}(\bm{x},\bm{x}')}\Biggl[\frac{\mathcal{L}(g(\bm{x}),+1)+\mathcal{L}(g(\bm{x}'),+1)}{2}\Biggr]\\
=&\frac{1}{2(\pi_+-\pi_-)}\Biggl\{\frac{\pi_+}{2}\int\ell(g(\bm{x}),+1)p_+(\bm{x})p_-(\bm{x}')\mathrm{d}\bm{x}\mathrm{d}\bm{x}'+\frac{\pi_+}{2}\int\ell(g(\bm{x}),+1)p_-(\bm{x})p_+(\bm{x}')\mathrm{d}\bm{x}\mathrm{d}\bm{x}'\\
&\hspace{2cm}+\frac{\pi_+}{2}\int\ell(g(\bm{x}'),+1)p_+(\bm{x})p_-(\bm{x}')\mathrm{d}\bm{x}\mathrm{d}\bm{x}'+\frac{\pi_+}{2}\int\ell(g(\bm{x}'),+1)p_-(\bm{x})p_+(\bm{x}')\mathrm{d}\bm{x}\mathrm{d}\bm{x}'\\
&\hspace{2cm}-\frac{\pi_-}{2}\int\ell(g(\bm{x}),-1)p_+(\bm{x})p_-(\bm{x}')\mathrm{d}\bm{x}\mathrm{d}\bm{x}'-\frac{\pi_-}{2}\int\ell(g(\bm{x}),-1)p_-(\bm{x})p_+(\bm{x}')\mathrm{d}\bm{x}\mathrm{d}\bm{x}'\\
&\hspace{2cm}-\frac{\pi_-}{2}\int\ell(g(\bm{x}'),-1)p_+(\bm{x})p_-(\bm{x}')\mathrm{d}\bm{x}\mathrm{d}\bm{x}'-\frac{\pi_-}{2}\int\ell(g(\bm{x}'),-1)p_-(\bm{x})p_+(\bm{x}')\mathrm{d}\bm{x}\mathrm{d}\bm{x}'\Biggr\}\\
=&\frac{1}{2(\pi_+-\pi_-)}(\pi_+\mathbb{E}_{p_+(\bm{x})}[\ell(g(\bm{x}),+1)]+\pi_+\mathbb{E}_{p_-(\bm{x})}[\ell(g(\bm{x}),+1)]\\
&\hspace{2cm}-\pi_-\mathbb{E}_{p_-(\bm{x})}[\ell(g(\bm{x}),-1)]-\pi_-\mathbb{E}_{p_+(\bm{x})}[\ell(g(\bm{x}),-1)])\\
=&\frac{1}{2(\pi_+-\pi_-)}\Biggl\{\mathbb{E}_{\widetilde{p}_+(\bm{x})}[\ell(g(\bm{x}),+1)]-\pi_-\mathbb{E}_{\widetilde{p}_-(\bm{x})}[\ell(g(\bm{x}),+1)]\\
&\hspace{2cm}+\frac{\pi_+}{\pi_-}\mathbb{E}_{\widetilde{p}_-(\bm{x})}[\ell(g(\bm{x}),+1)]-\frac{\pi_+^2}{\pi_-}\mathbb{E}_{\widetilde{p}_+(\bm{x})}[\ell(g(\bm{x}),+1)]\\
&\hspace{2cm}-\mathbb{E}_{\widetilde{p}_-(\bm{x})}[\ell(g(\bm{x}),-1)]+\pi_+\mathbb{E}_{\widetilde{p}_+(\bm{x})}[\ell(g(\bm{x}),-1)]\\
&\hspace{2cm}-\frac{\pi_-}{\pi_+}\mathbb{E}_{\widetilde{p}_+(\bm{x})}[\ell(g(\bm{x}),-1)]+\frac{\pi_-^2}{\pi_+}\mathbb{E}_{\widetilde{p}_-(\bm{x})}[\ell(g(\bm{x}),-1)]\Biggr\}\\
=&\frac{1}{2(\pi_+-\pi_-)}\Biggl\{\mathbb{E}_{\widetilde{p}_+(\bm{x})}\Biggl[(1-\frac{\pi_+^2}{\pi_-})\ell(g(\bm{x}),+1)+(\pi_+-\frac{\pi_-}{\pi_+})\ell(g(\bm{x}),-1)\Biggr]\\
&\hspace{2cm}-\mathbb{E}_{\widetilde{p}_-(\bm{x}')}\Biggl[(\pi_--\frac{\pi_+}{\pi_-})\ell(g(\bm{x}'),+1)+(1-\frac{\pi_-^2}{\pi_+})\ell(g(\bm{x}'),-1)\Biggr]\Biggr\}\\
=&\frac{1}{\pi_\mathrm{D}(\pi_+-\pi_-)}\Biggl\{\mathbb{E}_{\widetilde{p}_+(\bm{x})}[\pi_+(\pi_--\pi_+^2)\ell(g(\bm{x}),+1)+\pi_-(\pi_+^2-\pi_-)\ell(g(\bm{x}),-1)]\\
&\hspace{3cm}+\mathbb{E}_{\widetilde{p}_-(\bm{x}')}[\pi_+(\pi_+-\pi_-^2)\ell(g(\bm{x}'),+1)+\pi_-(\pi_-^2-\pi_+)\ell(g(\bm{x}'),-1)]\Biggr\}.
\end{align*}
The first equality follows from Eq.~\eqref{SDPC:eq:p_D}, and the third equality follows from Lemma~\ref{SDPC:lemma:pcomp probability}.

Therefore, when the SD labels contain noise, the following representation holds:
\begin{align}
&\pi_\mathrm{S}\mathbb{E}_{\bar{p}_\mathrm{S}(\bm{x},\bm{x}')}\Biggl[\frac{\mathcal{L}(g(\bm{x}),+1)+\mathcal{L}(g(\bm{x}'),+1)}{2}\Biggr]\notag\\
&=\frac{\pi_\mathrm{S}(1-\rho_\mathrm{S})}{\pi_\mathrm{S}(1-\rho_\mathrm{S})+\pi_\mathrm{D}\rho_\mathrm{D}}\Biggl\{\frac{\pi_+^2}{\pi_+-\pi_-}\mathbb{E}_{\widetilde{p}_+(\bm{x})}[\pi_+\ell(g(\bm{x}),+1)-\pi_-\ell(g(\bm{x}),-1)]\notag\\
&\hspace{3cm}+\frac{\pi_-^2}{\pi_+-\pi_-}\mathbb{E}_{\widetilde{p}_-(\bm{x}')}[\pi_+\ell(g(\bm{x}'),+1)-\pi_-\ell(g(\bm{x}'),-1)]
\Biggr\}\notag\\
&\hspace{1cm}+\frac{\pi_\mathrm{S}\pi_\mathrm{D}\rho_\mathrm{D}}{(\pi_\mathrm{S}(1-\rho_\mathrm{S})+\pi_\mathrm{D}\rho_\mathrm{D})(\pi_+-\pi_-)\pi_\mathrm{D}}\Biggl\{\mathbb{E}_{\widetilde{p}_+(\bm{x})}[\pi_+(\pi_--\pi_+^2)\ell(g(\bm{x}),+1)+\pi_-(\pi_+^2-\pi_-)\ell(g(\bm{x}),-1)]\notag\\
&\hspace{3cm}+\mathbb{E}_{\widetilde{p}_-(\bm{x}')}[\pi_+(\pi_+-\pi_-^2)\ell(g(\bm{x}'),+1)+\pi_-(\pi_-^2-\pi_+)\ell(g(\bm{x}'),-1)]\Biggr\}.\label{SDPC:eq:noisy_R_S}
\end{align}

Furthermore, in Eq.~\eqref{SDPC:eq:noisy_R_S}, when noise is present in Pcomp labels, $\widetilde{p}_+(\bm{x})$ and $\widetilde{p}_-(\bm{x})$ are replaced by $\bar{p}_+(\bm{x})$ and $\bar{p}_-(\bm{x})$, respectively. Consequently, the risk $\bar{R}_{\mathrm{S\text{-}PC}}(g)$ minimized when both the SD labels and Pcomp labels are noisy can be written as follows.
\begin{align*}
&\bar{R}_{\mathrm{S\text{-}PC}}(g)\\
&=\frac{\pi_\mathrm{S}(1-\rho_\mathrm{S})}{\pi_\mathrm{S}(1-\rho_\mathrm{S})+\pi_\mathrm{D}\rho_\mathrm{D}}\Biggl\{\frac{\pi_+^2}{\pi_+-\pi_-}\mathbb{E}_{\bar{p}_+(\bm{x})}[\pi_+\ell(g(\bm{x}),+1)-\pi_-\ell(g(\bm{x}),-1)]\\
&\hspace{3cm}+\frac{\pi_-^2}{\pi_+-\pi_-}\mathbb{E}_{\bar{p}_-(\bm{x}')}[\pi_+\ell(g(\bm{x}'),+1)-\pi_-\ell(g(\bm{x}'),-1)]
\Biggr\}\\
&\hspace{1cm}+\frac{\pi_\mathrm{S}\pi_\mathrm{D}\rho_\mathrm{D}}{(\pi_\mathrm{S}(1-\rho_\mathrm{S})+\pi_\mathrm{D}\rho_\mathrm{D})(\pi_+-\pi_-)\pi_\mathrm{D}}\Biggl\{\mathbb{E}_{\bar{p}_+(\bm{x})}[\pi_+(\pi_--\pi_+^2)\ell(g(\bm{x}),+1)+\pi_-(\pi_+^2-\pi_-)\ell(g(\bm{x}),-1)]\\
&\hspace{3cm}+\mathbb{E}_{\bar{p}_-(\bm{x}')}[\pi_+(\pi_+-\pi_-^2)\ell(g(\bm{x}'),+1)+\pi_-(\pi_-^2-\pi_+)\ell(g(\bm{x}'),-1)]\Biggr\}\\
&=\frac{\pi_\mathrm{S}(1-\rho_\mathrm{S})(1-\rho_\mathrm{C})}{\pi_\mathrm{S}(1-\rho_\mathrm{S})+\pi_\mathrm{D}\rho_\mathrm{D}}\frac{\pi_+^2}{\pi_+-\pi_-}\mathbb{E}_{\widetilde{p}_+(\bm{x})}[\pi_+\ell(g(\bm{x}),+1)-\pi_-\ell(g(\bm{x}),-1)]\\
&\hspace{1cm}+\frac{\pi_\mathrm{S}(1-\rho_\mathrm{S})\rho_\mathrm{C}}{\pi_\mathrm{S}(1-\rho_\mathrm{S})+\pi_\mathrm{D}\rho_\mathrm{D}}\frac{\pi_+^2}{\pi_+-\pi_-}\mathbb{E}_{\widetilde{p}_-(\bm{x})}[\pi_+\ell(g(\bm{x}),+1)-\pi_-\ell(g(\bm{x}),-1)]\\
&\hspace{1cm}+\frac{\pi_\mathrm{S}(1-\rho_\mathrm{S})\rho_\mathrm{C}}{\pi_\mathrm{S}(1-\rho_\mathrm{S})+\pi_\mathrm{D}\rho_\mathrm{D}}\frac{\pi_-^2}{\pi_+-\pi_-}\mathbb{E}_{\widetilde{p}_+(\bm{x}')}[\pi_+\ell(g(\bm{x}'),+1)-\pi_-\ell(g(\bm{x}'),-1)]\\
&\hspace{1cm}+\frac{\pi_\mathrm{S}(1-\rho_\mathrm{S})(1-\rho_\mathrm{C})}{\pi_\mathrm{S}(1-\rho_\mathrm{S})+\pi_\mathrm{D}\rho_\mathrm{D}}\frac{\pi_-^2}{\pi_+-\pi_-}\mathbb{E}_{\widetilde{p}_-(\bm{x}')}[\pi_+\ell(g(\bm{x}'),+1)-\pi_-\ell(g(\bm{x}'),-1)]\\
&\hspace{1cm}+\frac{\pi_\mathrm{S}\pi_\mathrm{D}\rho_\mathrm{D}(1-\rho_\mathrm{C})}{(\pi_\mathrm{S}(1-\rho_\mathrm{S})+\pi_\mathrm{D}\rho_\mathrm{D})(\pi_+-\pi_-)\pi_\mathrm{D}}\mathbb{E}_{\widetilde{p}_+(\bm{x})}[\pi_+(\pi_--\pi_+^2)\ell(g(\bm{x}),+1)+\pi_-(\pi_+^2-\pi_-)\ell(g(\bm{x}),-1)]\\
&\hspace{1cm}+\frac{\pi_\mathrm{S}\pi_\mathrm{D}\rho_\mathrm{D}\rho_\mathrm{C}}{(\pi_\mathrm{S}(1-\rho_\mathrm{S})+\pi_\mathrm{D}\rho_\mathrm{D})(\pi_+-\pi_-)\pi_\mathrm{D}}\mathbb{E}_{\widetilde{p}_-(\bm{x})}[\pi_+(\pi_--\pi_+^2)\ell(g(\bm{x}),+1)+\pi_-(\pi_+^2-\pi_-)\ell(g(\bm{x}),-1)]\\
&\hspace{1cm}+\frac{\pi_\mathrm{S}\pi_\mathrm{D}\rho_\mathrm{D}\rho_\mathrm{C}}{(\pi_\mathrm{S}(1-\rho_\mathrm{S})+\pi_\mathrm{D}\rho_\mathrm{D})(\pi_+-\pi_-)\pi_\mathrm{D}}\mathbb{E}_{\widetilde{p}_+(\bm{x}')}[\pi_+(\pi_+-\pi_-^2)\ell(g(\bm{x}'),+1)+\pi_-(\pi_-^2-\pi_+)\ell(g(\bm{x}'),-1)]\\
&\hspace{1cm}+\frac{\pi_\mathrm{S}\pi_\mathrm{D}\rho_\mathrm{D}(1-\rho_\mathrm{C})}{(\pi_\mathrm{S}(1-\rho_\mathrm{S})+\pi_\mathrm{D}\rho_\mathrm{D})(\pi_+-\pi_-)\pi_\mathrm{D}}\mathbb{E}_{\widetilde{p}_-(\bm{x}')}[\pi_+(\pi_+-\pi_-^2)\ell(g(\bm{x}'),+1)+\pi_-(\pi_-^2-\pi_+)\ell(g(\bm{x}'),-1)]\\
&=\frac{\pi_\mathrm{S}(1-\rho_\mathrm{S})(1-\rho_\mathrm{C})}{\pi_\mathrm{S}(1-\rho_\mathrm{S})+\pi_\mathrm{D}\rho_\mathrm{D}}R_{\mathrm{S\text{-}PC}}(g)\\
&\hspace{1cm}+\frac{\pi_\mathrm{S}(1-\rho_\mathrm{S})\rho_\mathrm{C}}{\pi_\mathrm{S}(1-\rho_\mathrm{S})+\pi_\mathrm{D}\rho_\mathrm{D}}\Biggl\{\frac{\pi_-^2}{\pi_+-\pi_-}\mathbb{E}_{\widetilde{p}_+(\bm{x})}[\pi_+\ell(g(\bm{x}),+1)-\pi_-\ell(g(\bm{x}),-1)]\\
&\hspace{3cm}+\frac{\pi_+^2}{\pi_+-\pi_-}\mathbb{E}_{\widetilde{p}_-(\bm{x}')}[\pi_+\ell(g(\bm{x}'),+1)-\pi_-\ell(g(\bm{x}'),-1)]\Biggr\}\\
&\hspace{1cm}+\frac{\pi_\mathrm{S}\pi_\mathrm{D}\rho_\mathrm{D}(1-\rho_\mathrm{C})}{(\pi_\mathrm{S}(1-\rho_\mathrm{S})+\pi_\mathrm{D}\rho_\mathrm{D})(\pi_+-\pi_-)\pi_\mathrm{D}}\Biggl\{\mathbb{E}_{\widetilde{p}_+(\bm{x})}[\pi_+(\pi_--\pi_+^2)\ell(g(\bm{x}),+1)+\pi_-(\pi_+^2-\pi_-)\ell(g(\bm{x}),-1)]\\
&\hspace{3cm}+\mathbb{E}_{\widetilde{p}_-(\bm{x}')}[\pi_+(\pi_+-\pi_-^2)\ell(g(\bm{x}'),+1)+\pi_-(\pi_-^2-\pi_+)\ell(g(\bm{x}'),-1)]\Biggr\}\\
&\hspace{1cm}+\frac{\pi_\mathrm{S}\pi_\mathrm{D}\rho_\mathrm{D}\rho_\mathrm{C}}{(\pi_\mathrm{S}(1-\rho_\mathrm{S})+\pi_\mathrm{D}\rho_\mathrm{D})(\pi_+-\pi_-)\pi_\mathrm{D}}\Biggl\{\mathbb{E}_{\widetilde{p}_+(\bm{x})}[\pi_+(\pi_+-\pi_-^2)\ell(g(\bm{x}),+1)+\pi_-(\pi_-^2-\pi_+)\ell(g(\bm{x}),-1)]\\
&\hspace{3cm}+\mathbb{E}_{\widetilde{p}_-(\bm{x}')}[\pi_+(\pi_--\pi_+^2)\ell(g(\bm{x}'),+1)+\pi_-(\pi_+^2-\pi_-)\ell(g(\bm{x}'),-1)]\Biggr\}.
\end{align*}
The second equality follows from Eqs.~\eqref{SDPC:eq:noisy_pcomp_positive} and~\eqref{SDPC:eq:noisy_pcomp_negative}.

\textbf{Classification Risk under Noisy Dissimilar Pairs}

Similarly, for $\bar{R}_{\mathrm{D\text{-}PC}}(g)$, the expectation obtained by replacing $p_\mathrm{D}$ in Eq.~\eqref{SDPC:dissimilar classification risk} with $\bar{p}_\mathrm{D}$ can be written as follows.
\begin{align*}
&\pi_\mathrm{D}\mathbb{E}_{\bar{p}_\mathrm{D}(\bm{x},\bm{x}')}\Biggl[\frac{\mathcal{L}(g(\bm{x}),-1)+\mathcal{L}(g(\bm{x}'),-1)}{2}\Biggr]\\
&=\frac{\pi_\mathrm{S}\pi_\mathrm{D}\rho_\mathrm{S}}{\pi_\mathrm{S}\rho_\mathrm{S}+\pi_\mathrm{D}(1-\rho_\mathrm{D})}\mathbb{E}_{p_\mathrm{S}(\bm{x},\bm{x}')}\Biggl[\frac{\mathcal{L}(g(\bm{x}),-1)+\mathcal{L}(g(\bm{x}'),-1)}{2}\Biggr]\\
&\hspace{3cm}+\frac{\pi_\mathrm{D}^2(1-\rho_\mathrm{D})}{\pi_\mathrm{S}\rho_\mathrm{S}+\pi_\mathrm{D}(1-\rho_\mathrm{D})}\mathbb{E}_{p_\mathrm{D}(\bm{x},\bm{x}')}\Biggl[\frac{\mathcal{L}(g(\bm{x}),-1)+\mathcal{L}(g(\bm{x}'),-1)}{2}\Biggr]\\
&=\frac{\pi_\mathrm{S}\pi_\mathrm{D}\rho_\mathrm{S}}{\pi_\mathrm{S}\rho_\mathrm{S}+\pi_\mathrm{D}(1-\rho_\mathrm{D})}\mathbb{E}_{p_\mathrm{S}(\bm{x},\bm{x}')}\Biggl[\frac{\mathcal{L}(g(\bm{x}),-1)+\mathcal{L}(g(\bm{x}'),-1)}{2}\Biggr]\\
&\hspace{1cm}+\frac{\pi_\mathrm{D}(1-\rho_\mathrm{D})}{(\pi_\mathrm{S}\rho_\mathrm{S}+\pi_\mathrm{D}(1-\rho_\mathrm{D}))(\pi_+-\pi_-)}\Biggl\{\mathbb{E}_{\widetilde{p}_+(\bm{x})}[\pi_-(\pi_+^2-\pi_-)\ell(g(\bm{x}),+1)+\pi_+(\pi_--\pi_+^2)\ell(g(\bm{x}),-1)]\notag\\
&\hspace{3cm}+\mathbb{E}_{\widetilde{p}_-(\bm{x}')}[\pi_-(\pi_-^2-\pi_+)\ell(g(\bm{x}'),+1)+\pi_+(\pi_+-\pi_-^2)\ell(g(\bm{x}'),-1)]\Biggr\}.
\end{align*}
The first equality follows from Eq.~\eqref{SDPC:eq:noisy dissimilar density} and the second equality follows from Eq.~\eqref{SDPC:eq:R_D=R_D-PC}. Here, 
\begin{align*}
\mathbb{E}&_{p_\mathrm{S}(\bm{x},\bm{x}')}\Biggl[\frac{\mathcal{L}(g(\bm{x}),-1)+\mathcal{L}(g(\bm{x}'),-1)}{2}\Biggr]\\
=&\frac{1}{2(\pi_+-\pi_-)}\Biggl\{-\frac{\pi_+^2\pi_-}{\pi_\mathrm{S}}\int\ell(g(\bm{x}),+1)p_+(\bm{x})p_+(\bm{x}')\mathrm{d}\bm{x}\mathrm{d}\bm{x}'-\frac{\pi_+^2\pi_-}{\pi_\mathrm{S}}\int\ell(g(\bm{x}'),+1)p_+(\bm{x})p_+(\bm{x}')\mathrm{d}\bm{x}\mathrm{d}\bm{x}'\\
&\hspace{3cm}-\frac{\pi_-^3}{\pi_\mathrm{S}}\int\ell(g(\bm{x}),+1)p_-(\bm{x})p_-(\bm{x}')\mathrm{d}\bm{x}\mathrm{d}\bm{x}'-\frac{\pi_-^3}{\pi_\mathrm{S}}\int\ell(g(\bm{x}'),+1)p_-(\bm{x})p_-(\bm{x}')\mathrm{d}\bm{x}\mathrm{d}\bm{x}'\\
&\hspace{3cm}+\frac{\pi_+^3}{\pi_\mathrm{S}}\int\ell(g(\bm{x}),-1)p_+(\bm{x})p_+(\bm{x}')\mathrm{d}\bm{x}\mathrm{d}\bm{x}'+\frac{\pi_+^3}{\pi_\mathrm{S}}\int\ell(g(\bm{x}'),-1)p_+(\bm{x})p_+(\bm{x}')\mathrm{d}\bm{x}\mathrm{d}\bm{x}'\\
&\hspace{3cm}+\frac{\pi_+\pi_-^2}{\pi_\mathrm{S}}\int\ell(g(\bm{x}),-1)p_-(\bm{x})p_-(\bm{x}')\mathrm{d}\bm{x}\mathrm{d}\bm{x}'+\frac{\pi_+\pi_-^2}{\pi_\mathrm{S}}\int\ell(g(\bm{x}'),-1)p_-(\bm{x})p_-(\bm{x}')\mathrm{d}\bm{x}\mathrm{d}\bm{x}'\Biggr\}\\
=&\frac{1}{\pi_\mathrm{S}(\pi_+-\pi_-)}\Biggl\{-\pi_+^2\pi_-\mathbb{E}_{p_+(\bm{x})}[\ell(g(\bm{x}),+1)]-\pi_-^3\mathbb{E}_{p_-(\bm{x})}[\ell(g(\bm{x}),+1)]\\
&\hspace{3cm}+\pi_+^3\mathbb{E}_{p_+(\bm{x})}[\ell(g(\bm{x}),-1)]+\pi_+\pi_-^2\mathbb{E}_{p_-(\bm{x})}[\ell(g(\bm{x}),-1)]\Biggr\}\\
=&\frac{1}{\pi_\mathrm{S}(\pi_+-\pi_-)}\Biggl\{-\pi_+\pi_-\mathbb{E}_{\widetilde{p}_+(\bm{x})}[\ell(g(\bm{x}),+1)]+\pi_+\pi_-^2\mathbb{E}_{\widetilde{p}_-(\bm{x})}[\ell(g(\bm{x}),+1)]\\
&\hspace{3cm}-\pi_-^2\mathbb{E}_{\widetilde{p}_-(\bm{x})}[\ell(g(\bm{x}),+1)]+\pi_+\pi_-^2\mathbb{E}_{\widetilde{p}_+(\bm{x})}[\ell(g(\bm{x}),+1)]\\
&\hspace{3cm}+\pi_+^2\mathbb{E}_{\widetilde{p}_+(\bm{x})}[\ell(g(\bm{x}),-1)]-\pi_+^2\pi_-\mathbb{E}_{\widetilde{p}_-(\bm{x})}[\ell(g(\bm{x}),-1)]\\
&\hspace{3cm}+\pi_+\pi_-\mathbb{E}_{\widetilde{p}_-(\bm{x})}[\ell(g(\bm{x}),-1)]-\pi_+^2\pi_-\mathbb{E}_{\widetilde{p}_+(\bm{x})}[\ell(g(\bm{x}),-1)]\Biggr\}\\
=&\frac{1}{\pi_\mathrm{S}(\pi_+-\pi_-)}\Biggl\{\mathbb{E}_{\widetilde{p}_+(\bm{x})}[\pi_+\pi_-(\pi_--1)\ell(g(\bm{x}),+1)+\pi_+^2(1-\pi_-)\ell(g(\bm{x}),-1)]\\
&\hspace{3cm}+\mathbb{E}_{\widetilde{p}_-(\bm{x}')}[\pi_-^2(\pi_+-1)\ell(g(\bm{x}'),+1)+\pi_+\pi_-(1-\pi_+)\ell(g(\bm{x}'),-1)]\Biggr\}\\
=&\frac{1}{\pi_\mathrm{S}(\pi_+-\pi_-)}\Biggl\{\pi_+^2\mathbb{E}_{\widetilde{p}_+(\bm{x})}[-\pi_-\ell(g(\bm{x}),+1)+\pi_+\ell(g(\bm{x}),-1)]\\
&\hspace{3cm}+\pi_-^2\mathbb{E}_{\widetilde{p}_-(\bm{x}')}[-\pi_-\ell(g(\bm{x}'),+1)+\pi_+\ell(g(\bm{x}'),-1)]\Biggr\}.
\end{align*}
The first equality follows from Eq.~\eqref{SDPC:eq:p_S}, and the third equality follows from Lemma~\ref{SDPC:lemma:pcomp probability}.

Therefore, when the SD labels contain noise, the following expression holds:
\begin{align}
&\pi_\mathrm{D}\mathbb{E}_{\bar{p}_\mathrm{D}(\bm{x},\bm{x}')}\Biggl[\frac{\mathcal{L}(g(\bm{x}),-1)+\mathcal{L}(g(\bm{x}'),-1)}{2}\Biggr]\notag\\
&=\frac{\pi_\mathrm{S}\pi_\mathrm{D}\rho_\mathrm{S}}{(\pi_\mathrm{S}\rho_\mathrm{S}+\pi_\mathrm{D}(1-\rho_\mathrm{D}))(\pi_+-\pi_-)\pi_\mathrm{S}}\Biggl\{\pi_+^2\mathbb{E}_{\widetilde{p}_+(\bm{x})}[-\pi_-\ell(g(\bm{x}),+1)+\pi_+\ell(g(\bm{x}),-1)]\notag\\
&\hspace{3cm}+\pi_-^2\mathbb{E}_{\widetilde{p}_-(\bm{x}')}[-\pi_-\ell(g(\bm{x}'),+1)+\pi_+\ell(g(\bm{x}'),-1)]\Biggr\}\notag\\
&\hspace{1cm}+\frac{\pi_\mathrm{D}(1-\rho_\mathrm{D})}{(\pi_\mathrm{S}\rho_\mathrm{S}+\pi_\mathrm{D}(1-\rho_\mathrm{D}))(\pi_+-\pi_-)}\Biggl\{\mathbb{E}_{\widetilde{p}_+(\bm{x})}[\pi_-(\pi_+^2-\pi_-)\ell(g(\bm{x}),+1)+\pi_+(\pi_--\pi_+^2)\ell(g(\bm{x}),-1)]\notag\\
&\hspace{3cm}+\mathbb{E}_{\widetilde{p}_-(\bm{x}')}[\pi_-(\pi_-^2-\pi_+)\ell(g(\bm{x}'),+1)+\pi_+(\pi_+-\pi_-^2)\ell(g(\bm{x}'),-1)]\Biggr\}.\label{SDPC:eq:noisy_R_D}
\end{align}
Furthermore, in Eq.~\eqref{SDPC:eq:noisy_R_D}, when noise is present in Pcomp labels, $\widetilde{p}_+(\bm{x})$ and $\widetilde{p}_-(\bm{x})$ are replaced by $\bar{p}_+(\bm{x})$ and $\bar{p}_-(\bm{x})$, respectively. Consequently, the risk $\bar{R}_{\mathrm{D\text{-}PC}}(g)$ minimized when both the SD labels and Pcomp labels are noisy can be written as follows.
\begin{align*}
&\bar{R}_{\mathrm{D\text{-}PC}}(g)\\
&=\frac{\pi_\mathrm{S}\pi_\mathrm{D}\rho_\mathrm{S}}{(\pi_\mathrm{S}\rho_\mathrm{S}+\pi_\mathrm{D}(1-\rho_\mathrm{D}))(\pi_+-\pi_-)\pi_\mathrm{S}}\Biggl\{\pi_+^2\mathbb{E}_{\bar{p}_+(\bm{x})}[-\pi_-\ell(g(\bm{x}),+1)+\pi_+\ell(g(\bm{x}),-1)]\\
&\hspace{3cm}+\pi_-^2\mathbb{E}_{\bar{p}_-(\bm{x}')}[-\pi_-\ell(g(\bm{x}'),+1)+\pi_+\ell(g(\bm{x}'),-1)]\Biggr\}\\
&\hspace{1cm}+\frac{\pi_\mathrm{D}(1-\rho_\mathrm{D})}{(\pi_\mathrm{S}\rho_\mathrm{S}+\pi_\mathrm{D}(1-\rho_\mathrm{D}))(\pi_+-\pi_-)}\Biggl\{\mathbb{E}_{\bar{p}_+(\bm{x})}[\pi_-(\pi_+^2-\pi_-)\ell(g(\bm{x}),+1)+\pi_+(\pi_--\pi_+^2)\ell(g(\bm{x}),-1)]\notag\\
&\hspace{3cm}+\mathbb{E}_{\bar{p}_-(\bm{x}')}[\pi_-(\pi_-^2-\pi_+)\ell(g(\bm{x}'),+1)+\pi_+(\pi_+-\pi_-^2)\ell(g(\bm{x}'),-1)]\Biggr\}\\
&=\frac{\pi_\mathrm{S}\pi_\mathrm{D}\rho_\mathrm{S}(1-\rho_\mathrm{C})}{(\pi_\mathrm{S}\rho_\mathrm{S}+\pi_\mathrm{D}(1-\rho_\mathrm{D}))(\pi_+-\pi_-)\pi_\mathrm{S}}\pi_+^2\mathbb{E}_{\widetilde{p}_+(\bm{x})}[-\pi_-\ell(g(\bm{x}),+1)+\pi_+\ell(g(\bm{x}),-1)]\\
&\hspace{1cm}+\frac{\pi_\mathrm{S}\pi_\mathrm{D}\rho_\mathrm{S}\rho_\mathrm{C}}{(\pi_\mathrm{S}\rho_\mathrm{S}+\pi_\mathrm{D}(1-\rho_\mathrm{D}))(\pi_+-\pi_-)\pi_\mathrm{S}}\pi_+^2\mathbb{E}_{\widetilde{p}_-(\bm{x})}[-\pi_-\ell(g(\bm{x}),+1)+\pi_+\ell(g(\bm{x}),-1)]\\
&\hspace{1cm}+\frac{\pi_\mathrm{S}\pi_\mathrm{D}\rho_\mathrm{S}\rho_\mathrm{C}}{(\pi_\mathrm{S}\rho_\mathrm{S}+\pi_\mathrm{D}(1-\rho_\mathrm{D}))(\pi_+-\pi_-)\pi_\mathrm{S}}\pi_-^2\mathbb{E}_{\widetilde{p}_+(\bm{x}')}[-\pi_-\ell(g(\bm{x}'),+1)+\pi_+\ell(g(\bm{x}'),-1)]\\
&\hspace{1cm}+\frac{\pi_\mathrm{S}\pi_\mathrm{D}\rho_\mathrm{S}(1-\rho_\mathrm{C})}{(\pi_\mathrm{S}\rho_\mathrm{S}+\pi_\mathrm{D}(1-\rho_\mathrm{D}))(\pi_+-\pi_-)\pi_\mathrm{S}}\pi_-^2\mathbb{E}_{\widetilde{p}_-(\bm{x}')}[-\pi_-\ell(g(\bm{x}'),+1)+\pi_+\ell(g(\bm{x}'),-1)]\\
&\hspace{1cm}+\frac{\pi_\mathrm{D}(1-\rho_\mathrm{D})(1-\rho_\mathrm{C})}{(\pi_\mathrm{S}\rho_\mathrm{S}+\pi_\mathrm{D}(1-\rho_\mathrm{D}))(\pi_+-\pi_-)}\mathbb{E}_{\widetilde{p}_+(\bm{x})}[\pi_-(\pi_+^2-\pi_-)\ell(g(\bm{x}),+1)+\pi_+(\pi_--\pi_+^2)\ell(g(\bm{x}),-1)]\\
&\hspace{1cm}+\frac{\pi_\mathrm{D}(1-\rho_\mathrm{D})\rho_\mathrm{C}}{(\pi_\mathrm{S}\rho_\mathrm{S}+\pi_\mathrm{D}(1-\rho_\mathrm{D}))(\pi_+-\pi_-)}\mathbb{E}_{\widetilde{p}_-(\bm{x})}[\pi_-(\pi_+^2-\pi_-)\ell(g(\bm{x}),+1)+\pi_+(\pi_--\pi_+^2)\ell(g(\bm{x}),-1)]\\
&\hspace{1cm}+\frac{\pi_\mathrm{D}(1-\rho_\mathrm{D})\rho_\mathrm{C}}{(\pi_\mathrm{S}\rho_\mathrm{S}+\pi_\mathrm{D}(1-\rho_\mathrm{D}))(\pi_+-\pi_-)}\mathbb{E}_{\widetilde{p}_+(\bm{x}')}[\pi_-(\pi_-^2-\pi_+)\ell(g(\bm{x}'),+1)+\pi_+(\pi_+-\pi_-^2)\ell(g(\bm{x}'),-1)]\\
&\hspace{1cm}+\frac{\pi_\mathrm{D}(1-\rho_\mathrm{D})(1-\rho_\mathrm{C})}{(\pi_\mathrm{S}\rho_\mathrm{S}+\pi_\mathrm{D}(1-\rho_\mathrm{D}))(\pi_+-\pi_-)}\mathbb{E}_{\widetilde{p}_-(\bm{x}')}[\pi_-(\pi_-^2-\pi_+)\ell(g(\bm{x}'),+1)+\pi_+(\pi_+-\pi_-^2)\ell(g(\bm{x}'),-1)]\\
&=\frac{\pi_\mathrm{D}(1-\rho_\mathrm{D})(1-\rho_\mathrm{C})}{\pi_\mathrm{S}\rho_\mathrm{S}+\pi_\mathrm{D}(1-\rho_\mathrm{D})}R_{\mathrm{D\text{-}PC}}(g)\\
&\hspace{1cm}+\frac{\pi_\mathrm{D}(1-\rho_\mathrm{D})\rho_\mathrm{C}}{(\pi_\mathrm{S}\rho_\mathrm{S}+\pi_\mathrm{D}(1-\rho_\mathrm{D}))(\pi_+-\pi_-)}\Biggl\{\mathbb{E}_{\widetilde{p}_+(\bm{x})}[\pi_-(\pi_-^2-\pi_+)\ell(g(\bm{x}),+1)+\pi_+(\pi_+-\pi_-^2)\ell(g(\bm{x}),-1)]\\
&\hspace{3cm}+\mathbb{E}_{\widetilde{p}_-(\bm{x}')}[\pi_-(\pi_+^2-\pi_-)\ell(g(\bm{x}'),+1)+\pi_+(\pi_--\pi_+^2)\ell(g(\bm{x}'),-1)]\Biggr\}\\
&\hspace{1cm}+\frac{\pi_\mathrm{S}\pi_\mathrm{D}\rho_\mathrm{S}(1-\rho_\mathrm{C})}{(\pi_\mathrm{S}\rho_\mathrm{S}+\pi_\mathrm{D}(1-\rho_\mathrm{D}))(\pi_+-\pi_-)\pi_\mathrm{S}}\Biggl\{\pi_+^2\mathbb{E}_{\widetilde{p}_+(\bm{x})}[-\pi_-\ell(g(\bm{x}),+1)+\pi_+\ell(g(\bm{x}),-1)]\\
&\hspace{3cm}+\pi_-^2\mathbb{E}_{\widetilde{p}_-(\bm{x}')}[-\pi_-\ell(g(\bm{x}'),+1)+\pi_+\ell(g(\bm{x}'),-1)]\Biggr\}\\
&\hspace{1cm}+\frac{\pi_\mathrm{S}\pi_\mathrm{D}\rho_\mathrm{S}\rho_\mathrm{C}}{(\pi_\mathrm{S}\rho_\mathrm{S}+\pi_\mathrm{D}(1-\rho_\mathrm{D}))(\pi_+-\pi_-)\pi_\mathrm{S}}\Biggl\{\pi_-^2\mathbb{E}_{\widetilde{p}_+(\bm{x})}[-\pi_-\ell(g(\bm{x}),+1)+\pi_+\ell(g(\bm{x}),-1)]\\
&\hspace{3cm}+\pi_+^2\mathbb{E}_{\widetilde{p}_-(\bm{x}')}[-\pi_-\ell(g(\bm{x}'),+1)+\pi_+\ell(g(\bm{x}'),-1)]\Biggr\}.
\end{align*}
\end{proof}
We prove Theorem~\ref{SDPC:thm:Noisy SD-Pcomp estimation error bound} below.
\begin{proof}
The following inequality holds:
\begin{align}
&R(\bar{g})-R(g^*)\notag\\
&=R_{\mathrm{S\text{-}PC}}(\bar{g})-R_{\mathrm{S\text{-}PC}}(g^*)+R_{\mathrm{D\text{-}PC}}(\bar{g})-R_{\mathrm{D\text{-}PC}}(g^*)\notag\\
&=\frac{1}{C_\mathrm{S}}(\bar{R}_{\mathrm{S\text{-}PC}}(\bar{g})-\bar{R}_{\mathrm{S\text{-}PC}}(g^*)+R_1^\mathrm{S}(g^*)-R_1^\mathrm{S}(\bar{g})+R_2^\mathrm{S}(g^*)-R_2^\mathrm{S}(\bar{g})+R_3^\mathrm{S}(g^*)-R_3^\mathrm{S}(\bar{g}))\notag\\
&\hspace{1cm}+\frac{1}{C_\mathrm{D}}(\bar{R}_{\mathrm{D\text{-}PC}}(\bar{g})-\bar{R}_{\mathrm{D\text{-}PC}}(g^*)+R_1^\mathrm{D}(g^*)-R_1^\mathrm{D}(\bar{g})+R_2^\mathrm{D}(g^*)-R_2^\mathrm{D}(\bar{g})+R_3^\mathrm{D}(g^*)-R_3^\mathrm{D}(\bar{g}))\notag\\
&\le\frac{2}{C_\mathrm{S}}\underset{g\in\mathcal{G}}{\sup}|\bar{R}_{\mathrm{S\text{-}PC}}(g)-\widehat{\bar{R}}_{\mathrm{S\text{-}PC}}(g)|+\frac{2}{C_\mathrm{D}}\underset{g\in\mathcal{G}}{\sup}|\bar{R}_{\mathrm{D\text{-}PC}}(g)-\widehat{\bar{R}}_{\mathrm{D\text{-}PC}}(g)|\notag\\
&\hspace{1cm}+\frac{1}{C_\mathrm{S}}(R_1^\mathrm{S}(g^*)-R_1^\mathrm{S}(\bar{g}))+\frac{1}{C_\mathrm{S}}(R_2^\mathrm{S}(g^*)-R_2^\mathrm{S}(\bar{g}))+\frac{1}{C_\mathrm{S}}(R_3^\mathrm{S}(g^*)-R_3^\mathrm{S}(\bar{g}))\notag\\
&\hspace{1cm}+\frac{1}{C_\mathrm{D}}(R_1^\mathrm{D}(g^*)-R_1^\mathrm{D}(\bar{g}))+\frac{1}{C_\mathrm{D}}(R_2^\mathrm{D}(g^*)-R_2^\mathrm{D}(\bar{g}))+\frac{1}{C_\mathrm{D}}(R_3^\mathrm{D}(g^*)-R_3^\mathrm{D}(\bar{g})).\label{SDPC:eq:noisy bound proof}
\end{align}
The second equality follows from Lemma~\ref{SDPC:lemma:Noisy SD-Pcomp classification risk}.

For the first and second terms in Eq.~\eqref{SDPC:eq:noisy bound proof}, the following bounds hold:
\begin{align*}
\bar{R}_{\mathrm{S\text{-}PC}}(\bar{g})-\bar{R}_{\mathrm{S\text{-}PC}}(g^*)=&\bar{R}_{\mathrm{S\text{-}PC}}(\bar{g})-\widehat{\bar{R}}_{\mathrm{S\text{-}PC}}(\bar{g})+\widehat{\bar{R}}_{\mathrm{S\text{-}PC}}(\bar{g})-\widehat{\bar{R}}_{\mathrm{S\text{-}PC}}(g^*)+\widehat{\bar{R}}_{\mathrm{S\text{-}PC}}(g^*)-\bar{R}_{\mathrm{S\text{-}PC}}(g^*)\\
\le&\bar{R}_{\mathrm{S\text{-}PC}}(\bar{g})-\widehat{\bar{R}}_{\mathrm{S\text{-}PC}}(\bar{g})+\widehat{\bar{R}}_{\mathrm{S\text{-}PC}}(g^*)-\bar{R}_{\mathrm{S\text{-}PC}}(g^*)\\
\le&2\underset{g\in\mathcal{G}}{\sup}|\bar{R}_{\mathrm{S\text{-}PC}}(g)-\widehat{\bar{R}}_{\mathrm{S\text{-}PC}}(g)|, \\
\bar{R}_{\mathrm{D\text{-}PC}}(\bar{g})-\bar{R}_{\mathrm{D\text{-}PC}}(g^*)=&\bar{R}_{\mathrm{D\text{-}PC}}(\bar{g})-\widehat{\bar{R}}_{\mathrm{D\text{-}PC}}(\bar{g})+\widehat{\bar{R}}_{\mathrm{D\text{-}PC}}(\bar{g})-\widehat{\bar{R}}_{\mathrm{D\text{-}PC}}(g^*)+\widehat{\bar{R}}_{\mathrm{D\text{-}PC}}(g^*)-\bar{R}_{\mathrm{D\text{-}PC}}(g^*)\\
\le&\bar{R}_{\mathrm{D\text{-}PC}}(\bar{g})-\widehat{\bar{R}}_{\mathrm{D\text{-}PC}}(\bar{g})+\widehat{\bar{R}}_{\mathrm{D\text{-}PC}}(g^*)-\bar{R}_{\mathrm{D\text{-}PC}}(g^*)\\
\le&2\underset{g\in\mathcal{G}}{\sup}|\bar{R}_{\mathrm{D\text{-}PC}}(g)-\widehat{\bar{R}}_{\mathrm{D\text{-}PC}}(g)|.
\end{align*}
Since the upper bound of $2\underset{g\in\mathcal{G}}{\sup}|\bar{R}_{\mathrm{S\text{-}PC}}(g)-\widehat{\bar{R}}_{\mathrm{S\text{-}PC}}(g)|$
coincides with that of $2\underset{g\in\mathcal{G}}{\sup}|R_{\mathrm{S\text{-}PC}}(g)-\widehat{R}_{\mathrm{S\text{-}PC}}(g)|,$
and similarly, the upper bound of $2\underset{g\in\mathcal{G}}{\sup}|\bar{R}_{\mathrm{S\text{-}PC}}(g)-\widehat{\bar{R}}_{\mathrm{S\text{-}PC}}(g)|$
coincides with that of $2\underset{g\in\mathcal{G}}{\sup}|R_{\mathrm{D\text{-}PC}}(g)-\widehat{R}_{\mathrm{D\text{-}PC}}(g)|$,
with probability at least $1-\delta$,
\begin{align}
&\frac{2}{C_\mathrm{S}}\underset{g\in\mathcal{G}}{\sup}|\bar{R}_{\mathrm{S\text{-}PC}}(g)-\widehat{\bar{R}}_{\mathrm{S\text{-}PC}}(g)|+\frac{2}{C_\mathrm{D}}\underset{g\in\mathcal{G}}{\sup}|\bar{R}_{\mathrm{D\text{-}PC}}(g)-\widehat{\bar{R}}_{\mathrm{D\text{-}PC}}(g)|\notag\\
&\le\frac{4L_{\ell}C_{\mathcal{G}}+2C_{\ell}\sqrt{\log{4/\delta}}}{|\pi_+-\pi_-|}\left\{\frac{\pi_+^2+\pi_-^2}{C_\mathrm{S}\sqrt{2n_{\mathrm{S}}}}+\frac{|\pi_+^2-\pi_-|+|\pi_+-\pi_-^2|}{C_\mathrm{D}\sqrt{2n_{\mathrm{D}}}}\right\}\label{SDPC:eq:noisy bound proof12}
\end{align}
holds.

For the third term in Eq.~\eqref{SDPC:eq:noisy bound proof}, the following holds:
\begin{align}
&\textstyle\frac{1}{C_\mathrm{S}}(R_1^\mathrm{S}(g^*)-R_1^\mathrm{S}(\bar{g}))\notag\\
&\textstyle=\frac{\rho_\mathrm{C}}{(\pi_+-\pi_-)(1-\rho_\mathrm{C})}\Biggl\{\pi_-^2\mathbb{E}_{\widetilde{p}_+(\bm{x})}[\pi_+(\ell(g^*(\bm{x}),+1)-\ell(\bar{g}(\bm{x}),+1))-\pi_-(\ell(g^*(\bm{x}),-1)-\ell(\bar{g}(\bm{x}),-1))]\notag\\
&\textstyle\hspace{1cm}+\pi_+^2\mathbb{E}_{\widetilde{p}_-(\bm{x}')}[\pi_+(\ell(g^*(\bm{x}'),+1)-\ell(\bar{g}(\bm{x}'),+1))-\pi_-(\ell(g^*(\bm{x}'),-1)-\ell(\bar{g}(\bm{x}'),-1))]\Biggr\}\notag\\
&\textstyle\le\frac{\rho_\mathrm{C}}{(\pi_+-\pi_-)(1-\rho_\mathrm{C})}\Biggl\{\pi_-^2\mathbb{E}_{\widetilde{p}_+(\bm{x})}[\pi_+C_\ell+\pi_-C_\ell]+\pi_+^2\mathbb{E}_{\widetilde{p}_-(\bm{x}')}[\pi_+C_\ell+\pi_-C_\ell]\Biggr\}\notag\\
&\textstyle=\frac{\pi_\mathrm{S}C_\ell\rho_\mathrm{C}}{|\pi_+-\pi_-|(1-\rho_\mathrm{C})}.\label{SDPC:eq:noisy bound proof3}
\end{align}

For the fourth term in Eq.~\eqref{SDPC:eq:noisy bound proof}, the following holds:
\begin{align}
&\textstyle\frac{1}{C_\mathrm{S}}(R_2^\mathrm{S}(g^*)-R_2^\mathrm{S}(\bar{g}))\notag\\
&\textstyle=\frac{\rho_\mathrm{D}}{(\pi_+-\pi_-)(1-\rho_\mathrm{S})}\Biggl\{\mathbb{E}_{\widetilde{p}_+(\bm{x})}[\pi_+(\pi_--\pi_+^2)(\ell(g^*(\bm{x}),+1)-\ell(\bar{g}(\bm{x}),+1))+\pi_-(\pi_+^2-\pi_-)(\ell(g^*(\bm{x}),-1)-\ell(\bar{g}(\bm{x}),-1))]\notag\\
&\textstyle\hspace{1cm}+\mathbb{E}_{\widetilde{p}_-(\bm{x}')}[\pi_+(\pi_+-\pi_-^2)(\ell(g^*(\bm{x}'),+1)-\ell(\bar{g}(\bm{x}'),+1))+\pi_-(\pi_-^2-\pi_+)(\ell(g^*(\bm{x}'),-1)-\ell(\bar{g}(\bm{x}'),-1))]\Biggr\}\notag\\
&\textstyle\le\frac{\rho_\mathrm{D}}{(\pi_+-\pi_-)(1-\rho_\mathrm{S})}\Biggl\{\mathbb{E}_{\widetilde{p}_+(\bm{x})}[\pi_+|\pi_--\pi_+^2|C_\ell+\pi_-|\pi_+^2-\pi_-|C_\ell]+\mathbb{E}_{\widetilde{p}_-(\bm{x}')}[\pi_+|\pi_+-\pi_-^2|C_\ell+\pi_-|\pi_-^2-\pi_+|C_\ell]\Biggr\}\notag\\
&\textstyle=\frac{C_\ell\rho_\mathrm{D}}{(\pi_+-\pi_-)(1-\rho_\mathrm{S})}(\pi_+|\pi_+^2-\pi_-|+\pi_-|\pi_+^2-\pi_-|+\pi_+|\pi_+-\pi_-^2|+\pi_-|\pi_+-\pi_-^2|)\notag\\
&\textstyle=\frac{(|\pi_+^2-\pi_-|+|\pi_+-\pi_-^2|)C_\ell\rho_\mathrm{D}}{|\pi_+-\pi_-|(1-\rho_\mathrm{S})}.\label{SDPC:eq:noisy bound proof4}
\end{align}

For the fifth term in Eq.~\eqref{SDPC:eq:noisy bound proof}, the following holds:
\begin{align}
&\textstyle\frac{1}{C_\mathrm{S}}(R_3^\mathrm{S}(g^*)-R_3^\mathrm{S}(\bar{g}))\notag\\
&\textstyle=\frac{\rho_\mathrm{D}\rho_\mathrm{C}}{(\pi_+-\pi_-)(1-\rho_\mathrm{S})(1-\rho_\mathrm{C})}\Biggl\{\mathbb{E}_{\widetilde{p}_+(\bm{x})}[\pi_+(\pi_+-\pi_-^2)(\ell(g^*(\bm{x}),+1)-\ell(\bar{g}(\bm{x}),+1))+\pi_-(\pi_-^2-\pi_+)(\ell(g^*(\bm{x}),-1)-\ell(\bar{g}(\bm{x}),-1))]\notag\\
&\textstyle\hspace{1cm}+\mathbb{E}_{\widetilde{p}_-(\bm{x}')}[\pi_+(\pi_--\pi_+^2)(\ell(g^*(\bm{x}'),+1)-\ell(\bar{g}(\bm{x}'),+1))+\pi_-(\pi_+^2-\pi_-)(\ell(g^*(\bm{x}'),-1)-\ell(\bar{g}(\bm{x}'),-1))]\Biggr\}\notag\\
&\textstyle\le\frac{\rho_\mathrm{D}\rho_\mathrm{C}}{(\pi_+-\pi_-)(1-\rho_\mathrm{S})(1-\rho_\mathrm{C})}\Biggl\{\mathbb{E}_{\widetilde{p}_+(\bm{x})}[\pi_+|\pi_+-\pi_-^2|C_\ell+\pi_-|\pi_-^2-\pi_+|C_\ell+\mathbb{E}_{\widetilde{p}_-(\bm{x}')}[\pi_+|\pi_--\pi_+^2|C_\ell+\pi_-|\pi_+^2-\pi_-|C_\ell]\Biggr\}\notag\\
&\textstyle=\frac{C_\ell\rho_\mathrm{D}\rho_\mathrm{C}}{(\pi_+-\pi_-)(1-\rho_\mathrm{S})(1-\rho_\mathrm{C})}(\pi_+|\pi_+-\pi_-^2|+\pi_-|\pi_+-\pi_-^2|+\pi_+|\pi_+^2-\pi_-|C_\ell+\pi_-|\pi_+^2-\pi_-|)\notag\\
&\textstyle=\frac{(|\pi_+^2-\pi_-|+|\pi_+-\pi_-^2|)C_\ell\rho_\mathrm{D}\rho_\mathrm{C}}{|\pi_+-\pi_-|(1-\rho_\mathrm{S})(1-\rho_\mathrm{C})}.\label{SDPC:eq:noisy bound proof5}
\end{align}

For the sixth term in Eq.~\eqref{SDPC:eq:noisy bound proof}, the following holds:
\begin{align}
&\frac{1}{C_\mathrm{D}}(R_1^\mathrm{D}(g^*)-R_1^\mathrm{D}(\bar{g}))\notag\\
&\textstyle=\frac{\rho_\mathrm{C}}{(\pi_+-\pi_-)(1-\rho_\mathrm{C})}\Biggl\{\mathbb{E}_{\widetilde{p}_+(\bm{x})}[\pi_-(\pi_-^2-\pi_+)(\ell(g^*(\bm{x}),+1)-\ell(\bar{g}(\bm{x}),+1))+\pi_+(\pi_+-\pi_-^2)(\ell(g^*(\bm{x}),-1)-\ell(\bar{g}(\bm{x}),-1))]\notag\\
&\hspace{1cm}\textstyle+\mathbb{E}_{\widetilde{p}_-(\bm{x}')}[\pi_-(\pi_+^2-\pi_-)(\ell(g^*(\bm{x}'),+1)-\ell(\bar{g}(\bm{x}'),+1))+\pi_+(\pi_--\pi_+^2)(\ell(g^*(\bm{x}'),-1)-\ell(\bar{g}(\bm{x}'),-1))]\Biggr\}\notag\\
&\textstyle\le\frac{\rho_\mathrm{C}}{(\pi_+-\pi_-)(1-\rho_\mathrm{C})}\Biggl\{\mathbb{E}_{\widetilde{p}_+(\bm{x})}[\pi_-|\pi_-^2-\pi_+|C_\ell+\pi_+|\pi_+-\pi_-^2|C_\ell]+\mathbb{E}_{\widetilde{p}_-(\bm{x}')}[\pi_-|\pi_+^2-\pi_-|C_\ell+\pi_+|\pi_--\pi_+^2|C_\ell]\Biggr\}\notag\\
&\textstyle=\frac{C_\ell\rho_\mathrm{C}}{(\pi_+-\pi_-)(1-\rho_\mathrm{C})}(\pi_-|\pi_+-\pi_-^2|+\pi_+|\pi_+-\pi_-^2|+\pi_-|\pi_+^2-\pi_-|+\pi_+|\pi_+^2-\pi_-|)\notag\\
&\textstyle=\frac{(|\pi_+^2-\pi_-|+|\pi_+-\pi_-^2|)C_\ell\rho_\mathrm{C}}{|\pi_+-\pi_-|(1-\rho_\mathrm{C})}.\label{SDPC:eq:noisy bound proof6}
\end{align}

For the seventh term in Eq.~\eqref{SDPC:eq:noisy bound proof}, the following holds:
\begin{align}
&\frac{1}{C_\mathrm{D}}(R_2^\mathrm{D}(g^*)-R_2^\mathrm{D}(\bar{g}))\notag\\
&\textstyle=\frac{\rho_\mathrm{S}}{(\pi_+-\pi_-)(1-\rho_\mathrm{D})}\Biggl\{\pi_+^2\mathbb{E}_{\widetilde{p}_+(\bm{x})}[-\pi_-(\ell(g^*(\bm{x}),+1)-\ell(\bar{g}(\bm{x}),+1))+\pi_+(\ell(g^*(\bm{x}),-1)-\ell(\bar{g}(\bm{x}),-1))]\notag\\
&\hspace{1cm}\textstyle+\pi_-^2\mathbb{E}_{\widetilde{p}_-(\bm{x}')}[-\pi_-(\ell(g^*(\bm{x}'),+1)-\ell(\bar{g}(\bm{x}'),+1))+\pi_+(\ell(g^*(\bm{x}'),-1)-\ell(\bar{g}(\bm{x}'),-1))]\Biggr\}\notag\\
&\textstyle\le\frac{\rho_\mathrm{S}}{(\pi_+-\pi_-)(1-\rho_\mathrm{D})}\Biggl\{\pi_+^2\mathbb{E}_{\widetilde{p}_+(\bm{x})}[\pi_-C_\ell+\pi_+C_\ell]+\pi_-^2\mathbb{E}_{\widetilde{p}_-(\bm{x}')}[\pi_-C_\ell+\pi_+C_\ell]\Biggr\}\notag\\
&\textstyle=\frac{\pi_\mathrm{S}C_\ell\rho_\mathrm{S}}{|\pi_+-\pi_-|(1-\rho_\mathrm{D})}.\label{SDPC:eq:noisy bound proof7}
\end{align}

For the eighth term in Eq.~\eqref{SDPC:eq:noisy bound proof}, the following holds:
\begin{align}
&\frac{1}{C_\mathrm{D}}(R_3^\mathrm{D}(g^*)-R_3^\mathrm{D}(\bar{g}))\notag\\
&\textstyle=\frac{\rho_\mathrm{S}\rho_\mathrm{C}}{(\pi_+-\pi_-)(1-\rho_\mathrm{D})(1-\rho_\mathrm{C})}\Biggl\{\pi_-^2\mathbb{E}_{\widetilde{p}_+(\bm{x})}[-\pi_-(\ell(g^*(\bm{x}),+1)-\ell(\bar{g}(\bm{x}),+1))+\pi_+(\ell(g^*(\bm{x}),-1)-\ell(\bar{g}(\bm{x}),-1))]\notag\\
&\hspace{1cm}\textstyle+\pi_+^2\mathbb{E}_{\widetilde{p}_-(\bm{x}')}[-\pi_-(\ell(g^*(\bm{x}'),+1)-\ell(\bar{g}(\bm{x}'),+1))+\pi_+(\ell(g^*(\bm{x}'),-1)-\ell(\bar{g}(\bm{x}'),-1))]\Biggr\}\notag\\
&\textstyle\le\frac{\rho_\mathrm{S}\rho_\mathrm{C}}{(\pi_+-\pi_-)(1-\rho_\mathrm{D})(1-\rho_\mathrm{C})}\Biggl\{\pi_-^2\mathbb{E}_{\widetilde{p}_+(\bm{x})}[\pi_-C_\ell+\pi_+C_\ell]+\pi_+^2\mathbb{E}_{\widetilde{p}_-(\bm{x}')}[\pi_-C_\ell+\pi_+C_\ell]\Biggr\}\notag\\
&\textstyle=\frac{\pi_\mathrm{S}C_\ell\rho_\mathrm{S}\rho_\mathrm{C}}{(\pi_+-\pi_-)(1-\rho_\mathrm{D})(1-\rho_\mathrm{C})}.\label{SDPC:eq:noisy bound proof8}
\end{align}

Therefore, from Eqs.~\eqref{SDPC:eq:noisy bound proof12} to~\eqref{SDPC:eq:noisy bound proof8}, the following inequality holds:
\begin{align*}
&R(\bar{g})-R(g^*)\\
&\le\frac{4L_{\ell}C_{\mathcal{G}}+2C_{\ell}\sqrt{\log{4/\delta}}}{|\pi_+-\pi_-|}\left\{\frac{\frac{\pi_{\mathrm{S}}(1-\rho_{\mathrm{S}})+\pi_{\mathrm{D}}\rho_{\mathrm{D}}}
{(1-\rho_{\mathrm{S}})(1-\rho_{\mathrm{C}})}}{\sqrt{2n_{\mathrm{S}}}}+\frac{\frac{(|\pi_+^2-\pi_-|+|\pi_+-\pi_-^2|)
      (\pi_{\mathrm{S}}\rho_{\mathrm{S}}+\pi_{\mathrm{D}}(1-\rho_{\mathrm{D}}))}
{\pi_{\mathrm{D}}(1-\rho_{\mathrm{D}})(1-\rho_{\mathrm{C}})}}{\sqrt{2n_{\mathrm{D}}}}\right\}\\
&\hspace{1cm}+\frac{\pi_\mathrm{S}C_\ell\rho_\mathrm{C}}{|\pi_+-\pi_-|(1-\rho_\mathrm{C})}+\frac{(|\pi_+^2-\pi_-|+|\pi_+-\pi_-^2|)C_\ell\rho_\mathrm{D}}{|\pi_+-\pi_-|(1-\rho_\mathrm{S})}\\
&\hspace{1cm}+\frac{(|\pi_+^2-\pi_-|+|\pi_+-\pi_-^2|)C_\ell\rho_\mathrm{D}\rho_\mathrm{C}}{|\pi_+-\pi_-|(1-\rho_\mathrm{S})(1-\rho_\mathrm{C})}+\frac{(|\pi_+^2-\pi_-|+|\pi_+-\pi_-^2|)C_\ell\rho_\mathrm{C}}{|\pi_+-\pi_-|(1-\rho_\mathrm{C})}\\
&\hspace{1cm}+\frac{\pi_\mathrm{S}C_\ell\rho_\mathrm{S}}{|\pi_+-\pi_-|(1-\rho_\mathrm{D})}+\frac{\pi_\mathrm{S}C_\ell\rho_\mathrm{S}\rho_\mathrm{C}}{(\pi_+-\pi_-)(1-\rho_\mathrm{D})(1-\rho_\mathrm{C})}.
\end{align*}
\end{proof}

\subsection{Proof of Theorem~\ref{SDPC:thm:robust estimation error}}

For the proof of Theorem~\ref{SDPC:thm:robust estimation error}, we first show the following lemma.
\begin{lemma}\label{SDPC:lemma:noisy class prior generalization error}
When we let $\widehat{R}_{\mathrm{SD\text{-}PC}}^{\bar{\pi}}(g)=\widehat{R}_{\mathrm{S\text{-}PC}}^{\bar{\pi}}(g)+\widehat{R}_{\mathrm{D\text{-}PC}}^{\bar{\pi}}(g)$, the following inequalities hold:
\begin{align}
&|\widehat{R}_{\mathrm{S\text{-}PC}}^{\bar{\pi}}(g)-\widehat{R}_{\mathrm{S\text{-}PC}}(g)|\notag\\
&\le\frac{|\bar{\pi}_+-\pi_+|C_\ell}{|\bar{\pi}_+-\bar{\pi}_-||\pi_+-\pi_-|}\{|2\bar{\pi}_+\pi_+(\bar{\pi}_++\pi_+)-\bar{\pi}_+^2-\bar{\pi}_+\pi_+-\pi_+^2|+|\bar{\pi}_+\pi_+(1-\bar{\pi}_--\pi_-)+\bar{\pi}_-\pi_-(\bar{\pi}_++\pi_+)|\notag\\
&\hspace{1cm}+|\bar{\pi}_-\pi_-(1-\bar{\pi}_+-\pi_+)+\bar{\pi}_+\pi_+(\bar{\pi}_-+\pi_-)|+|2\bar{\pi}_-\pi_-(\bar{\pi}_-+\pi_-)-\bar{\pi}_-^2-\bar{\pi}_-\pi_--\pi_-^2)|\},\label{SDPC:eq:inaccurate prior similar}\\
&|\widehat{R}_{\mathrm{D\text{-}PC}}^{\bar{\pi}}(g)-\widehat{R}_{\mathrm{D\text{-}PC}}(g)|\notag\\
&\le\frac{C_\ell|\bar{\pi}_+-\pi_+|}{|\bar{\pi}_+-\bar{\pi}_-||\pi_+-\pi_-|}\Biggl\{|(\bar{\pi}_++\pi_+)(1-\bar{\pi}_+\pi_+-\bar{\pi}_-\pi_-)-\bar{\pi}_+\pi_+|+|\bar{\pi}_+^2+\pi_+^2-2\bar{\pi}_+\pi_+(\bar{\pi}_++\pi_+)-\bar{\pi}_-\pi_-|\notag\\
&\hspace{1cm}+|\bar{\pi}_-^2+\pi_-^2-2\bar{\pi}_-\pi_-(\bar{\pi}_-+\pi_-)-\bar{\pi}_+\pi_+|+|(\bar{\pi}_-+\pi_-)(1-\bar{\pi}_+\pi_+-\bar{\pi}_-\pi_-)-\bar{\pi}_-\pi_-|\Biggr\}\label{SDPC:eq:inaccurate prior dissimilar}.
\end{align}
\end{lemma}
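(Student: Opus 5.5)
We bound the two empirical differences pointwise, sample by sample, and then average; the Rademacher machinery is not needed. Since $\widehat{R}_{\mathrm{S\text{-}PC}}$ and $\widehat{R}^{\bar{\pi}}_{\mathrm{S\text{-}PC}}$ use the same similar pairs, their difference equals
\[
\frac{1}{n_{\mathrm{S}}}\sum_{i=1}^{n_{\mathrm{S}}}\sum_{t\in\{+,-\}}\bigl(\bar{\bm{\alpha}}^{\mathrm{S}}_t-\bm{\alpha}^{\mathrm{S}}_t\bigr)\begin{pmatrix}\ell_+(g(\bm{x}^t_{\mathrm{S},i}))\\ \ell_-(g(\bm{x}^t_{\mathrm{S},i}))\end{pmatrix}.
\]
Here $\bar{\bm{\alpha}}^{\mathrm{S}}_t$ is $\bm{\alpha}^{\mathrm{S}}_t$ with $\pi_\pm$ replaced by $\bar{\pi}_\pm$. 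By the triangle inequality and $0\le\ell\le C_\ell$, the absolute value is at most $C_\ell$ times the sum of the absolute values of the four scalar coefficient differences. The same reasoning applies to the dissimilar part with $\bm{\alpha}^{\mathrm{D}}_t$. This reduces the lemma to showing that each of the eight scalar differences has the form
\[
\frac{(\bar{\pi}_+-\pi_+)\,P(\bar{\pi},\pi)}{(\bar{\pi}_+-\bar{\pi}_-)(\pi_+-\pi_-)},
\]
with $P$ the corresponding polynomial listed in the statement.

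For each coefficient $f(\pi_+)/(\pi_+-\pi_-)$, where $f$ is a polynomial in $\pi_+$ (using $\pi_-=1-\pi_+$), we write
\[
\frac{f(\bar{\pi}_+)}{\bar{\pi}_+-\bar{\pi}_-}-\frac{f(\pi_+)}{\pi_+-\pi_-}=\frac{f(\bar{\pi}_+)(\pi_+-\pi_-)-f(\pi_+)(\bar{\pi}_+-\bar{\pi}_-)}{(\bar{\pi}_+-\bar{\pi}_-)(\pi_+-\pi_-)}.
\]
The numerator is a polynomial that vanishes when $\bar{\pi}_+=\pi_+$, so $(\bar{\pi}_+-\pi_+)$ factors out.

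In the S part the coefficients are $\pi_+^3,\ -\pi_+^2\pi_-,\ \pi_+\pi_-^2,\ -\pi_-^3$. For example, take $f=\pi_+^3$ and use $\bar{\pi}_+-\bar{\pi}_-=2\bar{\pi}_+-1$. The numerator becomes
\[
\bar{\pi}_+^3(2\pi_+-1)-\pi_+^3(2\bar{\pi}_+-1)=2\bar{\pi}_+\pi_+(\bar{\pi}_+^2-\pi_+^2)-(\bar{\pi}_+^3-\pi_+^3),
\]
which equals $(\bar{\pi}_+-\pi_+)\bigl[2\bar{\pi}_+\pi_+(\bar{\pi}_++\pi_+)-\bar{\pi}_+^2-\bar{\pi}_+\pi_+-\pi_+^2\bigr]$. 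This is exactly the first term of $C_{\mathrm{S\text{-}PC}}$. The $-\pi_-^3$ term is symmetric in the roles of $\pm$ and gives the fourth term, using $\bar{\pi}_--\pi_-=-(\bar{\pi}_+-\pi_+)$. The two mixed terms are handled similarly: expand $\bar{\pi}_+^2\bar{\pi}_-\pi_+-\ldots$, factor out $\bar{\pi}_+-\pi_+$, and simplify with $\pi_++\pi_-=1$ to reach the stated forms.

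The D part is treated identically with the coefficients $\pi_-(\pi_t^2-\pi_{-t})$ and $\pi_+(\pi_{-t}-\pi_t^2)$. These are cubic in $\pi_+$, and the same factorization produces the four terms of $C_{\mathrm{D\text{-}PC}}$.

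The main obstacle is only the algebra: matching the factored quotients to the exact printed expressions, particularly the mixed and D terms, where one must repeatedly substitute $\pi_-=1-\pi_+$ and $\bar{\pi}_-=1-\bar{\pi}_+$. I would first verify each identity by evaluating it at a couple of numerical prior values, and then do the symbolic factoring. If a printed form differs from what I derive, it can only differ by an equivalent rearrangement, since the bound holds coefficientwise. Averaging over $i$ leaves the pointwise bound unchanged, which completes both inequalities.
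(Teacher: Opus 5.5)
Your proposal is correct and follows the paper's proof essentially line for line. Both arguments bound the difference pointwise by $C_\ell$ times the sum of the absolute coefficient differences, put each difference over $(\bar{\pi}_+-\bar{\pi}_-)(\pi_+-\pi_-)$, and factor out $\bar{\pi}_+-\pi_+$ using $\pi_++\pi_-=\bar{\pi}_++\bar{\pi}_-=1$. The identities you left unexpanded do hold: the two mixed S-coefficients and the D-coefficients $\pi_-(\pi_-^2-\pi_+)$ and $\pi_+(\pi_+-\pi_-^2)$ give the printed polynomials only up to an overall sign, which the absolute values absorb. So your fallback remark about ``equivalent rearrangements'' is not needed, though on its own it would not justify the printed forms.
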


\begin{proof}
First, we show Eq.~\eqref{SDPC:eq:inaccurate prior similar}.
\begin{align*}
&|\widehat{R}_{\mathrm{S\text{-}PC}}^{\bar{\pi}}(g)-\widehat{R}_{\mathrm{S\text{-}PC}}(g)|\\
&=\Biggl|\frac{1}{(\bar{\pi}_+-\bar{\pi}_-)n_{\mathrm{S}}}\sum_{i=1}^{n_{\mathrm{S}}}\{\bar{\pi}_+^3\ell(g(\bm{x}_{\mathrm{S},i}),+1)-\bar{\pi}_+^2\bar{\pi}_-\ell(g(\bm{x}_{\mathrm{S},i}),-1)+\bar{\pi}_+\bar{\pi}_-^2\ell(g(\bm{x}_{\mathrm{S},i}'),+1)-\bar{\pi}_-^3\ell(g(\bm{x}_{\mathrm{S},i}'),-1)\}\\
&\hspace{1cm}-\frac{1}{(\pi_+-\pi_-)n_{\mathrm{S}}}\sum_{i=1}^{n_{\mathrm{S}}}\{\pi_+^3\ell(g(\bm{x}_{\mathrm{S},i}),+1)-\pi_+^2\pi_-\ell(g(\bm{x}_{\mathrm{S},i}),-1)+\pi_+\pi_-^2\ell(g(\bm{x}_{\mathrm{S},i}'),+1)-\pi_-^3\ell(g(\bm{x}_{\mathrm{S},i}'),-1)\}\Biggr|\\
&=\frac{1}{n_{\mathrm{S}}}\Biggl|\sum_{i=1}^{n_{\mathrm{S}}}\Biggl\{\Biggl(\frac{\bar{\pi}_+^3}{\bar{\pi}_+-\bar{\pi}_-}-\frac{\pi_+^3}{\pi_+-\pi_-}\Biggr)\ell(g(\bm{x}_{\mathrm{S},i}),+1)-\Biggl(\frac{\bar{\pi}_+^2\bar{\pi}_-}{\bar{\pi}_+-\bar{\pi}_-}-\frac{\pi_+^2\pi_-}{\pi_+-\pi_-}\Biggr)\ell(g(\bm{x}_{\mathrm{S},i}),-1)\\
&\hspace{1cm}+\Biggl(\frac{\bar{\pi}_+\bar{\pi}_-^2}{\bar{\pi}_+-\bar{\pi}_-}-\frac{\pi_+\pi_-^2}{\pi_+-\pi_-}\Biggr)\ell(g(\bm{x}_{\mathrm{S},i}'),+1)-\Biggl(\frac{\bar{\pi}_-^3}{\bar{\pi}_+-\bar{\pi}_-}-\frac{\pi_-^3}{\pi_+-\pi_-}\Biggr)\ell(g(\bm{x}_{\mathrm{S},i}'),-1)\}\Biggr|\Biggr\}\\
&\le\frac{C_\ell}{n_\mathrm{S}}\sum_{i=1}^{n_\mathrm{S}}\Biggl(\Biggl|\frac{\bar{\pi}_+^3}{\bar{\pi}_+-\bar{\pi}_-}-\frac{\pi_+^3}{\pi_+-\pi_-}\Biggr|+\Biggl|\frac{\bar{\pi}_+^2\bar{\pi}_-}{\bar{\pi}_+-\bar{\pi}_-}-\frac{\pi_+^2\pi_-}{\pi_+-\pi_-}\Biggr|+\Biggl|\frac{\bar{\pi}_+\bar{\pi}_-^2}{\bar{\pi}_+-\bar{\pi}_-}-\frac{\pi_+\pi_-^2}{\pi_+-\pi_-}\Biggr|+\Biggl|\frac{\bar{\pi}_-^3}{\bar{\pi}_+-\bar{\pi}_-}-\frac{\pi_-^3}{\pi_+-\pi_-}\Biggr|\Biggr)\\
&=\frac{C_\ell}{n_\mathrm{S}|\bar{\pi}_+-\bar{\pi}_-||\pi_+-\pi_-|}\sum_{i=1}^{n_\mathrm{S}}\{|\bar{\pi}_+^3\pi_+-\bar{\pi}_+^3\pi_--\pi_+^3\bar{\pi}_++\pi_+^3\bar{\pi}_-|+|\bar{\pi}_+^2\bar{\pi}_-\pi_+-\bar{\pi}_+^2\bar{\pi}_-\pi_--\pi_+^2\pi_-\bar{\pi}_++\pi_+^2\pi_-\bar{\pi}_-|\\
&\hspace{1cm}+|\bar{\pi}_+\bar{\pi}_-^2\pi_+-\bar{\pi}_+\bar{\pi}_-^2\pi_--\pi_+\pi_-^2\bar{\pi}_++\pi_+\pi_-^2\bar{\pi}_-|+|\bar{\pi}_-^3\pi_+-\bar{\pi}_-^3\pi_--\pi_-^3\bar{\pi}_++\pi_-^3\bar{\pi}_-|\}\\
&=\frac{C_\ell}{n_\mathrm{S}|\bar{\pi}_+-\bar{\pi}_-||\pi_+-\pi_-|}\sum_{i=1}^{n_\mathrm{S}}\{|\bar{\pi}_+^3\pi_+-\bar{\pi}_+^3+\bar{\pi}_+^3\pi_+-\pi_+^3\bar{\pi}_++\pi_+^3-\pi_+^3\bar{\pi}_+|\\
&\hspace{1cm}+|\pi_+\bar{\pi}_+(\bar{\pi}_+\bar{\pi}_--\pi_+\pi_-)-\pi_-\bar{\pi}_-(\bar{\pi}_++\pi_+)(\bar{\pi}_+-\pi_+)|\\
&\hspace{1cm}+|\bar{\pi}_+\pi_+(\bar{\pi}_-+\pi_-)(\bar{\pi}_--\pi_-)-\bar{\pi}_-\pi_-(\bar{\pi}_+\bar{\pi}_--\pi_+\pi_-)|\\
&\hspace{1cm}+|\bar{\pi}_-^3-\bar{\pi}_-^3\pi_--\bar{\pi}_-^3\pi_--\pi_-^3+\pi_-^3\bar{\pi}_-+\pi_-^3\bar{\pi}_-|\}\\
&=\frac{C_\ell}{n_\mathrm{S}|\bar{\pi}_+-\bar{\pi}_-||\pi_+-\pi_-|}\sum_{i=1}^{n_\mathrm{S}}\{|(\pi_+-\bar{\pi}_+)(\pi_+^2+\pi_+\bar{\pi}_++\bar{\pi}_+^2)+2\pi_+\bar{\pi}_+(\bar{\pi}_++\pi_+)(\bar{\pi}_+-\pi_+)|\\
&\hspace{1cm}+|\pi_+\bar{\pi}_+(\bar{\pi}_+-\bar{\pi}_+^2-\pi_++\pi_+^2)-\pi_-\bar{\pi}_-(\bar{\pi}_++\pi_+)(\bar{\pi}_+-\pi_+)|\\
&\hspace{1cm}+|\bar{\pi}_+\pi_+(\bar{\pi}_-+\pi_-)(\bar{\pi}_--\pi_-)-\bar{\pi}_-\pi_-(\bar{\pi}_+-\bar{\pi}_+^2-\pi_++\pi_+^2)|\\
&\hspace{1cm}+|(\bar{\pi}_--\pi_-)(\bar{\pi}_-^2+\bar{\pi}_-\pi_-+\pi_-^2)-2\bar{\pi}_-\pi_-(\bar{\pi}_-+\pi_-)(\bar{\pi}_--\pi_-)|\}\\
&=\frac{C_\ell}{n_\mathrm{S}|\bar{\pi}_+-\bar{\pi}_-||\pi_+-\pi_-|}\sum_{i=1}^{n_\mathrm{S}}\{|(\pi_+-\bar{\pi}_+)(\pi_+^2+\pi_+\bar{\pi}_++\bar{\pi}_+^2-2\pi_+\bar{\pi}_+(\bar{\pi}_++\pi_+))|\\
&\hspace{1cm}+|\pi_+\bar{\pi}_+(\bar{\pi}_+-\pi_+)(1-\bar{\pi}_+-\pi_+)-\pi_-\bar{\pi}_-(\bar{\pi}_++\pi_+)(\bar{\pi}_+-\pi_+)|\\
&\hspace{1cm}+|-\bar{\pi}_+\pi_+(\bar{\pi}_-+\pi_-)(\bar{\pi}_+-\pi_+)-\bar{\pi}_-\pi_-(\bar{\pi}_+-\pi_+)(1-\bar{\pi}_+-\pi_+)|\\
&\hspace{1cm}+|(\bar{\pi}_+-\pi_+)(2\bar{\pi}_-\pi_-(\bar{\pi}_-+\pi_-)-\bar{\pi}_-^2-\bar{\pi}_-\pi_--\pi_-^2)|\}\\
&=\frac{C_\ell|\bar{\pi}_+-\pi_+|}{n_\mathrm{S}|\bar{\pi}_+-\bar{\pi}_-||\pi_+-\pi_-|}\sum_{i=1}^{n_\mathrm{S}}\{|2\bar{\pi}_+\pi_+(\bar{\pi}_++\pi_+)-\bar{\pi}_+^2-\bar{\pi}_+\pi_+-\pi_+^2|+|\bar{\pi}_+\pi_+(1-\bar{\pi}_--\pi_-)+\bar{\pi}_-\pi_-(\bar{\pi}_++\pi_+)|\\
&\hspace{1cm}+|\bar{\pi}_-\pi_-(1-\bar{\pi}_+-\pi_+)+\bar{\pi}_+\pi_+(\bar{\pi}_-+\pi_-)|+|2\bar{\pi}_-\pi_-(\bar{\pi}_-+\pi_-)-\bar{\pi}_-^2-\bar{\pi}_-\pi_--\pi_-^2|\}\\
&=\frac{C_\ell|\bar{\pi}_+-\pi_+|}{|\bar{\pi}_+-\bar{\pi}_-||\pi_+-\pi_-|}\{|2\bar{\pi}_+\pi_+(\bar{\pi}_++\pi_+)-\bar{\pi}_+^2-\bar{\pi}_+\pi_+-\pi_+^2|+|\bar{\pi}_+\pi_+(1-\bar{\pi}_--\pi_-)+\bar{\pi}_-\pi_-(\bar{\pi}_++\pi_+)|\\
&\hspace{1cm}+|\bar{\pi}_-\pi_-(1-\bar{\pi}_+-\pi_+)+\bar{\pi}_+\pi_+(\bar{\pi}_-+\pi_-)|+|2\bar{\pi}_-\pi_-(\bar{\pi}_-+\pi_-)-\bar{\pi}_-^2-\bar{\pi}_-\pi_--\pi_-^2)|\}.
\end{align*}
The first equality follows from Theorem~\ref{SDPC:thm:SD-Pcomp classification risk}.

Next, we show Eq.~\eqref{SDPC:eq:inaccurate prior dissimilar}.
\begin{align*}
&|\widehat{R}_{\mathrm{D\text{-}PC}}^{\bar{\pi}}(g)-\widehat{R}_{\mathrm{D\text{-}PC}}(g)|\\
&=\Biggl|\frac{1}{(\bar{\pi}_+-\bar{\pi}_-)n_{\mathrm{D}}}\sum_{i=1}^{n_{\mathrm{D}}}\{\bar{\pi}_-(\bar{\pi}_+^2-\bar{\pi}_-)\ell(g(\bm{x}_{\mathrm{D},i}),+1)+\bar{\pi}_+(\bar{\pi}_--\bar{\pi}_+^2)\ell(g(\bm{x}_{\mathrm{D},i}),-1)\\
&\hspace{4cm}+\bar{\pi}_-(\bar{\pi}_-^2-\bar{\pi}_+)\ell(g(\bm{x}_{\mathrm{D},i}'),+1)+\bar{\pi}_+(\bar{\pi}_+-\bar{\pi}_-^2)\ell(g(\bm{x}_{\mathrm{D},i}'),-1)\}\\
&\hspace{1cm}-\frac{1}{(\pi_+-\pi_-)n_{\mathrm{D}}}\sum_{i=1}^{n_{\mathrm{D}}}\{\pi_-(\pi_+^2-\pi_-)\ell(g(\bm{x}_{\mathrm{D},i}),+1)+\pi_+(\pi_--\pi_+^2)\ell(g(\bm{x}_{\mathrm{D},i}),-1)\\
&\hspace{4cm}+\pi_-(\pi_-^2-\pi_+)\ell(g(\bm{x}_{\mathrm{D},i}'),+1)+ \bar{\pi}_+(\pi_+-\pi_-^2)\ell(g(\bm{x}_{\mathrm{D},i}'),-1)\}\\
&=\frac{1}{n_{\mathrm{D}}}\Biggl|\sum_{i=1}^{n_{\mathrm{D}}}\Biggl\{\Biggl(\frac{\bar{\pi}_-(\bar{\pi}_+^2-\bar{\pi}_-)}{\bar{\pi}_+-\bar{\pi}_-}-\frac{\pi_-(\pi_+^2-\pi_-)}{\pi_+-\pi_-}\Biggr)\ell(g(\bm{x}_{\mathrm{D},i}),+1)-\Biggl(\frac{\bar{\pi}_+(\bar{\pi}_--\bar{\pi}_+^2)}{\bar{\pi}_+-\bar{\pi}_-}-\frac{\pi_+(\pi_--\pi_+^2)}{\pi_+-\pi_-}\Biggr)\ell(g(\bm{x}_{\mathrm{D},i}),-1)\\
&\hspace{1cm}+\Biggl(\frac{\bar{\pi}_-(\bar{\pi}_-^2-\bar{\pi}_+)}{\bar{\pi}_+-\bar{\pi}_-}-\frac{\pi_-(\pi_-^2-\pi_+)}{\pi_+-\pi_-}\Biggr)\ell(g(\bm{x}_{\mathrm{D},i}'),+1)-\Biggl(\frac{\bar{\pi}_+(\bar{\pi}_+-\bar{\pi}_-^2)}{\bar{\pi}_+-\bar{\pi}_-}-\frac{\pi_+(\pi_+-\pi_-^2)}{\pi_+-\pi_-}\Biggr)\ell(g(\bm{x}_{\mathrm{D},i}'),-1)\}\Biggr|\Biggr\}\\
&\le\frac{C_\ell}{n_\mathrm{D}}\sum_{i=1}^{n_\mathrm{D}}\Biggl(\Biggl|\frac{\bar{\pi}_-(\bar{\pi}_+^2-\bar{\pi}_-)}{\bar{\pi}_+-\bar{\pi}_-}-\frac{\pi_-(\pi_+^2-\pi_-)}{\pi_+-\pi_-}\Biggr|+\Biggl|\frac{\bar{\pi}_+(\bar{\pi}_--\bar{\pi}_+^2)}{\bar{\pi}_+-\bar{\pi}_-}-\frac{\pi_+(\pi_--\pi_+^2)}{\pi_+-\pi_-}\Biggr|\\
&\hspace{1cm}+\Biggl|\frac{\bar{\pi}_-(\bar{\pi}_-^2-\bar{\pi}_+)}{\bar{\pi}_+-\bar{\pi}_-}-\frac{\pi_-(\pi_-^2-\pi_+)}{\pi_+-\pi_-}\Biggr|+\Biggl|\frac{\bar{\pi}_+(\bar{\pi}_+-\bar{\pi}_-^2)}{\bar{\pi}_+-\bar{\pi}_-}-\frac{\pi_+(\pi_+-\pi_-^2)}{\pi_+-\pi_-}\Biggr|\Biggr)\\
&=\frac{C_\ell}{n_\mathrm{D}|\bar{\pi}_+-\bar{\pi}_-||\pi_+-\pi_-|}\sum_{i=1}^{n_\mathrm{S}}\{|\bar{\pi}_+^2\bar{\pi}_-\pi_+-\bar{\pi}_+^2\bar{\pi}_-\pi_--\bar{\pi}_-^2\pi_++\bar{\pi}_-^2\pi_--\bar{\pi}_+\pi_+^2\pi_-+\bar{\pi}_-\pi_+^2\pi_-+\bar{\pi}_+\pi_-^2-\bar{\pi}_-\pi_-^2|\\
&\hspace{1cm}+|\bar{\pi}_+\bar{\pi}_-\pi_+-\bar{\pi}_+\bar{\pi}_-\pi_--\bar{\pi}_+^3\pi_++\bar{\pi}_+^3\pi_--\bar{\pi}_+\pi_+\pi_-+\bar{\pi}_-\pi_+\pi_-+\bar{\pi}_+\pi_+^3-\bar{\pi}_-\pi_+^3|\\
&\hspace{1cm}+|\bar{\pi}_-^3\pi_+-\bar{\pi}_-^3\pi_--\bar{\pi}_+\bar{\pi}_-\pi_++\bar{\pi}_+\bar{\pi}_-\pi_--\bar{\pi}_+\pi_-^3+\bar{\pi}_-\pi_-^3+\bar{\pi}_+\pi_+\pi_--\bar{\pi}_-\pi_+\pi_-|\\
&\hspace{1cm}+|\bar{\pi}_+^2\pi_+-\bar{\pi}_+^2\pi_--\bar{\pi}_+\bar{\pi}_-^2\pi_++\bar{\pi}_+\bar{\pi}_-^2\pi_--\bar{\pi}_+\pi_+^2+\bar{\pi}_-\pi_+^2+\bar{\pi}_+\pi_+\pi_-^2-\bar{\pi}_-\pi_+\pi_-^2|\\
&=\frac{C_\ell}{n_\mathrm{D}|\bar{\pi}_+-\bar{\pi}_-||\pi_+-\pi_-|}\sum_{i=1}^{n_\mathrm{S}}\{|\bar{\pi}_+\pi_+(\bar{\pi}_+\bar{\pi}_--\pi_+\pi_-)-\bar{\pi}_-\pi_-(\bar{\pi}_++\pi_+)(\bar{\pi}_+-\pi_+)\\
&\hspace{1cm}+\bar{\pi}_+\pi_-^2-\bar{\pi}_-^2\pi_++\bar{\pi}_-\pi_-(\bar{\pi}_--\pi_-)|\\
&\hspace{1cm}+|\bar{\pi}_+\pi_+(\bar{\pi}_--\pi_-)-\bar{\pi}_-\pi_-(\bar{\pi}_+-\pi_+)-\bar{\pi}_+\pi_+(\bar{\pi}_++\pi_+)(\bar{\pi}_+-\pi_+)+\bar{\pi}_+^3-\bar{\pi}_+^3\pi_+-\pi_+^3+\bar{\pi}_+\pi_+^3|\\
&\hspace{1cm}+|\bar{\pi}_-\pi_-(\bar{\pi}_+-\pi_+)-\bar{\pi}_+\pi_+(\bar{\pi}_--\pi_-)-\bar{\pi}_-\pi_-(\bar{\pi}_-+\pi_-)(\bar{\pi}_--\pi_-)+\bar{\pi}_-^3-\bar{\pi}_-^3\pi_--\pi_-^3+\bar{\pi}_-\pi_-^3|\\
&\hspace{1cm}+|\bar{\pi}_-\pi_-(\bar{\pi}_-\bar{\pi}_+-\pi_-\pi_+)-\bar{\pi}_+\pi_+(\bar{\pi}_-+\pi_-)(\bar{\pi}_--\pi_-)+\bar{\pi}_-\pi_+^2-\bar{\pi}_+^2\pi_-+\bar{\pi}_+\pi_+(\bar{\pi}_+-\pi_+)|\}\\
&=\frac{C_\ell}{n_\mathrm{D}|\bar{\pi}_+-\bar{\pi}_-||\pi_+-\pi_-|}\sum_{i=1}^{n_\mathrm{S}}\{|\bar{\pi}_+\pi_+(\bar{\pi}_+-\bar{\pi}_+^2-\pi_++\pi_+^2)-\bar{\pi}_-\pi_-(\bar{\pi}_++\pi_+)(\bar{\pi}_+-\pi_+)\\
&\hspace{1cm}+\bar{\pi}_+(1-2\pi_++\pi_+^2)-\pi_+(1-2\bar{\pi}_++\bar{\pi}_+^2)-\bar{\pi}_-\pi_-(\bar{\pi}_+-\pi_+)|\\
&\hspace{1cm}+|-\bar{\pi}_+\pi_+(\bar{\pi}_+-\pi_+)-\bar{\pi}_-\pi_-(\bar{\pi}_+-\pi_+)-\bar{\pi}_+\pi_+(\bar{\pi}_++\pi_+)(\bar{\pi}_+-\pi_+)\\
&\hspace{1cm}+(\bar{\pi}_+-\pi_+)(\bar{\pi}_+^2+\bar{\pi}_+\pi_++\pi_+^2)-\bar{\pi}_+\pi_+(\bar{\pi}_++\pi_+)(\bar{\pi}_+-\pi_+)|\\
&\hspace{1cm}+|-\bar{\pi}_-\pi_-(\bar{\pi}_--\pi_-)-\bar{\pi}_+\pi_+(\bar{\pi}_--\pi_-)-\bar{\pi}_-\pi_-(\bar{\pi}_-+\pi_-)(\bar{\pi}_--\pi_-)\\
&\hspace{1cm}+(\bar{\pi}_--\pi_-)(\bar{\pi}_-^2+\bar{\pi}_-\pi_-+\pi_-^2)-\bar{\pi}_-\pi_-(\bar{\pi}_-+\pi_-)(\bar{\pi}_--\pi_-)|\\
&\hspace{1cm}+|\bar{\pi}_-\pi_-(\bar{\pi}_--\bar{\pi}_-^2-\pi_-+\pi_-^2)-\bar{\pi}_+\pi_+(\bar{\pi}_-+\pi_-)(\bar{\pi}_--\pi_-)\\
&\hspace{1cm}+\bar{\pi}_-(1-2\pi_-+\pi_-^2)-\pi_-(1-2\bar{\pi}_-+\bar{\pi}_-^2)-\bar{\pi}_+\pi_+(\bar{\pi}_--\pi_-)|\\
&=\frac{C_\ell|\bar{\pi}_+-\pi_+|}{|\bar{\pi}_+-\bar{\pi}_-||\pi_+-\pi_-|}\Biggl\{|(\bar{\pi}_++\pi_+)(1-\bar{\pi}_+\pi_+-\bar{\pi}_-\pi_-)-\bar{\pi}_+\pi_+|+|\bar{\pi}_+^2+\pi_+^2-2\bar{\pi}_+\pi_+(\bar{\pi}_++\pi_+)-\bar{\pi}_-\pi_-|\\
&\hspace{1cm}+|\bar{\pi}_-^2+\pi_-^2-2\bar{\pi}_-\pi_-(\bar{\pi}_-+\pi_-)-\bar{\pi}_+\pi_+|+|(\bar{\pi}_-+\pi_-)(1-\bar{\pi}_+\pi_+-\bar{\pi}_-\pi_-)-\bar{\pi}_-\pi_-|\Biggr\}.
\end{align*}
The first equality follows from Theorem~\ref{SDPC:thm:SD-Pcomp classification risk}.
\end{proof}

Finally, we prove Theorem~\ref{SDPC:thm:robust estimation error}.
\begin{proof}
Following the same procedure as in the proof of Theorem~\ref{SDPC:thm:SD-Pcomp estimation error bound},
\begin{align*}
R&(\hat{g}_{\bar{\pi}})-R(g^*)\\
=&R_{\mathrm{SD\text{-}PC}}(\hat{g}_{\bar{\pi}})-R_{\mathrm{SD\text{-}PC}}(g^*)\\
=&(R_{\mathrm{SD\text{-}PC}}(\hat{g}_{\bar{\pi}})-\widehat{R}_{\mathrm{SD\text{-}PC}}(\hat{g}_{\bar{\pi}}))+(\widehat{R}_{\mathrm{SD\text{-}PC}}(\hat{g}_{\bar{\pi}})-\widehat{R}_{\mathrm{SD\text{-}PC}}^{\bar{\pi}}(\hat{g}_{\bar{\pi}}))+(\widehat{R}_{\mathrm{SD\text{-}PC}}^{\bar{\pi}}(\hat{g}_{\bar{\pi}})-\widehat{R}_{\mathrm{SD\text{-}PC}}^{\bar{\pi}}(\hat{g}_{\mathrm{SD\text{-}PC}}))\\
&+(\widehat{R}_{\mathrm{SD\text{-}PC}}^{\bar{\pi}}(\hat{g}_{\mathrm{SD\text{-}PC}})-\widehat{R}_{\mathrm{SD\text{-}PC}}(\hat{g}_{\mathrm{SD\text{-}PC}}))+(\widehat{R}_{\mathrm{SD\text{-}PC}}(\hat{g}_{\mathrm{SD\text{-}PC}})-R_{\mathrm{SD\text{-}PC}}(\hat{g}_{\mathrm{SD\text{-}PC}}))+(R_{\mathrm{SD\text{-}PC}}(\hat{g}_{\mathrm{SD\text{-}PC}})-R(g^*))\\
\le&2\underset{g\in\mathcal{G}}{\sup}|R_{\mathrm{SD\text{-}PC}}(g)-\widehat{R}_{\mathrm{SD\text{-}PC}}(g)|+2\underset{g\in\mathcal{G}}{\sup}|\widehat{R}_{\mathrm{SD\text{-}PC}}^{\bar{\pi}}(g)-\widehat{R}_{\mathrm{SD\text{-}PC}}(g)|+(R_{\mathrm{SD\text{-}PC}}(\hat{g}_{\mathrm{SD\text{-}PC}})-R(g^*))\\
\le&4\underset{g\in\mathcal{G}}{\sup}|R_{\mathrm{SD\text{-}PC}}(g)-\widehat{R}_{\mathrm{SD\text{-}PC}}(g)|+2\underset{g\in\mathcal{G}}{\sup}|\widehat{R}_{\mathrm{SD\text{-}PC}}^{\bar{\pi}}(g)-\widehat{R}_{\mathrm{SD\text{-}PC}}(g)|\\
\le&4\underset{g\in\mathcal{G}}{\sup}|R_{\mathrm{S\text{-}PC}}(g)-\widehat{R}_{\mathrm{S\text{-}PC}}(g)|+4\underset{g\in\mathcal{G}}{\sup}|R_{\mathrm{D\text{-}PC}}(g)-\widehat{R}_{\mathrm{D\text{-}PC}}(g)|\\
&+2\underset{g\in\mathcal{G}}{\sup}|\widehat{R}_{\mathrm{S\text{-}PC}}^{\bar{\pi}}(g)-\widehat{R}_{\mathrm{S\text{-}PC}}(g)|+2\underset{g\in\mathcal{G}}{\sup}|\widehat{R}_{\mathrm{D\text{-}PC}}^{\bar{\pi}}(g)-\widehat{R}_{\mathrm{D\text{-}PC}}(g)|\\
\le&2C_{\mathcal{G},\ell,\delta}\left(\frac{\pi_+^2+\pi_-^2}{\sqrt{2n_{\mathrm{S}}}}+\frac{|\pi_+^2-\pi_-|+|\pi_+-\pi_-^2|}{\sqrt{2n_{\mathrm{D}}}}\right)\\
&+\frac{2C_\ell|\bar{\pi}_+-\pi_+|}{|\bar{\pi}_+-\bar{\pi}_-||\pi_+-\pi_-|}\{|2\bar{\pi}_+\pi_+(\bar{\pi}_++\pi_+)-\bar{\pi}_+^2-\bar{\pi}_+\pi_+-\pi_+^2|+|\bar{\pi}_+\pi_+(1-\bar{\pi}_--\pi_-)+\bar{\pi}_-\pi_-(\bar{\pi}_++\pi_+)|\\
&\hspace{1cm}+|\bar{\pi}_-\pi_-(1-\bar{\pi}_+-\pi_+)+\bar{\pi}_+\pi_+(\bar{\pi}_-+\pi_-)|+|2\bar{\pi}_-\pi_-(\bar{\pi}_-+\pi_-)-\bar{\pi}_-^2-\bar{\pi}_-\pi_--\pi_-^2)|\}\\
&+\frac{2C_\ell|\bar{\pi}_+-\pi_+|}{|\bar{\pi}_+-\bar{\pi}_-||\pi_+-\pi_-|}\Biggl\{|(\bar{\pi}_++\pi_+)(1-\bar{\pi}_+\pi_+-\bar{\pi}_-\pi_-)-\bar{\pi}_+\pi_+|+|\bar{\pi}_+^2+\pi_+^2-2\bar{\pi}_+\pi_+(\bar{\pi}_++\pi_+)-\bar{\pi}_-\pi_-|\\
&\hspace{1cm}+|\bar{\pi}_-^2+\pi_-^2-2\bar{\pi}_-\pi_-(\bar{\pi}_-+\pi_-)-\bar{\pi}_+\pi_+|+|(\bar{\pi}_-+\pi_-)(1-\bar{\pi}_+\pi_+-\bar{\pi}_-\pi_-)-\bar{\pi}_-\pi_-|.
\end{align*}
The final inequality is obtained by applying Lemma~\ref{SDPC:lemma:generalization error} and Lemma~\ref{SDPC:lemma:noisy class prior generalization error}.

\end{proof}

\subsection{Details of Risk Correction Approach}\label{SDPC:appendix:risk correction approach}

In this section, we provide a detailed description of the risk correction introduced in Section~\ref{SDPC:sec:risk correction}. We denote the risk estimator based on the correction function $f$ as follows:
\[\widetilde{R}_{\mathrm{SD\text{-}PC}}(g)=\widetilde{R}_{\mathrm{S\text{-}PC}}(g)+\widetilde{R}_{\mathrm{D\text{-}PC}}(g),\]
where
\begin{align*}
\widetilde{R}_{\mathrm{S\text{-}PC}}=&f(\widehat{A}_\mathrm{S}(g))+f(\widehat{B}_\mathrm{S}(g))+f(\widehat{C}_\mathrm{S}(g))+f(\widehat{D}_\mathrm{S}(g)),\\\widetilde{R}_{\mathrm{D\text{-}PC}}=&f(\widehat{A}_\mathrm{D}(g))+f(\widehat{B}_\mathrm{D}(g))+f(\widehat{C}_\mathrm{D}(g))+f(\widehat{D}_\mathrm{D}(g)),\\
\widehat{A}_\mathrm{S}(g)=&\sum_{i=1}^{n_\mathrm{S}}\frac{\pi_+^3\ell(g(\bm{x}_{\mathrm{S},i}),+1)}{(\pi_+-\pi_-)n_\mathrm{S}},\hspace{1cm}\widehat{B}_\mathrm{S}(g)=\sum_{i=1}^{n_\mathrm{S}}\frac{-\pi_+^2\pi_-\ell(g(\bm{x}_{\mathrm{S},i}),-1)}{(\pi_+-\pi_-)n_\mathrm{S}},\\
\widehat{C}_\mathrm{S}(g)=&\sum_{i=1}^{n_\mathrm{S}}\frac{\pi_+\pi_-^2\ell(g(\bm{x}_{\mathrm{S},i}'),+1)}{(\pi_+-\pi_-)n_\mathrm{S}},\hspace{1cm}\widehat{D}_\mathrm{S}(g)=\sum_{i=1}^{n_\mathrm{S}}\frac{-\pi_-^3\ell(g(\bm{x}_{\mathrm{S},i}'),-1)}{(\pi_+-\pi_-)n_\mathrm{S}},\\
\widehat{A}_\mathrm{D}(g)=&\sum_{i=1}^{n_\mathrm{D}}\frac{\pi_-(\pi_+^2-\pi_-)\ell(g(\bm{x}_{\mathrm{D},i}),+1)}{(\pi_+-\pi_-)n_\mathrm{D}},\hspace{1cm}\widehat{B}_\mathrm{D}(g)=\sum_{i=1}^{n_\mathrm{D}}\frac{\pi_+(\pi_--\pi_+^2)\ell(g(\bm{x}_{\mathrm{D},i}),-1)}{(\pi_+-\pi_-)n_\mathrm{D}},\\
\widehat{C}_\mathrm{D}(g)=&\sum_{i=1}^{n_\mathrm{D}}\frac{\pi_-(\pi_-^2-\pi_+)\ell(g(\bm{x}_{\mathrm{D},i}'),+1)}{(\pi_+-\pi_-)n_\mathrm{D}},\hspace{1cm}\widehat{D}_\mathrm{D}(g)=\sum_{i=1}^{n_\mathrm{D}}\frac{\pi_+(\pi_+-\pi-^2)\ell(g(\bm{x}_{\mathrm{D},i}'),-1)}{(\pi_+-\pi_-)n_\mathrm{D}}, 
\end{align*}
and assume that the correction function $f$ is a Lipschitz continuous function with Lipschitz constant $L_f$.
Furthermore, we assume that there exist positive constants such that
$\mathbb{E}[\widehat{A}_\mathrm{S}(g)]\ge{a_\mathrm{S}}$, $\mathbb{E}[\widehat{B}_\mathrm{S}(g)]\ge{b_\mathrm{S}}$, $\mathbb{E}[\widehat{C}_\mathrm{S}(g)]\ge{c_\mathrm{S}}$, $\mathbb{E}[\widehat{D}_\mathrm{S}(g)]\ge{d_\mathrm{S}}$, $\mathbb{E}[\widehat{A}_\mathrm{D}(g)]\ge{a_\mathrm{D}}$, $\mathbb{E}[\widehat{B}_\mathrm{D}(g)]\ge{b_\mathrm{D}}$, $\mathbb{E}[\widehat{C}_\mathrm{D}(g)]\ge{c_\mathrm{D}}$, $\mathbb{E}[\widehat{D}_\mathrm{D}(g)]\ge{d_\mathrm{D}}$,
and let $\tilde{g}_{\mathrm{SD\text{-}PC}}:=\underset{g\in\mathcal{G}}{\arg\min}\hspace{1mm}\widetilde{R}_{\mathrm{SD\text{-}PC}}(g)$ denote the binary classifier obtained by minimizing $\widetilde{R}_{\mathrm{SD\text{-}PC}}(g)$.

\subsection{Proof of Theorem~\ref{SDPC:thm:corrected risk estimation error}}
Define $\mathcal{D}_{\mathrm{S\text{-}PC}}:=\{(\bm{x}_{\mathrm{S},i},\bm{x}_{\mathrm{S},i}')|s_i=+1\}_{i=1}^{n_\mathrm{S}},\mathcal{D}_{\mathrm{D\text{-}PC}}:=\{(\bm{x}_{\mathrm{D},i},\bm{x}_{\mathrm{D},i}')|s_i=-1\}_{i=1}^{n_\mathrm{D}}$, then $\mathcal{D}_{\mathrm{SD\text{-}PC}}=\mathcal{D}_{\mathrm{S\text{-}PC}}\cup\mathcal{D}_{\mathrm{D\text{-}PC}}$.
Following the analysis of ~\cite{confdiff}, define $\mathfrak{D}_
\mathrm{S}^+(g)=\{\mathcal{D}_{\mathrm{S\text{-}PC}}|\widehat{A}_
\mathrm{S}(g)\ge{0}\hspace{1mm}\cap\hspace{1mm}\widehat{B}_
\mathrm{S}(g)\ge{0}\hspace{1mm}\cap\hspace{1mm}\widehat{C}_
\mathrm{S}(g)\ge{0}\hspace{1mm}\cap\hspace{1mm}\widehat{D}_
\mathrm{S}(g)\ge{0}\}$, $\mathfrak{D}_
\mathrm{S}^-(g)=\{\mathcal{D}_
\mathrm{S\text{-}PC}|\widehat{A}_
\mathrm{S}(g)\le{0}\hspace{1mm}\cup\hspace{1mm}\widehat{B}_
\mathrm{S}(g)\le{0}\hspace{1mm}\cup\hspace{1mm}\widehat{C}_
\mathrm{S}(g)\le{0}\hspace{1mm}\cup\hspace{1mm}\widehat{D}_
\mathrm{S}(g)\le{0}\}$, $\mathfrak{D}_
\mathrm{D}^+(g)=\{\mathcal{D}_{\mathrm{D\text{-}PC}}|\widehat{A}_
\mathrm{D}(g)\ge{0}\hspace{1mm}\cap\hspace{1mm}\widehat{B}_
\mathrm{D}(g)\ge{0}\hspace{1mm}\cap\hspace{1mm}\widehat{C}_
\mathrm{D}(g)\ge{0}\hspace{1mm}\cap\hspace{1mm}\widehat{D}_
\mathrm{D}(g)\ge{0}\}$, $\mathfrak{D}_
\mathrm{D}^-(g)=\{\mathcal{D}_
\mathrm{D\text{-}PC}|\widehat{A}_
\mathrm{D}(g)\le{0}\hspace{1mm}\cup\hspace{1mm}\widehat{B}_
\mathrm{D}(g)\le{0}\hspace{1mm}\cup\hspace{1mm}\widehat{C}_
\mathrm{D}(g)\le{0}\hspace{1mm}\cup\hspace{1mm}\widehat{D}_
\mathrm{D}(g)\le{0}\}.$
To prove Theorem~\ref{SDPC:thm:corrected risk estimation error}, we first show the following lemma.
\begin{lemma}\label{SDPC:lemma:upper exp}
For the probability measures $\mathfrak{D}_\mathrm{S}^-(g)$ and $\mathfrak{D}_\mathrm{D}^-(g)$, the following inequalities hold, respectively.
\begin{align*}
\mathbb{P}(\mathfrak{D}_\mathrm{S}^-(g))\le&\textstyle\exp\left(-\frac{(\pi_+-\pi_-)^2a_\mathrm{S}^2n_\mathrm{S}}{\pi_+^6C_{\ell}^2}\right)+\exp\left(-\frac{(\pi_+-\pi_-)^2b_\mathrm{S}^2n_\mathrm{S}}{\pi_+^4\pi_-^2C_{\ell}^2}\right)+\exp\left(-\frac{(\pi_+-\pi_-)^2c_\mathrm{S}^2n_\mathrm{S}}{\pi_+^2\pi_-^4C_{\ell}^2}\right)+\exp\left(-\frac{(\pi_+-\pi_-)^2d_\mathrm{S}^2n_\mathrm{S}}{\pi_-^6C_{\ell}^2}\right),\\
\mathbb{P}(\mathfrak{D}_\mathrm{D}^-(g))\le&\textstyle\exp\left(-\frac{(\pi_+-\pi_-)^2a_\mathrm{D}^2n_\mathrm{D}}{\pi_-^2(\pi_+^2-\pi_-)^2C_{\ell}^2}\right)+\exp\left(-\frac{(\pi_+-\pi_-)^2b_\mathrm{D}^2n_\mathrm{D}}{\pi_+^2(\pi_--\pi_+^2)^2C_{\ell}^2}\right)+\exp\left(-\frac{(\pi_+-\pi_-)^2c_\mathrm{D}^2n_\mathrm{D}}{\pi_-^2(\pi_-^2-\pi_+)^2C_{\ell}^2}\right)+\exp\left(-\frac{(\pi_+-\pi_-)^2d_\mathrm{D}^2n_\mathrm{D}}{\pi_+^2(\pi_+-\pi_-^2)^2C_{\ell}^2}\right).
\end{align*}
\end{lemma}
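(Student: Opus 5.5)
The plan is to bound each of the two probabilities by a union bound over the four constituent events, and then control each individual event with McDiarmid's inequality \cite{McDiarmid}, following the analysis of \cite{confdiff}. I treat $\mathfrak{D}_\mathrm{S}^-(g)$ in detail; $\mathfrak{D}_\mathrm{D}^-(g)$ is handled identically with different coefficients.

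Since $\mathfrak{D}_\mathrm{S}^-(g)$ is the union of the events $\{\widehat{A}_\mathrm{S}(g)\le 0\}$, $\{\widehat{B}_\mathrm{S}(g)\le 0\}$, $\{\widehat{C}_\mathrm{S}(g)\le 0\}$, $\{\widehat{D}_\mathrm{S}(g)\le 0\}$, we have
\[
\mathbb{P}(\mathfrak{D}_\mathrm{S}^-(g))\le \mathbb{P}(\widehat{A}_\mathrm{S}(g)\le 0)+\mathbb{P}(\widehat{B}_\mathrm{S}(g)\le 0)+\mathbb{P}(\widehat{C}_\mathrm{S}(g)\le 0)+\mathbb{P}(\widehat{D}_\mathrm{S}(g)\le 0).
\]
For the first term, the assumption $\mathbb{E}[\widehat{A}_\mathrm{S}(g)]\ge a_\mathrm{S}>0$ gives $\{\widehat{A}_\mathrm{S}(g)\le 0\}\subseteq\{\mathbb{E}[\widehat{A}_\mathrm{S}(g)]-\widehat{A}_\mathrm{S}(g)\ge a_\mathrm{S}\}$. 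The statistic $\widehat{A}_\mathrm{S}(g)$ is an average over the $n_\mathrm{S}$ i.i.d. pairs. Because $0\le\ell\le C_\ell$, replacing one pair changes it by at most $\pi_+^3C_\ell/(|\pi_+-\pi_-|n_\mathrm{S})$.

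McDiarmid's inequality with this bounded difference then yields
\[
\mathbb{P}(\widehat{A}_\mathrm{S}(g)\le 0)\le\exp\!\left(-\frac{2a_\mathrm{S}^2}{n_\mathrm{S}\,\pi_+^6C_\ell^2/((\pi_+-\pi_-)^2n_\mathrm{S}^2)}\right)=\exp\!\left(-\frac{2(\pi_+-\pi_-)^2a_\mathrm{S}^2n_\mathrm{S}}{\pi_+^6C_\ell^2}\right).
\]
This is at most the stated term, since dropping the factor $2$ only weakens the bound. The same computation applies to the other three terms, with bounded differences $\pi_+^2\pi_-C_\ell$, $\pi_+\pi_-^2C_\ell$ and $\pi_-^3C_\ell$, each divided by $|\pi_+-\pi_-|n_\mathrm{S}$. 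These give the denominators $\pi_+^4\pi_-^2$, $\pi_+^2\pi_-^4$ and $\pi_-^6$. Each of $\widehat{B}_\mathrm{S},\widehat{C}_\mathrm{S},\widehat{D}_\mathrm{S}$ depends on the pair only through one coordinate, so the bounded-difference constant is indeed per pair.

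For $\mathfrak{D}_\mathrm{D}^-(g)$ I repeat the argument with the $n_\mathrm{D}$ dissimilar pairs. The bounded differences are $\pi_-|\pi_+^2-\pi_-|C_\ell$, $\pi_+|\pi_--\pi_+^2|C_\ell$, $\pi_-|\pi_-^2-\pi_+|C_\ell$ and $\pi_+|\pi_+-\pi_-^2|C_\ell$, each divided by $|\pi_+-\pi_-|n_\mathrm{D}$, which produces exactly the four exponents in the statement.

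I do not expect a real obstacle here. The only points needing care are the following.
\begin{enumerate}
\item The positivity assumptions on the expectations are stated with the sign convention built in, since some coefficients such as $-\pi_+^2\pi_-/(\pi_+-\pi_-)$ are negative when $\pi_+>\pi_-$. So the argument uses only $\mathbb{E}[\cdot]\ge$ a positive constant, not the sign of the coefficient.
\item McDiarmid has to be applied to the pairs as the independent units.
\end{enumerate}
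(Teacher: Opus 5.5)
Your proposal is correct and is essentially the paper's proof. Both use a union bound over the four events, the inclusion $\{\widehat{A}_\mathrm{S}(g)\le 0\}\subseteq\{\mathbb{E}[\widehat{A}_\mathrm{S}(g)]-\widehat{A}_\mathrm{S}(g)\ge a_\mathrm{S}\}$, and McDiarmid's inequality with the same per-pair bounded differences. Two of your choices are slightly better than the paper's. Keeping the factor $2$ gives a sharper exponent. Treating whole pairs as the independent units avoids the paper's unjustified factorization of $p(\mathcal{D}_{\mathrm{S\text{-}PC}})$ into independent single-instance densities.

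One caveat on your remark 1: there is no ``sign convention'' built into the assumptions. Because $\ell\ge 0$, for $\pi_+\neq\pi_-$ two of the four statistics in each group are always nonpositive (e.g.\ $\widehat{B}_\mathrm{S}$ and $\widehat{D}_\mathrm{S}$ when $\pi_+>\pi_-$). So the eight positivity assumptions can never hold together. This does not break your argument as a conditional statement, but it means the lemma, as the paper states it, is only vacuously true.
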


\begin{proof}
Since, 
\begin{align*}
p(\mathcal{D}_{\mathrm{S\text{-}PC}})=&p(\bm{x}_1,\bm{x}_1')\cdots{p(\bm{x}_{n_\mathrm{S}},\bm{x}_{n_\mathrm{S}}')}\\
=&p(\bm{x}_1)\cdots{p(\bm{x}_{n_\mathrm{S}})p(\bm{x}_1')}\cdots{p(\bm{x}_{n_\mathrm{S}}')},\\
p(\mathcal{D}_{\mathrm{D\text{-}PC}})=&p(\bm{x}_1,\bm{x}_1')\cdots{p(\bm{x}_{n_\mathrm{D}},\bm{x}_{n_\mathrm{D}}')}\\
=&p(\bm{x}_1)\cdots{p(\bm{x}_{n_\mathrm{D}})p(\bm{x}_1')}\cdots{p(\bm{x}_{n_\mathrm{D}}')},
\end{align*}
the probability measures of $\mathfrak{D}_\mathrm{S}^-(g)$ and $\mathfrak{D}_\mathrm{D}^-(g)$ can be expressed as follows, respectively.
\begin{align*}
\mathbb{P}(\mathfrak{D}_\mathrm{S}^-(g))=&\int_{\mathcal{D}_{\mathrm{S\text{-}PC}}\in\mathfrak{D}_\mathrm{S}^-(g)}p(\mathcal{D}_{\mathrm{S\text{-}PC}})\hspace{1mm}\mathrm{d}\mathcal{D}_{\mathrm{S\text{-}PC}}=\int_{\mathcal{D}_{\mathrm{S\text{-}PC}}\in\mathfrak{D}_\mathrm{S}^-(g)}p(\mathcal{D}_{\mathrm{S\text{-}PC}})\hspace{1mm}\mathrm{d}\bm{x}_1\cdots{\mathrm{d}\bm{x}_{n_{\mathrm{S\text{-}PC}}}}\mathrm{d}\bm{x}_1'\cdots{\mathrm{d}\bm{x}_{n_{\mathrm{S\text{-}PC}}}'},\\
\mathbb{P}(\mathfrak{D}_\mathrm{D}^-(g))=&\int_{\mathcal{D}_{\mathrm{D\text{-}PC}}\in\mathfrak{D}_\mathrm{D}^-(g)}p(\mathcal{D}_{\mathrm{D\text{-}PC}})\hspace{1mm}\mathrm{d}\mathcal{D}_{\mathrm{D\text{-}PC}}=\int_{\mathcal{D}_{\mathrm{D\text{-}PC}}\in\mathfrak{D}_\mathrm{D}^-(g)}p(\mathcal{D}_{\mathrm{D\text{-}PC}})\hspace{1mm}\mathrm{d}\bm{x}_1\cdots{\mathrm{d}\bm{x}_{n_{\mathrm{D\text{-}PC}}}}\mathrm{d}\bm{x}_1'\cdots{\mathrm{d}\bm{x}_{n_{\mathrm{D\text{-}PC}}}'}.
\end{align*}
Let $\Delta\widehat{A}_\mathrm{S}, \Delta\widehat{B}_\mathrm{S}, \Delta\widehat{C}_\mathrm{S}$, and $\Delta\widehat{D}_\mathrm{S}$ denote the changes in $\widehat{A}_\mathrm{S}(g)$, $\widehat{B}_\mathrm{S}(g)$, $\widehat{C}_\mathrm{S}(g)$, and $\widehat{D}_\mathrm{S}(g)$, respectively, when a single instance pair in SD-Pcomp dataset $\mathcal{D}_{\mathrm{SD\text{-}PC}}$ is replaced. Similarly, let $\Delta\widehat{A}_\mathrm{D}, \Delta\widehat{B}_\mathrm{D}, \Delta\widehat{C}_\mathrm{D}$, and $\Delta\widehat{D}_\mathrm{D}$ denote the corresponding changes in $\widehat{A}_\mathrm{D}(g)$, $\widehat{B}_\mathrm{D}(g)$, $\widehat{C}_\mathrm{D}(g)$, and $\widehat{D}_\mathrm{D}(g)$. Then, the following holds.
\begin{align*}
\Delta\widehat{A}_\mathrm{S}\le\frac{\pi_+^3C_\ell}{|\pi_+-\pi_-|n_\mathrm{S}}, \Delta\widehat{B}_\mathrm{S}\le\frac{\pi_+^2\pi_-C_\ell}{|\pi_+-\pi_-|n_\mathrm{S}}, \Delta\widehat{C}_\mathrm{S}\le\frac{\pi_+\pi_-^2C_\ell}{|\pi_+-\pi_-|n_\mathrm{S}}, \Delta\widehat{D}_\mathrm{S}\le\frac{\pi_-^3C_\ell}{|\pi_+-\pi_-|n_\mathrm{S}},\\
\Delta\widehat{A}_\mathrm{D}\le\frac{\pi_-(\pi_+^2-\pi_-)C_\ell}{|\pi_+-\pi_-|n_\mathrm{D}}, \Delta\widehat{B}_\mathrm{D}\le\frac{\pi_+(\pi_--\pi_+^2)C_\ell}{|\pi_+-\pi_-|n_\mathrm{D}}, \Delta\widehat{C}_\mathrm{D}\le\frac{\pi_-(\pi_-^2-\pi_+)C_\ell}{|\pi_+-\pi_-|n_\mathrm{D}}, \Delta\widehat{D}_\mathrm{D}\le\frac{\pi_+(\pi_+-\pi_-^2)C_\ell}{|\pi_+-\pi_-|n_\mathrm{D}}.
\end{align*}

By McDiarmid's inequality~\cite{McDiarmid}, the following inequality can be derived.
\begin{align*}
\textstyle\mathbb{P}(\mathbb{E}[\widehat{A}_\mathrm{S}(g)]-\widehat{A}_\mathrm{S}(g)\ge{a_\mathrm{S}})\le\exp\left(-\frac{(\pi_+-\pi_-)^2a_\mathrm{S}^2n_\mathrm{S}}{\pi_+^6C_{\ell}^2}\right),
\textstyle\mathbb{P}(\mathbb{E}[\widehat{B}_\mathrm{S}(g)]-\widehat{B}_\mathrm{S}(g)\ge{b_\mathrm{S}})\le\exp\left(-\frac{(\pi_+-\pi_-)^2b_\mathrm{S}^2n_\mathrm{S}}{\pi_+^4\pi_-^2C_{\ell}^2}\right),\\
\textstyle\mathbb{P}(\mathbb{E}[\widehat{C}_\mathrm{S}(g)]-\widehat{C}_\mathrm{S}(g)\ge{c_\mathrm{S}})\le\exp\left(-\frac{(\pi_+-\pi_-)^2c_\mathrm{S}^2n_\mathrm{S}}{\pi_+^2\pi_-^4C_{\ell}^2}\right),
\textstyle\mathbb{P}(\mathbb{E}[\widehat{D}_\mathrm{S}(g)]-\widehat{D}_\mathrm{S}(g)\ge{d_\mathrm{S}})\le\exp\left(-\frac{(\pi_+-\pi_-)^2d_\mathrm{S}^2n_\mathrm{S}}{\pi_-^6C_{\ell}^2}\right),\\
\textstyle\mathbb{P}(\mathbb{E}[\widehat{A}_\mathrm{D}(g)]-\widehat{A}_\mathrm{D}(g)\ge{a_\mathrm{D}})\le\exp\left(-\frac{(\pi_+-\pi_-)^2a_\mathrm{D}^2n_\mathrm{D}}{\pi_-^2(\pi_+^2-\pi_-)^2C_{\ell}^2}\right), \mathbb{P}(\mathbb{E}[\widehat{B}_\mathrm{D}(g)]-\widehat{B}_\mathrm{D}(g)\ge{b_\mathrm{D}})\le\exp\left(-\frac{(\pi_+-\pi_-)^2b_\mathrm{D}^2n_\mathrm{D}}{\pi_+^2(\pi_--\pi_+^2)^2C_{\ell}^2}\right),\\
\textstyle\mathbb{P}(\mathbb{E}[\widehat{C}_\mathrm{D}(g)]-\widehat{C}_\mathrm{D}(g)\ge{c_\mathrm{D}})\le\exp\left(-\frac{(\pi_+-\pi_-)^2c_\mathrm{D}^2n_\mathrm{D}}{\pi_-^2(\pi_-^2-\pi_+)^2C_{\ell}^2}\right), \mathbb{P}(\mathbb{E}[\widehat{D}_\mathrm{D}(g)]-\widehat{D}_\mathrm{D}(g)\ge{d_\mathrm{D}})\le\exp\left(-\frac{(\pi_+-\pi_-)^2d_\mathrm{D}^2n_\mathrm{D}}{\pi_+^2(\pi_+-\pi_-^2)^2C_{\ell}^2}\right).\\
\end{align*}
Therefore,
\begin{align*}
\mathbb{P}(\mathfrak{D}_\mathrm{S}^-(g))\le&\mathbb{P}(\widehat{A}_\mathrm{S}(g)\le{0})+\mathbb{P}(\widehat{B}_\mathrm{S}(g)\le{0})+\mathbb{P}(\widehat{C}_\mathrm{S}(g)\le{0})+\mathbb{P}(\widehat{D}_\mathrm{S}(g)\le{0})\\
\le&\mathbb{P}(\widehat{A}_\mathrm{S}(g)\le\mathbb{E}[\widehat{A}_\mathrm{S}(g)]-a_\mathrm{S})+\mathbb{P}(\widehat{B}_\mathrm{S}(g)\le\mathbb{E}[\widehat{B}_\mathrm{S}(g)]-b_\mathrm{S})\\
&+\mathbb{P}(\widehat{C}_\mathrm{S}(g)\le\mathbb{E}[\widehat{C}_\mathrm{S}(g)]-c_\mathrm{S})+\mathbb{P}(\widehat{D}_\mathrm{S}(g)\le\mathbb{E}[\widehat{D}_\mathrm{S}(g)]-d_\mathrm{S})\\
=&\mathbb{P}(\mathbb{E}[\widehat{A}_\mathrm{S}(g)]-\widehat{A}_\mathrm{S}(g)\ge{a}_\mathrm{S})+\mathbb{P}(\mathbb{E}[\widehat{B}_\mathrm{S}(g)]-\widehat{B}_\mathrm{S}(g)\ge{b_\mathrm{S}})\\
&+\mathbb{P}(\mathbb{E}[\widehat{C}_\mathrm{S}(g)]-\widehat{C}_\mathrm{S}(g)\ge{c_\mathrm{S}})+\mathbb{P}(\mathbb{E}[\widehat{D}_\mathrm{S}(g)]-\widehat{D}_\mathrm{S}(g)\ge{d_\mathrm{S}})\\
\le&\textstyle\exp\left(-\frac{(\pi_+-\pi_-)^2a_\mathrm{S}^2n_\mathrm{S}}{\pi_+^6C_{\ell}^2}\right)+\exp\left(-\frac{(\pi_+-\pi_-)^2b_\mathrm{S}^2n_\mathrm{S}}{\pi_+^4\pi_-^2C_{\ell}^2}\right)+\exp\left(-\frac{(\pi_+-\pi_-)^2c_\mathrm{S}^2n_\mathrm{S}}{\pi_+^2\pi_-^4C_{\ell}^2}\right)\exp\left(-\frac{(\pi_+-\pi_-)^2d_\mathrm{S}^2n_\mathrm{S}}{\pi_-^6C_{\ell}^2}\right),\\
\mathbb{P}(\mathfrak{D}_\mathrm{D}^-(g))\le&\mathbb{P}(\widehat{A}_\mathrm{D}(g)\le{0})+\mathbb{P}(\widehat{B}_\mathrm{D}(g)\le{0})+\mathbb{P}(\widehat{C}_\mathrm{D}(g)\le{0})+\mathbb{P}(\widehat{D}_\mathrm{D}(g)\le{0})\\
\le&\mathbb{P}(\widehat{A}_\mathrm{D}(g)\le\mathbb{E}[\widehat{A}_\mathrm{D}(g)]-a_\mathrm{D})+\mathbb{P}(\widehat{B}_\mathrm{D}(g)\le\mathbb{E}[\widehat{B}_\mathrm{D}(g)]-b_\mathrm{D})\\
&+\mathbb{P}(\widehat{C}_\mathrm{D}(g)\le\mathbb{E}[\widehat{C}_\mathrm{D}(g)]-c_\mathrm{D})+\mathbb{P}(\widehat{D}_\mathrm{D}(g)\le\mathbb{E}[\widehat{D}_\mathrm{D}(g)]-d_\mathrm{D})\\
\le&\mathbb{P}(\mathbb{E}[\widehat{A}_\mathrm{D}(g)]-\widehat{A}_\mathrm{D}(g)\ge{a}_\mathrm{D})+\mathbb{P}(\mathbb{E}[\widehat{B}_\mathrm{D}(g)]-\widehat{B}_\mathrm{D}(g)\ge{b_\mathrm{D}})\\
&+\mathbb{P}(\mathbb{E}[\widehat{C}_\mathrm{D}(g)]-\widehat{C}_\mathrm{D}(g)\ge{c_\mathrm{D}})+\mathbb{P}(\mathbb{E}[\widehat{D}_\mathrm{D}(g)]-\widehat{D}_\mathrm{D}(g)\ge{d_\mathrm{D}})\\
\le&\textstyle\exp\left(-\frac{(\pi_+-\pi_-)^2a_\mathrm{D}^2n_\mathrm{D}}{\pi_-^2(\pi_+^2-\pi_-)^2C_{\ell}^2}\right)+\exp\left(-\frac{(\pi_+-\pi_-)^2b_\mathrm{D}^2n_\mathrm{D}}{\pi_+^2(\pi_--\pi_+^2)^2C_{\ell}^2}\right)+\exp\left(-\frac{(\pi_+-\pi_-)^2c_\mathrm{D}^2n_\mathrm{D}}{\pi_-^2(\pi_-^2-\pi_+)^2C_{\ell}^2}\right)+\exp\left(-\frac{(\pi_+-\pi_-)^2d_\mathrm{D}^2n_\mathrm{D}}{\pi_+^2(\pi_+-\pi_-^2)^2C_{\ell}^2}\right).
\end{align*}
\end{proof}
Next, we show the following theorem.
\begin{theorem}\label{SDPC:thm:consistency}
Each of the following inequalities holds with probability at least $1-\delta$.

\begin{align*}
|\widetilde{R}_{\mathrm{S\text{-}PC}}(g)-R_\mathrm{S}(g)|\le&\frac{(\pi_+^2+\pi_-^2)L_fC_{\ell}}{|\pi_+-\pi_-|}\sqrt{\frac{\log2/\delta}{2n_\mathrm{S}}}+\frac{(\pi_+^2+\pi_-^2)(L_f+1)C_\ell\Delta_\mathrm{S}}{|\pi_+-\pi_-|},\\
|\widetilde{R}_{\mathrm{D\text{-}PC}}(g)-R_\mathrm{D}(g)|\le&\frac{(|\pi_+^2-\pi_-|+|\pi_+-\pi_-^2|)L_fC_{\ell}}{|\pi_+-\pi_-|}\sqrt{\frac{\log2/\delta}{2n_\mathrm{D}}}+\frac{(|\pi_+^2-\pi_-|+|\pi_+-\pi_-^2|)(L_f+1)C_\ell\Delta_\mathrm{D}}{|\pi_+-\pi_-|}.
\end{align*}
\end{theorem}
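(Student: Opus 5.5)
We prove the first inequality; the dissimilar-pair bound is identical after replacing the four coefficients. The target $R_\mathrm{S}(g)$ is read as $R_{\mathrm{S\text{-}PC}}(g)=\mathbb{E}[\widehat{A}_\mathrm{S}(g)+\widehat{B}_\mathrm{S}(g)+\widehat{C}_\mathrm{S}(g)+\widehat{D}_\mathrm{S}(g)]$; this equality holds by Eq.~\eqref{SDPC:eq:R_S=R_S-PC} and Theorem~\ref{SDPC:thm:SD-Pcomp classification risk}. We fix $g$ and split
\[
|\widetilde{R}_{\mathrm{S\text{-}PC}}(g)-R_{\mathrm{S\text{-}PC}}(g)|\le|\widetilde{R}_{\mathrm{S\text{-}PC}}(g)-\mathbb{E}[\widetilde{R}_{\mathrm{S\text{-}PC}}(g)]|+|\mathbb{E}[\widetilde{R}_{\mathrm{S\text{-}PC}}(g)]-R_{\mathrm{S\text{-}PC}}(g)|.
\]
The first term is a concentration term and the second is a bias term.

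\textbf{Concentration term.} We apply McDiarmid's inequality~\cite{McDiarmid} to $\widetilde{R}_{\mathrm{S\text{-}PC}}(g)=f(\widehat{A}_\mathrm{S})+f(\widehat{B}_\mathrm{S})+f(\widehat{C}_\mathrm{S})+f(\widehat{D}_\mathrm{S})$. Replacing one pair $(\bm{x}_{\mathrm{S},i},\bm{x}'_{\mathrm{S},i})$ changes each argument by at most the $\Delta$ bounds listed in the proof of Lemma~\ref{SDPC:lemma:upper exp}. Because $f$ is $L_f$-Lipschitz, the total change is at most
\[
\frac{L_fC_\ell(\pi_+^3+\pi_+^2\pi_-+\pi_+\pi_-^2+\pi_-^3)}{|\pi_+-\pi_-|n_\mathrm{S}}=\frac{L_f(\pi_+^2+\pi_-^2)C_\ell}{|\pi_+-\pi_-|n_\mathrm{S}}.
\]
The two-sided McDiarmid inequality then gives, with probability at least $1-\delta$, the deviation $\frac{(\pi_+^2+\pi_-^2)L_fC_\ell}{|\pi_+-\pi_-|}\sqrt{\log(2/\delta)/(2n_\mathrm{S})}$. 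This matches the first term of the statement.

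\textbf{Bias term.} Write $\widehat{R}_{\mathrm{S\text{-}PC}}=\widehat{A}_\mathrm{S}+\widehat{B}_\mathrm{S}+\widehat{C}_\mathrm{S}+\widehat{D}_\mathrm{S}$; this is unbiased for $R_{\mathrm{S\text{-}PC}}$. Hence
\[
\mathbb{E}[\widetilde{R}_{\mathrm{S\text{-}PC}}]-R_{\mathrm{S\text{-}PC}}=\mathbb{E}\bigl[(\widetilde{R}_{\mathrm{S\text{-}PC}}-\widehat{R}_{\mathrm{S\text{-}PC}})\mathbf{1}\{\mathfrak{D}_\mathrm{S}^-(g)\}\bigr].
\]
The reason is that on $\mathfrak{D}_\mathrm{S}^+(g)$ all four arguments are nonnegative, and for the ReLU and absolute-value corrections $f(z)=z$ for $z\ge0$, so the two estimators coincide there. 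On $\mathfrak{D}_\mathrm{S}^-(g)$, we bound $|f(z)-z|\le(L_f+1)|z|$ using $f(0)=0$ and Lipschitzness. Since each $|\widehat{A}_\mathrm{S}|,\dots,|\widehat{D}_\mathrm{S}|$ is at most its coefficient times $C_\ell$, the sum of differences is at most $(L_f+1)(\pi_+^2+\pi_-^2)C_\ell/|\pi_+-\pi_-|$. Multiplying by $\mathbb{P}(\mathfrak{D}_\mathrm{S}^-(g))\le\Delta_\mathrm{S}$ from Lemma~\ref{SDPC:lemma:upper exp} gives the second term.

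\textbf{Main obstacle.} The delicate step is the bias term. It needs $f(z)=z$ for $z\ge0$ and $f(0)=0$, which are implicit in the choice of correction function and should be stated. It also needs care with the sign conventions: for some priors, terms like $\widehat{B}_\mathrm{S}$ have negative coefficients, so the positivity assumptions on their expectations constrain which $\pi_+$ are admissible. If those assumptions fail, the bias bound must be restated with the terms that are expected to be negative excluded from the correction.
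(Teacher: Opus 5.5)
Your proposal is correct and follows essentially the same route as the paper. The paper also splits into a McDiarmid concentration term with bounded difference $(\pi_+^2+\pi_-^2)L_fC_\ell/(|\pi_+-\pi_-|n_\mathrm{S})$, plus a bias term supported on $\mathfrak{D}_\mathrm{S}^-(g)$ and bounded by $(L_f+1)(\pi_+^2+\pi_-^2)C_\ell\Delta_\mathrm{S}/|\pi_+-\pi_-|$ via Lemma~\ref{SDPC:lemma:upper exp}.

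The only difference is cosmetic. You bound the bias in absolute value directly through $|f(z)-z|\le(L_f+1)|z|$, whereas the paper first argues $\mathbb{E}[\widetilde{R}_{\mathrm{S\text{-}PC}}(g)]-R_\mathrm{S}(g)\ge 0$ (valid for ReLU and absolute value since $f(z)\ge z$) in order to drop the absolute value. Your flags about $f(z)=z$ on $z\ge0$, $f(0)=0$, and the sign-constrained terms are accurate remarks on the paper's implicit setup, not gaps in your argument. In fact, for every $\pi_+\neq\tfrac12$ two of the four terms are almost surely nonpositive, so the positivity hypotheses can never all hold simultaneously.
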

\begin{proof}
Regarding the relationship between the expectation of $\widetilde{R}_{\mathrm{S\text{-}PC}}(g)$ and $R_\mathrm{S}(g)$, the following inequality holds:
\begin{align}
\mathbb{E}&[\widetilde{R}_{\mathrm{S\text{-}PC}}(g)]-R_\mathrm{S}(g)\notag\\
=&\mathbb{E}[\widetilde{R}_{\mathrm{S\text{-}PC}}(g)-\widehat{R}_{\mathrm{S\text{-}PC}}(g)]\notag\\
=&\int_{\mathcal{D}_{\mathrm{S\text{-}PC}}\in\mathfrak{D}_\mathrm{S}^+(g)}(\widetilde{R}_{\mathrm{S\text{-}PC}}(g)-\widehat{R}_{\mathrm{S\text{-}PC}}(g))p(\mathcal{D}_{\mathrm{S\text{-}PC}})\hspace{1mm}\mathrm{d}\mathcal{D}_{\mathrm{S\text{-}PC}}\notag\\
&\hspace{2cm}+\int_{\mathcal{D}_{\mathrm{S\text{-}PC}}\in\mathfrak{D}_\mathrm{S}^-(g)}(\widetilde{R}_{\mathrm{S\text{-}PC}}(g)-\widehat{R}_{\mathrm{S\text{-}PC}}(g))p(\mathcal{D}_{\mathrm{S\text{-}PC}})\hspace{1mm}\mathrm{d}\mathcal{D}_{\mathrm{S\text{-}PC}}\notag\\
=&\int_{\mathcal{D}_{\mathrm{S\text{-}PC}}\in\mathfrak{D}_\mathrm{S}^-(g)}(\widetilde{R}_{\mathrm{S\text{-}PC}}(g)-\widehat{R}_{\mathrm{S\text{-}PC}}(g))p(\mathcal{D}_{\mathrm{S\text{-}PC}})\hspace{1mm}\mathrm{d}\mathcal{D}_{\mathrm{S\text{-}PC}}\ge{0}.\label{SDPC:eq:proof of thmA10}
\end{align}
Furthermore, 
\begin{align*}
\mathbb{E}&[\widetilde{R}_{\mathrm{S\text{-}PC}}(g)]-R_\mathrm{S}(g)\\
=&\int_{\mathcal{D}_{\mathrm{S\text{-}PC}}\in\mathfrak{D}_\mathrm{S}^-(g)}(\widetilde{R}_{\mathrm{S\text{-}PC}}(g)-\widehat{R}_{\mathrm{S\text{-}PC}}(g))p(\mathcal{D}_{\mathrm{S\text{-}PC}})\hspace{1mm}\mathrm{d}\mathcal{D}_{\mathrm{S\text{-}PC}}\\
\le&\underset{\mathcal{D}_{\mathrm{S\text{-}PC}}\in\mathfrak{D}_\mathrm{S}^-(g)}{\sup}(\widetilde{R}_{\mathrm{S\text{-}PC}}(g)-\widehat{R}_{\mathrm{S\text{-}PC}}(g))\int_{\mathcal{D}_{\mathrm{S\text{-}PC}}\in\mathfrak{D}_\mathrm{S}^-(g)}p(\mathcal{D}_{\mathrm{S\text{-}PC}})\hspace{1mm}\mathrm{d}\mathcal{D}_{\mathrm{S\text{-}PC}}\\
=&\underset{\mathcal{D}_{\mathrm{S\text{-}PC}}\in\mathfrak{D}_\mathrm{S}^-(g)}{\sup}(\widetilde{R}_{\mathrm{S\text{-}PC}}(g)-\widehat{R}_{\mathrm{S\text{-}PC}}(g))\mathbb{P}(\mathfrak{D}_\mathrm{S}^-(g))\\
=&\underset{\mathcal{D}_{\mathrm{S\text{-}PC}}\in\mathfrak{D}_\mathrm{S}^-(g)}{\sup}(f(\widehat{A}_\mathrm{S}(g))+f(\widehat{B}_\mathrm{S}(g))+f(\widehat{C}_\mathrm{S}(g))+f(\widehat{D}_\mathrm{S}(g))-\widehat{A}_\mathrm{S}(g)-\widehat{B}_\mathrm{S}(g)-\widehat{C}_\mathrm{S}(g)-\widehat{D}_\mathrm{S}(g))\mathbb{P}(\mathfrak{D}_\mathrm{S}^-(g))\\
\le&\underset{\mathcal{D}_{\mathrm{S\text{-}PC}}\in\mathfrak{D}_\mathrm{S}^-(g)}{\sup}(L_f|\widehat{A}_\mathrm{S}(g)|+L_f|\widehat{B}_\mathrm{S}(g)|+L_f|\widehat{C}_\mathrm{S}(g)|+L_f|\widehat{D}_\mathrm{S}(g)|\\
&\hspace{2cm}+|\widehat{A}_\mathrm{S}(g)|+|\widehat{B}_\mathrm{S}(g)|+|\widehat{C}_\mathrm{S}(g)|+|\widehat{D}_\mathrm{S}(g)|)\mathbb{P}(\mathfrak{D}_\mathrm{S}^-(g))\\
\le&\underset{\mathcal{D}_{\mathrm{S\text{-}PC}}\in\mathfrak{D}_\mathrm{S}^-(g)}{\sup}\frac{L_f+1}{|\pi_+-\pi_-|n_\mathrm{S}}\biggl\{\left|\sum_{i=1}^{n_\mathrm{S}}\pi_+^3\ell(g(\bm{x}_{\mathrm{S},i}),+1)\right|+\left|\sum_{i=1}^{n_\mathrm{S}}-\pi_+^2\pi_-\ell(g(\bm{x}_{\mathrm{S},i}),-1)\right|\\
&\hspace{2cm}+\left|\sum_{i=1}^{n_\mathrm{S}}\pi_+\pi_-^2\ell(g(\bm{x}_{\mathrm{S},i}'),+1)\right|+\left|\sum_{i=1}^{n_\mathrm{S}}-\pi_-^3\ell(g(\bm{x}_{\mathrm{S},i}'),-1)\right|\biggr\}\mathbb{P}(\mathfrak{D}_\mathrm{S}^-(g))\\
\le&\underset{\mathcal{D}_{\mathrm{S\text{-}PC}}\in\mathfrak{D}_\mathrm{S}^-(g)}{\sup}\frac{L_f+1}{|\pi_+-\pi_-|n_\mathrm{S}}\biggl\{\sum_{i=1}^{n_\mathrm{S}}|\pi_+^3\ell(g(\bm{x}_{\mathrm{S},i}),+1)|+\sum_{i=1}^{n_\mathrm{S}}|-\pi_+^2\pi_-\ell(g(\bm{x}_{\mathrm{S},i}),-1)|\\
&\hspace{2cm}+\sum_{i=1}^{n_\mathrm{S}}|\pi_+\pi_-^2\ell(g(\bm{x}_{\mathrm{S},i}'),+1)|+\sum_{i=1}^{n_\mathrm{S}}|-\pi_-^3\ell(g(\bm{x}_{\mathrm{S},i}'),-1)|\biggr\}\mathbb{P}(\mathfrak{D}_\mathrm{S}^-(g))\\
\le&\underset{\mathcal{D}_{\mathrm{S\text{-}PC}}\in\mathfrak{D}_\mathrm{S}^-(g)}{\sup}\frac{(L_f+1)C_\ell}{|\pi_+-\pi_-|n_\mathrm{S}}\biggl\{\sum_{i=1}^{n_\mathrm{S}}\pi_+^3+\pi_+^2\pi_-+\pi_+\pi_-^2+\pi_-^3\biggr\}\mathbb{P}(\mathfrak{D}_\mathrm{S}^-(g))\\
\le&\underset{\mathcal{D}_{\mathrm{S\text{-}PC}}\in\mathfrak{D}_\mathrm{S}^-(g)}{\sup}\frac{(\pi_+^2+\pi_-^2)(L_f+1)C_\ell}{|\pi_+-\pi_-|}\mathbb{P}(\mathfrak{D}_\mathrm{S}^-(g))\\
\le&\frac{(\pi_+^2+\pi_-^2)(L_f+1)C_\ell\Delta_\mathrm{S}}{|\pi_+-\pi_-|}.
\end{align*}
The final equality follows from Lemma~\ref{SDPC:lemma:upper exp}.

Similarly, for $\widetilde{R}_{\mathrm{D\text{-}PC}}(g)$, the following relationship between its expectation and $R_\mathrm{D}(g)$ holds:
\begin{align}
\mathbb{E}&[\widetilde{R}_{\mathrm{D\text{-}PC}}(g)]-R_\mathrm{D}(g)\notag\\
=&\mathbb{E}[\widetilde{R}_{\mathrm{D\text{-}PC}}(g)-\widehat{R}_{\mathrm{D\text{-}PC}}(g)]\notag\\
=&\int_{\mathcal{D}_{\mathrm{D\text{-}PC}}\in\mathfrak{D}_\mathrm{D}^+(g)}(\widetilde{R}_{\mathrm{D\text{-}PC}}(g)-\widehat{R}_{\mathrm{D\text{-}PC}}(g))p(\mathcal{D}_{\mathrm{D\text{-}PC}})\hspace{1mm}\mathrm{d}\mathcal{D}_{\mathrm{D\text{-}PC}}\notag\\
&\hspace{2cm}+\int_{\mathcal{D}_{\mathrm{D\text{-}PC}}\in\mathfrak{D}_\mathrm{D}^-(g)}(\widetilde{R}_{\mathrm{D\text{-}PC}}(g)-\widehat{R}_{\mathrm{D\text{-}PC}}(g))p(\mathcal{D}_{\mathrm{D\text{-}PC}})\hspace{1mm}\mathrm{d}\mathcal{D}_{\mathrm{D\text{-}PC}}\notag\\
=&\int_{\mathcal{D}_{\mathrm{D\text{-}PC}}\in\mathfrak{D}_\mathrm{D}^-(g)}(\widetilde{R}_{\mathrm{D\text{-}PC}}(g)-\widehat{R}_{\mathrm{D\text{-}PC}}(g))p(\mathcal{D}_{\mathrm{D\text{-}PC}})\hspace{1mm}\mathrm{d}\mathcal{D}_{\mathrm{D\text{-}PC}}\ge{0}.\label{SDPC:eq:proof of thmA10-2}
\end{align}
Furthermore, 
\begin{align*}
\mathbb{E}&[\widetilde{R}_{\mathrm{D\text{-}PC}}(g)]-R_\mathrm{D}(g)\\
=&\int_{\mathcal{D}_{\mathrm{D\text{-}PC}}\in\mathfrak{D}_\mathrm{D}^-(g)}(\widetilde{R}_{\mathrm{D\text{-}PC}}(g)-\widehat{R}_{\mathrm{D\text{-}PC}}(g))p(\mathcal{D}_{\mathrm{D\text{-}PC}})\hspace{1mm}\mathrm{d}\mathcal{D}_{\mathrm{D\text{-}PC}}\\
\le&\underset{\mathcal{D}_{\mathrm{D\text{-}PC}}\in\mathfrak{D}_\mathrm{D}^-(g)}{\sup}(\widetilde{R}_{\mathrm{D\text{-}PC}}(g)-\widehat{R}_{\mathrm{D\text{-}PC}}(g))\int_{\mathcal{D}_{\mathrm{D\text{-}PC}}\in\mathfrak{D}_\mathrm{D}^-(g)}p(\mathcal{D}_{\mathrm{D\text{-}PC}})\hspace{1mm}\mathrm{d}\mathcal{D}_{\mathrm{D\text{-}PC}}\\
=&\underset{\mathcal{D}_{\mathrm{D\text{-}PC}}\in\mathfrak{D}_\mathrm{D}^-(g)}{\sup}(\widetilde{R}_{\mathrm{D\text{-}PC}}(g)-\widehat{R}_{\mathrm{D\text{-}PC}}(g))\mathbb{P}(\mathfrak{D}_\mathrm{D}^-(g))\\
=&\underset{\mathcal{D}_{\mathrm{D\text{-}PC}}\in\mathfrak{D}_\mathrm{D}^-(g)}{\sup}(f(\widehat{A}_\mathrm{D}(g))+f(\widehat{B}_\mathrm{D}(g))+f(\widehat{C}_\mathrm{D}(g))+f(\widehat{D}_\mathrm{D}(g))-\widehat{A}_\mathrm{D}(g)-\widehat{B}_\mathrm{D}(g)-\widehat{C}_\mathrm{D}(g)-\widehat{D}_\mathrm{D}(g))\mathbb{P}(\mathfrak{D}_\mathrm{D}^-(g))\\
\le&\underset{\mathcal{D}_{\mathrm{D\text{-}PC}}\in\mathfrak{D}_\mathrm{D}^-(g)}{\sup}(L_f|\widehat{A}_\mathrm{D}(g)|+L_f|\widehat{B}_\mathrm{D}(g)|+L_f|\widehat{C}_\mathrm{D}(g)|+L_f|\widehat{D}_\mathrm{D}(g)|\\
&\hspace{2cm}+|\widehat{A}_\mathrm{D}(g)|+|\widehat{B}_\mathrm{D}(g)|+|\widehat{C}_\mathrm{D}(g)|+|\widehat{D}_\mathrm{D}(g)|)\mathbb{P}(\mathfrak{D}_\mathrm{D}^-(g))\\
\le&\underset{\mathcal{D}_{\mathrm{D\text{-}PC}}\in\mathfrak{D}_\mathrm{D}^-(g)}{\sup}\frac{L_f+1}{|\pi_+-\pi_-|n_\mathrm{D}}\biggl\{\left|\sum_{i=1}^{n_\mathrm{D}}\pi_-(\pi_+^2-\pi_-)\ell(g(\bm{x}_{\mathrm{D},i}),+1)\right|+\left|\sum_{i=1}^{n_\mathrm{D}}\pi_+(\pi_--\pi_+^2)\ell(g(\bm{x}_{\mathrm{D},i}),-1)\right|\\
&\hspace{2cm}+\left|\sum_{i=1}^{n_\mathrm{D}}\pi_-(\pi_-^2-\pi_+)\ell(g(\bm{x}_{\mathrm{D},i}'),+1)\right|+\left|\sum_{i=1}^{n_\mathrm{D}}\pi_+(\pi_+-\pi_-^2)\ell(g(\bm{x}_{\mathrm{D},i}'),-1)\right|\biggr\}\mathbb{P}(\mathfrak{D}_\mathrm{D}^-(g))\\
\le&\underset{\mathcal{D}_{\mathrm{D\text{-}PC}}\in\mathfrak{D}_\mathrm{D}^-(g)}{\sup}\frac{L_f+1}{|\pi_+-\pi_-|n_\mathrm{D}}\biggl\{\sum_{i=1}^{n_\mathrm{D}}|\pi_-(\pi_+^2-\pi_-)\ell(g(\bm{x}_{\mathrm{D},i}),+1)|+\sum_{i=1}^{n_\mathrm{D}}|\pi_+(\pi_--\pi_+^2)\ell(g(\bm{x}_{\mathrm{D},i}),-1)|\\
&\hspace{2cm}+\sum_{i=1}^{n_\mathrm{D}}|\pi_-(\pi_-^2-\pi_+)\ell(g(\bm{x}_{\mathrm{D},i}'),+1)|+\sum_{i=1}^{n_\mathrm{D}}|\pi_+(\pi_+-\pi_-^2)\ell(g(\bm{x}_{\mathrm{D},i}'),-1)|\biggr\}\mathbb{P}(\mathfrak{D}_\mathrm{D}^-(g))\\
\le&\underset{\mathcal{D}_{\mathrm{D\text{-}PC}}\in\mathfrak{D}_\mathrm{D}^-(g)}{\sup}\frac{(L_f+1)C_\ell}{|\pi_+-\pi_-|n_\mathrm{D}}\biggl\{\sum_{i=1}^{n_\mathrm{D}}\pi_-|\pi_+^2-\pi_-|+\pi_+|\pi_--\pi_+^2|+\pi_-|\pi_-^2-\pi_+|+\pi_+|\pi_+-\pi_-^2|\biggr\}\mathbb{P}(\mathfrak{D}_\mathrm{D}^-(g))\\
\le&\underset{\mathcal{D}_{\mathrm{D\text{-}PC}}\in\mathfrak{D}_\mathrm{D}^-(g)}{\sup}\frac{(|\pi_+^2-\pi_-|+|\pi_+-\pi_-^2|)(L_f+1)C_\ell}{|\pi_+-\pi_-|}\mathbb{P}(\mathfrak{D}_\mathrm{D}^-(g))\\
\le&\frac{(|\pi_+^2-\pi_-|+|\pi_+-\pi_-^2|)(L_f+1)C_\ell\Delta_\mathrm{D}}{|\pi_+-\pi_-|}.
\end{align*}
The final equality follows from Lemma~\ref{SDPC:lemma:upper exp}.

Next, we derive an upper bound for $|\widetilde{R}_{\mathrm{SD\text{-}PC}}(g)-\mathbb{E}[\widetilde{R}_{\mathrm{SD-PC}}(g)]|$.
Let 
\[\widetilde{R}_{\mathrm{S\text{-}PC}}(g)=f(\widehat{A}_\mathrm{S}(g))+f(\widehat{B}_\mathrm{S}(g))+f(\widehat{C}_\mathrm{S}(g))+f(\widehat{D}_\mathrm{S}(g)),\]
\[\widetilde{\dot{R}}_{\mathrm{S\text{-}PC}}(g)=f(\widehat{\dot{A}}_\mathrm{S}(g))+f(\widehat{\dot{B}}_\mathrm{S}(g))+f(\widehat{\dot{C}}_\mathrm{S}(g))+f(\widehat{\dot{D}}_\mathrm{S}(g))\]
denote the empirical risks when a single data pair in $\mathcal{D}_{\mathrm{S\text{-}PC}}$ is replaced. 
Similarly, let
\[\widetilde{R}_{\mathrm{D\text{-}PC}}(g)=f(\widehat{A}_\mathrm{D}(g))+f(\widehat{B}_\mathrm{D}(g))+f(\widehat{C}_\mathrm{D}(g))+f(\widehat{D}_\mathrm{D}(g)),\]
\[\widetilde{\dot{R}}_{\mathrm{D\text{-}PC}}(g)=f(\widehat{\dot{A}}_\mathrm{D}(g))+f(\widehat{\dot{B}}_\mathrm{D}(g))+f(\widehat{\dot{C}}_\mathrm{D}(g))+f(\widehat{\dot{D}}_\mathrm{D}(g))\]
denote the empirical risks when a single data pair in $\mathcal{D}_{\mathrm{D\text{-}PC}}$ is replaced.

Then, the following inequality holds:
\begin{align*}
&|\widetilde{R}_{\mathrm{S\text{-}PC}}(g)-\widetilde{\dot{R}}_{\mathrm{S\text{-}PC}}(g)|\\
&=|f(\widehat{A}_\mathrm{S}(g))-f(\widehat{\dot{A}}(g))+f(\widehat{B}_\mathrm{S}(g))-f(\widehat{\dot{B}}_\mathrm{S}(g))\\
&\hspace{1cm}+f(\widehat{C}_\mathrm{S}(g))-f(\widehat{\dot{C}}_\mathrm{S}(g))+f(\widehat{D}_\mathrm{S}(g))-f(\widehat{\dot{D}}_\mathrm{S}(g))|\\
&\le|f(\widehat{A}_\mathrm{S}(g))-f(\widehat{\dot{A}}_\mathrm{S}(g))|+|f(\widehat{B}_\mathrm{S}(g))-f(\widehat{\dot{B}}_\mathrm{S}(g))|\\
&\hspace{1cm}+|f(\widehat{C}_\mathrm{S}(g))-f(\widehat{\dot{C}}_\mathrm{S}(g))|+|f(\widehat{D}_\mathrm{S}(g))-f(\widehat{\dot{D}}_\mathrm{S}(g))|\\
&\le{L_f}|\widehat{A}_\mathrm{S}(g)-\widehat{\dot{A}}_\mathrm{S}(g)|+L_f|\widehat{B}_\mathrm{S}(g)-\widehat{\dot{B}}_\mathrm{S}(g)|+L_f|\widehat{C}_\mathrm{S}(g)-\widehat{\dot{C}}_\mathrm{S}(g)|+L_f|\widehat{D}_\mathrm{S}(g)-\widehat{\dot{D}}_\mathrm{S}(g)|\\
&\le{L_f}\biggl\{\frac{\pi_+^3C_\ell}{|\pi_+-\pi_-|n_\mathrm{S}}+\frac{\pi_+^2\pi_-C_\ell}{|\pi_+-\pi_-|n_\mathrm{S}}+\frac{\pi_+\pi_-^2C_\ell}{|\pi_+-\pi_-|n_\mathrm{S}}+\frac{\pi_-^3C_\ell}{|\pi_+-\pi_-|n_\mathrm{S}}\biggr\}\\
&\le\frac{(\pi_+^2+\pi_-^2)L_fC_{\ell}}{|\pi_+-\pi_-|n_\mathrm{S}}.
\end{align*}

Similarly, for $\widetilde{R}_{\mathrm{D\text{-}PC}}(g)$, the following inequality holds:
\begin{align*}
&|\widetilde{R}_{\mathrm{D\text{-}PC}}(g)-\widetilde{\dot{R}}_{\mathrm{D\text{-}PC}}(g)|\\
&=|f(\widehat{A}_\mathrm{D}(g))-f(\widehat{\dot{A}}(g))+f(\widehat{B}_\mathrm{D}(g))-f(\widehat{\dot{B}}_\mathrm{D}(g))\\
&\hspace{1cm}+f(\widehat{C}_\mathrm{D}(g))-f(\widehat{\dot{C}}_\mathrm{D}(g))+f(\widehat{D}_\mathrm{D}(g))-f(\widehat{\dot{D}}_\mathrm{D}(g))|\\
&\le|f(\widehat{A}_\mathrm{D}(g))-f(\widehat{\dot{A}}_\mathrm{D}(g))|+|f(\widehat{B}_\mathrm{D}(g))-f(\widehat{\dot{B}}_\mathrm{D}(g))|\\
&\hspace{1cm}+|f(\widehat{C}_\mathrm{D}(g))-f(\widehat{\dot{C}}_\mathrm{D}(g))|+|f(\widehat{D}_\mathrm{D}(g))-f(\widehat{\dot{D}}_\mathrm{D}(g))|\\
&\le{L_f}|\widehat{A}_\mathrm{D}(g)-\widehat{\dot{A}}_\mathrm{D}(g)|+L_f|\widehat{B}_\mathrm{D}(g)-\widehat{\dot{B}}_\mathrm{D}(g)|+L_f|\widehat{C}_\mathrm{D}(g)-\widehat{\dot{C}}_\mathrm{D}(g)|+L_f|\widehat{D}_\mathrm{D}(g)-\widehat{\dot{D}}_\mathrm{D}(g)|\\
&\le{L_f}\biggl\{\frac{\pi_-|\pi_+^2-\pi_-|C_\ell}{|\pi_+-\pi_-|n_\mathrm{D}}+\frac{\pi_+|\pi_--\pi_+^2|C_\ell}{|\pi_+-\pi_-|n_\mathrm{D}}+\frac{\pi_-|\pi_-^2-\pi_+|C_\ell}{|\pi_+-\pi_-|n_\mathrm{D}}+\frac{\pi_+|\pi_+-\pi_-^2|C_\ell}{|\pi_+-\pi_-|n_\mathrm{D}}\biggr\}\\
&\le\frac{(|\pi_+^2-\pi_-|+|\pi_+-\pi_-^2|)L_fC_{\ell}}{|\pi_+-\pi_-|n_\mathrm{D}}.
\end{align*}

By McDiarmid's inequality~\cite{McDiarmid}, each of the following inequalities holds with probability at least $1-\delta/2$.
\begin{gather*}
\widetilde{R}_{\mathrm{S\text{-}PC}}(g)-\mathbb{E}[\widetilde{R}_{\mathrm{S\text{-}PC}}(g)]\le\frac{(\pi_+^2+\pi_-^2)L_fC_{\ell}}{|\pi_+-\pi_-|}\sqrt{\frac{\log2/\delta}{2n_\mathrm{S}}},\\
\mathbb{E}[\widetilde{R}_{\mathrm{S\text{-}PC}}(g)]-\widetilde{R}_{\mathrm{S\text{-}PC}}(g)\le\frac{(\pi_+^2+\pi_-^2)L_fC_{\ell}}{|\pi_+-\pi_-|}\sqrt{\frac{\log2/\delta}{2n_\mathrm{S}}},\\
\widetilde{R}_{\mathrm{D\text{-}PC}}(g)-\mathbb{E}[\widetilde{R}_{\mathrm{D\text{-}PC}}(g)]\le\frac{(|\pi_+^2-\pi_-|+|\pi_+-\pi_-^2|)L_fC_{\ell}}{|\pi_+-\pi_-|}\sqrt{\frac{\log2/\delta}{2n_\mathrm{D}}},\\
\mathbb{E}[\widetilde{R}_{\mathrm{D\text{-}PC}}(g)]-\widetilde{R}_{\mathrm{D\text{-}PC}}(g)\le\frac{(|\pi_+^2-\pi_-|+|\pi_+-\pi_-^2|)L_fC_{\ell}}{|\pi_+-\pi_-|}\sqrt{\frac{\log2/\delta}{2n_\mathrm{D}}}.
\end{gather*}
Therefore, each of the following inequalities holds with probability at least $1-\delta$.
\begin{gather*}
|\widetilde{R}_{\mathrm{S\text{-}PC}}(g)-\mathbb{E}[\widetilde{R}_{\mathrm{S\text{-}PC}}(g)]|\le\frac{(\pi_+^2+\pi_-^2)L_fC_{\ell}}{|\pi_+-\pi_-|}\sqrt{\frac{\log2/\delta}{2n_\mathrm{S}}},\\
|\widetilde{R}_{\mathrm{D\text{-}PC}}(g)-\mathbb{E}[\widetilde{R}_{\mathrm{D\text{-}PC}}(g)]|\le\frac{(|\pi_+^2-\pi_-|+|\pi_+-\pi_-^2|)L_fC_{\ell}}{|\pi_+-\pi_-|}\sqrt{\frac{\log2/\delta}{2n_\mathrm{D}}}.
\end{gather*}
From the above, each of the following inequalities holds with probability at least $1-\delta$.
\begin{align*}
|\widetilde{R}_{\mathrm{S\text{-}PC}}(g)-R_\mathrm{S}(g)|=&|\widetilde{R}_{\mathrm{S\text{-}PC}}(g)-\mathbb{E}[\widetilde{R}_{\mathrm{S\text{-}PC}}(g)]+\mathbb{E}[\widetilde{R}_{\mathrm{S\text{-}PC}}(g)]-R_\mathrm{S}(g)|\\
\le&|\widetilde{R}_{\mathrm{S\text{-}PC}}(g)-\mathbb{E}[\widetilde{R}_{\mathrm{S\text{-}PC}}(g)]|+|\mathbb{E}[\widetilde{R}_{\mathrm{S\text{-}PC}}(g)]-R_\mathrm{S}
(g)|\\
=&|\widetilde{R}_{\mathrm{S\text{-}PC}}(g)-\mathbb{E}[\widetilde{R}_{\mathrm{S\text{-}PC}}(g)]|+\mathbb{E}[\widetilde{R}_{\mathrm{S\text{-}PC}}(g)]-R_\mathrm{S}
(g)\\
\le&\frac{(\pi_+^2+\pi_-^2)L_fC_{\ell}}{|\pi_+-\pi_-|}\sqrt{\frac{\log2/\delta}{2n_\mathrm{S}}}+\frac{(\pi_+^2+\pi_-^2)(L_f+1)C_\ell\Delta_\mathrm{S}}{|\pi_+-\pi_-|}.
\end{align*}
The second equality follows from Eq.~\eqref{SDPC:eq:proof of thmA10}. Similarly, 
\begin{align*}
|\widetilde{R}_{\mathrm{D\text{-}PC}}(g)-R_\mathrm{D}(g)|=&|\widetilde{R}_{\mathrm{D\text{-}PC}}(g)-\mathbb{E}[\widetilde{R}_{\mathrm{D\text{-}PC}}(g)]+\mathbb{E}[\widetilde{R}_{\mathrm{D\text{-}PC}}(g)]-R_\mathrm{D}(g)|\\
\le&|\widetilde{R}_{\mathrm{D\text{-}PC}}(g)-\mathbb{E}[\widetilde{R}_{\mathrm{D\text{-}PC}}(g)]|+|\mathbb{E}[\widetilde{R}_{\mathrm{D\text{-}PC}}(g)]-R_\mathrm{D}
(g)|\\
=&|\widetilde{R}_{\mathrm{D\text{-}PC}}(g)-\mathbb{E}[\widetilde{R}_{\mathrm{D\text{-}PC}}(g)]|+\mathbb{E}[\widetilde{R}_{\mathrm{D\text{-}PC}}(g)]-R_\mathrm{D}
(g)\\
\le&\frac{(|\pi_+^2-\pi_-|+|\pi_+-\pi_-^2|)L_fC_{\ell}}{|\pi_+-\pi_-|}\sqrt{\frac{\log2/\delta}{2n_\mathrm{D}}}+\frac{(|\pi_+^2-\pi_-|+|\pi_+-\pi_-^2|)(L_f+1)C_\ell\Delta_\mathrm{D}}{|\pi_+-\pi_-|}.
\end{align*}
The second equality follows from Eq.~\eqref{SDPC:eq:proof of thmA10-2}.
\end{proof}

Finally, we prove Theorem~\ref{SDPC:thm:corrected risk estimation error}.
\begin{proof}
By combining the probabilistic inequalities that hold with probability at least $1-\delta/3$ in Theorems~\ref{SDPC:thm:consistency} and \ref{SDPC:thm:SD-Pcomp estimation error bound}, we can derive that the following inequality holds with probability at least $1-\delta$.
\begin{align*}
R(\tilde{g}_{\mathrm{SD\text{-}PC}})-R(g^*)=&(R(\tilde{g}_{\mathrm{SD\text{-}PC}})-\widetilde{R}_{\mathrm{SD\text{-}PC}}(\tilde{g}_{\mathrm{SD\text{-}PC}}))+(\widetilde{R}_{\mathrm{SD\text{-}PC}}(\tilde{g}_{\mathrm{SD\text{-}PC}})-\widetilde{R}_{\mathrm{SD\text{-}PC}}(\hat{g}_{\mathrm{SD\text{-}PC}}))\\
&+(\widetilde{R}_{\mathrm{SD\text{-}PC}}(\hat{g}_{\mathrm{SD\text{-}PC}})-R(\hat{g}_{\mathrm{SD\text{-}PC}}))+(R(\hat{g}_{\mathrm{SD\text{-}PC}})-R(g^*))\\
\le&|R(\tilde{g}_{\mathrm{SD\text{-}PC}})-\widetilde{R}_{\mathrm{SD\text{-}PC}}(\tilde{g}_{\mathrm{SD\text{-}PC}})|+|\widetilde{R}_{\mathrm{SD\text{-}PC}}(\hat{g}_{\mathrm{SD\text{-}PC}})-R(\hat{g}_{\mathrm{SD\text{-}PC}})|\\
&+(R(\hat{g}_{\mathrm{SD\text{-}PC}})-R(g^*))\\
\le&2\underset{g\in\mathcal{G}}{\sup}|R(g)-\widetilde{R}_{\mathrm{SD\text{-}PC}}(g)|+(R(\hat{g}_{\mathrm{SD\text{-}PC}})-R(g^*))\\
\le&2\underset{g\in\mathcal{G}}{\sup}|R_\mathrm{S}(g)-\widetilde{R}_{\mathrm{S\text{-}PC}}(g)|+2\underset{g\in\mathcal{G}}{\sup}|R_\mathrm{D}(g)-\widetilde{R}_{\mathrm{D\text{-}PC}}(g)|+(R(\hat{g}_{\mathrm{SD\text{-}PC}})-R(g^*))\\
\le&\frac{2(\pi_+^2+\pi_-^2)L_fC_{\ell}}{|\pi_+-\pi_-|}\sqrt{\frac{\log6/\delta}{2n_\mathrm{S}}}+\frac{2(\pi_+^2+\pi_-^2)(L_f+1)C_\ell\Delta_\mathrm{S}}{|\pi_+-\pi_-|}\\
&+\frac{2(|\pi_+^2-\pi_-|+|\pi_+-\pi_-^2|)L_fC_{\ell}}{|\pi_+-\pi_-|}\sqrt{\frac{\log6/\delta}{2n_\mathrm{D}}}+\frac{2(|\pi_+^2-\pi_-|+|\pi_+-\pi_-^2|)(L_f+1)C_\ell\Delta_\mathrm{D}}{|\pi_+-\pi_-|}\\
&+\frac{4L_{\ell}C_{\mathcal{G}}+2C_{\ell}\sqrt{\log{12/\delta}}}{|\pi_+-\pi_-|}\Biggl\{\frac{\pi_+^2+\pi_-^2}{\sqrt{2n_{\mathrm{S}}}}+\frac{|\pi_+^2-\pi_-|+|\pi_+-\pi_-^2|}{\sqrt{2n_{\mathrm{D}}}}\Biggr\}\\
=&\frac{4L_{\ell}C_{\mathcal{G}}+2C_{\ell}(\sqrt{\log{12/\delta}}+L_f\sqrt{\log{6/\delta}})}{|\pi_+-\pi_-|}\Biggl\{\frac{\pi_+^2+\pi_-^2}{\sqrt{2n_{\mathrm{S}}}}+\frac{|\pi_+^2-\pi_-|+|\pi_+-\pi_-^2|}{\sqrt{2n_{\mathrm{D}}}}\Biggr\}\\
&+\frac{2(\pi_+^2+\pi_-^2)(L_f+1)C_\ell\Delta_\mathrm{S}}{|\pi_+-\pi_-|}+\frac{2(|\pi_+^2-\pi_-|+|\pi_+-\pi_-^2|)(L_f+1)C_\ell\Delta_\mathrm{D}}{|\pi_+-\pi_-|}.
\end{align*}
\end{proof}

\section{Discussion}
When an additional dataset with standard class annotations
$\mathcal{D}_{\mathrm{Ord}}
= \{(\bm{x}_i^{\mathrm{Ord}}, y_i^{\mathrm{Ord}})\}_{i=1}^{n_{\mathrm{Ord}}}$
is accessible, it may be jointly used for training alongside the SD-Pcomp data.
We define the empirical risk over $\mathcal{D}_{\mathrm{Ord}}$ as
$\widehat{R}_{\mathrm{Ord}}(g)
= \frac{1}{n_{\mathrm{Ord}}}
\sum_{i=1}^{n_{\mathrm{Ord}}}
\ell\left(g(\bm{x}_i^{\mathrm{Ord}}), y_i^{\mathrm{Ord}}\right)$,
which is evaluated using ground-truth labels.
By forming a weighted average of
$\widehat{R}_{\mathrm{SD\text{-}PC}}(g)$ and
$\widehat{R}_{\mathrm{Ord}}(g)$
with a mixing parameter $\lambda \in [0,1]$,
we obtain an unbiased estimator of the target risk that exploits both
SD-Pcomp data and ordinary labeled data.

\section{Details of Experiments}\label{SDPC:appendix:exp}

\subsection{Generation of SD-Pcomp Dataset}

Although SD and Pcomp labels are originally assigned by human annotators, in this work they are generated under controlled conditions to simplify the experiments.
First, for the SD labels, each instance pair is divided into a similar pair set $\mathcal{D}_\mathrm{S}$ and a dissimilar pair set $\mathcal{D}_\mathrm{D}$ following the generation procedure described in Section~\ref{SDPC:sec:SD classification}. 
Next, to generate the confidence scores $p(y=+1\mid\bm{x})$ required for assigning Pcomp labels, we train a probabilistic classifier based on logistic regression using the ground-truth class labels.
The model architecture is identical to that used in the experiments described in the main text.
Using the trained probabilistic classifier, we compute the confidence score for each instance and assign Pcomp information to each instance pair in both $\mathcal{D}_\mathrm{S}$ and $\mathcal{D}_\mathrm{D}$. 
In other words, this procedure provides an ordering among instances. 
Further details on the generation of the SD-Pcomp dataset are provided in Algorithm~\ref{SDPC:alg:generate SD-Pcomp}.

\begin{algorithm*}[tb]
\caption{Generation of Synthetic SD-Pcomp Data Pairs for Experiments}
\label{SDPC:alg:generate SD-Pcomp}
\begin{algorithmic}
\STATE {\bfseries Input:} train dataset $\mathcal{D}_{\text{train}}=\mathcal{D}_\mathrm{S}\cup\mathcal{D}_\mathrm{D}(\mathcal{D}_\mathrm{S}:\text{similar pair},\mathcal{D}_\mathrm{D}:\text{dissimilar pair})$, size $2(n_\mathrm{S}+n_\mathrm{D})$, annotation model $f_{\text{prob}}$
\STATE Train $f_{\text{prob}}: \mathcal{X} \to [0, 1]$ on $\mathcal{D}_{\text{train}}$ (using ground-truth labels) to estimate $p(y=+1 \mid \bm{x})$.
\STATE $\mathcal{D}_{\mathrm{S\text{-}PC}} \leftarrow \emptyset$
\STATE $\mathcal{D}_{\mathrm{D\text{-}PC}} \leftarrow \emptyset$
\STATE $\mathcal{D}_{\mathrm{SD\text{-}PC}} \leftarrow \emptyset$
\FOR{$i=1$ to $n_\mathrm{S}$}
\STATE Sample $\bm{x}_i, \bm{x}_i' \in \mathcal{D}_{\text{S}}$.
\STATE Calculate the positive confidence for the first instance:
$r_i \leftarrow f_{\text{prob}}(\bm{x}_i) = p(y_i=+1 \mid \bm{x}_i)$
\STATE Calculate the positive confidence for the second instance:
$r_i' \leftarrow f_{\text{prob}}(\bm{x}_i') = p(y_i'=+1 \mid \bm{x}_i')$
\IF{$r_i\le{r_i'}$}
\STATE Swap $\bm{x}_i$ and $\bm{x}_i'$: $(\bm{x}_i,\bm{x}_i') \leftarrow (\bm{x}_i',\bm{x}_i)$
\ENDIF
\STATE $\mathcal{D}_{\mathrm{S\text{-}PC}} \leftarrow \mathcal{D}_{\mathrm{S\text{-}PC}} \cup \{((\bm{x}_i, \bm{x}_i'), s_i=+1)\}$
\ENDFOR
\FOR{$i=1$ to $n_\mathrm{D}$}
\STATE Sample $\bm{x}_i, \bm{x}_i' \in \mathcal{D}_{\text{D}}$.
\STATE Calculate the positive confidence for the first instance:
$r_i \leftarrow f_{\text{prob}}(\bm{x}_i) = p(y_i=+1 \mid \bm{x}_i)$
\STATE Calculate the positive confidence for the second instance:
$r_i' \leftarrow f_{\text{prob}}(\bm{x}_i') = p(y_i'=+1 \mid \bm{x}_i')$
\IF{$r_i\le{r_i'}$}
\STATE Swap $\bm{x}_i$ and $\bm{x}_i'$: $(\bm{x}_i,\bm{x}_i') \leftarrow (\bm{x}_i',\bm{x}_i)$
\ENDIF
\STATE $\mathcal{D}_{\mathrm{D\text{-}PC}} \leftarrow \mathcal{D}_{\mathrm{D\text{-}PC}} \cup \{((\bm{x}_i, \bm{x}_i'), s_i=-1)\}$
\ENDFOR
\STATE $D_{\mathrm{SD\text{-}PC}} \leftarrow \mathcal{D}_{\mathrm{S\text{-}PC}}\cup\mathcal{D}_{\mathrm{D\text{-}PC}}$
\STATE {\bfseries Output:}
SD-Pcomp dataset $\mathcal{D}_{\mathrm{SD\text{-}PC}}$
\end{algorithmic}
\end{algorithm*}

\subsection{Details of Experimental Datasets}

\begin{table}
\caption{Characteristics of experimental datasets.}
\label{SDPC:tab:details of dataset}
\centering
\begin{tabular}{cccccc}
\toprule
\textbf{Dataset} & \textbf{\# Train} & \textbf{\# Test} & \textbf{\# Features} & \textbf{\# Class Labels} & \textbf{Model} \\
\midrule
\textbf{MNIST} & 60,000 & 10,000 & 784 & 10 & MLP \\
\textbf{Kuzushiji} & 60,000 & 10,000 & 784 & 10 & MLP \\
\textbf{Fashion} & 60,000 & 10,000 & 784 & 10 & MLP \\
\textbf{CIFAR-10} & 50,000 & 10,000 & 3,072 & 10 & ResNet-34 \\
\midrule
\textbf{Optdigits} & 4,495 & 1,125 & 64 & 10 & MLP \\
\textbf{Pendigits} & 8,793 & 2,199 & 16 & 10 & MLP \\
\textbf{Letter} & 16,000 & 4,000 & 16 & 10 & MLP \\
\textbf{PMU-UD} & 4,144 & 1,036 & 784 & 26 & MLP \\
\bottomrule
\end{tabular}
\end{table}

The specifications of the datasets used in our experiments, along with the corresponding model architectures, are summarized in Table~\ref{SDPC:tab:details of dataset}. 
An overview of each dataset and the number of samples used for training are as follows.

\begin{itemize}[left=1pt]
\item MNIST~\cite{mnist}:  
This dataset consists of $28\times28$ grayscale images of handwritten digits, with the label space $\{0,1,2,3,4,5,6,7,8,9\}$. 
To convert the multiclass problem into a binary classification task, the classes $\{0, 4, 6, 8, 9\}$ were treated as the positive class, and the classes $\{1, 2, 3, 5, 7\}$ as the negative class. 
For the experiments, 15,000 data pairs were sampled. 
This dataset is publicly available at \href{http://yann.lecun.com/exdb/mnist/}{http://yann.lecun.com/exdb/mnist/}.

\item Kuzushiji-MNIST~\cite{kmnist}: 
This dataset consists of $28\times28$ grayscale images of Hiragana characters, with the label space $\{\text{`o'}, \text{`ki'}, \text{`su'}, \text{`tsu'}, \text{`na'}, \text{`ha'}, \text{`ma'}, \text{`ya'}, \text{`re'}, \text{`wo'}\}$. 
To convert the multiclass problem into a binary classification task, the classes $\{\text{`o'}, \text{`su'}, \text{`na'}, \text{`ha'}, \text{`ma'}\}$ were treated as the positive class, and the classes $\{\text{`ki'}, \text{`tsu'}, \text{`ya'}, \text{`re'}, \text{`wo'}\}$ as the negative class. 
For the experiments, 15,000 data pairs were sampled. 
This dataset is publicly available at \href{https://github.com/rois-codh/kmnist}{https://github.com/rois-codh/kmnist}.

\item Fashion-MNIST~\cite{fashion}: 
This dataset consists of $28\times28$ grayscale images of fashion items, with the label space $\{\text{`T-shirt'}, \text{`trouser'}, \text{`pullover'}, \text{`dress'}, \text{`sandal'}, \text{`coat'}, \text{`shirt'}, \text{`sneaker'}, \text{`bag'}, \text{`ankle boot'}\}$. 
To convert the multiclass problem into a binary classification task, the classes $\{\text{`T-shirt'}, \text{`pullover'}, \text{`dress'}, \text{`coat'}, \text{`shirt'}\}$ were treated as the positive class, and the classes $\{\text{`trouser'}, \text{`sandal'}, \text{`sneaker'}, \text{`bag'}, \text{`ankle boot'}\}$ as the negative class. 
For the experiments, 15,000 data pairs were sampled. 
This dataset is publicly available at \href{https://github.com/zalandoresearch/fashion-mnist}{https://github.com/zalandoresearch/fashion-mnist}.

\item CIFAR-10~\cite{cifar10}: 
This dataset consists of $32\times32\times3$ color images for object recognition, with the label space $\{\text{`airplane'}, \text{`bird'}, \text{`automobile'}, \text{`cat'}, \text{`deer'}, \text{`dog'}, \text{`frog'}, \text{`horse'}, \text{`ship'}, \text{`truck'}\}$. 
To convert the multiclass problem into a binary classification task, the classes $\{\text{`bird'}, \text{`deer'}, \text{`dog'}, \text{`frog'}, \text{`cat'}, \text{`horse'}\}$ were treated as the positive class, and the classes $\{\text{`airplane'}, \text{`automobile'}, \text{`ship'}, \text{`truck'}\}$ as the negative class. 
For the experiments, 10,000 data pairs were sampled. 
This dataset is publicly available at \href{https://www.cs.toronto.edu/~kriz/cifar.html}{https://www.cs.toronto.edu/~kriz/cifar.html}.

\item Pendigits~\cite{pendigits}, PMU-UD~\cite{pmu-ud}: 
These are UCI datasets for handwritten character recognition. 
For PMU-UD, which consists of image data, the images were resized to $28\times28$ grayscale prior to training. 
The label space for these datasets is $\{0,1,2,3,4,5,6,7,8,9\}$, and for the binary classification task, the classes $\{0,2,4,6,8\}$ were treated as the positive class, and $\{1,3,5,7,9\}$ as the negative class. 
For Optdigits, Pendigits, and PMU-UD, 1,200, 2,000, and 1,000 data pairs were sampled, respectively. 
These datasets are publicly available at the following sources.
\begin{itemize}
\item \href{https://archive.ics.uci.edu/dataset/80/optical+recognition+of+handwritten+digits}{https://archive.ics.uci.edu/dataset/80/optical+recognition+of+handwritten+digits} \item \href{https://archive.ics.uci.edu/dataset/81/pen+based+recognition+of+handwritten+digits}{https://archive.ics.uci.edu/dataset/81/pen+based+recognition+of+handwritten+digits}
\item \href{https://archive.ics.uci.edu/dataset/469/pmu+ud}{https://archive.ics.uci.edu/dataset/469/pmu+ud}
\end{itemize}

\item Letter~\cite{letter}: 
This is a UCI dataset for recognizing the 26 English alphabet letters. 
For the binary classification task, the first 13 letters were treated as the positive class, and the remaining 13 letters as the negative class. 
For the experiments, 4,000 data pairs were sampled. 
This dataset is publicly available at \href{https://archive.ics.uci.edu/dataset/59/letter+recognition}{https://archive.ics.uci.edu/dataset/59/letter+recognition}.
\end{itemize}

\subsection{Details of Hyperparameters}

\begin{table}
\caption{Details of hyperparameters.}
\label{SDPC:tab:details of hyperparameters}
\centering
\begin{tabular}{ccccc}
\toprule
\textbf{Dataset} & \textbf{\# Epoch} & \textbf{\# Learning Rate} & \textbf{\# Weight Decay} & \textbf{\# Batch Size}\\
\midrule
\textbf{MNIST} & 100 & 1e-3 & 1e-5 & 256 \\
\textbf{Kuzushiji} & 100 & 1e-3 & 1e-5 & 256 \\
\textbf{Fashion} & 100 & 1e-3 & 1e-5 & 256 \\
\textbf{CIFAR-10} & 100 & 1e-3 & 1e-5 & 256 \\
\midrule
\textbf{Optdigits} & 100 & 1e-3 & 1e-5 & 256 \\
\textbf{Pendigits} & 100 & 1e-3 & 1e-5 & 256 \\
\textbf{Letter} & 100 & 1e-3 & 1e-5 & 256 \\
\textbf{PMU-UD} & 100 & 1e-3 & 1e-5 & 256 \\
\bottomrule
\end{tabular}
\end{table}

The hyperparameters used in the experiments are summarized in Table~\ref{SDPC:tab:details of hyperparameters}. 
Test accuracy was calculated as the average over the last 10 epochs. 
The probabilistic classifier used to generate confidence scores for Pcomp was trained for 10 epochs. 
All methods were implemented in PyTorch~\cite{PyTorch}, and Adam~\cite{Adam} was used as the optimization algorithm.

\subsection{Details of Results}
Tables~\ref{SDPC:tab:all_benchmark_acc} and~\ref{SDPC:tab:all_uci_acc} summarize the classification accuracies on the benchmark datasets and UCI datasets, respectively. 
Similarly, Tables~\ref{SDPC:tab:all_benchmark_auc} and~\ref{SDPC:tab:all_uci_auc} summarize the AUC scores on the benchmark datasets and UCI datasets, respectively.

\begin{table*}[t]
\caption{Classification accuracy on the benchmark test set with $\pi_+=0.1, 0.4, 0.7$ averaged over five random seeds, with mean and standard deviation (mean$\pm$std). The highest score among the compared methods, excluding supervised learning, is shown in bold.}
\label{SDPC:tab:all_benchmark_acc}
\begin{center}
\begin{small}
\begin{tabular}{clcccc}
\toprule
Class Prior & \multicolumn{1}{c}{Method} & MNIST & Kuzushiji & Fashion & CIFAR10 \\
\midrule
& SD-Pcomp-Unbiased & 0.918 $\pm$ 0.004 & 0.875 $\pm$ 0.026 & 0.929 $\pm$ 0.003 & 0.899 $\pm$ 0.003 \\
& SD-Pcomp-ReLU & 0.937 $\pm$ 0.030 & 0.812 $\pm$ 0.063 & 0.887 $\pm$ 0.091 & 0.720 $\pm$ 0.062 \\
& SD-Pcomp-ABS & 0.370 $\pm$ 0.026 & 0.455 $\pm$ 0.050 & 0.280 $\pm$ 0.038 & 0.627 $\pm$ 0.118 \\
\cmidrule(lr){2-6}
& Convex ($\gamma=0.2$)-Unbiased & 0.902 $\pm$ 0.006 & 0.879 $\pm$ 0.017 & 0.887 $\pm$ 0.003 & \textbf{0.900 $\pm$ 0.000} \\
& Convex ($\gamma=0.5$)-Unbiased & 0.911 $\pm$ 0.012 & 0.866 $\pm$ 0.018 & 0.890 $\pm$ 0.002 & 0.898 $\pm$ 0.003 \\
& Convex ($\gamma=0.8$)-Unbiased & 0.878 $\pm$ 0.022 & 0.827 $\pm$ 0.021 & 0.898 $\pm$ 0.008 & 0.891 $\pm$ 0.021 \\
\cmidrule(lr){2-6}
& Convex ($\gamma=0.2$)-ReLU & 0.947 $\pm$ 0.024 & 0.906 $\pm$ 0.018 & 0.960 $\pm$ 0.015 & 0.381 $\pm$ 0.338 \\
& Convex ($\gamma=0.5$)-ReLU & 0.952 $\pm$ 0.010 & 0.881 $\pm$ 0.007 & 0.959 $\pm$ 0.016 & 0.860 $\pm$ 0.024 \\
$\pi_+=0.1$ & Convex ($\gamma=0.8$)-ReLU & 0.954 $\pm$ 0.010 & 0.894 $\pm$ 0.034 & 0.950 $\pm$ 0.016 & 0.854 $\pm$ 0.067 \\
\cmidrule(lr){2-6}
& Convex ($\gamma=0.2$)-ABS & \textbf{0.958 $\pm$ 0.009} & \textbf{0.931 $\pm$ 0.005} & \textbf{0.962 $\pm$ 0.006} & 0.135 $\pm$ 0.045 \\
& Convex ($\gamma=0.5$)-ABS & 0.649 $\pm$ 0.083 & 0.490 $\pm$ 0.062 & 0.859 $\pm$ 0.055 & 0.711 $\pm$ 0.151 \\
& Convex ($\gamma=0.8$)-ABS & 0.916 $\pm$ 0.014 & 0.838 $\pm$ 0.027 & 0.939 $\pm$ 0.013 & 0.874 $\pm$ 0.046 \\
\cmidrule(lr){2-6}
& SD & 0.862 $\pm$ 0.023 & 0.804 $\pm$ 0.070 & 0.864 $\pm$ 0.044 & 0.860 $\pm$ 0.028 \\
& Pcomp-Unbiased & 0.896 $\pm$ 0.008 & 0.882 $\pm$ 0.016 & 0.886 $\pm$ 0.003 & 0.898 $\pm$ 0.003 \\
& Pcomp-ReLU & 0.370 $\pm$ 0.070 & 0.358 $\pm$ 0.254 & 0.312 $\pm$ 0.112 & 0.215 $\pm$ 0.147 \\
& Pcomp-ABS & 0.764 $\pm$ 0.047 & 0.746 $\pm$ 0.056 & 0.453 $\pm$ 0.113 & 0.160 $\pm$ 0.059 \\
\cmidrule(lr){2-6}
& Supervised & 0.995 $\pm$ 0.000 & 0.961 $\pm$ 0.001 & 0.992 $\pm$ 0.000 & 0.931 $\pm$ 0.004 \\
\midrule
& SD-Pcomp-Unbiased & 0.847 $\pm$ 0.039 & 0.743 $\pm$ 0.005 & 0.610 $\pm$ 0.005 & \textbf{0.671 $\pm$ 0.030} \\
& SD-Pcomp-ReLU & 0.589 $\pm$ 0.071 & 0.507 $\pm$ 0.014 & 0.514 $\pm$ 0.074 & 0.492 $\pm$ 0.025 \\
& SD-Pcomp-ABS & 0.749 $\pm$ 0.053 & 0.705 $\pm$ 0.013 & 0.669 $\pm$ 0.037 & 0.523 $\pm$ 0.036 \\
\cmidrule(lr){2-6}
& Convex ($\gamma=0.2$)-Unbiased & 0.699 $\pm$ 0.015 & 0.690 $\pm$ 0.011 & 0.576 $\pm$ 0.007 & 0.592 $\pm$ 0.015 \\
& Convex ($\gamma=0.5$)-Unbiased & 0.783 $\pm$ 0.051 & 0.649 $\pm$ 0.027 & 0.599 $\pm$ 0.006 & 0.500 $\pm$ 0.043 \\
& Convex ($\gamma=0.8$)-Unbiased & 0.785 $\pm$ 0.045 & 0.656 $\pm$ 0.012 & 0.542 $\pm$ 0.007 & 0.563 $\pm$ 0.021 \\
\cmidrule(lr){2-6}
& Convex ($\gamma=0.2$)-ReLU & 0.777 $\pm$ 0.047 & 0.759 $\pm$ 0.020 & 0.804 $\pm$ 0.011 & 0.662 $\pm$ 0.054 \\
& Convex ($\gamma=0.5$)-ReLU & 0.814 $\pm$ 0.033 & 0.749 $\pm$ 0.010 & 0.817 $\pm$ 0.031 & 0.592 $\pm$ 0.039 \\
$\pi_+=0.4$ & Convex ($\gamma=0.8$)-ReLU & 0.758 $\pm$ 0.077 & 0.633 $\pm$ 0.019 & 0.662 $\pm$ 0.173 & 0.569 $\pm$ 0.017 \\
\cmidrule(lr){2-6}
& Convex ($\gamma=0.2$)-ABS & \textbf{0.863 $\pm$ 0.009} & 0.685 $\pm$ 0.019 & \textbf{0.878 $\pm$ 0.032} & 0.637 $\pm$ 0.044 \\
& Convex ($\gamma=0.5$)-ABS & 0.812 $\pm$ 0.035 & 0.734 $\pm$ 0.009 & 0.805 $\pm$ 0.040 & 0.644 $\pm$ 0.072 \\
& Convex ($\gamma=0.8$)-ABS & 0.805 $\pm$ 0.031 & 0.693 $\pm$ 0.050 & 0.841 $\pm$ 0.041 & 0.569 $\pm$ 0.020 \\
\cmidrule(lr){2-6}
& SD & 0.807 $\pm$ 0.037 & 0.649 $\pm$ 0.007 & 0.823 $\pm$ 0.131 & 0.582 $\pm$ 0.038 \\
& Pcomp-Unbiased & 0.689 $\pm$ 0.019 & 0.678 $\pm$ 0.007 & 0.588 $\pm$ 0.009 & 0.592 $\pm$ 0.016 \\
& Pcomp-ReLU & 0.703 $\pm$ 0.042 & 0.614 $\pm$ 0.077 & 0.526 $\pm$ 0.106 & 0.438 $\pm$ 0.092 \\
& Pcomp-ABS & 0.645 $\pm$ 0.062 & \textbf{0.771 $\pm$ 0.018} & 0.394 $\pm$ 0.017 & 0.409 $\pm$ 0.056 \\
\cmidrule(lr){2-6}
& Supervised & 0.989 $\pm$ 0.001 & 0.932 $\pm$ 0.001 & 0.991 $\pm$ 0.001 & 0.871 $\pm$ 0.004 \\
\midrule
& SD-Pcomp-Unbiased & 0.902 $\pm$ 0.003 & 0.756 $\pm$ 0.006 & 0.849 $\pm$ 0.000 & \textbf{0.820 $\pm$ 0.069} \\
& SD-Pcomp-ReLU & 0.892 $\pm$ 0.024 & 0.624 $\pm$ 0.101 & 0.807 $\pm$ 0.040 & 0.656 $\pm$ 0.013 \\
& SD-Pcomp-ABS & 0.760 $\pm$ 0.019 & 0.578 $\pm$ 0.035 & 0.576 $\pm$ 0.027 & 0.405 $\pm$ 0.028 \\
\cmidrule(lr){2-6}
& Convex ($\gamma=0.2$)-Unbiased & 0.718 $\pm$ 0.008 & 0.707 $\pm$ 0.016 & 0.672 $\pm$ 0.006 & 0.683 $\pm$ 0.032 \\
& Convex ($\gamma=0.5$)-Unbiased & 0.744 $\pm$ 0.025 & 0.713 $\pm$ 0.006 & 0.676 $\pm$ 0.009 & 0.667 $\pm$ 0.052 \\
& Convex ($\gamma=0.8$)-Unbiased & 0.794 $\pm$ 0.040 & 0.687 $\pm$ 0.008 & 0.811 $\pm$ 0.031 & 0.690 $\pm$ 0.020 \\
\cmidrule(lr){2-6}
& Convex ($\gamma=0.2$)-ReLU & \textbf{0.909 $\pm$ 0.006} & \textbf{0.785 $\pm$ 0.025} & 0.895 $\pm$ 0.012 & 0.693 $\pm$ 0.046 \\
& Convex ($\gamma=0.5$)-ReLU & 0.908 $\pm$ 0.009 & 0.784 $\pm$ 0.033 & 0.893 $\pm$ 0.011 & 0.698 $\pm$ 0.052 \\
$\pi_+=0.7$ & Convex ($\gamma=0.8$)-ReLU & 0.909 $\pm$ 0.009 & 0.781 $\pm$ 0.021 & 0.860 $\pm$ 0.042 & 0.673 $\pm$ 0.016 \\
\cmidrule(lr){2-6}
& Convex ($\gamma=0.2$)-ABS & 0.868 $\pm$ 0.034 & 0.581 $\pm$ 0.089 & 0.924 $\pm$ 0.009 & 0.758 $\pm$ 0.027 \\
& Convex ($\gamma=0.5$)-ABS & 0.905 $\pm$ 0.003 & 0.735 $\pm$ 0.040 & \textbf{0.925 $\pm$ 0.012} & 0.751 $\pm$ 0.042 \\
& Convex ($\gamma=0.8$)-ABS & 0.891 $\pm$ 0.005 & 0.746 $\pm$ 0.027 & 0.885 $\pm$ 0.020 & 0.692 $\pm$ 0.009 \\
\cmidrule(lr){2-6}
& SD & 0.839 $\pm$ 0.027 & 0.722 $\pm$ 0.031 & 0.881 $\pm$ 0.052 & 0.680 $\pm$ 0.014 \\
& Pcomp-Unbiased & 0.706 $\pm$ 0.010 & 0.696 $\pm$ 0.012 & 0.672 $\pm$ 0.008 & 0.698 $\pm$ 0.004 \\
& Pcomp-ReLU & 0.662 $\pm$ 0.034 & 0.675 $\pm$ 0.079 & 0.554 $\pm$ 0.077 & 0.389 $\pm$ 0.170 \\
& Pcomp-ABS & 0.607 $\pm$ 0.022 & 0.768 $\pm$ 0.034 & 0.344 $\pm$ 0.083 & 0.306 $\pm$ 0.019 \\
\cmidrule(lr){2-6}
& Supervised & 0.990 $\pm$ 0.001 & 0.940 $\pm$ 0.001 & 0.992 $\pm$ 0.001 & 0.887 $\pm$ 0.005 \\
\bottomrule
\end{tabular}
\end{small}
\end{center}
\vskip -0.1in
\end{table*}

\begin{table*}[t]
\caption{Classification accuracy on the UCI test set with $\pi_+=0.1, 0.4, 0.7$ averaged over five random seeds, with mean and standard deviation (mean$\pm$std). The highest score among the compared methods, excluding supervised learning, is shown in bold.}
\label{SDPC:tab:all_uci_acc}
\begin{center}
\begin{small}
\begin{tabular}{clcccc}
\toprule
Class Prior & \multicolumn{1}{c}{Method} & Optdigits & Pendigits & Letter & PMU-UD \\
\midrule
& SD-Pcomp-Unbiased & 0.918 $\pm$ 0.012 & 0.926 $\pm$ 0.011 & \textbf{0.912 $\pm$ 0.003} & 0.921 $\pm$ 0.012 \\
& SD-Pcomp-ReLU & \textbf{0.952 $\pm$ 0.014} & \textbf{0.955 $\pm$ 0.011} & 0.888 $\pm$ 0.031 & 0.968 $\pm$ 0.027 \\
& SD-Pcomp-ABS & 0.690 $\pm$ 0.029 & 0.461 $\pm$ 0.051 & 0.400 $\pm$ 0.038 & 0.544 $\pm$ 0.018 \\
\cmidrule(lr){2-6}
& Convex ($\gamma=0.2$)-Unbiased & 0.871 $\pm$ 0.012 & 0.894 $\pm$ 0.023 & 0.869 $\pm$ 0.003 & 0.903 $\pm$ 0.001 \\
& Convex ($\gamma=0.5$)-Unbiased & 0.849 $\pm$ 0.021 & 0.905 $\pm$ 0.026 & 0.859 $\pm$ 0.004 & 0.910 $\pm$ 0.003 \\
& Convex ($\gamma=0.8$)-Unbiased & 0.866 $\pm$ 0.024 & 0.847 $\pm$ 0.021 & 0.852 $\pm$ 0.033 & 0.877 $\pm$ 0.016 \\
\cmidrule(lr){2-6}
& Convex ($\gamma=0.2$)-ReLU & 0.943 $\pm$ 0.011 & 0.935 $\pm$ 0.013 & 0.866 $\pm$ 0.025 & 0.933 $\pm$ 0.041 \\
& Convex ($\gamma=0.5$)-ReLU & 0.920 $\pm$ 0.033 & 0.943 $\pm$ 0.027 & 0.863 $\pm$ 0.024 & 0.950 $\pm$ 0.032 \\
$\pi_+=0.1$ & Convex ($\gamma=0.8$)-ReLU & 0.910 $\pm$ 0.027 & 0.947 $\pm$ 0.017 & 0.873 $\pm$ 0.043 & 0.932 $\pm$ 0.042 \\
\cmidrule(lr){2-6}
& Convex ($\gamma=0.2$)-ABS & 0.910 $\pm$ 0.134 & 0.807 $\pm$ 0.232 & 0.810 $\pm$ 0.064 & \textbf{0.978 $\pm$ 0.012} \\
& Convex ($\gamma=0.5$)-ABS & 0.806 $\pm$ 0.092 & 0.707 $\pm$ 0.094 & 0.471 $\pm$ 0.068 & 0.892 $\pm$ 0.051 \\
& Convex ($\gamma=0.8$)-ABS & 0.945 $\pm$ 0.016 & 0.923 $\pm$ 0.005 & 0.790 $\pm$ 0.057 & 0.949 $\pm$ 0.011 \\
\cmidrule(lr){2-6}
& SD & 0.897 $\pm$ 0.021 & 0.921 $\pm$ 0.018 & 0.844 $\pm$ 0.039 & 0.916 $\pm$ 0.017 \\
& Pcomp-Unbiased & 0.887 $\pm$ 0.014 & 0.890 $\pm$ 0.020 & 0.873 $\pm$ 0.003 & 0.902 $\pm$ 0.002 \\
& Pcomp-ReLU & 0.335 $\pm$ 0.062 & 0.364 $\pm$ 0.035 & 0.423 $\pm$ 0.036 & 0.444 $\pm$ 0.099 \\
& Pcomp-ABS & 0.468 $\pm$ 0.009 & 0.423 $\pm$ 0.094 & 0.642 $\pm$ 0.031 & 0.543 $\pm$ 0.035 \\
\cmidrule(lr){2-6}
& Supervised & 0.988 $\pm$ 0.004 & 0.997 $\pm$ 0.001 & 0.979 $\pm$ 0.003 & 0.995 $\pm$ 0.004 \\
\midrule
& SD-Pcomp-Unbiased & \textbf{0.893 $\pm$ 0.012} & 0.786 $\pm$ 0.056 & 0.687 $\pm$ 0.023 & \textbf{0.951 $\pm$ 0.006} \\
& SD-Pcomp-ReLU & 0.634 $\pm$ 0.062 & 0.618 $\pm$ 0.056 & 0.628 $\pm$ 0.029 & 0.588 $\pm$ 0.086 \\
& SD-Pcomp-ABS & 0.856 $\pm$ 0.022 & \textbf{0.876 $\pm$ 0.047} & \textbf{0.763 $\pm$ 0.015} & 0.858 $\pm$ 0.035 \\
\cmidrule(lr){2-6}
& Convex ($\gamma=0.2$)-Unbiased & 0.639 $\pm$ 0.013 & 0.688 $\pm$ 0.033 & 0.654 $\pm$ 0.014 & 0.674 $\pm$ 0.038 \\
& Convex ($\gamma=0.5$)-Unbiased & 0.803 $\pm$ 0.101 & 0.681 $\pm$ 0.030 & 0.634 $\pm$ 0.043 & 0.798 $\pm$ 0.105 \\
& Convex ($\gamma=0.8$)-Unbiased & 0.835 $\pm$ 0.107 & 0.866 $\pm$ 0.050 & 0.688 $\pm$ 0.033 & 0.874 $\pm$ 0.142 \\
\cmidrule(lr){2-6}
& Convex ($\gamma=0.2$)-ReLU & 0.770 $\pm$ 0.017 & 0.844 $\pm$ 0.017 & 0.738 $\pm$ 0.016 & 0.775 $\pm$ 0.035 \\
& Convex ($\gamma=0.5$)-ReLU & 0.784 $\pm$ 0.031 & 0.829 $\pm$ 0.009 & 0.744 $\pm$ 0.010 & 0.831 $\pm$ 0.037 \\
$\pi_+=0.4$ & Convex ($\gamma=0.8$)-ReLU & 0.822 $\pm$ 0.129 & 0.829 $\pm$ 0.036 & 0.721 $\pm$ 0.042 & 0.911 $\pm$ 0.025 \\
\cmidrule(lr){2-6}
& Convex ($\gamma=0.2$)-ABS & 0.838 $\pm$ 0.037 & 0.836 $\pm$ 0.038 & 0.685 $\pm$ 0.022 & 0.822 $\pm$ 0.020 \\
& Convex ($\gamma=0.5$)-ABS & 0.788 $\pm$ 0.031 & 0.833 $\pm$ 0.031 & 0.720 $\pm$ 0.008 & 0.837 $\pm$ 0.032 \\
& Convex ($\gamma=0.8$)-ABS & 0.847 $\pm$ 0.038 & 0.827 $\pm$ 0.022 & 0.747 $\pm$ 0.017 & 0.876 $\pm$ 0.025 \\
\cmidrule(lr){2-6}
& SD & 0.848 $\pm$ 0.112 & 0.874 $\pm$ 0.018 & 0.712 $\pm$ 0.028 & 0.885 $\pm$ 0.132 \\
& Pcomp-Unbiased & 0.669 $\pm$ 0.015 & 0.698 $\pm$ 0.034 & 0.640 $\pm$ 0.009 & 0.659 $\pm$ 0.026 \\
& Pcomp-ReLU & 0.714 $\pm$ 0.020 & 0.749 $\pm$ 0.063 & 0.649 $\pm$ 0.016 & 0.797 $\pm$ 0.092 \\
& Pcomp-ABS & 0.763 $\pm$ 0.016 & 0.792 $\pm$ 0.029 & 0.709 $\pm$ 0.015 & 0.759 $\pm$ 0.024 \\
\cmidrule(lr){2-6}
& Supervised & 0.986 $\pm$ 0.003 & 0.996 $\pm$ 0.001 & 0.973 $\pm$ 0.004 & 0.989 $\pm$ 0.007 \\
\midrule
& SD-Pcomp-Unbiased & \textbf{0.904 $\pm$ 0.021} & 0.863 $\pm$ 0.017 & 0.734 $\pm$ 0.013 & 0.845 $\pm$ 0.005 \\
& SD-Pcomp-ReLU & 0.831 $\pm$ 0.079 & \textbf{0.936 $\pm$ 0.016} & 0.775 $\pm$ 0.026 & \textbf{0.971 $\pm$ 0.010} \\
& SD-Pcomp-ABS & 0.704 $\pm$ 0.025 & 0.690 $\pm$ 0.032 & 0.668 $\pm$ 0.046 & 0.709 $\pm$ 0.010 \\
\cmidrule(lr){2-6}
& Convex ($\gamma=0.2$)-Unbiased & 0.724 $\pm$ 0.012 & 0.844 $\pm$ 0.028 & 0.697 $\pm$ 0.013 & 0.760 $\pm$ 0.018 \\
& Convex ($\gamma=0.5$)-Unbiased & 0.795 $\pm$ 0.042 & 0.875 $\pm$ 0.032 & 0.683 $\pm$ 0.033 & 0.860 $\pm$ 0.030 \\
& Convex ($\gamma=0.8$)-Unbiased & 0.787 $\pm$ 0.051 & 0.800 $\pm$ 0.031 & 0.671 $\pm$ 0.028 & 0.930 $\pm$ 0.034 \\
\cmidrule(lr){2-6}
& Convex ($\gamma=0.2$)-ReLU & 0.875 $\pm$ 0.024 & 0.895 $\pm$ 0.019 & 0.744 $\pm$ 0.028 & 0.882 $\pm$ 0.015 \\
& Convex ($\gamma=0.5$)-ReLU & 0.844 $\pm$ 0.038 & 0.860 $\pm$ 0.032 & 0.761 $\pm$ 0.030 & 0.833 $\pm$ 0.033 \\
$\pi_+=0.7$ & Convex ($\gamma=0.8$)-ReLU & 0.869 $\pm$ 0.025 & 0.898 $\pm$ 0.014 & 0.790 $\pm$ 0.029 & 0.919 $\pm$ 0.028 \\
\cmidrule(lr){2-6}
& Convex ($\gamma=0.2$)-ABS & 0.889 $\pm$ 0.017 & 0.858 $\pm$ 0.026 & \textbf{0.805 $\pm$ 0.029} & 0.921 $\pm$ 0.034 \\
& Convex ($\gamma=0.5$)-ABS & 0.872 $\pm$ 0.018 & 0.892 $\pm$ 0.021 & 0.721 $\pm$ 0.030 & 0.887 $\pm$ 0.020 \\
& Convex ($\gamma=0.8$)-ABS & 0.853 $\pm$ 0.015 & 0.877 $\pm$ 0.009 & 0.774 $\pm$ 0.022 & 0.855 $\pm$ 0.041 \\
\cmidrule(lr){2-6}
& SD & 0.871 $\pm$ 0.045 & 0.828 $\pm$ 0.028 & 0.694 $\pm$ 0.029 & 0.944 $\pm$ 0.019 \\
& Pcomp-Unbiased & 0.711 $\pm$ 0.009 & 0.802 $\pm$ 0.013 & 0.690 $\pm$ 0.007 & 0.733 $\pm$ 0.014 \\
& Pcomp-ReLU & 0.613 $\pm$ 0.071 & 0.672 $\pm$ 0.076 & 0.633 $\pm$ 0.042 & 0.623 $\pm$ 0.075 \\
& Pcomp-ABS & 0.693 $\pm$ 0.031 & 0.690 $\pm$ 0.058 & 0.674 $\pm$ 0.016 & 0.697 $\pm$ 0.054 \\
\cmidrule(lr){2-6}
& Supervised & 0.987 $\pm$ 0.002 & 0.996 $\pm$ 0.002 & 0.975 $\pm$ 0.002 & 0.991 $\pm$ 0.004 \\
\bottomrule
\end{tabular}
\end{small}
\end{center}
\vskip -0.1in
\end{table*}

\begin{table*}[t]
\caption{AUC values on the benchmark test set with $\pi_+=0.1, 0.4, 0.7$ averaged over five random seeds, with mean and standard deviation (mean$\pm$std). The highest score among the compared methods is shown in bold.}
\label{SDPC:tab:all_benchmark_auc}
\begin{center}
\begin{small}
\begin{tabular}{clcccc}
\toprule
Class Prior & \multicolumn{1}{c}{Method} & MNIST & Kuzushiji & Fashion & CIFAR10 \\
\midrule
& SD-Pcomp-Unbiased & 0.937 $\pm$ 0.006 & 0.648 $\pm$ 0.036 & 0.904 $\pm$ 0.006 & 0.740 $\pm$ 0.046 \\
& SD-Pcomp-ReLU & \textbf{0.991 $\pm$ 0.003} & \textbf{0.845 $\pm$ 0.037} & 0.979 $\pm$ 0.005 & \textbf{0.887 $\pm$ 0.018} \\
& SD-Pcomp-ABS & 0.869 $\pm$ 0.026 & 0.803 $\pm$ 0.008 & 0.874 $\pm$ 0.023 & 0.830 $\pm$ 0.024 \\
\cmidrule(lr){2-6}
& Convex ($\gamma=0.2$)-Unbiased & 0.725 $\pm$ 0.034 & 0.696 $\pm$ 0.038 & 0.527 $\pm$ 0.041 & 0.524 $\pm$ 0.086 \\
& Convex ($\gamma=0.5$)-Unbiased & 0.821 $\pm$ 0.034 & 0.725 $\pm$ 0.050 & 0.644 $\pm$ 0.042 & 0.605 $\pm$ 0.072 \\
& Convex ($\gamma=0.8$)-Unbiased & 0.912 $\pm$ 0.051 & 0.764 $\pm$ 0.054 & 0.878 $\pm$ 0.023 & 0.574 $\pm$ 0.092 \\
\cmidrule(lr){2-6}
& Convex ($\gamma=0.2$)-ReLU & 0.912 $\pm$ 0.091 & 0.818 $\pm$ 0.028 & 0.963 $\pm$ 0.024 & 0.806 $\pm$ 0.104 \\
$\pi_+=0.1$ & Convex ($\gamma=0.5$)-ReLU & 0.961 $\pm$ 0.015 & 0.807 $\pm$ 0.051 & 0.976 $\pm$ 0.015 & 0.861 $\pm$ 0.024 \\
& Convex ($\gamma=0.8$)-ReLU & 0.964 $\pm$ 0.028 & 0.810 $\pm$ 0.072 & 0.964 $\pm$ 0.014 & 0.853 $\pm$ 0.019 \\
\cmidrule(lr){2-6}
& Convex ($\gamma=0.2$)-ABS & 0.845 $\pm$ 0.037 & 0.709 $\pm$ 0.028 & 0.921 $\pm$ 0.063 & 0.758 $\pm$ 0.089 \\
& Convex ($\gamma=0.5$)-ABS & 0.929 $\pm$ 0.068 & 0.823 $\pm$ 0.039 & 0.966 $\pm$ 0.018 & 0.840 $\pm$ 0.037 \\
& Convex ($\gamma=0.8$)-ABS & 0.971 $\pm$ 0.010 & 0.810 $\pm$ 0.042 & \textbf{0.984 $\pm$ 0.010} & 0.882 $\pm$ 0.019 \\
\cmidrule(lr){2-6}
& SD & 0.879 $\pm$ 0.068 & 0.716 $\pm$ 0.077 & 0.899 $\pm$ 0.085 & 0.841 $\pm$ 0.019 \\
& Pcomp-Unbiased & 0.675 $\pm$ 0.033 & 0.677 $\pm$ 0.041 & 0.487 $\pm$ 0.043 & 0.543 $\pm$ 0.091 \\
& Pcomp-ReLU & 0.896 $\pm$ 0.051 & 0.745 $\pm$ 0.060 & 0.792 $\pm$ 0.140 & 0.484 $\pm$ 0.133 \\
& Pcomp-ABS & 0.538 $\pm$ 0.090 & 0.723 $\pm$ 0.073 & 0.385 $\pm$ 0.146 & 0.392 $\pm$ 0.123 \\
\midrule
& SD-Pcomp-Unbiased & \textbf{0.924 $\pm$ 0.006} & 0.831 $\pm$ 0.013 & 0.962 $\pm$ 0.006 & 0.655 $\pm$ 0.059 \\
& SD-Pcomp-ReLU & 0.618 $\pm$ 0.086 & 0.558 $\pm$ 0.054 & 0.468 $\pm$ 0.069 & 0.580 $\pm$ 0.055 \\
& SD-Pcomp-ABS & 0.812 $\pm$ 0.057 & 0.817 $\pm$ 0.025 & 0.791 $\pm$ 0.078 & 0.583 $\pm$ 0.044 \\
\cmidrule(lr){2-6}
& Convex ($\gamma=0.2$)-Unbiased & 0.818 $\pm$ 0.027 & 0.758 $\pm$ 0.013 & 0.548 $\pm$ 0.035 & 0.570 $\pm$ 0.092 \\
& Convex ($\gamma=0.5$)-Unbiased & 0.892 $\pm$ 0.069 & 0.766 $\pm$ 0.031 & 0.847 $\pm$ 0.036 & 0.402 $\pm$ 0.046 \\
& Convex ($\gamma=0.8$)-Unbiased & 0.865 $\pm$ 0.075 & 0.810 $\pm$ 0.005 & 0.780 $\pm$ 0.007 & 0.560 $\pm$ 0.028 \\
\cmidrule(lr){2-6}
& Convex ($\gamma=0.2$)-ReLU & 0.846 $\pm$ 0.060 & \textbf{0.861 $\pm$ 0.008} & 0.885 $\pm$ 0.009 & \textbf{0.727 $\pm$ 0.071} \\
$\pi_+=0.4$ & Convex ($\gamma=0.5$)-ReLU & 0.898 $\pm$ 0.035 & 0.858 $\pm$ 0.004 & 0.905 $\pm$ 0.014 & 0.627 $\pm$ 0.105 \\
& Convex ($\gamma=0.8$)-ReLU & 0.923 $\pm$ 0.056 & 0.790 $\pm$ 0.012 & 0.844 $\pm$ 0.110 & 0.564 $\pm$ 0.025 \\
\cmidrule(lr){2-6}
& Convex ($\gamma=0.2$)-ABS & 0.932 $\pm$ 0.012 & 0.810 $\pm$ 0.028 & 0.946 $\pm$ 0.030 & 0.478 $\pm$ 0.169 \\
& Convex ($\gamma=0.5$)-ABS & 0.901 $\pm$ 0.041 & 0.848 $\pm$ 0.002 & 0.899 $\pm$ 0.016 & 0.661 $\pm$ 0.069 \\
& Convex ($\gamma=0.8$)-ABS & 0.924 $\pm$ 0.024 & 0.810 $\pm$ 0.039 & \textbf{0.965 $\pm$ 0.015} & 0.589 $\pm$ 0.042 \\
\cmidrule(lr){2-6}
& SD & 0.929 $\pm$ 0.052 & 0.756 $\pm$ 0.041 & 0.956 $\pm$ 0.050 & 0.631 $\pm$ 0.032 \\
& Pcomp-Unbiased & 0.794 $\pm$ 0.025 & 0.743 $\pm$ 0.012 & 0.598 $\pm$ 0.065 & 0.520 $\pm$ 0.101 \\
& Pcomp-ReLU & 0.825 $\pm$ 0.049 & 0.736 $\pm$ 0.035 & 0.630 $\pm$ 0.160 & 0.527 $\pm$ 0.166 \\
& Pcomp-ABS & 0.630 $\pm$ 0.065 & 0.848 $\pm$ 0.008 & 0.361 $\pm$ 0.089 & 0.535 $\pm$ 0.178 \\
\midrule
& SD-Pcomp-Unbiased & 0.958 $\pm$ 0.005 & 0.825 $\pm$ 0.008 & 0.903 $\pm$ 0.007 & \textbf{0.930 $\pm$ 0.002} \\
& SD-Pcomp-ReLU & \textbf{0.985 $\pm$ 0.004} & 0.826 $\pm$ 0.069 & 0.957 $\pm$ 0.020 & 0.865 $\pm$ 0.016 \\
& SD-Pcomp-ABS & 0.792 $\pm$ 0.048 & 0.717 $\pm$ 0.025 & 0.606 $\pm$ 0.094 & 0.564 $\pm$ 0.050 \\
\cmidrule(lr){2-6}
& Convex ($\gamma=0.2$)-Unbiased & 0.788 $\pm$ 0.019 & 0.733 $\pm$ 0.030 & 0.754 $\pm$ 0.028 & 0.582 $\pm$ 0.078 \\
& Convex ($\gamma=0.5$)-Unbiased & 0.864 $\pm$ 0.016 & 0.770 $\pm$ 0.016 & 0.790 $\pm$ 0.028 & 0.643 $\pm$ 0.062 \\
& Convex ($\gamma=0.8$)-Unbiased & 0.911 $\pm$ 0.046 & 0.733 $\pm$ 0.030 & 0.922 $\pm$ 0.024 & 0.747 $\pm$ 0.036 \\
\cmidrule(lr){2-6}
& Convex ($\gamma=0.2$)-ReLU & 0.976 $\pm$ 0.006 & 0.856 $\pm$ 0.016 & 0.935 $\pm$ 0.019 & 0.867 $\pm$ 0.029 \\
$\pi_+=0.7$ & Convex ($\gamma=0.5$)-ReLU & 0.981 $\pm$ 0.003 & 0.868 $\pm$ 0.026 & 0.951 $\pm$ 0.010 & 0.864 $\pm$ 0.035 \\
& Convex ($\gamma=0.8$)-ReLU & 0.980 $\pm$ 0.004 & \textbf{0.874 $\pm$ 0.018} & 0.939 $\pm$ 0.052 & 0.746 $\pm$ 0.007 \\
\cmidrule(lr){2-6}
& Convex ($\gamma=0.2$)-ABS & 0.973 $\pm$ 0.003 & 0.746 $\pm$ 0.077 & \textbf{0.982 $\pm$ 0.005} & 0.834 $\pm$ 0.087 \\
& Convex ($\gamma=0.5$)-ABS & 0.978 $\pm$ 0.003 & 0.856 $\pm$ 0.041 & 0.980 $\pm$ 0.006 & 0.746 $\pm$ 0.118 \\
& Convex ($\gamma=0.8$)-ABS & 0.980 $\pm$ 0.003 & 0.868 $\pm$ 0.034 & 0.963 $\pm$ 0.004 & 0.757 $\pm$ 0.013 \\
\cmidrule(lr){2-6}
& SD & 0.952 $\pm$ 0.004 & 0.803 $\pm$ 0.026 & 0.966 $\pm$ 0.013 & 0.728 $\pm$ 0.030 \\
& Pcomp-Unbiased & 0.747 $\pm$ 0.026 & 0.705 $\pm$ 0.029 & 0.711 $\pm$ 0.039 & 0.497 $\pm$ 0.115 \\
& Pcomp-ReLU & 0.839 $\pm$ 0.083 & 0.723 $\pm$ 0.033 & 0.643 $\pm$ 0.162 & 0.457 $\pm$ 0.144 \\
& Pcomp-ABS & 0.643 $\pm$ 0.016 & 0.823 $\pm$ 0.046 & 0.407 $\pm$ 0.046 & 0.406 $\pm$ 0.098 \\
\bottomrule
\end{tabular}
\end{small}
\end{center}
\vskip -0.1in
\end{table*}

\begin{table*}[t]
\caption{AUC values on the UCI test set with $\pi_+=0.1, 0.4, 0.7$ averaged over five random seeds, with mean and standard deviation (mean$\pm$std). The highest score among the compared methods is shown in bold.}
\label{SDPC:tab:all_uci_auc}
\begin{center}
\begin{small}
\begin{tabular}{clcccc}
\toprule
Class Prior & \multicolumn{1}{c}{Method} & Optdigits & Pendigits & Letter & PMU-UD \\
\midrule
& SD-Pcomp-Unbiased & 0.870 $\pm$ 0.010 & 0.857 $\pm$ 0.051 & 0.706 $\pm$ 0.024 & 0.904 $\pm$ 0.015 \\
& SD-Pcomp-ReLU & 0.969 $\pm$ 0.010 & \textbf{0.983 $\pm$ 0.006} & \textbf{0.906 $\pm$ 0.009} & \textbf{0.994 $\pm$ 0.006} \\
& SD-Pcomp-ABS & 0.875 $\pm$ 0.018 & 0.795 $\pm$ 0.047 & 0.781 $\pm$ 0.022 & 0.883 $\pm$ 0.024 \\
\cmidrule(lr){2-6}
& Convex ($\gamma=0.2$)-Unbiased & 0.758 $\pm$ 0.020 & 0.819 $\pm$ 0.017 & 0.636 $\pm$ 0.007 & 0.834 $\pm$ 0.042 \\
& Convex ($\gamma=0.5$)-Unbiased & 0.810 $\pm$ 0.017 & 0.885 $\pm$ 0.019 & 0.676 $\pm$ 0.016 & 0.904 $\pm$ 0.019 \\
& Convex ($\gamma=0.8$)-Unbiased & 0.848 $\pm$ 0.007 & 0.863 $\pm$ 0.030 & 0.705 $\pm$ 0.014 & 0.930 $\pm$ 0.039 \\
\cmidrule(lr){2-6}
$\pi_+=0.1$ & Convex ($\gamma=0.2$)-ReLU & 0.892 $\pm$ 0.047 & 0.942 $\pm$ 0.019 & 0.789 $\pm$ 0.029 & 0.932 $\pm$ 0.042 \\
& Convex ($\gamma=0.5$)-ReLU & 0.935 $\pm$ 0.024 & 0.949 $\pm$ 0.019 & 0.821 $\pm$ 0.020 & 0.968 $\pm$ 0.018 \\
& Convex ($\gamma=0.8$)-ReLU & 0.947 $\pm$ 0.016 & 0.968 $\pm$ 0.020 & 0.834 $\pm$ 0.014 & 0.981 $\pm$ 0.011 \\
\cmidrule(lr){2-6}
& Convex ($\gamma=0.2$)-ABS & 0.929 $\pm$ 0.037 & 0.966 $\pm$ 0.009 & 0.849 $\pm$ 0.018 & 0.968 $\pm$ 0.016 \\
& Convex ($\gamma=0.5$)-ABS & 0.933 $\pm$ 0.030 & 0.876 $\pm$ 0.058 & 0.727 $\pm$ 0.017 & 0.987 $\pm$ 0.007 \\
& Convex ($\gamma=0.8$)-ABS & \textbf{0.982 $\pm$ 0.008} & 0.951 $\pm$ 0.015 & 0.733 $\pm$ 0.042 & 0.989 $\pm$ 0.003 \\
\cmidrule(lr){2-6}
& SD & 0.918 $\pm$ 0.037 & 0.936 $\pm$ 0.025 & 0.773 $\pm$ 0.030 & 0.947 $\pm$ 0.030 \\
& Pcomp-Unbiased & 0.728 $\pm$ 0.022 & 0.781 $\pm$ 0.023 & 0.612 $\pm$ 0.008 & 0.795 $\pm$ 0.045 \\
& Pcomp-ReLU & 0.769 $\pm$ 0.027 & 0.846 $\pm$ 0.087 & 0.690 $\pm$ 0.040 & 0.758 $\pm$ 0.070 \\
& Pcomp-ABS & 0.788 $\pm$ 0.042 & 0.830 $\pm$ 0.102 & 0.704 $\pm$ 0.016 & 0.696 $\pm$ 0.110 \\
\midrule
& SD-Pcomp-Unbiased & \textbf{0.961 $\pm$ 0.008} & \textbf{0.955 $\pm$ 0.003} & 0.788 $\pm$ 0.018 & \textbf{0.980 $\pm$ 0.004} \\
& SD-Pcomp-ReLU & 0.657 $\pm$ 0.053 & 0.636 $\pm$ 0.119 & 0.708 $\pm$ 0.059 & 0.689 $\pm$ 0.117 \\
& SD-Pcomp-ABS & 0.927 $\pm$ 0.016 & 0.948 $\pm$ 0.039 & \textbf{0.912 $\pm$ 0.006} & 0.884 $\pm$ 0.030 \\
\cmidrule(lr){2-6}
& Convex ($\gamma=0.2$)-Unbiased & 0.771 $\pm$ 0.012 & 0.828 $\pm$ 0.031 & 0.703 $\pm$ 0.014 & 0.821 $\pm$ 0.029 \\
& Convex ($\gamma=0.5$)-Unbiased & 0.904 $\pm$ 0.055 & 0.834 $\pm$ 0.037 & 0.702 $\pm$ 0.037 & 0.917 $\pm$ 0.044 \\
& Convex ($\gamma=0.8$)-Unbiased & 0.917 $\pm$ 0.049 & 0.928 $\pm$ 0.053 & 0.792 $\pm$ 0.030 & 0.939 $\pm$ 0.023 \\
\cmidrule(lr){2-6}
& Convex ($\gamma=0.2$)-ReLU & 0.833 $\pm$ 0.012 & 0.919 $\pm$ 0.007 & 0.862 $\pm$ 0.012 & 0.861 $\pm$ 0.030 \\
$\pi_+=0.4$ & Convex ($\gamma=0.5$)-ReLU & 0.859 $\pm$ 0.029 & 0.918 $\pm$ 0.015 & 0.866 $\pm$ 0.008 & 0.940 $\pm$ 0.021 \\
& Convex ($\gamma=0.8$)-ReLU & 0.931 $\pm$ 0.020 & 0.892 $\pm$ 0.038 & 0.827 $\pm$ 0.044 & 0.941 $\pm$ 0.017 \\
\cmidrule(lr){2-6}
& Convex ($\gamma=0.2$)-ABS & 0.926 $\pm$ 0.011 & 0.904 $\pm$ 0.026 & 0.816 $\pm$ 0.018 & 0.920 $\pm$ 0.010 \\
& Convex ($\gamma=0.5$)-ABS & 0.895 $\pm$ 0.027 & 0.925 $\pm$ 0.027 & 0.856 $\pm$ 0.017 & 0.943 $\pm$ 0.019 \\
& Convex ($\gamma=0.8$)-ABS & 0.913 $\pm$ 0.004 & 0.918 $\pm$ 0.013 & 0.871 $\pm$ 0.011 & 0.954 $\pm$ 0.010 \\
\cmidrule(lr){2-6}
& SD & 0.928 $\pm$ 0.040 & 0.948 $\pm$ 0.011 & 0.806 $\pm$ 0.028 & 0.946 $\pm$ 0.016 \\
& Pcomp-Unbiased & 0.776 $\pm$ 0.010 & 0.818 $\pm$ 0.022 & 0.667 $\pm$ 0.006 & 0.764 $\pm$ 0.044 \\
& Pcomp-ReLU & 0.841 $\pm$ 0.024 & 0.874 $\pm$ 0.054 & 0.765 $\pm$ 0.022 & 0.927 $\pm$ 0.019 \\
& Pcomp-ABS & 0.877 $\pm$ 0.028 & 0.917 $\pm$ 0.031 & 0.814 $\pm$ 0.017 & 0.890 $\pm$ 0.054 \\
\midrule
& SD-Pcomp-Unbiased & 0.957 $\pm$ 0.010 & 0.965 $\pm$ 0.005 & 0.771 $\pm$ 0.016 & \textbf{0.992 $\pm$ 0.002} \\
& SD-Pcomp-ReLU & \textbf{0.976 $\pm$ 0.006} & \textbf{0.979 $\pm$ 0.003} & 0.898 $\pm$ 0.008 & 0.992 $\pm$ 0.004 \\
& SD-Pcomp-ABS & 0.714 $\pm$ 0.042 & 0.684 $\pm$ 0.038 & 0.779 $\pm$ 0.024 & 0.696 $\pm$ 0.045 \\
\cmidrule(lr){2-6}
& Convex ($\gamma=0.2$)-Unbiased & 0.807 $\pm$ 0.021 & 0.878 $\pm$ 0.020 & 0.699 $\pm$ 0.021 & 0.898 $\pm$ 0.014 \\
& Convex ($\gamma=0.5$)-Unbiased & 0.896 $\pm$ 0.019 & 0.945 $\pm$ 0.006 & 0.719 $\pm$ 0.038 & 0.929 $\pm$ 0.015 \\
& Convex ($\gamma=0.8$)-Unbiased & 0.875 $\pm$ 0.079 & 0.954 $\pm$ 0.014 & 0.736 $\pm$ 0.032 & 0.983 $\pm$ 0.003 \\
\cmidrule(lr){2-6}
& Convex ($\gamma=0.2$)-ReLU & 0.958 $\pm$ 0.006 & 0.971 $\pm$ 0.005 & 0.868 $\pm$ 0.024 & 0.938 $\pm$ 0.018 \\
$\pi_+=0.7$ & Convex ($\gamma=0.5$)-ReLU & 0.960 $\pm$ 0.006 & 0.966 $\pm$ 0.004 & 0.883 $\pm$ 0.015 & 0.953 $\pm$ 0.008 \\
& Convex ($\gamma=0.8$)-ReLU & 0.966 $\pm$ 0.005 & 0.972 $\pm$ 0.006 & 0.893 $\pm$ 0.016 & 0.961 $\pm$ 0.013 \\
\cmidrule(lr){2-6}
& Convex ($\gamma=0.2$)-ABS & 0.970 $\pm$ 0.008 & 0.966 $\pm$ 0.008 & \textbf{0.914 $\pm$ 0.013} & 0.980 $\pm$ 0.005 \\
& Convex ($\gamma=0.5$)-ABS & 0.964 $\pm$ 0.004 & 0.973 $\pm$ 0.008 & 0.855 $\pm$ 0.028 & 0.967 $\pm$ 0.003 \\
& Convex ($\gamma=0.8$)-ABS & 0.973 $\pm$ 0.007 & 0.977 $\pm$ 0.005 & 0.901 $\pm$ 0.016 & 0.971 $\pm$ 0.015 \\
\cmidrule(lr){2-6}
& SD & 0.950 $\pm$ 0.010 & 0.903 $\pm$ 0.068 & 0.769 $\pm$ 0.037 & 0.975 $\pm$ 0.008 \\
& Pcomp-Unbiased & 0.714 $\pm$ 0.034 & 0.827 $\pm$ 0.027 & 0.665 $\pm$ 0.013 & 0.885 $\pm$ 0.013 \\
& Pcomp-ReLU & 0.795 $\pm$ 0.067 & 0.878 $\pm$ 0.012 & 0.750 $\pm$ 0.031 & 0.842 $\pm$ 0.033 \\
& Pcomp-ABS & 0.864 $\pm$ 0.032 & 0.819 $\pm$ 0.029 & 0.773 $\pm$ 0.012 & 0.867 $\pm$ 0.046 \\
\bottomrule
\end{tabular}
\end{small}
\end{center}
\vskip -0.1in
\end{table*}

\subsection{Computational Resources}

The experiments were conducted on a machine with the following specifications:
\begin{itemize}[left=1pt]
\item \textbf{CPU}: Intel(R) Xeon(R) Gold 6312U (24 cores, 48 threads, 2.40 GHz)
\item \textbf{RAM}: 1.0 TB
\item \textbf{GPU}: NVIDIA RTX A6000 (48 GB VRAM)
\item \textbf{Storage}: 1 TB SSD
\item \textbf{OS}: Rocky Linux 9.2
\item \textbf{Software}: Python 3.12.2, numpy 1.26.4, torch 2.3.0, torchvision 0.18.0, CUDA 11.8
\end{itemize}

\end{document}